\let\appendices\relax
\definecolor{ballblue}{HTML}{338EA7}
\definecolor{lightseagreen}{HTML}{759D39}
\definecolor{lightred}{HTML}{DD7769}
\definecolor{org}{HTML}{F8A145}
\definecolor{blu}{HTML}{63ACE5}
\definecolor{c1}{HTML}{41B3A3}
\definecolor{c2}{HTML}{3500D3}
\newtheorem{defi}{Definition}
\newtheorem*{pf}{Proof}
\newtheorem{prop}{Proposition}
\newtheorem{lem}{Lemma}
\newtheorem{rem}{Remark}
\newtheorem{thm}{Theorem}
\newtheorem{asm}{Assumption}
\newtheorem{col}{Corollary}
\newtheorem{exa}{Example}
\DeclareMathOperator*{\expe}{\mathbb{E}}
\DeclareMathOperator*{\expeb}{\bar{\mathbb{E}}}
\LetLtxMacro{\originaleqref}{\eqref}
\renewcommand{\eqref}{Eq.~\originaleqref}
\DeclarePairedDelimiter\floor{\lfloor}{\rfloor}
\def \VC {\mathsf{VC}}
\def \x {\bm{x}}
\def \f {f}
\def \tlf {\tilde{f}}
\def \np {n_+}
\def \nn {n_-}
\def \rorg {{\hat{\mathcal{R}}}^\ell_{\alpha,\beta}}
\def \rpsi { {\hat{\mathcal{R}}}^{\ell}_{\psi}}
\def \sumterm {\frac{1}{\np\nn}}
\def \sumpn {\sum_{i=1}^{n_+}\sum_{j=1}^{n_-}}
\def \psumpn {\sum_{i=1}^{n^\alpha_+}\sum_{j=1}^{n^\beta_-}}
\DeclareMathOperator*{\argmax}{argmax}
\DeclareMathOperator*{\argmin}{argmin}
\def \XP {\mathcal{X}_P}
\def \XN {\mathcal{X}_N}
\def \sub {\overset{i.i.d}{\sim}}
\def \xp {\x^+}
\def \xpi {\xp_{i}}
\def \xpio {\xp_{(i)}}
\def \xn {\x^-}
\def \xnj {\xn_{j}}
\def \xnjo {\xn_{(j)}}
\newcommand{\fxpo}[1]{\xp_{(#1)}}
\newcommand{\fxno}[1]{\xn_{(#1)}}
\def \cp {c_+}
\def \cn {c_-}
\def \eP{\expe_{\xp\sim \mathcal{P}}}
\def \eN{\expe_{\xn \sim \mathcal{N}}}
\DeclareMathOperator*{\prob}{\mathbb{P}}
\def \lt {\ell_{0,1}}
\def \vp {{v}_+}
\def \vn {{v}_-}
\def \vpi {{v}^+_i}
\def \vnj {{v}^-_j}
\def \Ionep {\mathcal{I}_1^+}
\def \Ionen {\mathcal{I}_1^-}
\def \npa {\np^\alpha}
\def \nnb {\nn^\beta}
\def \expiop { \expeb_{\xp \in \Ionep } }
\def \expion { \expeb_{\xn \in \Ionen } }
\def \Itwo {\mathcal{I}_2}
\def \expit { \expeb_{\xp, \xn \in \Itwo} }
\def \expite{ \expe_{\xp, \xn \in \Itwo} }
\def \sumtermp {\frac{1}{\nn}}
\def \expiopn {\expeb_{\xp \in \Ionep, \xn \in \Ionen } }
\def \expiopne {\expe_{\xp \in \Ionep, \xn \in \Ionen } }
\def \cfab{\mathcal{C}_f^{\alpha, \beta}}
\def \cfabup{\mathcal{C}_f^{\alpha, \beta,\uparrow}}
\def \cfabdown{\mathcal{C}_f^{\alpha, \beta,\downarrow}}
\def \opo{{(OP_1)}}
\def \opz{{(OP_0)}}
\def \aucs{\hat{\mathcal{R}}^{\alpha, \beta}_{AUC}(\f, \mathcal{S})}
\def \aucf{  {\mathcal{R}^{\alpha, \beta}_{AUC}}(\f, \mathcal{S})}
\newcommand{\ind}[1]{\mathbbm{1}\left[ #1 \right]}
\def \da {\delta_\alpha}
\def \dah {\hat{\delta}_\alpha}
\def \db {\delta_\beta}
\def \dbh {\hat{\delta}_\beta}
\def  \fxp {\f(\xp)}
\def  \fxpi {\f(\xpi)}
\def  \fxn {\f(\xn)}
\def  \fxnj {\f(\xnj)}
\def \lpj {\ell_+(\f, \xnj)}
\def \sump {\sum_{i=1}^{\np}}
\def \sumn {\sum_{j=1}^{\nn}}
\def \rfab{\mathcal{R}^{\alpha, \beta}(f)}
\def \vinf {v_\infty}
\def \linf {\ell_\infty}
\newtheorem*{rthm1}{Reminder of Proposition \ref{prop:reform}}
\newtheorem*{rthm2}{Reminder of Proposition \ref{prop:concon}}
\newtheorem*{rthm3}{Reminder of Proposition \ref{prop:dual}}
\newtheorem*{rthm4}{Reminder of Theorem \ref{thm:reform}}
\newtheorem*{rthm5}{Reminder of Theorem \ref{prop:bayes}}
\newtheorem*{rthm6}{Reminder of Theorem \ref{thm:gen}}
\newtheorem*{rthm7}{Reminder of Theorem \ref{thm:abs}}
\newtheorem*{rthm8}{Reminder of Theorem \ref{thm:cnn}}
\newtheorem*{rthm9}{Reminder of Theorem \ref{thm:simplebayes}}
\newtheorem*{rlem1}{Reminder of Lemma \ref{lem:decomp}}
\newtheorem*{rlem3}{Reminder of Lemma \ref{lem:uppcover}}
\newtheorem*{rlem4}{Reminder of Lemma \ref{lem:rdcomp}}
\newtheorem*{rlem5}{Reminder of Lemma \ref{lem:uppstar}}
\newenvironment{flem}
{\begin{mdframed}[hidealllines=true,backgroundcolor=ballblue!15	,innerleftmargin=3pt,innerrightmargin=3pt,leftmargin=-3pt,rightmargin=-3pt]\begin{lem}}
{\end{lem}\end{mdframed}}
\newenvironment{frthm1}
  {\begin{mdframed}[hidealllines=true,backgroundcolor=ballblue!10	,innerleftmargin=3pt,innerrightmargin=3pt,leftmargin=-3pt,rightmargin=-3pt]\begin{rthm1}}
  {\end{rthm1}\end{mdframed}}
  \newenvironment{frthm2}
  {\begin{mdframed}[hidealllines=true,backgroundcolor=ballblue!10	,innerleftmargin=3pt,innerrightmargin=3pt,leftmargin=-3pt,rightmargin=-3pt]\begin{rthm2}}
  {\end{rthm2}\end{mdframed}}
  \newenvironment{frthm3}
  {\begin{mdframed}[hidealllines=true,backgroundcolor=ballblue!10	,innerleftmargin=3pt,innerrightmargin=3pt,leftmargin=-3pt,rightmargin=-3pt]\begin{rthm3}}
  {\end{rthm3}\end{mdframed}}
  \newenvironment{frthm5}
  {\begin{mdframed}[hidealllines=true,backgroundcolor=ballblue!10	,innerleftmargin=3pt,innerrightmargin=3pt,leftmargin=-3pt,rightmargin=-3pt]\begin{rthm5}}
  {\end{rthm5}\end{mdframed}}
  \newenvironment{frthm6}
  {\begin{mdframed}[hidealllines=true,backgroundcolor=ballblue!10	,innerleftmargin=3pt,innerrightmargin=3pt,leftmargin=-3pt,rightmargin=-3pt]\begin{rthm6}}
  {\end{rthm6}\end{mdframed}}
  \newenvironment{frthm7}
  {\begin{mdframed}[hidealllines=true,backgroundcolor=ballblue!10	,innerleftmargin=3pt,innerrightmargin=3pt,leftmargin=-3pt,rightmargin=-3pt]\begin{rthm7}}
  {\end{rthm7}\end{mdframed}}
  \newenvironment{frthm9}
  {\begin{mdframed}[hidealllines=true,backgroundcolor=ballblue!10	,innerleftmargin=3pt,innerrightmargin=3pt,leftmargin=-3pt,rightmargin=-3pt]\begin{rthm9}}
  {\end{rthm9}\end{mdframed}}
  \newenvironment{frthm4}
  {\begin{mdframed}[hidealllines=true,backgroundcolor=ballblue!10	,innerleftmargin=3pt,innerrightmargin=3pt,leftmargin=-3pt,rightmargin=-3pt]\begin{rthm4}}
  {\end{rthm4}\end{mdframed}}
  \newenvironment{frlem1}
  {\begin{mdframed}[hidealllines=true,backgroundcolor=ballblue!10	,innerleftmargin=3pt,innerrightmargin=3pt,leftmargin=-3pt,rightmargin=-3pt]\begin{rlem1}}
  {\end{rlem1}\end{mdframed}}
\def \gf{g_f}
\def \fa {f _{\mathfrak{A}}}
\def \fa {t_{1-\alpha}(f)}
\def \fb {t_{\beta}(f)}
\def \egf{E(g\mathcal{F})}
\def \gfi{g\mathcal{F}|_{\x_+ = \x_i}}
\def \x {\bm{x}}
\def \phil{\phi_{\ell}}
\begin{document}
%
\title{Optimizing Two-way Partial AUC with an End-to-end Framework}

\author{Zhiyong~Yang,
        Qianqian~Xu*,~\IEEEmembership{Senior Member, IEEE },
        Shilong Bao, Yuan He, \\
        Xiaochun~Cao,~\IEEEmembership{Senior Member, IEEE},
        Qingming~Huang*,~\IEEEmembership{Fellow, IEEE}

\IEEEcompsocitemizethanks{

\IEEEcompsocthanksitem Zhiyong Yang is with School of Computer Science and Technology, University of Chinese Academy of Sciences, Beijing 100049, China (email: \texttt{yangzhiyong21@ucas.ac.cn}).\protect\\
\IEEEcompsocthanksitem Qianqian Xu is with the Key Laboratory of
Intelligent Information Processing, Institute of Computing Technology, Chinese
Academy of Sciences, Beijing 100190, China, (email: \texttt{xuqianqian@ict.ac.cn}).\protect\\
\IEEEcompsocthanksitem Shilong Bao is with State Key Laboratory of Information Security (SKLOIS), Institute of Information Engineering, Chinese Academy of Sciences, Beijing 100093, China, and also with School of Cyber Security, University of Chinese Academy of Sciences, Beijing 100049, China (email: \texttt{baoshilong@iie.ac.cn}).\protect \\
\IEEEcompsocthanksitem  Yuan He is with the Security Department of Alibaba
Group, Hangzhou 311121, China (e-mail \texttt{:heyuan.hy@alibaba-inc.com}). \protect\\
\IEEEcompsocthanksitem Xiaochun Cao is with School of Cyber Science and Technology, Shenzhen Campus, Sun Yat-sen University, Shenzhen 518107, China (caoxiaochun@mail.sysu.edu.cn).\protect\\
\IEEEcompsocthanksitem Q. Huang is with the School of Computer Science and Technology,
    University of Chinese Academy of Sciences, Beijing 101408, China, also
    with the Key Laboratory of Big Data Mining and Knowledge Management (BDKM),
    University of Chinese Academy of Sciences, Beijing 101408, China,  also
    with the Key Laboratory of Intelligent Information Processing, Institute of
    Computing Technology, Chinese Academy of Sciences, Beijing 100190, China, and also with Peng Cheng Laboratory, Shenzhen 518055, China 
    (e-mail: \texttt{qmhuang@ucas.ac.cn}).\protect\\
    \IEEEcompsocthanksitem *: Corresponding authors.
   }
}
%
%

\markboth{IEEE Transactions on Pattern Analysis and Machine Intelligence}%
{Shell \MakeLowercase{\textit{et al.}}: Bare Demo of IEEEtran.cls for Computer Society Journals}
%



\maketitle

\begin{abstract}
    The Area Under the ROC Curve (AUC) is a crucial metric for machine learning, which evaluates the average performance over all possible True Positive Rates (TPRs) and False Positive Rates (FPRs). Based on the knowledge that a skillful classifier should simultaneously embrace a high TPR and a low FPR, we turn to study a more general variant called Two-way Partial AUC (TPAUC), where only the region with $\mathsf{TPR} \ge \alpha, \mathsf{FPR} \le \beta$ is included in the area. Moreover, a recent work shows that the TPAUC is essentially inconsistent with the existing Partial AUC metrics where only the FPR range is restricted, opening a new problem to seek solutions to leverage high TPAUC. Motivated by this, we present the first trial in this paper to optimize this new metric.  The critical challenge along this course lies in the difficulty of performing gradient-based optimization with end-to-end stochastic training, even with a proper choice of surrogate loss. To address this issue, we propose a generic framework to construct surrogate optimization problems, which supports efficient end-to-end training with deep learning. Moreover, our theoretical analyses show that: 1) the objective function of the surrogate problems will achieve an upper bound of the original problem under mild conditions, and 2) optimizing the surrogate problems leads to good generalization performance in terms of TPAUC with a high probability. Finally, empirical studies over several benchmark datasets speak to the efficacy of our framework.
  
\end{abstract}
\begin{IEEEkeywords}
 Partial AUC, AUC Optimization, Machine Learning.
  \end{IEEEkeywords}

%
\IEEEpeerreviewmaketitle

\section{Introduction}

\label{sec:intro}
ROC (Receiver Operating Characteristics) curve is a well-known tool to evaluate classification performance at varying threshold levels. More precisely, as shown in Fig.\ref{fig:roc_compare}-(a), it captures the relationship between True Positive Rate (TPR) and False Positive Rate (FPR) as a function of the classification thresholds. {A}{UC} (Area Under the ROC Curve), summarizes the average performance of a given classifier by calculating its area. More intuitively, as shown in \cite{ROCmean}, AUC is equivalent to the possibility that a positive instance has a higher predicted score to be positive than a negative instance. Any skillful classifier that can produce well-separated scores for positive and negative instances will enjoy a high AUC value, no matter how skewed the class distribution is. As a natural result, AUC is more appropriate than accuracy for long-tail classification problems such as disease prediction \cite{Diagnosis1,Diagnosis2} and rare event detection \cite{Anomaly1,Anomaly2,transactions1,fin}, due to this appealing property \cite{rocsur,AUCM}.
\begin{figure}[th]  
  \centering
    \includegraphics[width=0.98\columnwidth]{./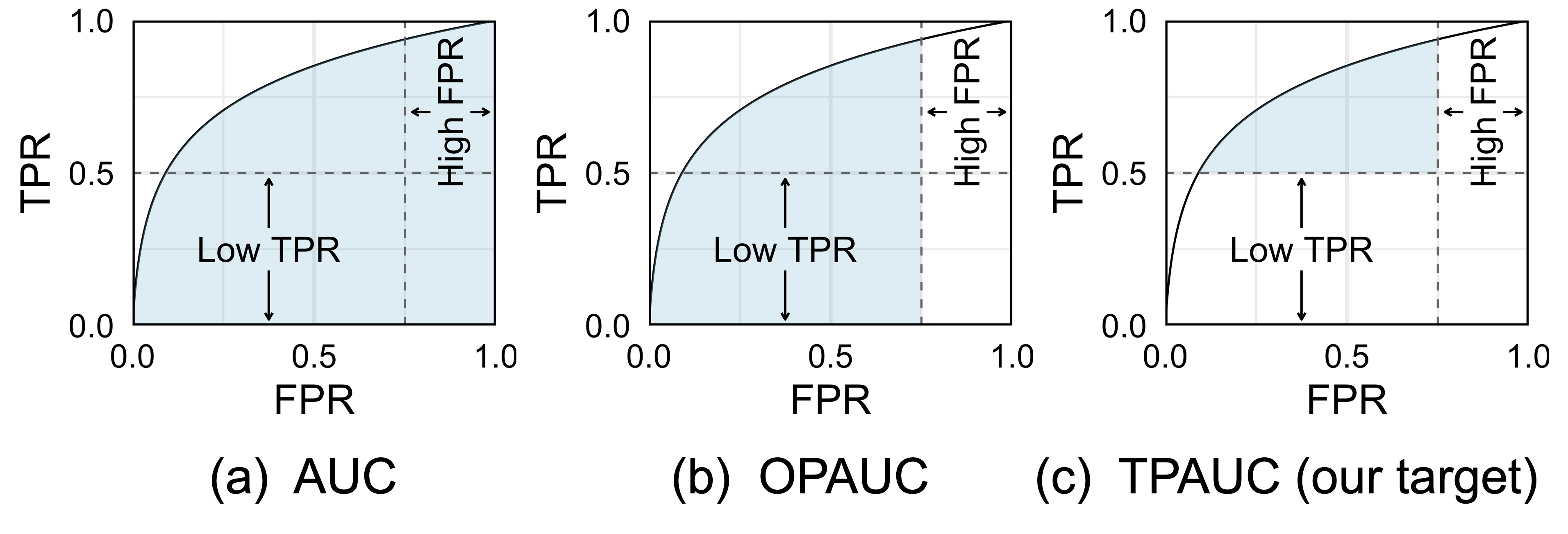} 
  \caption{\label{fig:roc_compare} Comparisons of different AUC variants: \textbf{(a)} The entire area of ROC curve; \textbf{(b)} The One-way Partial AUC (OPAUC) which measures the area of a local region of ROC within an FPR range; \textbf{(c)} The Two-way Partial AUC (TPAUC). }
\end{figure}

Over the past two decades, the importance of AUC has raised a new wave to directly optimize AUC, which has achieved tremendous success. A partial list of the related studies includes \cite{logitauc,svmauc1,svmauc2,logitauc2,partial2,auconepass,partial3}. However, the vast majority of such studies only consider the area over the entire ROC curve. As argued by \cite{partial1}, for some applications,  only the performance within a given range of FPR is of interest, as shown in Fig.\ref{fig:roc_compare}-(b). In this sense, the standard AUC tends to provide a biased estimation of the performance by including unrelated  regions. This key investigation has motivated a series of successful studies to optimize the One-way Partial AUC (OPAUC) with an FPR range $[\alpha, \beta]$  \cite{partial1, partial2, partial3}. Here we note that the choice to truncate FPR on the ROC curve is based on domain-specific prior knowledge for some specific fields such as biometric screening, and medical diagnosis \cite{partial1}.  

\emph{\textbf{Taking a step further, what should be a \underline{general rule} to select the target region under the ROC curve for classification problems?}}

Since TPR and FPR evaluate complementary aspects of the model performance, we argue that a practical classifier in most applications must simultaneously have a high TPR and a low FPR. In other words, a high TPR is meaningless if the FPR is lower than a tolerance threshold, while a low FPR cannot compensate for a low TPR (\emph{say, one can hardly consider a model with FPR higher than 0.8 even if its TPR is as high as 0.99, and vice versa for a low TPR model}).  In this sense, we only need to pay attention to the upper-left head region under the ROC curve, as shown in Fig.\ref{fig:roc_compare}-(c).

A recent work \cite{tpauc} exactly realizes this idea, where a new metric called Two-Way Partial AUC (TPAUC) is proposed to measure the area of a partial region of the ROC curve with $\mathsf{TPR} \ge p, \mathsf{FPR} \le q$. Furthermore, \cite{tpauc} shows that the TPAUC is essentially inconsistent with one-way partial AUC. In other words, a higher OPAUC does not necessarily imply a higher TPAUC, posing a demand to seek new solutions to leverage high TPAUC. 

\emph{\textbf{Inspired by this fact, we present the \underline{first trial} to optimize the TPAUC metric with an end-to-end framework.}}

The \emph{\textbf{major challenge}} of this task is that the objective function is not differentiable even with a proper surrogate loss function, suggesting that there is no easy way to perform end-to-end training. Facing this challenge, we propose a generic framework to approximately optimize the TPAUC with the help of deep learning. Generally speaking, our contributions are as follows.

First, we reformulate the original optimization problem as a bi-level optimization problem, where the inner-level problem provides a sparse sample selection process and the outer-level problem minimizes the loss over the selected instances.
On top of the reformulation, we propose a generic framework to construct surrogate optimization problems for the original problem. In the core of this framework lies the interplay of the surrogate penalty functions and surrogate weighting functions defined in this paper. Moreover, we construct a dual correspondence between these two classes of functions, such that we can easily find a standard single-level surrogate optimization problem whenever a surrogate penalty or a surrogate weighting function is obtained. We also provide a minimax reformulation of the objective function to induce an instance-wise objective function to improve efficiency. 

We then proceed to explore theoretical guarantees for the framework. First, by comparing the surrogate problem and the original problem, we provide a mild sufficient condition under which the surrogate objective function  becomes an upper bound of the original one. Meanwhile, we also show that the concave weighting function tends to be a better choice than its convex counterparts. Second, we present a theoretical analysis of the Bayes-optimal scoring functions for TPAUC optimization. The result shows that the Bayes scoring function for TPAUC is much harder to obtain than the plain AUC. Finally, we show that optimizing the surrogate problems could leverage good generalization performance in TPAUC, with two upper bounds of the excess risk. One of our results attains much sharper order than previous results on AUC generalization. 

This paper extends our ICML 2021 long-talk paper \cite{ours}, where we proposed a generic framework to implement the end-to-end TPAUC optimization. In this version, we have included systematic improvements in theory, methodology, and experiments. The novelty of the extended version is summarized as follows:

\begin{enumerate}
  \item \textbf{New Theoretical Results.} We derive the Bayes-optimal solution for TPAUC optimization in Thm.\ref{prop:bayes}, Thm.\ref{thm:simplebayes}. The result shows that the optimal scoring functions are required not only to be aligned with $\eta(\bm{x})$ in a local region, but also to choose the optimal hard examples. Moreover, under Asum.\ref{asum:bound}, we derive a sharp upper bound for the excess risk upper bound for a general form of hypothesis classes in Thm.\ref{thm:abs}.  
  \item \textbf{New Methodology.} The original objective function is formed as a sum of loss functions of instance pairs, which might lead to a heavy computational burden and the difficulty of generating a mini-batch. To address this issue, we propose an instance-wise minimax reformulation of the original objective function in Thm.\ref{thm:reform}.
  \item \textbf{New Experiments.} In the current version, we include a series of new experiments, including new competitors, new finer-grained analysis, and the validation of the upper bounds in Prop.\ref{prop:concon}. 
  \item \textbf{Miscellaneous Contents.} Besides, we also refine the details of some existing contents, including the review of the related works (throughout Sec.\ref{sec:art}), the analysis of the inconsistency between TPAUC and OPAUC (in Sec.\ref{sec:incon}), and practical illustration of Prop.\ref{prop:concon} (in Sec.\ref{sec:surro}). 
\end{enumerate}

\section{Prior Art}\label{sec:art}
\subsection{General AUC Optimization}
Back in 1954, ROC  made its first appearance in the literature of signal processing theory \cite{aucbegin} as a tool to study the detectability of signals in the presence of noise, which has thereafter become a fundamental tool for signal detection together with AUC\cite{bestauc,auctest}. Two decades ago, it started to gain attention from machine learning community due to its elegant statistical properties\cite{mlroc1,mlroc2,mlroc3,mlroc4,AUCM,aucdis}.  For example, \cite{aucdis} argues that ROC or AUC is a better metric than accuracy in terms of the discriminancy to differentiate model performance.  At almost the same time, a motivating study \cite{AUCvErr} further points out that maximizing AUC  should not be replaced with minimizing the error rate, which shows the necessity to study direct AUC optimization methods. 
After that, a series of algorithms are designed for optimizing AUC. At the early stage, the majority of studies focus on a full-batch off-line setting. \cite{logitauc} optimizes AUC based on a logistic surrogate loss function and ordinary gradient descent method. \cite{logitauc2} approximates the original $0\text{-}1$ loss with a  polynomial function and obtains an efficient optimization method.  RankBoost \cite{boost} provides an efficient ensemble-based AUC learning method based on a ranking extension of the AdaBoost algorithm. The work of \cite{svmauc2,svmauc3} constructs $SVM^{struct}$-based frameworks that optimize a direct upper bound of the $0-1$ loss version AUC metric instead of its surrogates. \cite{reg1,reg2} also study the joint effort of AUC optimization and regression.

Later on, to accommodate big data analysis, researchers start to explore online extensions of the AUC optimization methods. \cite{online1} provides an early trial for this direction based on the reservoir sampling technique. \cite{auconepass} provides a completely one-pass AUC optimization method for streaming data based on the squared surrogate loss. Moreover,  \cite{minimaxauc1} reformulates the squared-loss-based stochastic AUC maximization problem as a stochastic saddle point problem. The new saddle point problem's objective function only involves the summations of instance-wise loss terms, which significantly reduces the burden from the pairwise formulation. \cite{minimaxauc2,minimaxauc3} further accelerate this framework with tighter convergence rates. Most recently, it also has been applied to structured models \cite{aucstructdata}, deep learning \cite{minimaxdeep} and federated learning \cite{fedauc}.

In this paper, we take a further step to optimize the TPAUC. From the optimization perspective, the difficulty comes from the much more complicated formulation of the surrogate risk of TPAUC. To address this issue, we derive an approximated surrogate risk to support end-to-end training. Moreover, we also employ the saddle point reformulation technique \cite{minimaxauc1} to derive an instance-wise objective function.  

\subsection{Generalization Analysis for AUC}
It is well-known that the recipe of machine learning comes from generalization, namely, the ability of a well-trained model to predict unseen data points. Generally speaking, its theoretical foundation could be traced back at least to the PAC theory of Valiant \cite{paclearn} and the statistical learning theory of Vapnik \cite{nature}. Nowadays, understanding the mathematical foundation of generalization has emerged as one of the most long-standing goals for the machine learning community\cite{achieve2,mlfun, mathfun,deeptheory}. It is noteworthy that the standard techniques for generalization analysis \cite{mlfun,achieve2,mathfun} require the empirical risk function to be expressed as a sum of independent terms. Unfortunately, the AUC-based empirical risks do not satisfy this assumption, which poses a big challenge toward understanding its generalization guarantee.   To solve this problem, many efforts have been made against the interdependency of AUC loss terms. The first study on the generalization property of the AUC optimization problem could be traced back to \cite{genal}, where a specific notion of VC-dimension is proposed for the AUC risks.  Later, \cite{aucrade} derived a novel symmetrization technique for the pairwise AUC risk function, where a novel notion of Rademacher complexity is proposed for the AUC risks. However, both of the studies require significant revision of the standard technique. Thereafter, a new wave of ideas emerges where the AUC risks are reformulated into a (series of) standard risk function(s).  For example, \cite{geninde,part} employ the graph coloring technique to divide interdependent sample sets into a sequence of independent subsets. Within each subset, the samples satisfy the standard assumption so that traditional techniques such as the McDiarmid's inequality \cite{concen} can be extended to interdependent datasets. Meanwhile, \cite{cle1} proposes a U-statistics-based framework to derive generalization bounds for bipartite ranking, which is closely related to the AUC optimization problem. The key idea is to reformulate the U-statistics as a sum of independent terms. Moreover, it also proposes sharp results under certain assumptions. Later on, \cite{entropy} provides a novel Talagrand concentration inequality for sub-additive and fractionally self-bounding functions, making it possible to introduce the notion of local Rademacher complexity \cite{local} into the AUC optimization problems. 

Besides the binary setting of the AUC optimization, there are also new explorations toward the generalization ability of AUC-like metrics under more complicated scenarios such as multipartite ranking \cite{multipart1,multipart2}, multi-class classification \cite{multiauc} and fairness learning \cite{AUCfair}.  

In this work, we will present two generalization results under two different assumptions for TPAUC optimization problem.  We could find an upper bound for the excess risk in order of $O(\frac{\mathsf{polylog(\max\{\np, \nn\})}}{\min\{\np  \nn\}})$ which is sharper than the state-of-art result $O(\sqrt{\frac{1}{(\min\{\np,\nn\})}})$ for AUC \cite{entropy, geninde} and pretty close to the $O(1/n)$ result for U-statistics \cite{cle1} but with much weaker assumptions. Here, $\np,\nn,n$ are the number of positive, negative and total instances for a binary classification dataset, respectively.

\subsection{Partial AUC Optimization} 
Generally speaking, directly optimizing the OPAUC requires techniques of combinatorial optimization. As a classical work, \cite{partial2} proposes a cutting-plane-based framework to perform the optimization. Later on \cite{partial2, partial3} provide refinement of \cite{partial2} by formulating a tight convex upper bound of the original objective function. Recently, \cite{AUCtime,semipauc} also extend this framework to time-series prediction and semi-supervised learning. \cite{aucnewonline} proposes an online learning method for OPAUC optimization with sublinear regret and online to batch conversion bounds. However, it requires the convexity of the loss function and a memory buffer.  Consequently, though scaling well for non-deep learning models, it still cannot  be employed to perform deep learning.

To make end-to-end training of deep models available, \cite{samplepauc1, samplepauc2} present meaningful attempts to apply OPAUC to the speaker verification task. Generally speaking, such algorithms adopt the following steps to perform the optimization: a) construct the pairwise training set by random sampling; b) pick the negative imposters over the FPR range $[\alpha,\beta]$; c) optimize the one-way partial AUC with the picked negatives and all positive samples. In this way, one can perform end-to-end training on top of the generated dataset. However, their work is specifically designed for metric learning settings, which is not available for generic scenarios. Besides, there is no theoretical guarantee that the training on the generated dataset could generalize well on unseen data.

Compared with existing studies to optimize partial AUC, the key difference is two-fold. First, the previous studies only focus on  one-way partial AUC, where only the FPR is restricted within $[\alpha, \beta]$; while we are the first to study TPAUC optimization, a new AUC metric where both TPR and FPR are truncated. Moreover, most related studies either adopt the cutting plane framework that is not scalable toward the training samples or adopt the generated dataset without a theoretical guarantee.  In our work, getting rid of complicated combinatorial optimization techniques, we propose a general framework to construct much simpler surrogate optimization problems for TPAUC that supports end-to-end training based on systematic theoretical guarantees.

\subsection{Learning with Constrained Optimization}
According to the work \cite{aucconst1}, we can as well formulate the AUC maximization as a constrained optimization problem by
binning the FPR range into a finite number of thresholds and constraining them to satisfy
specific FPR/TPR values. In this sense, an alternative way to optimize partial AUC is to consider the constrained optimization framework proposed in \cite{aucconst3,aucconst4}. For example, based on the most recent work in this direction \cite{aucconst2}, it is possible to perform OPAUC optimization by considering a selected range of FPRs in the constraints.  

By contrast, our paper aims to formulate a partial AUC optimization problem on top of a simple optimization framework, which could be implemented at ease with the deep learning framework. Hence, our work is based on the unconstrained form of AUC risk. The constrained form of TPAUC optimization is considered an opening problem worth broad future effort.
\subsection{Top Instance Ranking}
Last but not least, we discuss the major difference between our work with another similar direction of top instance ranking, where only the ranking of the first few instances in the ranking list is optimized \cite{top1,top2}. First of all, in our setting, we are not optimizing the rank of the top instances but rather the relative ranking of the bottom positive instances and the top negative instances, which are often much harder to be ranked in the correct place. Naturally, our theoretical analysis also differs from what is carried out in these works. Secondly, our proposed framework directly supports end-to-end training of deep neural networks, which is not studied in the literature.

\section{Preliminaries}
\subsection{Standard AUC metric}
Before showing the formal definition of the two-way partial AUC, we first provide a quick review of the standard AUC metric. Under the context of binary classification problems, an instance is denoted as $(\x,y)$, where $\x \in \mathcal{X}$ is the input raw features and $y \in \left\{0,1 \right\}$ is the label. Taking a step further, given a dataset $\mathcal{D}$, denote $\XP$ as the set of positive instances in our dataset, and $\XN$ as the set of the negative ones, then the sampling process could be expressed as:
\begin{equation*}
    \begin{split}
      &\XP = \{\xpi\}_{i=1}^{\np}~ \sub \mathcal{P}: ~\prob\left[\xp | y =1 \right], \\
      &\XN = \{\xnj\}_{j=1}^{\nn}~ \sub \mathcal{N}:~ \prob\left[\xn | y =0 \right], \\
    \end{split}
  \end{equation*}
where $\np, \nn$ are the numbers of positive/negative instances, respectively; and $\mathcal{P},  \mathcal{N}$ are the corresponding conditional distributions.
For binary class problems, our goal is to learn a score function $\f: \mathcal{X} \rightarrow [0,1]$, such that $\f(\x)$ is proportional to the possibility that $\x$ belongs to the positive class. Based on the score function, we can further predict  the label of an instance $\x$ as $\ind{\f(\x) > t}$, where $t$ is the decision threshold, \textbf{$\ind{\cdot}$ is the indicator function}. Given a threshold $t$, we can define two elementary metrics known as True Positive Rate (TPR) and False Positive Rate (FPR), which are the probabilities that a positive/negative instance is predicted as a positive instance, \textit{{i.e.}}:
\begin{equation}
  \begin{split}
    &\mathsf{TPR}_{\f}(t) = \prob_{\xp \in \mathcal{P}}\left[ \f(\xp) >t \right],\\ 
    &\mathsf{FPR}_{\f}(t) = \prob_{\xn \in \mathcal{N}}\left[ \f(\xn) >t \right].
  \end{split}
\end{equation}
Based on the label predictions,  AUC is defined as the Area under the Receiver Operating Characteristic (ROC) curve plotted by True Positive Rate (TPR) against False Positive Rate (FPR) with varying thresholds, which could be expressed mathematically as follows:
\begin{equation}
  \begin{split}
    \mathsf{AUC}(\f) = \int_{0}^1 \mathsf{TPR}_{\f}\left(\mathsf{FPR}^{-1}_{\f}(t)\right)dt.\\ 
  \end{split}
\end{equation}
 When the possibility to observe a tied comparison is null, i.e. 
 \[\prob_{\xp \in \mathcal{P}, \xn \in \mathcal{N}}\left[\f(\xp) = \f(\xn)  \right]= 0,\] 
 AUC is known \cite{ROCmean} to enjoy a much simpler formulation as the possibility that correct ranking takes place between a positive and negative instance: 
\begin{equation*}
    \mathsf{AUC}(\f) = 1- \eP\left[ \eN\left[\lt\left( \f(\xp) -\f(\xn) \right)  \right] \right],
  \end{equation*}
where $\lt$ denotes the $0-1$ loss with $\lt(\x) = 1$ if $x<0$, and  $\lt(\x) = 0$, otherwise. Given a finite dataset $\mathcal{S} = \XP\cup\XN$, the unbiased estimation of $\mathsf{AUC}(\f)$ could be expressed as:
\begin{equation*}
  \widehat{\mathsf{{AUC}}}(\f) = 1-  \sumpn \frac{\lt\left( \f(\xpi) -\f(\xnj) \right)}{\np\nn}.
\end{equation*}

\subsection{Two-Way Partial AUC Metrics}\label{subsec:tpauc}
\textbf{Definitions}. As presented in the introduction, instead of the complete area of ROC, we focus on the area of ROC in a partial region with $\mathsf{TPR}_{\f}(t) \ge 1 - \alpha, \mathsf{FPR}_{\f}(t) \le \beta$, which is called two-way partial AUC in \cite{tpauc}. Here we define it as $\mathsf{AUC}_{\alpha}^\beta$:

\begin{equation*}
  \begin{split}
  \mathsf{AUC}_{\alpha}^\beta(\f) =& \int_{\mathsf{FPR}_{\f}\left(\mathsf{TPR}^{-1}_{\f}(1-\alpha)\right)}^\beta \mathsf{TPR}_{\f}\left(\mathsf{FPR}^{-1}_{\f}(t)\right)dt \\
  & - \big(1-\alpha\big) \cdot \left(\beta - \mathsf{FPR}_{\f}\left(\mathsf{TPR}^{-1}_{\f}(1-\alpha)\right) \right).
  \end{split}
\end{equation*}
Moreover, TPAUC could also be reformulated as a bipartite ranking accuracy across positive and negative classes:
\begin{equation*}
  \mathsf{TPAUC}(\f) = 1- \eP\left[ \eN\left[\delta_{+,-}\lt\left( \f(\xp) -\f(\xn) \right)  \right] \right],
\end{equation*}
where
\begin{align*}
  \delta_{+,-} = \ind{\f(\xp) \le t_{1-\alpha}(\f)} \cdot \ind{\f(\xn) \ge t_{\beta}(\f)},
\end{align*}
and $ t_{1-\alpha}(\f)$ and $ t_{\beta}(\f)$ are two quantiles:
\begin{align*}
  t_{1-\alpha}(f)  &= \argmin_{\delta \in \mathbb{R}}\left[\delta \in \mathbb{R}:~ \eP\left[ \ind{\fxp \le \delta} \right] = \alpha  \right]
  \\
    t_{\beta}(f) &=\argmin_{\delta \in \mathbb{R}}\left[\delta \in \mathbb{R}:~ \eN\left[ \ind{\fxn \ge \delta} \right] = \beta  \right].
 \end{align*}
Different from AUC, the extra term $\delta_{+,-}$ allows TPAUC to attend to the hard positive (with a score below a quantile) and hard negative (with a score above a quantile). 

Since the data distributions $\mathcal{P}, \mathcal{N}$ are often unknown, it is necessary to study the empirical estimation of TPAUC based on an observed dataset $\mathcal{S}$.  \cite{tpauc} derives an empirical version of $\mathsf{AUC}_{\alpha}^\beta(\f)$, where both the expectation and the quantiles are replaced directly with their empirical version. We denote it as $\hat{\mathsf{AUC}}_\alpha^\beta(\f,\mathcal{S})$ in our paper:
\begin{equation*}
  \hat{\mathsf{AUC}}_\alpha^\beta(\f, \mathcal{S})  = 1- \psumpn \frac{\lt\left(\f(\xpio) - f(\xnjo)  \right)}{n_+n_-}
\end{equation*}
where $\xpio$ denotes the hard positive instance that achieves \emph{\textbf{bottom-$i$}} score among all positive instances, and $\xnjo$ denotes the hard negative instance achieves\emph{ \textbf{top-j} }score among all negative instances, $\npa = \floor{\np \cdot \alpha}$, and $\nnb = \floor{\nn \cdot \beta}$, are the numbers of the chosen hard positive and negative examples.  
\subsection{Inconsistency between OPAUC and TPAUC}\label{sec:incon}
\begin{figure}[h]  
  \centering
    \includegraphics[width=0.8\columnwidth]{./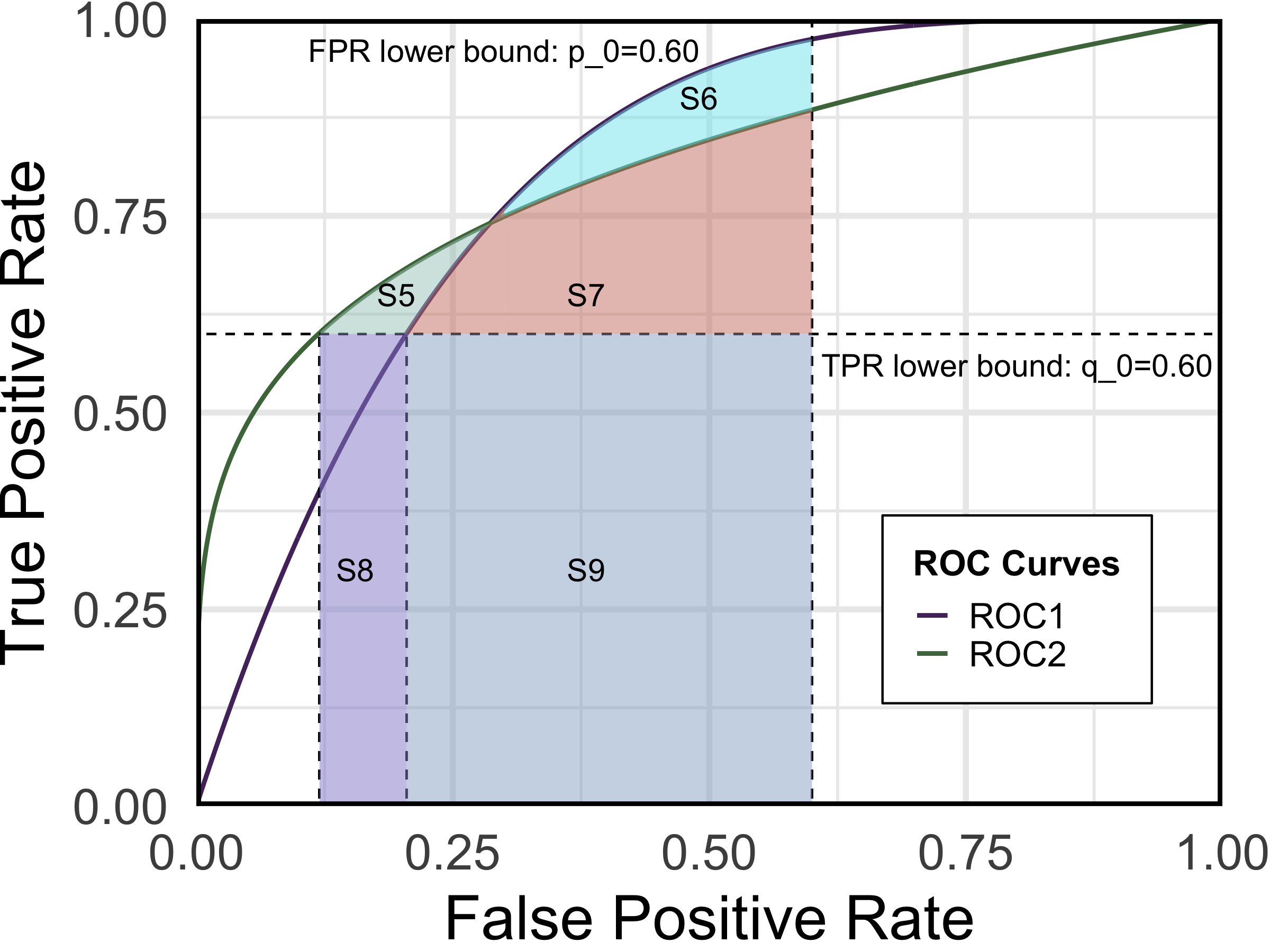} 
  \caption{\label{fig:incons}\textbf{TPR-dependent OPAUC \textit{vs.} TPAUC}: Practically, OPAUC and TPAUC tend to exhibit different comparison results when two ROC curves intersect. For the two ROC curves in the figure, it is easy to see that $TPAUC_1 = S_6 + S_7$ and  $TPAUC_2 = S_5 + S_7$. By calculating their tightest OPAUC upper bounds, we have $OPAUC_1 = S_6 + S_7 + S_9$ and $OPAUC_2 = S_5 + S_7 + S_8 + S_9$. In this case, we have $OPAUC_1 < OPAUC_2$ but $TPAUC_2 < TPAUC_1$ whenever $S_6<S_5 +S_8$ and $S_6>S_5$. Moreover, $ROC_1$ tends to have better higher TPR than $ROC_2$ in the most points within the restricted ROC region. TPAUC is more consistent with a better performance curve than TPR-dependent OPAUC.}
\end{figure}

In this section, we show the inconsistency between TPAUC metric and the OPAUC metric. Mathematically, OPAUC calculates the partial AUC within FPR range $[\alpha,\beta]$, which could be defined as:
  \begin{equation*}
    \begin{split}
    \mathsf{AUC}_{\alpha}^{\beta^{ \mathsf{OP}}}(\f) =& \int_{\alpha}^\beta \mathsf{TPR}_{\f}\left(\mathsf{FPR}^{-1}_{\f}(t)\right)dt .
    \end{split}
\end{equation*}
Recall that TPAUC could be defined as:
\begin{align*}
  \mathsf{AUC}_{\alpha}^{\beta^{ \mathsf{TP}}}(\f) &= \int_{\mathsf{FPR}_{\f}\left(\mathsf{TPR}^{-1}_{\f}(1-\alpha)\right)}^\beta \mathsf{TPR}_{\f}\left(\mathsf{FPR}^{-1}_{\f}(t)\right)dt \\ 
  &- (1-\alpha) \cdot \left(\beta - \mathsf{FPR}_{\f}\left(\mathsf{TPR}^{-1}_{\f}(1-\alpha)\right) \right).
\end{align*}
\noindent From the definitions, we can find that the TPAUC is intrinsically inconsistent with OPAUC. The source of the inconsistency is that both $\mathsf{FTR}_{\f}$ and  $\mathsf{TPR}_{\f}$ are functions of $\f$. It is thus impossible to regard $\mathsf{FPR}^{-1}_{\f}(1-\alpha)$ and $\mathsf{TPR}_{\f}(\mathsf{FPR}^{-1}_{\f}(1-\alpha))$ as constants. Thus one cannot simply replace the  FPR lower bound $\mathsf{FPR}^{-1}_{\f}(1-\alpha)$ with any constant $c$. Consequently, $\mathsf{AUC}_\alpha^\beta(\f)$ is in general not consistent with any OPAUC with  FPR range $[c, \beta]$. 
But can we estimate TPAUC with an OPAUC metric? A natural choice of such OPAUC could be defined as follows:
\begin{align*}
 \int_{\mathsf{FPR}_{\f}\left(\mathsf{TPR}^{-1}_{\f}(1-\alpha)\right)}^\beta \mathsf{TPR}_{\f}\left(\mathsf{FPR}^{-1}_{\f}(t)\right)dt 
\end{align*}
which is called TPR-dependent OPAUC in our paper since its lower limit is defined by a TPR lower bound.  As suggested by \cite{tpauc}, TPR-dependent OPAUC and TPAUC provide inconsistent results when the two ROC curves compared intersect. As shown in Fig.\ref{fig:incons}, we calculate TPAUC with $\alpha = 0.4$ and $\beta = 0.6$. It is easy to see that $TPAUC_1 = S_6 + S_7$ and  $TPAUC_2 = S_5 + S_7$. Next, we calculate the TPR-dependent OPAUC. As shown in the figure, we have $OPAUC_1 = S_6 + S_7 + S_9$ and $OPAUC_2 = S_5 + S_7 + S_8 + S_9$. In this case, we have $OPAUC_1 < OPAUC_2$ but $TPAUC_2 < TPAUC_1$ whenever $S_6<S_5 +S_8$ and $S_6>S_5$. 

Since TPR-dependent OPAUC and TPAUC provide different comparison results, we continue to ask why TPAUC is better than TPR-dependent OPAUC. Recall that our primal goal is to optimize the performance curve with TPR lower bound and FPR upper bound. Back to our example in Fig.\ref{fig:incons}, it is easy to see that, in an average sense, $ROC_1$  
has higher TPR values than $ROC_2$ in the restricted region. (Note that the restricted regions are different for $ROC_1$ and  $ROC_2$). So $ROC_1$ is obviously better than $ROC_2$. In this sense, TPAUC can provide a consistent result with the performance while TPR-dependent OPAUC fails to do so. Such failure is because the OPAUC of $ROC_2$ comes from its wide lower rectangle, which only has a weak connection with the TPR values up in the curve.      

\subsection{Covering Number as A Measure of Complexity}
To obtain an upper bound of the Rademacher complexity, we need to recall the notion of covering number and its chaining bound. First, we provide the definition of $\epsilon$-covering and the covering number, which are elaborated in Def.\ref{def:ecover} and Def.\ref{def:covernum}.
\begin{defi}[$\epsilon$-covering]\label{def:ecover}\cite{tala} Let $(\mathcal{H}, d)$ be a (pseudo)metric space, and $\Theta \in \mathcal{H}$. $\{h_1,\cdots, h_K\}$ is said to be an  $\epsilon$-covering of $\Theta$ if $\Theta \in \bigcup_{i=1}^K\mathcal{B}(h_i,\epsilon)$, \emph{i.e.}, $\forall \theta \in \Theta$, $\exists i ~s.t.~ d(\theta, h_i) \le \epsilon$.
\end{defi}
\begin{defi}[Covering Number]\label{def:covernum}\cite{tala} Based on the notations in Def.\ref{def:ecover}, the covering number of $\Theta$ with radius $\epsilon$ is defined as:
\begin{equation*}
\mathcal{N}(\Theta,\epsilon, d) = \min\left\{n: \exists \epsilon-\text{covering over $\Theta$ with size n}\right\}.
\end{equation*}
\end{defi}
To calculate the covering number, one has to first clarify the metric $d$. In our proof, we will employ the following norms to induce a metric. Given an input set $\mathcal{X}$, for  functions ${f}(\cdot),\tilde{{f}}(\cdot): \mathbb{X} \rightarrow \mathbb{R}_+$  we have:
\begin{align*}
  & ||{f} -\tilde{f}||_{2,\np} = \sqrt{\frac{1}{\np}  \sump (f(\xpi) - \tilde{f}(\xpi))^2},\\ 
  & ||f -\tilde{f}||_{2,\nn} = \sqrt{\frac{1}{\nn}  \sumn (f(\xnj) - \tilde{f}(\xnj))^2},
\end{align*}

\section{The Proposed Framework}

\subsection{A Generic Framework to Construct Surrogate Optimization Problems}\label{sec:surro}
Based on the empirical estimation shown in Sec.\ref{subsec:tpauc}, it is clear that optimizing TPAUC over a finite dataset $\mathcal{S}$ requires minimizing the following quantity:
\begin{equation*}
  1- \hat{\mathsf{AUC}}_\alpha^\beta(\f, \mathcal{S})  = \psumpn \frac{\lt\left(\f(\xpio) - f(\xnjo)  \right)}{n_+n_-}.
\end{equation*}
Following the framework of surrogate loss \cite{mlfun}, we replace the non-differential 0-1 loss with a convex loss function $\ell$, such that $\ell(t)$ is an upper bound of  $\lt(t)$. Note that if the scores live in $[0,1]$, standard loss functions such as $\ell_{\exp}(t) = \exp(-t)$, $\ell_{sq}(t) = (1-t)^2$  often satisfy this constraint. Hence given a feasible surrogate loss $\ell$, our goal is then to solve the following problem:
\begin{equation*}
  \begin{split}
    \opz  \min_{\bm{\theta}}  \rorg(S,\f) =  \psumpn \frac{\ell\left(\f(\xpio) - f(\xnjo)  \right)}{n_+n_-}.
  \end{split}
\end{equation*}
\textbf{where $f$ is parameterized by $\bm{\theta}$.}
Unfortunately, even with the choice of differentiable surrogate losses, the objective function $\rorg(S,\f)$  is still not differentiable. This is because calculating $\xpio, \xnjo$ requires sorting the scores of positive and negative instances. Nonetheless, the objective function is essentially a composition of a sparse sample selection operation and the original loss. This is shown in the following proposition, where $\opz$ is reformulated  as a so-called bi-level optimization problem \cite{bilevel-1,bi-level-2}. The inner-level problems provide a sparse sample selection process, and the outer-level problem performs the optimization based on the chosen instances.  {{Please see Appendix.\ref{app:prop1} for the proof.}}

  \begin{prop}\label{prop:reform}
  For any $\alpha, \beta \in (0,1)$, if scores $\f(\x) \in [0,1]$,  and there are no ties in the scores, the original optimization problem is equivalent to the following problem:
     \begin{equation*}
       \begin{split}
          &\min_{\bm{\theta}}  \frac{1}{\np\nn}\sumpn \vpi \cdot \vnj \cdot \ell(\f, \xpi, \xnj)\\ 
          s.t.~~&\vp = \argmax_{\vpi \in [0,1], \sum_{i=1}^{\np} \vpi \le \npa} \sum_{i=1}^{\np} \left(\vpi \cdot (1- \f(\xpi))\right)\\ 
          &\vn = \argmax_{\vnj \in [0,1], \sum_{j=1}^{\nn} \vnj \le \nnb} \sum_{j=1}^{\nn} \left(\vnj \cdot \f(\xnj) \right)
       \end{split}
     \end{equation*}
     where 
     \begin{equation*}
      \ell(\f, \xpi, \xnj) =   \ell(\f(\xpi) - \f(\xnj)).
     \end{equation*}
    \end{prop}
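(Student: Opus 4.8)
The plan is to show that the two inner-level $\argmax$ problems are linear programs whose \emph{unique} optimal solutions are exactly the $0/1$ indicators of the hardest positive and negative instances, and that substituting these indicators into the outer objective reproduces $\rorg(\mathcal{S},\f)$ term by term. Once this is established, the two problems have the same objective value at every $\bm{\theta}$, hence share the same set of minimizers and are equivalent as optimization problems.

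First I would analyze the positive-side subproblem
\begin{equation*}
  \max_{\vpi \in [0,1],\ \sump \vpi \le \npa}\ \sump \vpi\cdot\big(1-\fxpi\big).
\end{equation*}
This maximizes a linear objective over the polytope given by the unit box intersected with the single budget constraint $\sump\vpi \le \npa$, where $\npa=\floor{\np\alpha}$ is a nonnegative integer. Since $\f(\x)\in[0,1]$, each coefficient $1-\fxpi$ is nonnegative, so there is always an optimum that saturates the budget, i.e.\ with $\sump\vpi=\npa$. Sorting the positives by increasing score, the coefficient $1-\f(\xpio)$ is nonincreasing in the rank $i$, and the no-ties hypothesis makes these coefficients strictly ordered. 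A standard exchange argument --- moving mass from an index with a smaller coefficient to any unsaturated index with a larger coefficient strictly increases the objective --- then shows that the unique maximizer sets $\vpi=1$ on the $\npa$ positives with the smallest scores (the hard positives $\xpio$ for $i\le\npa$) and $\vpi=0$ elsewhere.

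By the same reasoning applied to $\max \sumn \vnj\cdot\fxnj$, whose coefficients $\fxnj\ge 0$ are again strictly ordered by no-ties, the unique optimizer sets $\vnj=1$ on the $\nnb$ negatives with the largest scores (the hard negatives $\xnjo$ for $j\le\nnb$) and $\vnj=0$ otherwise. Plugging these $0/1$ weights into the outer objective collapses the full double sum onto the pairs of hard positives and hard negatives:
\begin{equation*}
  \frac{1}{\np\nn}\sumpn \vpi\,\vnj\,\ell(\f,\xpi,\xnj)
   = \frac{1}{\np\nn}\psumpn \ell\!\big(\f(\xpio)-\f(\xnjo)\big)
   = \rorg(\mathcal{S},\f),
\end{equation*}
and this identity holds for every $\bm{\theta}$. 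Therefore $\opz$ and the bi-level problem have identical objectives pointwise in $\bm{\theta}$ and hence the same set of minimizers.

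The main obstacle is precisely the middle step: certifying that the inner linear program returns the exact top-$\npa$ (resp.\ top-$\nnb$) selection rather than some other optimal point. Two ingredients make this clean: (i) nonnegativity of the coefficients, inherited from the assumption $\f(\x)\in[0,1]$, which forces the budget to be used in full and thus fixes $\sump\vpi=\npa$; and (ii) the no-ties assumption, which renders the sorted coefficients strictly decreasing, so that the greedy top-$k$ selection is the unique maximizer and no fractional optimum can arise. Everything beyond these two points is substitution and bookkeeping.
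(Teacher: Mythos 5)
Your proposal is correct and takes essentially the same route as the paper: the paper's Lemma~\ref{lem:equi} is exactly your key step, showing via the same exchange-style comparison inequality that under the no-ties assumption the budgeted linear program over the unit box has the top-$k$ indicator vector as its unique maximizer, after which Proposition~\ref{prop:reform} follows by the same substitution of the resulting $0/1$ weights (selecting the bottom-$\npa$ positives and top-$\nnb$ negatives) into the outer objective. No gaps beyond the shared, benign edge case of a zero coefficient, which the paper also implicitly excludes.
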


    Based on the proposition, the source of the intractability of $\opz$ comes from the $\ell_1$ ball constraints   $\sum_{i=1}^{\np} \vpi \le \npa,~ \sum_{j=1}^{\nn} \vnj \le \nnb $ in the inner-level problem. Moreover, we can find out  $\lambda_+, \lambda_-$ such that the $\ell_1$ constraints can be equivalently expressed as a penalty function:
    \begin{equation*}
      \begin{split}
        &\vp = \argmax_{\vpi \in [0,1]} \sum_{i=1}^{n_+} \left(\vpi \cdot (1- \f(\xpi))  - {\lambda^+ \cdot \vpi}\right)\\ 
          &\vn = \argmax_{\vnj \in [0,1]} \sum_{j=1}^{\nn} \left(\vnj \cdot \f(\xnj)  - {\lambda^- \cdot \vnj} \right)
      \end{split}
    \end{equation*}

    To establish an efficient approximation of the original problem,  we leave $\lambda^+,\lambda_-$ as hyperparameters to be tuned. Furthermore, to avoid sparsity, we replace the sparsity-inducing  $\ell_1$ penalty with a smooth surrogate $\varphi_\gamma$. This naturally leads to a smooth problem:  
    \begin{equation*}
  \begin{split}
    \opo & \min_{\bm{\theta}}  \frac{1}{n^\alpha_+n^\beta_-}\sumpn \vpi \cdot \vnj \cdot \ell(\f, \xpi, \xnj)\\ 
     s.t~~&\vp = \argmax_{\vpi \in [0,1]} \sum_{i=1}^{n^+} \left(\vpi \cdot (1- \f(\xpi))  -  {{\varphi_{\gamma}(\vpi)}}\right)\\ 
     &\vn = \argmax_{\vnj \in [0,1]} \sum_{j=1}^{\nn} \left(\vnj \cdot \f(\xnj)  - {{\varphi_\gamma(\vnj)}} \right)
  \end{split}
\end{equation*}
To ensure that the chosen $\varphi_\gamma$ provides an effective approximation of the $\ell_1$ penalty, we pose several regularities on such functions. In the following, we define this class of functions as the calibrated smooth penalty function.
  
  \begin{defi}\label{def:pen}
A penalty function $\varphi_\gamma(x): \mathbb{R}_+ \rightarrow \mathbb{R}$  is called a \emph{\textbf{calibrated smooth penalty function}}, if it satisfies the following regularities:
\begin{enumerate}
  \item[(A)] $\varphi_\gamma$ has continuous third-order derivatives.
  \item[(B)] $\varphi_\gamma$ is strictly increasing in the sense that $\varphi_\gamma'(x) >0$.
  \item[(C)] $\varphi_\gamma$ is strictly convex in the sense that $\varphi_\gamma''(x) >0$.
  \item[(D)] $\varphi_\gamma$ has positive third-order derivatives in the sense that $\varphi_\gamma'''(x) >0$.
\end{enumerate}
    \end{defi}
    Note that since we restrict the output of $\f$ to $[0,1]$, condition (B) is inherited from the $\ell_1$ norm for non-negative inputs. While the other conditions improve the smoothness of the function. Moreover, the last condition is to ensure that the weighting function is strictly concave (see the arguments about the weights).

    Given the penalty functions, we turn to explore a corresponding factor in the framework. 
    According to the inner level problem, the sample weights $\vpi, \vnj$ have a dual correspondence with the penalty functions. More precisely, given a fixed $\varphi_\gamma$, one can derive the corresponding weighting function $\psi_\gamma$ as a function of $\f(\x)$ such that:
\begin{equation*}
  \vpi = \psi_\gamma(1-\f(\xpi)),~ \vnj= \psi_\gamma(\f(\xnj)),~ \vpi,\vnj \in [0,1].
\end{equation*}
Moreover, if $\psi_\gamma$ has a closed-form expression, then we can cancel the inner optimization problem and instead minimize the following weighted empirical risk $\rpsi$:
\begin{equation}\label{eq:surr}
  \begin{split}
    \rpsi(\mathcal{S},\f) =&  \frac{1}{\np\nn}\sumpn \psi_\gamma(1-\f(\xpi))  \cdot  \\ &\psi_\gamma(\f(\xnj)) \cdot \ell(\f, \xpi, \xnj).
  \end{split}
\end{equation}

In this sense, adopting a smooth penalty function ends up with a dual soft weighting strategy over the hard instances. Again, to reach a proper weight function, we also require it to satisfy some necessary regularities. In the following, we define this class of functions as the calibrated weighting function. 

    \begin{defi}
       A weighting function $\psi_\gamma(x): [0,1] \rightarrow \mathsf{Rng}$, where $\mathsf{Rng} \subseteq [0,1]$, is called a \emph{\textbf{calibrated weighting function}}, if it satisfies the following regularities:
      \begin{enumerate}
        \item[(A)] $\psi_\gamma$ has continuous second-order derivatives.
        \item[(B)] $\psi_\gamma$ is strictly increasing in the sense that $\psi_\gamma'(x) >0$.
         \item[(C)] $\psi_\gamma$ is strictly concave in the sense that $\psi_\gamma''(x) <0$.
      \end{enumerate}
          \end{defi}
          In this definition, $(B)$ is a natural requirement to make the weight proportional to the target instance's difficulty. Condition $(C)$ is an interesting trait in our framework. To see why this is necessary, let us note that the weight functions $\vpi$, $\vnj$ are continuous surrogates residing in $[0,1]$ for threshold function \begin{equation*}
  \begin{split}
    & \ind{ 1 - \f(\xpi) > 1- \f(\fxpo{\npa})},\\ 
    &\ind{\f(\xnj) > \f(\fxno{\nnb})},
  \end{split}
\end{equation*}
respectively. To be simple, we continue our discussion with a general form $\ind{x > 0}$. Obviously, weight decay for large $x$ should be smooth such that the loss could attend at the top $\f(\xp)$  and $(1 -\f(\xn))$ scores.  Moreover, to avoid over-fitting, the model should as well have sufficient memory of the easy examples. Hence the weights for such examples should not be too close to zero. These observations are exactly typical traits for a concave function. As shown in Fig.\ref{fig:concon}, we visualize the difference between a convex function $y = x^2$ and $y = \ind{x >0.5}$, and the difference between concave function $y = x^{0.05}$ and $y = \ind{x >0.5}$. 
\begin{figure}[t]  
  \centering
    \includegraphics[width=0.98\columnwidth]{./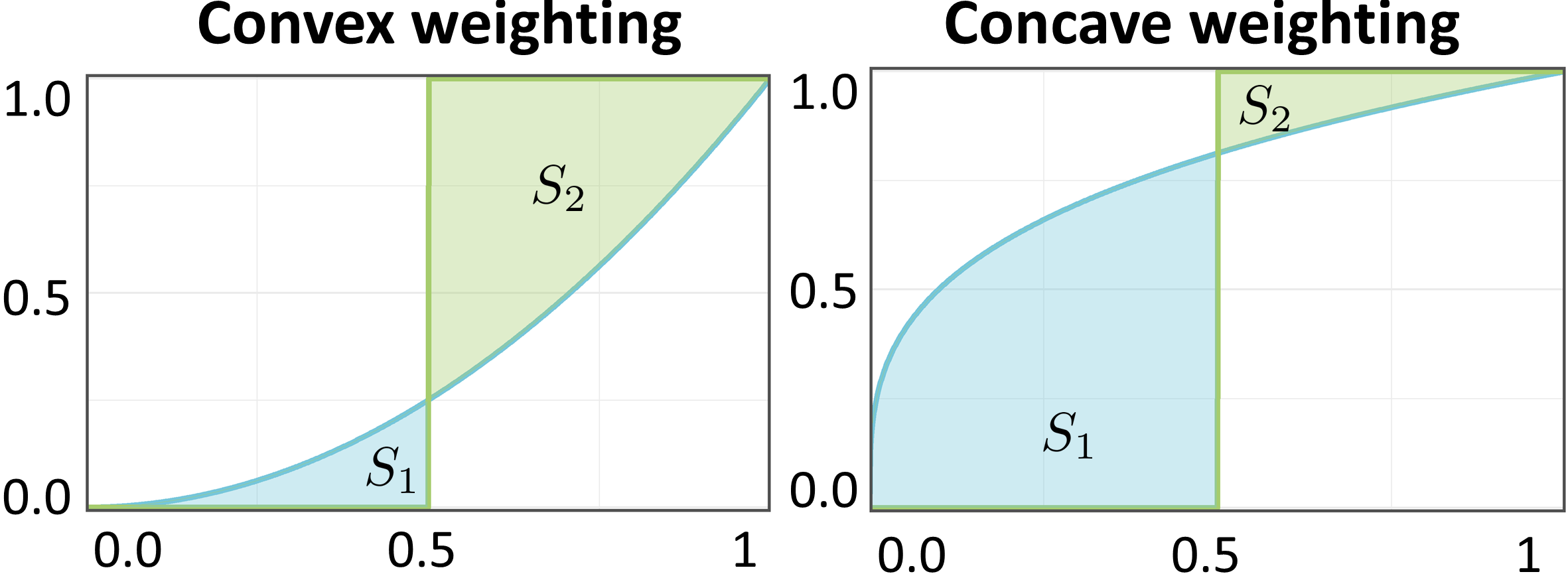} 
  \caption{\label{fig:concon}\textbf{convex vs. concave weighting functions.} It is easy to achieve the upper bound of the original hard threshold weighting when the area of $S_1$ is large and the area of $S_2$ is small. In this sense, concave functions are intuitively more suitable choices for the weighting functions. }
\end{figure}

\begin{figure}[t]  
  \centering
  \subfigure[Sufficient Condition]{
    \includegraphics[width=0.45\columnwidth]{./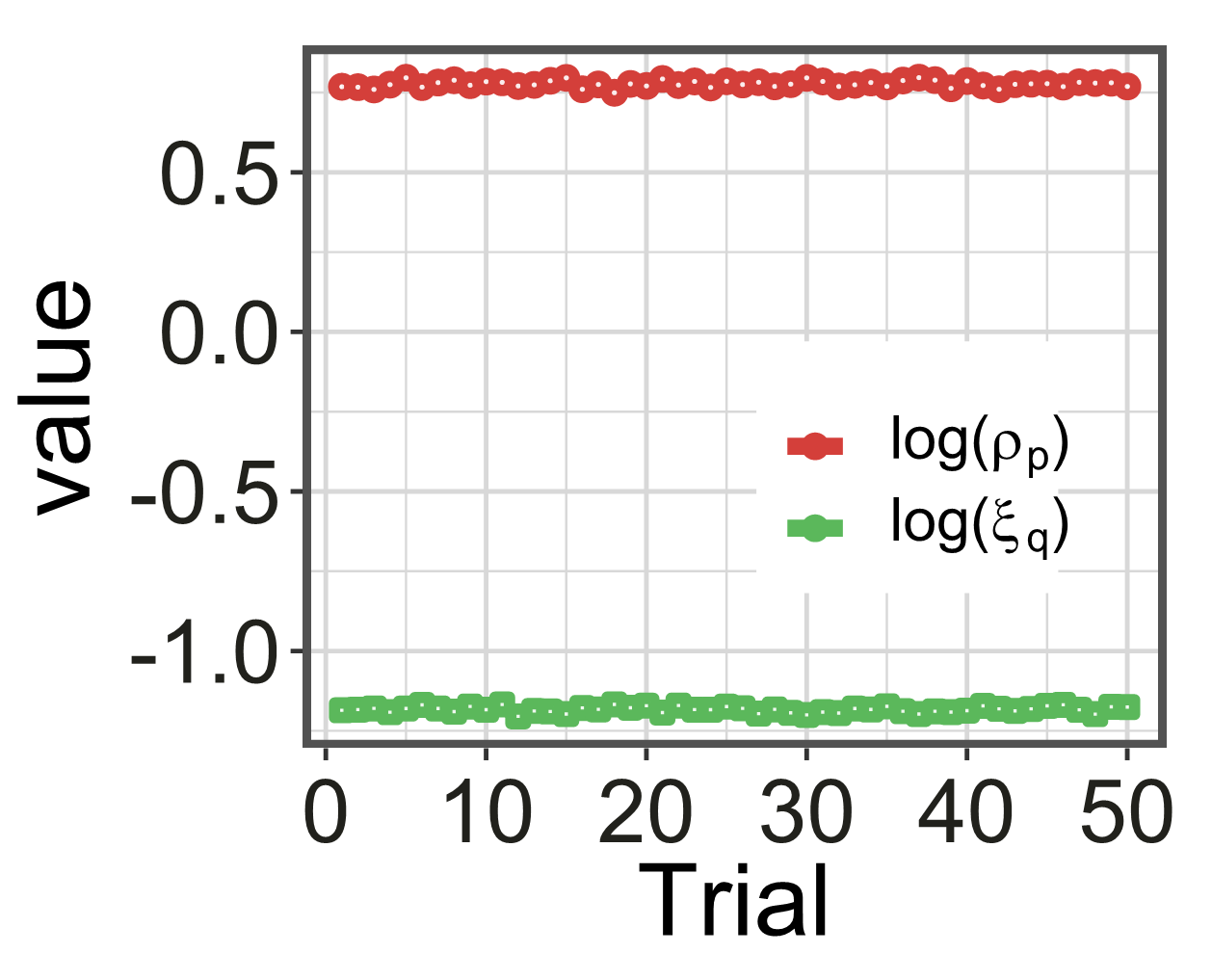} 
  }
  \subfigure[Upper Bound]{
    \includegraphics[width=0.45\columnwidth]{./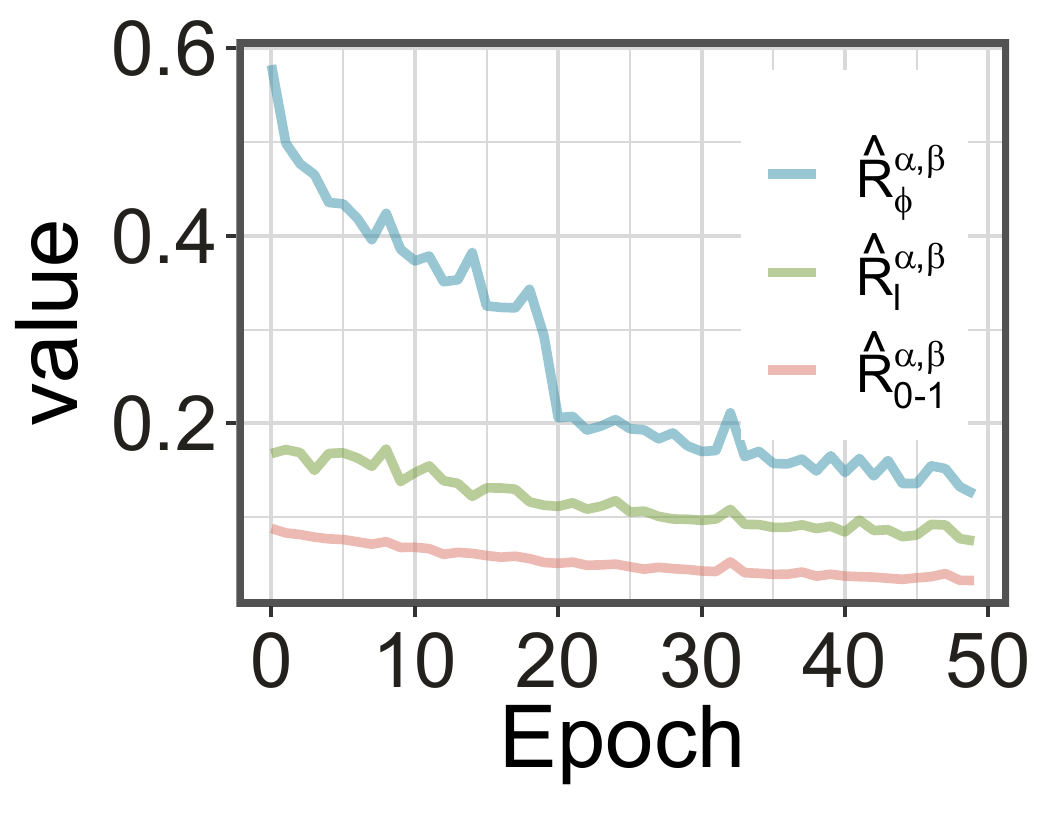} 
  }
\caption{\label{fig:inter}\textbf{Practical Validation of Proposition \ref{prop:reform}}. \textbf{(a)}: We construct a distribution that $f_{\theta}(\bm{x}^+)\sim N(0.5, 0.08)$, $f_{\theta}(\bm{x}^-)\sim N(0.3, 0.08)$, and sample 100 points for each class, and repeat it for 50 trials. For all these trials, we can find $p,q$ that satisfies the ineq. in prop.2 (a). Thus the sufficient condition holds for these trials. \textbf{(b)}: We plot the training curve of the relaxed surrogate empirical risk  $\hat{\mathcal{R}}^{\alpha, \beta}_\psi$, the surrogate empirical risk $\hat{{R}}^{\alpha, \beta}_\ell$, and the $0-1$ empirical risk $\hat{\mathcal{R}}^{\alpha, \beta}_{0-1}$(1-TPAUC) for CIFAR-10-LT-Subset-1, with a concave polynomial weighting function.  As shown in the figure, the relaxed risk function $\hat{\mathcal{R}}^{\alpha, \beta}_\psi$ is always an upper bound of the other two risk functions.}
\end{figure}

Another reason to choose concave functions is that they can benefit the optimization process. More precisely, we expect  the loss function $\rpsi$ in \eqref{eq:surr} to be an upper bound of $\rorg$, such that minimizing $\rpsi$ could also minimize the original loss. Back to Fig.\ref{fig:concon}, this is more likely to happen if $S_1/S_2$ is large. In fact, this is a condition which is much easier for concave functions to satisfy. This could be proved theoretically from the following proposition. From a quantitative perspective, we can provide a sufficient condition under which $\rpsi \ge \rorg$. Moreover, we show that a similar sufficient condition also holds in the population-level. {Please see Appendix.\ref{app:prop2} for the proof.}
          \begin{prop}\label{prop:concon}
Given a strictly increasing weighting function $\psi_\gamma: [0,1] \rightarrow [0,1]$, such that $\vpi = \psi_\gamma(1-\f(\xpi))$, $\vnj = \psi_\gamma(\f(\xnj))$, denote:
\begin{equation*}
  \begin{split}
    &\Ionep = \left\{x_+: x_+ \in \XP, f(x_+) \ge f(x^{(\npa)}_+) \right\}, \\ 
    &\Ionen = \left\{x_-: x_- \in \XN, f(x_-) \le f(x^{(\nnb)}_-) \right\},
  \end{split}
\end{equation*}
denote $\Itwo$ as $(\XP \times  \XN) \backslash (\Ionep \times  \Ionen)$; denote $\expiop[x]$ as the empirical expectation of $x$ over the set $\Ionep$, and  $\expion[x],~ \expiopn,$ $~\expit$ are defined similarly; define
$l_{i,j} = \ell(\f, \xpi,\xnj)$. We assume that
\[\npa \in \mathbb{N}, ~\nnb \in \mathbb{N}, ~\fxp, \fxn \in (0,1), \] 
then:
\begin{enumerate}
  \item[(a)] A sufficient condition for $\rorg(\mathcal{S},\f) \le \rpsi(\mathcal{S},\f)$ is that:
  \begin{equation*}
    \begin{split}
      \sup_{ p \in (0,1), q = -\frac{p}{1-p} } \left[\rho_p -\xi_q \right] \ge 0,
    \end{split}
  \end{equation*} 
  where
  \begin{equation*}
  	\begin{split}
    \rho_p &= \frac{\left(\expit\left[v^{p}_+ \cdot v^{p}_-\right]\right)^{1/p}  }{\left(\expiopn\left[(1 - v_+v_-)^2\right]\right)^{1/2}}, \\
      \xi_q &=  \frac{\alpha\beta}{1-\alpha\beta} \cdot \frac{\left( \expit(\ell^2_{i,j}) \right)^{1/2}}{\left( \expiopn(\ell^q_{i,j}) \right)^{1/q}}.
    \end{split}
  \end{equation*}
  \item[(b)] A sufficient condition for $ \mathcal{R}^\ell_{\alpha,\beta}(f) \le  \mathcal{R}^\ell_{\psi}(f) $ is that:
  \begin{equation*}
    \begin{split}
      \sup_{ p \in (0,1), q = -\frac{p}{1-p} } \left[\rho_p -\xi_q \right] \ge 0,
    \end{split}
  \end{equation*} 
  where
  \begin{equation*}
  	\begin{split}
    \rho_p &= \frac{\left(\expite\left[v^{p}_+ \cdot v^{p}_-\right]\right)^{1/p}  }{\left(\expiopne\left[(1 - v_+v_-)^2\right]\right)^{1/2}}, \\
      \xi_q &=  \frac{\alpha\beta}{1-\alpha\beta} \cdot \frac{\left( \expite(\ell^2_{i,j}) \right)^{1/2}}{\left( \expiopne(\ell^q_{i,j}) \right)^{1/q}}.
    \end{split}
  \end{equation*}
  \item[(c)] If there exists at least one strictly concave $\psi_\gamma$  such that the  $\rorg(\mathcal{S},\f) > \rpsi(\mathcal{S},\f)$, then  $\rorg(\mathcal{S},\f) > \rpsi(\mathcal{S},\f)$ holds for all convex $\psi_\gamma$.
\end{enumerate}
\end{prop}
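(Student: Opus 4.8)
All three parts flow from one elementary rearrangement of the two risks around their common normalization $1/(\np\nn)$. Let $\mathcal H$ be the selected block of pairs actually summed in $\rorg$ (the bottom-$\npa$ positives against the top-$\nnb$ negatives, each carrying the constant weight $1$), and let $\mathcal C=(\XP\times\XN)\setminus\mathcal H$ be its complement. Since $\rpsi$ ranges over \emph{all} pairs with weights $\vpi\vnj\in[0,1]$, subtracting gives
\begin{equation*}
  \rpsi(\mathcal S,\f)-\rorg(\mathcal S,\f)=\frac{1}{\np\nn}\Big[\sum_{\mathcal C}\vpi\vnj\,\ell_{i,j}-\sum_{\mathcal H}(1-\vpi\vnj)\,\ell_{i,j}\Big].
\end{equation*}
Hence $\rpsi\ge\rorg$ is equivalent to the \emph{surplus $\ge$ deficit} inequality, and a clean sufficient condition is obtained by lower-bounding the surplus $\sum_{\mathcal C}\vpi\vnj\ell_{i,j}$ and upper-bounding the deficit $\sum_{\mathcal H}(1-\vpi\vnj)\ell_{i,j}$.

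\textbf{The two analytic engines.} The surplus is a sum of products of nonnegative terms, and the exponent pair $(p,q)$ with $q=-\tfrac{p}{1-p}$, $p\in(0,1)$ is \emph{exactly} Hölder-conjugate ($\tfrac1p+\tfrac1q=1$) in the regime $0<p<1$, $q<0$. I would therefore invoke the \emph{reverse Hölder inequality}, $\sum a\,b\ge(\sum a^{p})^{1/p}(\sum b^{q})^{1/q}$ for positive $a,b$, with $a=\vpi\vnj$ and $b=\ell_{i,j}$. This bounds the surplus from below by a product of an $L^{p}$-type norm of the weights (the $v_+^{p}v_-^{p}$ factor raised to $1/p$) and an $L^{q}$-type norm of the loss (the $\ell^{q}$ factor raised to $1/q$, legitimate because $\fxp,\fxn\in(0,1)$ and strict monotonicity of $\ell$ force $\ell_{i,j}>0$). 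For the deficit, the gap $1-\vpi\vnj$ makes Cauchy--Schwarz the natural tool, $\sum(1-\vpi\vnj)\ell_{i,j}\le(\sum(1-\vpi\vnj)^2)^{1/2}(\sum\ell_{i,j}^2)^{1/2}$, producing the $(1-v_+v_-)^2$ and $\ell^2$ factors. Normalizing the raw sums into the averages $\expit,\expiopn$ over the two regions $\Itwo$ and $\Ionep\times\Ionen$ introduces the mass ratio of the selected block to its complement, $\npa\nnb/(\np\nn-\npa\nnb)$, which under $\npa\approx\alpha\np$, $\nnb\approx\beta\nn$ is $\tfrac{\alpha\beta}{1-\alpha\beta}$; collecting everything rearranges the surplus/deficit comparison into $\rho_p\ge\xi_q$. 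Because the conjugate relation holds for \emph{every} $p\in(0,1)$, each $p$ supplies one sufficient condition, so the weakest one is recovered by the $\sup_{p}$ in the statement.

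\textbf{Population version (b) and transfer (c).} Part (b) repeats the argument verbatim with sums replaced by $\eP\eN[\cdot]$: the selected block becomes the event $\{\fxp\le t_{1-\alpha}(\f)\}\times\{\fxn\ge t_{\beta}(\f)\}$, whose probability is \emph{exactly} $\alpha\beta$ by the definition of the quantiles, so the factor $\tfrac{\alpha\beta}{1-\alpha\beta}$ is exact rather than up to flooring; reverse Hölder and Cauchy--Schwarz hold for integrals unchanged, yielding the condition with $\expite,\expiopne$. Part (c) needs no Hölder machinery, only pointwise domination plus monotonicity. Normalizing the weighting functions to share their two endpoint values on $[0,1]$, any increasing concave $\psi_\gamma$ lies on or above the chord through those endpoints while any increasing convex one lies on or below it, so $\psi^{\mathrm{cav}}_\gamma\ge\psi^{\mathrm{cvx}}_\gamma$ pointwise. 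Since $\ell_{i,j}\ge0$ and the weights enter $\rpsi$ multiplicatively and nonnegatively, $\rpsi$ is monotone nondecreasing in the weighting function; thus if some concave $\psi_\gamma$ already gives $\rorg>\rpsi$, then $\hat{\mathcal R}^{\ell}_{\psi^{\mathrm{cvx}}}\le\hat{\mathcal R}^{\ell}_{\psi^{\mathrm{cav}}}<\rorg$ for every convex choice, which is the assertion.

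\textbf{Main obstacle.} The delicate step is the second engine: correctly allocating the squared norms ($\ell^2$ and $(1-v_+v_-)^2$) versus the conjugate-exponent norms ($v_+^{p}v_-^{p}$ and $\ell^{q}$, $q<0$) across the two disjoint index regions $\Itwo$ and $\Ionep\times\Ionen$, so that after normalization the surplus lower bound and the deficit upper bound chain into \emph{exactly} the ratios $\rho_p$ and $\xi_q$. Managing the negative exponent $q$ and checking that the resulting $\sup_{p}$ yields a genuine (non-vacuous) condition are the points demanding the most care; the endpoint-normalization needed to make concavity imply pointwise domination in (c) is the only subtlety there.
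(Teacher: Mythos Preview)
Your proposal is correct and follows essentially the same route as the paper. The paper's proof likewise writes $\rpsi-\rorg=(1-\alpha\beta)\expit[v_+v_-\ell]-\alpha\beta\,\expiopn[(1-v_+v_-)\ell]$, applies the reverse H\"older inequality (exponents $p\in(0,1)$, $q=-p/(1-p)$) to lower-bound the first term and Cauchy--Schwarz to upper-bound the second, then observes that the resulting inequality is exactly $\rho_p\ge\xi_q$; part (b) is declared a verbatim population-level copy, and part (c) is the same chord argument ($\psi^{\mathrm{cav}}(y)>y\ge\psi^{\mathrm{cvx}}(y)$ on $(0,1)$) followed by monotonicity of $\rpsi$ in the weights.
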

\begin{figure*}[th]  

  \subfigure[$(\gamma -1)^{-1} = 0.3$]{
  
    \includegraphics[width=0.23\textwidth]{./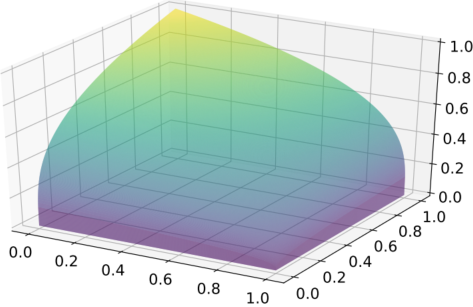} 
  }
  \subfigure[$(\gamma -1)^{-1}  =0.5$]{
    \includegraphics[width=0.23\textwidth]{./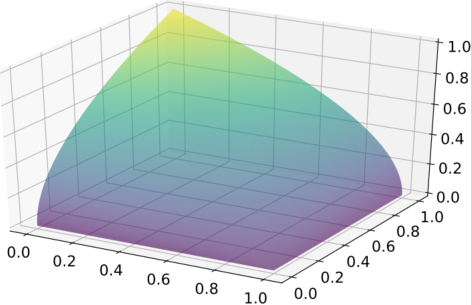} 
  }
  \subfigure[$\gamma =5$]{
    \includegraphics[width=0.23\textwidth]{./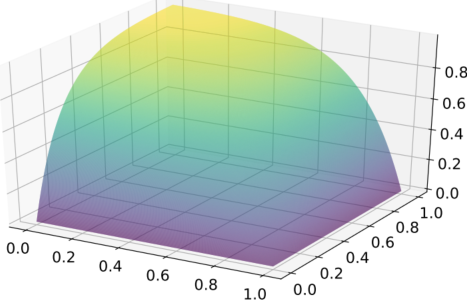} 
  }
  \subfigure[$\gamma = 10$]{
    \includegraphics[width=0.23\textwidth]{./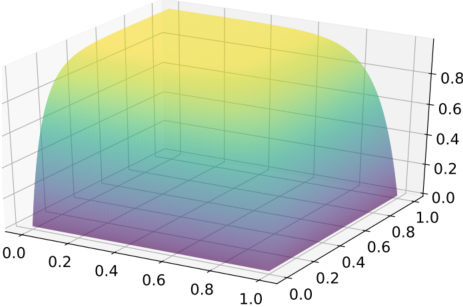} 
  
  }
  \caption{\label{fig:weight} Visualization of the Landscape of the pairwise weights $\psi_\gamma(x) \cdot \psi_\gamma(y)$. Here, (a) and (b) plot $\psi^{\mathsf{poly}}_{\gamma}$, while (c) and (d) plot $\psi^{\mathsf{exp}}_{\gamma}$. }
\end{figure*}

\noindent According to Prop.\ref{prop:concon}, $\rpsi$ becomes an upper bound of $\rorg$ if $\alpha, \beta$ are small, and the empirical distribution has significant masses at instances with moderate difficulty. To validate the result empirically, we conduct the following two sets of experiments. 
\begin{enumerate}
  \item[1)] First, we validate the effectiveness of the sufficient condition. To do this, we construct a distribution that $f_{\theta}(\bm{x}^+)\sim N(0.5, 0.08)$, $f_{\theta}(\bm{x}^-)\sim N(0.3, 0.08)$, and sample 100 points for each class, and repeat it for 50 trials, the result of which is shown in Fig.\ref{fig:inter}-(a). For all these trials, we can find $p,q$ that satisfy the inequality: $\rho_p -\xi_q > 0$. 
   Thus the sufficient condition holds for these trials.
   \item [2)] Second, in Fig.\ref{fig:inter}-(b), we plot the training curve of the relaxed surrogate empirical risk  $\hat{\mathcal{R}}^{\alpha, \beta}_\psi$, the surrogate empirical risk $\hat{{R}}^{\alpha, \beta}_\ell$, and the $0-1$ empirical risk $\hat{\mathcal{R}}^{\alpha, \beta}_{0-1}$(1-TPAUC) for CIFAR-10-LT-Subset-1 (please refer to Sec.\ref{sec:exp} for more details), with $\ell(t) = (1-t)^2$  and a concave polynomial weighting function.  As shown in the figure, the relaxed risk function $\hat{\mathcal{R}}^{\alpha, \beta}_\psi$ is always an upper bound of the surrogate risk. Moreover, we can observe that all three curves show a convergent trend. This again validates the effectiveness of our proposed relaxed weighting scheme.
\end{enumerate}

\textbf{Dual Correspondence Theory.} Now with the penalty function and weighting function clarified, we establish their dual correspondence with the following proposition. {{Please see Appendix.\ref{app:prop3} for the proof.}}
\begin{prop}\label{prop:dual}
  Given a strictly convex function $\varphi_\gamma$, and define $\psi_\gamma(t)$ as:
  \begin{equation*}
  \psi_\gamma(t) = \argmax_{v \in [0,1]} ~v \cdot t - \varphi_\gamma(v),
  \end{equation*} 
  then we can draw the following conclusions:
  \begin{enumerate}
    \item[(a)] If $\varphi_\gamma$ is a calibrated smooth penalty function, we have $\psi_\gamma(t) = \varphi_{\gamma}^{'-1}(t)$,which  is a calibrated weighting function.
    \item[(b)] If $\psi_\gamma$ is a calibrated weighting function such that $v = \psi_\gamma(t)$, we have $\varphi_\gamma(v) = \int \psi^{-1}_\gamma(v)dv +const.$, which is a  calibrated smooth penalty function.
  \end{enumerate}
  \end{prop}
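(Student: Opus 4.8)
The plan is to recognize the defining relation $\psi_\gamma(t) = \argmax_{v \in [0,1]} v\cdot t - \varphi_\gamma(v)$ as a (constrained) Legendre--Fenchel conjugacy and then extract every calibration property from the inverse function theorem plus a one-line chain-rule computation. For part (a), I would first note that the objective $g(v) = v\cdot t - \varphi_\gamma(v)$ is strictly concave in $v$, since $\varphi_\gamma$ is strictly convex by condition (C); hence the maximizer over the box $[0,1]$ is unique. Setting $g'(v) = t - \varphi_\gamma'(v) = 0$ identifies the stationary point $v^\star = (\varphi_\gamma')^{-1}(t)$, which is well-defined and single-valued because $\varphi_\gamma'$ is strictly increasing (again by (C)). I would then argue that on the relevant range of $t$ this point lies in $[0,1]$, so the box constraint is inactive and the clean identity $\psi_\gamma(t) = (\varphi_\gamma')^{-1}(t)$ holds.

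Next I would verify the three weighting-function conditions for $\psi_\gamma = (\varphi_\gamma')^{-1}$. Smoothness (A) follows from the inverse function theorem: $\varphi_\gamma$ is $C^3$ by (A), so $\varphi_\gamma'$ is $C^2$ with nonvanishing derivative $\varphi_\gamma'' > 0$, whence its inverse is $C^2$. Differentiating $\varphi_\gamma'(\psi_\gamma(t)) = t$ once gives $\psi_\gamma'(t) = 1/\varphi_\gamma''(\psi_\gamma(t)) > 0$, which is condition (B). Differentiating a second time yields
\[\psi_\gamma''(t) = -\frac{\varphi_\gamma'''(\psi_\gamma(t))}{\big(\varphi_\gamma''(\psi_\gamma(t))\big)^3},\]
so $\psi_\gamma'' < 0$ exactly because of the positive third-derivative condition (D) together with $\varphi_\gamma'' > 0$; this is condition (C). This last computation is the crux: it is precisely where (D) is needed, which explains why that condition was imposed in Def.~2.

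For part (b) I would run the argument in reverse. Since $\psi_\gamma$ is strictly increasing by (B), it admits a $C^2$ inverse $\psi_\gamma^{-1}$ (inverse function theorem, using $\psi_\gamma' > 0$ and the $C^2$ smoothness (A)). Defining $\varphi_\gamma$ through $\varphi_\gamma'(v) = \psi_\gamma^{-1}(v)$, i.e.\ $\varphi_\gamma(v) = \int \psi_\gamma^{-1}(v)\,dv + \mathrm{const}$, immediately gives continuity of $\varphi_\gamma'''$, hence condition (A). Strict monotonicity (B) holds wherever $\psi_\gamma^{-1} > 0$; strict convexity (C) follows from $\varphi_\gamma''(v) = (\psi_\gamma^{-1})'(v) = 1/\psi_\gamma'(\psi_\gamma^{-1}(v)) > 0$; and the positive third derivative (D) comes from
\[\varphi_\gamma'''(v) = (\psi_\gamma^{-1})''(v) = -\frac{\psi_\gamma''(\psi_\gamma^{-1}(v))}{\big(\psi_\gamma'(\psi_\gamma^{-1}(v))\big)^3} > 0,\]
where positivity now uses the strict concavity (C) of $\psi_\gamma$. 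The symmetry of the two computations also gives a free sanity check that the maps in (a) and (b) are mutually inverse.

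I expect the main obstacle to be the boundary/domain bookkeeping rather than the calculus. In part (a) one must confirm that the box constraint $v \in [0,1]$ is inactive, so that $\psi_\gamma = (\varphi_\gamma')^{-1}$ holds rather than a clipped variant; and in part (b) one must restrict to the interior where $\psi_\gamma^{-1}(v) > 0$ to secure the strict monotonicity (B) of $\varphi_\gamma$, since $\psi_\gamma^{-1}$ can vanish at the lower endpoint $v = \psi_\gamma(0)$. Once these domains are pinned down, the derivative identities above are routine.
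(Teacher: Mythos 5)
Your proposal is correct and follows essentially the same route as the paper's proof: unique maximizer of the strictly concave objective via the stationarity condition $t = \varphi_\gamma'(v)$, the identity $\psi_\gamma = (\varphi_\gamma')^{-1}$, and the identical chain-rule computations $\psi_\gamma' = 1/\varphi_\gamma''\circ\psi_\gamma > 0$ and $\psi_\gamma'' = -\varphi_\gamma'''\circ\psi_\gamma/(\varphi_\gamma''\circ\psi_\gamma)^3 < 0$ for (a), with the symmetric formulas for (b). Your explicit attention to the inactivity of the box constraint $v \in [0,1]$ and to the endpoint where $\psi_\gamma^{-1}$ may vanish is a minor refinement the paper leaves implicit, but it does not change the argument.
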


  According to Prop.\ref{prop:dual}, given a calibrated smooth penalty function, one can obtain an implicit soft weighting strategy via Prop.\ref{prop:dual}-(a). Likewise, given a calibrated weighting function, one can find an implicit regularizer over the sample weights via Prop.\ref{prop:dual}-(b). Based on the regularities of the two components, both $\psi$ and $\varphi$ have closed-form formulations if the other one is known. This means that we can solve the bi-level optimization framework in $\opo$  by simply minimizing the resulting  $\rpsi(\mathcal{S}, \f)$, leading to a much simpler optimization that can be solved directly by an end-to-end training framework. \emph{Consequently, the dual correspondence theory provides a simple way to establish a surrogate optimization problem of TPAUC, once a weighting function or a penalty function is at hand.} 

  \subsection{An Instance-wise Minimax Reformulation of the Relaxed Objective Function}
  It is easy to see that the proposed objective function is formed as a sum of pairwise loss functions, which brings extra computational burden and the difficulty to generate a mini-batch. To address this issue, inspired by \cite{minimaxauc1}, we propose a minimax reformulation of the original objective function, where the objective function turns out to be an instance-wise loss sum. Note that, for the sake of convenience, we adopt the shorthand notation:
  \begin{align*}
    \vpi =\psi_\gamma(1-\fxpi),~ \vnj = \psi_\gamma(\fxnj),
  \end{align*}
  in this subsection. Moreover, denote:
  \begin{align*}
    &\cp = \frac{1}{\np} \sump \vpi, ~~\cn = \frac{1}{\nn} \sumn \vnj,\\
    &\f^+ = \frac{1}{\np} \sump \vpi \cdot \f(\xpi),
    ~~\f^- = \frac{1}{\nn} \sumn \vnj \cdot \f(\xnj),\\
    &\f^{+,2} = \frac{1}{\np} \sump \vpi \cdot \f^2(\xpi),
    ~~\f^{-,2} = \frac{1}{\nn} \sumn \vnj \cdot \f^2(\xnj),\\
  \end{align*}
we come to the following theorem, please see Sec.\ref{sec:1} for the proof:

\begin{thm}\label{thm:reform} Denote $v^\gamma_\infty = \sup_{{x}} |\psi_\gamma(x)|$, $f_\infty = \sup_{{x}} |\f(x)|$. assume that $v^\gamma_\infty < \infty, f_\infty < \infty$, $\ell(t) = (1-t)^2$, $(OP1)$ could be reformulated as \footnote{the inequalities $\bm{0} \le \bm{a} \le \bm{c}_a$ and $\bm{0} \le \bm{b} \le  \bm{c}_b$ should be understood  elementwise.}:
 \begin{equation*}
  \min_{\bm{\theta},  \bm{0}_{10} \le \bm{a} \le \bm{c}_a } \max_{ \bm{0}_{8} \le \bm{b} \le  \bm{c}_b }~\bm{a}^\top \zeta_1 + \bm{b}^\top \zeta_2 -||\tilde{\bm{b}}||^2+||\tilde{\bm{a}}||^2,
\end{equation*}
where
\begin{align*}
   \zeta_1  &= -[\cp,~ \cn,~ 2(\f^+ + \cn),~ 2\cp,~ 2\f^-,~ \cn,~ \f^{+,2},\cp,~ \f^{-,2},\\
    &~~~~~~~~~~~~ 2(\f^+ + \f^-)] ,\\
  \zeta_2 &={\color{white}{-}} [\cp + \cn,~ 2\cn,~ 2\f^+,~ 2(\f^- +\cp) ,~ \cn + \f^{+,2},\\  
           &~~~~~~~~~~~~ \cp + \f^{-,2}, 2 \f^+,~ 2 \f^-], \\
   \tilde{\bm{a}} &= (1/\sqrt{2}) \cdot [a_1, a_2, \sqrt{2} \cdot a_3, \sqrt{2}a_4, \sqrt{2}a_5, a_6, a_7, a_8, \\ 
   &~~~~~~~~~~~~~~~~~~~~~a_9, \sqrt{2} \cdot  a_{10}], \\
  \tilde{\bm{b}} &= (1/\sqrt{2}) \cdot [b_1, \sqrt{2} \cdot b_2,\sqrt{2} \cdot b_3, \sqrt{2} b_4, b_5, b_6,\\ 
  & ~~~~~~~~~~~~~~~~~~~~~ \sqrt{2} \cdot b_7,\sqrt{2} \cdot b_8],\\
  \bm{c}_a &= [v^\gamma_\infty,v^\gamma_\infty,v^\gamma_\infty + f_\infty,v^\gamma_\infty,f_\infty,v^\gamma_\infty,f^2_\infty,v^\gamma_\infty,f^2_\infty,2  f_\infty], \\
  \bm{c}_b &= [2v^\gamma_\infty,v^\gamma_\infty,f_\infty,v^\gamma_\infty + f_\infty,v^\gamma_\infty + f^2_\infty,v^\gamma_\infty + f^2_\infty,f_\infty,f_\infty]. 
\end{align*}

\end{thm}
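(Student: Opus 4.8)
The plan is to reduce $\opo$ to a plain minimization of the weighted risk $\rpsi$ and then show the proposed $\min\max$ expression equals this risk exactly. Since in this subsection the weights $\vpi = \psi_\gamma(1-\fxpi)$ and $\vnj = \psi_\gamma(\fxnj)$ are fixed functions of the scores, the dual correspondence of Prop.\ref{prop:dual} solves the inner level of $\opo$ in closed form, so the whole problem becomes $\min_{\bm{\theta}}\rpsi(\mathcal{S},\f)$. It therefore suffices to carry out the inner optimizations over $\bm{a}$ and $\bm{b}$ and verify that the resulting function of $\bm{\theta}$ coincides with $\rpsi$.

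First I would substitute $\ell(t)=(1-t)^2$ and expand $\ell(\f,\xpi,\xnj)=(1-\fxpi+\fxnj)^2 = 1+\fxpi^2+\fxnj^2-2\fxpi+2\fxnj-2\fxpi\fxnj$. Because $\vpi$ depends only on the $i$-th positive and $\vnj$ only on the $j$-th negative, the normalized double sum factorizes into products of per-class empirical averages, and collecting terms gives
\[
\rpsi = \cp\cn - 2\f^{+}\cn + \f^{+,2}\cn + 2\cp\f^{-} - 2\f^{+}\f^{-} + \cp\f^{-,2},
\]
a signed combination of six products, each pairing a positive-side average ($\cp,\f^{+},\f^{+,2}$) with a negative-side average ($\cn,\f^{-},\f^{-,2}$). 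These products of averages are exactly the non-separable structure that obstructs unbiased mini-batch gradients, so the goal of the reformulation is to linearize them.

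To linearize, I would apply the polarization identity $cXY = \tfrac{c}{2}\big[(X+Y)^2 - X^2 - Y^2\big]$ to each product and then represent each resulting square by a conjugate identity: $+\tfrac12 Z^2 = \max_{b}\big(Zb-\tfrac12 b^2\big)$ (a maximization variable in $\bm{b}$) and $-\tfrac12 Z^2 = \min_{a}\big(\tfrac12 a^2 - Za\big)$ (a minimization variable in $\bm{a}$), with the scaling chosen to absorb the coefficient $1$ or $2$ of the product. For a product with positive coefficient the sum-square $(X+Y)^2$ carries a plus sign and is assigned to $\bm{b}$ while its two correction squares go to $\bm{a}$; for a negative coefficient the roles reverse. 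This accounts precisely for the $10$ entries of $\bm{a}$ (the four positive products each contribute two $a$-squares, the two negative products each contribute one) and the $8$ entries of $\bm{b}$. Reading off the multiplier of each auxiliary variable yields $\zeta_1,\zeta_2$, and the per-coordinate scalings give $\tilde{\bm{a}},\tilde{\bm{b}}$.

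Finally I would verify two things. (i) Substituting the inner optima back, every pure-square monomial ($\cp^2,\cn^2,(\f^{+})^2,(\f^{-})^2,(\f^{+,2})^2,(\f^{-,2})^2$) cancels between the $\bm{a}$- and $\bm{b}$-contributions, leaving only the six cross terms of $\rpsi$; this is the heart of the calculation. (ii) The box constraints $\bm{0}\le\bm{a}\le\bm{c}_a$, $\bm{0}\le\bm{b}\le\bm{c}_b$ are inactive at the optimum: the unconstrained optimizer of each auxiliary variable equals the corresponding (sum of) averages, which is bounded via $\vpi,\vnj\le v^\gamma_\infty$ and $|\f|\le f_\infty$, so taking $\bm{c}_a,\bm{c}_b$ as the stated upper envelopes makes the constrained and unconstrained inner problems equivalent. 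Because $\bm{a}$ and $\bm{b}$ never appear in a common term, the problem is separable and the $\min\max$ order is immaterial, so no duality gap arises and the identity is exact rather than a bound. I expect the main obstacle to be the bookkeeping in step (i): assigning each of the eighteen squares to $\bm{a}$ or $\bm{b}$ with the correct sign and scaling so that the pure squares cancel exactly, while simultaneously checking that the chosen boxes contain every optimizer and are consistent with the elementwise reading of the vector inequalities.
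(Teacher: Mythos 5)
Your proposal matches the paper's proof essentially step for step: expand the squared loss to get the six-product form $\cp\cn - 2\cn\f^{+} + 2\cp\f^{-} + \cn\f^{+,2} + \cp\f^{-,2} - 2\f^{+}\f^{-}$, polarize each product via $pq=\tfrac12\left[(p+q)^2-p^2-q^2\right]$, replace each square by its conjugate $y^2=\max_x\,(2xy-x^2)$ with the sign determining whether the auxiliary variable joins $\bm{a}$ (min) or $\bm{b}$ (max), and verify the boxes $\bm{c}_a,\bm{c}_b$ contain the unconstrained optimizers — this is precisely the content of the paper's Lemma on $\nu(b,a_1,a_2,p,q)$ applied term by term, including your correct count of ten $a$-variables and eight $b$-variables. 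Your only deviation is justifying the exchangeability of $\min$ and $\max$ by additive separability of the $\bm{a}$- and $\bm{b}$-terms rather than by von Neumann's minimax theorem as the paper does, which is a valid (and slightly more elementary) shortcut rather than a different route.
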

We adopt the stochastic gradient descent-ascent algorithm \cite{sgda} to perform the minimax optimization, which is summarized  in Alg.\ref{algo1}. Moreover, it is easy to see that the new objective function is strongly-concave with respect to the max player $\bm{b}$, if in addition it is a smooth function, we can attain an iteration complexity of $O(1/\epsilon^4)$ to obtain an $\epsilon$-stationary point with a proper choice of $\eta_1,\eta_2,\eta_3$.  \cite{sgda}.
\begin{algorithm}[htb]
	\caption{The Minimax optimization algorithm of the TPAUC framework.}
	\label{algo1}
	\hspace*{0.02in} {\bf Input:} 
	 $\mathcal{D}$, $\gamma$ \\
	\hspace*{0.02in} {\bf Output:} 
	best parameters $\bm{\theta}$
	\begin{algorithmic}[1]
	\State Initialize $\bm{a} = \bm{b} = \bm{0}$.
  \State $\bm{C}_a = (1/\sqrt{2}) \cdot [1, 1, \sqrt{2}, \sqrt{2}, \sqrt{2}, 1, 1, 1, 1, \sqrt{2}]$;
  \State $\bm{C}_b = (1/\sqrt{2}) \cdot [1, \sqrt{2}, \sqrt{2}, \sqrt{2}, 1, 1, \sqrt{2},\sqrt{2}]$;
  \State Setup the stepsizes $\eta_1,\eta_2,\eta_3$.
	\While{\textit{Not Converged}} 
	\State Random sample a mini-batch $\mathcal{B}$ from $\mathcal{D}$;
	\State Calculate $\zeta_1$, $\zeta_2$, $\tilde{\bm{a}}$, $\tilde{\bm{b}}$,  according to $\mathcal{B}$;
	\State $\bm{\theta} = \bm{\theta} - \eta_1 \cdot \nabla_{\bm{\theta}}r(\bm{\theta})$;
	\State $\bm{a} = \bm{a} - \eta_2 \cdot (\zeta_1 + 2 \cdot \bm{C}_a \odot \tilde{\bm{a}})$;
	\State $\bm{b} = \bm{b} + \eta_3 \cdot (\zeta_2 - 2 \cdot \bm{C}_b \odot \tilde{\bm{b}})$;
	\EndWhile
	\State \Return $\bm{\theta}$;
	\end{algorithmic}
\end{algorithm}

  \subsection{Two Instantiations of the Generic Framework}
  Based on the generic framework, in this subsection, we provide two practical instantiations.  

  \noindent \textbf{Polynomial Surrogate Model.} From the penalty function perspective, the original $\ell_1$ penalty  realizes  $\psi_\gamma = \gamma \cdot t$. In this way, it is a natural choice to adopt a polynomial  penalty $ \varphi^{\mathsf{poly}}_\gamma(t) = C \cdot t^\gamma$ as a dense surrogate for $\ell_1$. Inspired by this, we propose a polynomial  surrogate model as example \ref{exa:poly}.
  \begin{exa}[Polynomial Surrogate Model]\label{exa:poly}
  In the polynomial surrogate model, we set:
  \begin{equation*}
    \varphi^{\mathsf{poly}}_\gamma(t) = \frac{1}{\gamma} \cdot t^\gamma, ~
    \psi^{\mathsf{poly}}_\gamma(t) =  {t}^{\frac{1}{\gamma-1}}, ~\gamma > 2
  \end{equation*}
  \end{exa}
  The visualizations of the weights are shown in Fig.\ref{fig:weight}.
  \begin{exa}[Exponential Surrogate Model]\label{exp:exp}
    In the exponential surrogate model, we set:
    \begin{equation*}
      \varphi^{\mathsf{exp}}_\gamma(t) = \frac{(1-t)( \log(1-t) - 1) + 1}{\gamma},  ~
      \psi^{\mathsf{exp}}_\gamma(t) = 1-e^{-\gamma t}
    \end{equation*}
    \end{exa}
\noindent  \textbf{Exponential Surrogate Model.} Considering the properties of the weighting functions, we expect that $\psi_\gamma$ will have a flat landscape for large $t$. Motivated by this, we adopt an exponential weighting function $ \psi^{\mathsf{exp}}_\gamma(t) = 1-e^{-\gamma t}$ (the landscape is shown in Fig.\ref{fig:weight} (c)-(d)).
   The resulting model is then shown as Exp.\ref{exp:exp}. The visualizations of the weights are shown in Fig.\ref{fig:weight}.
   \section{Theoretical Analysis}
   In this section, we will present an investigation into the theoretical guarantees of the proposed framework. Specifically, our study includes two aspects. In the first subsection, we will derive the Bayes optimal scoring function to see what are the best solutions under TPAUC optimization. In the second subsection, we turn to analyze the generalization ability of the proposed framework, where we provide excess risk upper bounds under two different assumptions.  

   Note that all the proofs for the results are shown in the appendix: please see App.\ref{sec:2and3} for the details associated with Thm.\ref{prop:bayes} and Thm.\ref{thm:simplebayes}; please see  App.\ref{sec:4} for the details associated with Thm.\ref{thm:gen}; please see App.\ref{sec:5} for the details associated with Thm.\ref{thm:abs}.
   \subsection{Bayes scoring functions for TPAUC}
   In this subsection, we derive the set of Bayes learners realizing the optimized TPAUC in expectation.

   First, given a scoring function $f$, we define $\cfab,\cfabup, \cfabdown$ as:
   \begin{align*}
   &\cfab = \{\bm{x} \in \mathcal{X}: t_{\beta}(f) \le f(\bm{x}) \le t_{1-\alpha}(f) \},\\
   &\cfabup = \{(\bm{x}_1,\bm{\x}_2) \in \mathcal{X}^2: f(\x_1) > \fa, f(\x_2) \le \fa \},\\
   &\cfabdown = \{ (\bm{x}_1,\bm{\x}_2) \in \mathcal{X}^2 : f(\x_1) < \fb, f(\x_2) \ge \fb \},
   \end{align*}
   where
   \begin{align*}
    &t_{1-\alpha}(f)  = \argmin\left[\eP\left[ \ind{\fxp \ge \delta} \right] = 1-\alpha  \right]
    \\
      &t_{\beta}(f) =\argmin\left[ \eN\left[ \ind{\fxn \ge \delta} \right] = \beta  \right].
   \end{align*}
and $\mathcal{X}$ is the domain of the input feature.

   The following theorem reveals the set of all Bayes optimal scoring functions:
   
   \begin{thm}\label{prop:bayes}
     Assume that $t_{\beta}(f) \le t_{1-\alpha}(f)$, and that there are no tied comparisons, $f$ is a Bayes  scoring function for $\mathsf{TPAUC}^\alpha_\beta$ if it is a solution to the  following problem:
     \begin{align*}
      \min_f &\int_{\x_1,\x_2 \in \cfab \otimes \cfab} p(\x_1) \cdot p(\x_2) \cdot \min\left\{\eta_{1,2},\eta_{2,1}\right\} d\x_1d\x_2 \\ 
    +& 2 \int_{\cfabup}  p(\x_1) \cdot p(\x_2) \cdot\eta_{2,1}\cdot d\x_1d\x_2\\ 
    +&  2 \int_{\cfabdown}  p(\x_1) \cdot p(\x_2) \cdot\eta_{1,2} \cdot d\x_1d\x_2  \\
       \end{align*}
Specifically, for all $\x_1,\x_2 \in \cfab \otimes \cfab$, we have:
   \begin{equation*}
     \begin{split}
      \left(\eta(\bm{x}_1) - \eta(\bm{x}_2)\right) \cdot \left(f(\bm{x}_1) - f(\bm{x}_2)\right)  >0,\\ 
     \end{split}
   \end{equation*}
   where $\eta(\x) =  \mathbb{P}\left[y =1 | \x\right]$, $\eta_{1,2} = \eta(\x_1)(1-\eta(\x_2)),~ \eta_{2,1} = \eta(\x_2)(1-\eta(\x_1))$, $p(\x)$ is the p.d.f of the marginal distribution of $\x$.
   \end{thm}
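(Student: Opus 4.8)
The plan is to reduce the population TPAUC risk to a pointwise problem in the posterior $\eta(\x)=\prob[y=1\mid\x]$ and then read off the optimal ordering region by region. First I would rewrite the quantity to be minimized, $\eP[\eN[\delta_{+,-}\lt(\fxp-\fxn)]]$, as an integral against the marginal density $p(\x)$. By Bayes' rule the positive conditional density is proportional to $\eta(\x)p(\x)$ and the negative one to $(1-\eta(\x))p(\x)$, so writing $\pi_+=\prob[y=1]$, $\pi_-=\prob[y=0]$, the risk equals, up to the positive constant $1/(\pi_+\pi_-)$,
\begin{equation*}
\begin{split}
\int\!\!\int &\ind{f(\x_1)\le t_{1-\alpha}(f)}\,\ind{f(\x_2)\ge t_{\beta}(f)}\,\ind{f(\x_1)<f(\x_2)}\\
&\times\,\eta(\x_1)(1-\eta(\x_2))\,p(\x_1)p(\x_2)\,d\x_1 d\x_2,
\end{split}
\end{equation*}
where $\x_1$ plays the positive role and $\x_2$ the negative role. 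This isolates $\eta$ as the only label-dependent object.

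Next I would partition $\mathcal{X}$, for a fixed $f$, into the score regions $L=\{\x:f(\x)<t_\beta(f)\}$, $M=\cfab$, and $H=\{\x:f(\x)>t_{1-\alpha}(f)\}$, and split the double integral according to where $\x_1,\x_2$ land. The integrand is supported on $\x_1\in L\cup M$, $\x_2\in M\cup H$ with $f(\x_1)<f(\x_2)$. Whenever at least one point lies outside $M$ the score comparison is forced (e.g. $\x_1\in L$ forces $f(\x_1)<f(\x_2)$ for every admissible $\x_2$), so those contributions depend only on region membership; after the symmetric relabeling $\x_1\leftrightarrow\x_2$ they reorganize into the $\cfabup$ and $\cfabdown$ integrals carrying the weights $\eta_{2,1}$ and $\eta_{1,2}$ respectively. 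The only genuinely free part is the interaction inside $M\times M$.

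For the middle region I would use an exchange (bubble‑sort) argument. For $\x_1,\x_2\in\cfab$, the order $f(\x_1)<f(\x_2)$ contributes $\eta_{1,2}=\eta(\x_1)(1-\eta(\x_2))$ while the reverse contributes $\eta_{2,1}$; since $\eta_{1,2}\gtrless\eta_{2,1}\iff\eta(\x_1)\gtrless\eta(\x_2)$, the minimizing order is exactly the one aligned with $\eta$, producing the integrand $\min\{\eta_{1,2},\eta_{2,1}\}$ and the pointwise condition $(\eta(\x_1)-\eta(\x_2))(f(\x_1)-f(\x_2))>0$. To promote this to a statement about $f$ I would argue by contradiction: if a Bayes‑optimal $f$ violated the ordering on a set of positive measure, a measure‑preserving rearrangement of $f$ swapping the two points while keeping both strictly inside $(t_\beta(f),t_{1-\alpha}(f))$ leaves the thresholds — hence all region memberships and all cross‑pair terms — unchanged and strictly lowers the objective, a contradiction. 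Assembling the forced‑region and middle‑region pieces then yields the displayed objective, whose minimizers are Bayes‑optimal.

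The hard part will be the self‑referential dependence of the thresholds $t_{1-\alpha}(f),t_{\beta}(f)$ on $f$: because varying $f$ moves the integration regions, both the decomposition and the exchange step are only legitimate if the perturbations respect the mass constraints $\prob_{\mathcal{P}}[f\le t_{1-\alpha}(f)]=\alpha$ and $\prob_{\mathcal{N}}[f\ge t_{\beta}(f)]=\beta$. The delicate points are to justify restricting attention to rearrangements internal to $M$ (so that the constraints, and thus the thresholds, stay frozen) and to dispose of the boundary and tie cases — here the hypotheses $t_{\beta}(f)\le t_{1-\alpha}(f)$ and ``no tied comparisons'' are precisely what make $M$ well defined and the swaps admissible. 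The final check is that the assembled objective and the exact risk share the same minimizers, since they can differ only on the extreme low‑positive/high‑negative pairs, which the $\eta$‑aligned ordering minimizes simultaneously.
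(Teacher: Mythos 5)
Your proposal is correct and follows essentially the same route as the paper's proof: both rewrite the population risk as a double integral of $\eta(\x_1)(1-\eta(\x_2))$ against $p(\x_1)p(\x_2)$ over the order/threshold events, split it by the score regions determined by $t_{\beta}(f)$ and $t_{1-\alpha}(f)$ (the paper's Cases 1--6), note that pairs with at least one point outside $\cfab$ are order-forced and assemble into the $\cfabup$ and $\cfabdown$ terms with the symmetry factor $2$, and apply the exchange comparison $\eta_{1,2}\gtrless\eta_{2,1}\iff\eta(\x_1)\gtrless\eta(\x_2)$ on $\cfab\otimes\cfab$ to obtain the $\min\{\eta_{1,2},\eta_{2,1}\}$ integrand and the pointwise ordering condition. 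Your threshold-freezing rearrangement (and your closing remark about the extreme low/high pairs) is merely a more explicit rendering of the paper's per-pair minimization at fixed region membership, not a different method.
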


   \noindent To optimize TPAUC, it suffices to make sure that, within the set $\cfab$, the scoring function $f$ preserves the ranking result of $\eta$ and that  $\cfab, \cfabup, \cfabdown$ are carefully chosen to minimize the Bayes error.  This is a  much tougher condition than that for AUC (One does not need to choose any instance at all) and for OPAUC (Only FPR has a constraint). Unfortunately, the optimization problem in Thm.\ref{prop:bayes} does not have a closed-form solution in general. Moreover, we can see that $\eta$ may not be a solution of the Bayes classifier since the choice of $\cfab, \cfabup, \cfabdown$ with $f=\eta$ might not be optimal. This matches with a previous conjecture \cite{top1} that $\eta$ might be suboptimal for metric like OPAUC. However, if we restrict the choice of $f$ in a simple subclass of the hypothesis, it is possible to find a much simpler solution. To do this, we first define the $(\alpha,\beta)$-weakly consistent $f$ functions.
  
   \begin{defi}[$(\alpha,\beta)$-Weakly Consistent $f$] $f$ is said to be \textbf{$(\alpha,\beta)$-weakly consistent} w.r.t. $\eta(\x) = \prob\left[ y = 1|\x\right]$, if:
     \begin{equation*}
      \mathcal{C}_f^{\alpha,\beta} =  \mathcal{C}_\eta^{\alpha,\beta},~ \tilde{\mathcal{C}}_f^{\alpha,\beta,\uparrow} =  \tilde{\mathcal{C}}_\eta^{\alpha,\beta,\uparrow},~ \tilde{\mathcal{C}}_f^{\alpha,\beta,\downarrow} =  \tilde{\mathcal{C}}_\eta^{\alpha,\beta,\downarrow}
     \end{equation*}
     where
     \begin{align*}
      &\tilde{\mathcal{C}}_f^{\alpha,\beta,\uparrow} = \{\x \in \mathcal{X}: f(\x) > \fa \},\\ 
      &\tilde{\mathcal{C}}_f^{\alpha,\beta,\downarrow} = \{\x \in \mathcal{X}: f(\x) < \fb \}.
     \end{align*}
   \end{defi}
\noindent We call such $f$ functions weakly consistent since they can at least provide a consistent partition of the three intervals: $ \mathcal{C}_f^{\alpha,\beta}, \tilde{\mathcal{C}}_f^{\alpha,\beta,\uparrow}, \tilde{\mathcal{C}}_f^{\alpha,\beta,\downarrow}$ w.r.t $\eta$.
   Then the following theorem shows that if we focus on the subclass $\mathcal{F}_a$, then the optimal scoring functions have a simpler expression.
   \begin{thm}\label{thm:simplebayes}
    Under the same setting as Thm.\ref{prop:bayes},  if we restrict our choice of $f$ in $\mathcal{F}_{a}$, where 
    \begin{align*}
    \mathcal{F}_{a} = \{&f: f ~\text{is}~ (\alpha,\beta)\text{-weakly consistent w.r.t}~\eta \},
    \end{align*}
 Then $f$ minimizes the expected risk in $\mathcal{F}_a$, if $\forall (\x_1,\x_2) \in \mathcal{C}^{\alpha,\beta}_{\eta} \otimes  \mathcal{C}^{\alpha,\beta}_{\eta}$:
  \begin{align*}
    (f(\x_1) -f(\x_2)) \cdot (\eta(\x_1) -\eta(\x_2)) >0.
  \end{align*} 
  \end{thm}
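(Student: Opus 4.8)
The plan is to build directly on Thm.~\ref{prop:bayes}. Its Bayes objective is the sum of a \emph{band} integral over $\cfab \otimes \cfab$ with integrand $\min\{\eta_{1,2},\eta_{2,1}\}$, plus two \emph{boundary} integrals, $2\int_{\cfabup} p(\x_1)p(\x_2)\,\eta_{2,1}$ and $2\int_{\cfabdown} p(\x_1)p(\x_2)\,\eta_{1,2}$. The expected risk of a \emph{given} $f$ agrees with this objective except inside the band, where the optimal value $\min\{\eta_{1,2},\eta_{2,1}\}$ is replaced by the ordering-dependent loss $\ind{f(\x_1) < f(\x_2)}\eta_{1,2} + \ind{f(\x_1) > f(\x_2)}\eta_{2,1}$. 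My strategy is to show that, once $f$ is restricted to $\mathcal{F}_a$, the whole partition of $\mathcal{X}$ freezes, so minimizing the risk over $\mathcal{F}_a$ collapses to choosing the best within-band ordering, which is exactly governed by the stated ranking condition.

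First I would record the invariance of the partition. By $(\alpha,\beta)$-weak consistency, $\cfab = \mathcal{C}^{\alpha,\beta}_\eta$, $\tilde{\mathcal{C}}_f^{\alpha,\beta,\uparrow} = \tilde{\mathcal{C}}_\eta^{\alpha,\beta,\uparrow}$ and $\tilde{\mathcal{C}}_f^{\alpha,\beta,\downarrow} = \tilde{\mathcal{C}}_\eta^{\alpha,\beta,\downarrow}$ for every $f \in \mathcal{F}_a$. Since $\cfabup$ and $\cfabdown$ are products of the sets $\{f > \fa\}$, $\{f \le \fa\}$, $\{f < \fb\}$, $\{f \ge \fb\}$, they equal the corresponding products built from $\eta$ and hence coincide for all $f \in \mathcal{F}_a$. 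Because the integrands $\eta_{2,1}$ and $\eta_{1,2}$ of the two boundary pieces involve only $\eta$, those boundary integrals are \emph{constant} across $\mathcal{F}_a$. Likewise the band domain $\cfab \otimes \cfab$ is fixed, so the only surviving $f$-dependence is the within-band ordering.

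Next I would perform the pointwise minimization of the band integrand. Over the fixed domain $\mathcal{C}^{\alpha,\beta}_\eta \otimes \mathcal{C}^{\alpha,\beta}_\eta$, minimizing the integral reduces to minimizing, for each pair, the quantity $\ind{f(\x_1) < f(\x_2)}\eta_{1,2} + \ind{f(\x_1) > f(\x_2)}\eta_{2,1}$, i.e.\ selecting the smaller of $\eta_{1,2}$ and $\eta_{2,1}$. Using the elementary identity $\eta_{1,2} - \eta_{2,1} = \eta(\x_1) - \eta(\x_2)$, the minimizer $\min\{\eta_{1,2},\eta_{2,1}\}$ is attained precisely when $f$ ranks the pair the way $\eta$ does, i.e.\ when $(f(\x_1) - f(\x_2))(\eta(\x_1) - \eta(\x_2)) > 0$. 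Any $f$ obeying this condition on $\mathcal{C}^{\alpha,\beta}_\eta \otimes \mathcal{C}^{\alpha,\beta}_\eta$ drives the band integral down to $\int_{\cfab \otimes \cfab} p(\x_1)p(\x_2)\min\{\eta_{1,2},\eta_{2,1}\}\,d\x_1 d\x_2$, which together with the frozen boundary constants reproduces the Bayes objective of Thm.~\ref{prop:bayes}; hence such $f$ minimizes the expected risk over $\mathcal{F}_a$.

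The hard part will be the bookkeeping that certifies the partition is genuinely frozen and that pointwise optimality lifts to the integral. Concretely, I must verify that weak consistency, together with the no-ties assumption and $t_\beta(f) \le t_{1-\alpha}(f)$, pins down the thresholds $\fa$ and $\fb$ so that $\cfabup$ and $\cfabdown$ are truly $f$-independent; and I must dispose of the measure-zero tie sets $\{f(\x_1) = f(\x_2)\}$ and $\{\eta(\x_1) = \eta(\x_2)\}$. On the latter either order is optimal, which is exactly why the strict inequality is required only where $\eta$ separates the pair and why the stated condition is sufficient rather than necessary. The remaining steps — boundedness of the band integrand and transferring pointwise minimization to the integral over a fixed domain — are routine.
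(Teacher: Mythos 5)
Your proposal is correct and follows essentially the same route as the paper's proof: weak consistency freezes $\cfab$, $\cfabup$ and $\cfabdown$ at their $\eta$-counterparts, so the two boundary integrals in the objective of Thm.~\ref{prop:bayes} become constants over $\mathcal{F}_a$, and pointwise minimization of the remaining band term via the identity $\eta_{1,2}-\eta_{2,1}=\eta(\x_1)-\eta(\x_2)$ yields exactly the stated ranking condition. The extra bookkeeping you flag (the $f$-independence of the boundary product sets and the measure-zero tie sets) is likewise glossed over in the paper and is routine, so nothing essential differs between the two arguments.
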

   
  \subsection{Generalization Analysis}

  In this subsection, we explore how generalization error behaves away from the training error in terms of the TPAUC metric. In other words, we will show when a well-trained model will lead to a reasonable generalization performance. Our analysis is based on a standard assumption that the classifiers are chosen from a hypothesis class $\mathcal{F}$ (e.g. the class of a specific type of deep neural network). To do this, we provide uniform upper bounds for the excess risk $ \mathcal{R}^\ell_{\alpha, \beta} - \hat{\mathcal{R}}^\ell_{\psi}$. Such that 
  \[ \sup_{f \in \mathcal{F}}[\mathcal{R}^\ell_{\alpha, \beta} - \hat{\mathcal{R}}^\ell_{\psi}] \le \epsilon \] 
 holds with high probability. Specifically, we present two results using two different assumptions.
 \begin{asm}\label{asum:bound}
 We focus on scoring functions satisfying either of the following assumptions:
 \begin{enumerate}
 
   \item[(a)] The sufficient condition in Prop.\ref{prop:concon}-(a) holds, such that $\rorg(\mathcal{S},\f) \le \rpsi(\mathcal{S},\f)$.
   \item[(b)] The sufficient condition in Prop.\ref{prop:concon}-(c) holds, such that $\mathcal{R}^\ell_{\alpha, \beta}(f) \le \mathcal{R}^{\ell}_{\psi}(f)$.
 
 \end{enumerate}
 \end{asm}
 
 \noindent Moreover, since we are dealing with imbalanced datasets, we adopt the following assumption without loss of generality.
 \begin{asm}\label{asm:imb}
  We assume that the training data satisfies: $\np \ll \nn.$
 \end{asm}
 
 \noindent \textbf{Note that the assumption above will be employed throughout Sec.\ref{subsubsec:b}; we will omit in the claim of the lemmas and theorems therein.}
 
 Finally, since only the order of the bounds is of interest, we will use the asymptotic notation $\lesssim$ such that constants and undominated terms could be omitted from the expressions. Specifically, we have:
 \[f(x) \lesssim g(x) ~ \Longleftrightarrow ~ \exists ~\text{a constant}~C~ s.t.~  f(x) \le C \cdot g(x). \]

\subsubsection{\textbf{Results under Assumption (a)}}

When assumption (a) is available, we have:
\begin{equation*}
  \mathcal{R}^\ell_{\alpha, \beta} -  \hat{\mathcal{R}}^\ell_{\psi}  \le   \mathcal{R}^\ell_{\alpha, \beta} -  \hat{\mathcal{R}}^\ell_{\alpha,\beta}. 
\end{equation*}
\noindent This allows us to instead focus on  the excess risk $\mathcal{R}^\ell_{\alpha, \beta} -  \hat{\mathcal{R}}^\ell_{\alpha,\beta}$. The key challenge here is that $\rorg(\mathcal{S},\f)$ is not an unbiased estimation of ${\mathsf{AUC}}_\alpha^\beta(\f, \mathcal{S})$, making standard generalization analysis \cite{mlfun} unavailable.  Here we extend the error decomposition technique for OPAUC \cite{partial3} and employ the result in Prop.\ref{prop:concon} to reach the following theorem. More precisely, we have the following decomposition lemma which allows one to decompose the upper bound of the  excess risk into the sum of the excess risk of the threshold functions. 

\begin{lem}[Excess Risk Decomposition]\label{lem:decomp}
  For $\forall f \in \mathcal{F}$, we have:
  \begin{equation*}
    \begin{split}
   \aucf - \aucs \le  2 (\Delta_+ + \Delta_-)
    \end{split}
  \end{equation*}
where
\begin{align*}
  \Delta_+  &= \sup_{\delta  \in \mathbb{R}}  \left| \frac{1}{\np} \cdot \sump   \ind{\fxpi \le \delta} -  \eP \left[ \ind{\fxp \le \delta}  \right]
  \right|\\
  \Delta_- & = \sup_{\delta  \in \mathbb{R}}  \left| \sumtermp \cdot \sumn   \ind{\fxnj \ge \delta} -  \eN \left[ \ind{\fxn \ge \delta}  \right]
  \right|
\end{align*}
\end{lem}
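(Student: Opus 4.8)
The plan is to exploit that, since $\xp$ and $\xn$ are drawn independently and $\lt(\fxp-\fxn)=\ind{\fxp<\fxn}$ couples the two scores only through their comparison, both $\aucf$ and $\aucs$ are the \emph{same} functional evaluated at different marginal score distributions. I would write $G_+,G_-$ for the laws of $\fxp,\fxn$ under $\mathcal{P},\mathcal{N}$ and $\hat{G}_+,\hat{G}_-$ for their empirical counterparts, so that $\Delta_+=\sup_\delta|\hat{G}_+(\delta)-G_+(\delta)|$ and $\Delta_-=\sup_\delta|\hat{G}_-(\delta)-G_-(\delta)|$ are exactly the quantities in the statement (the negative survival deviation equals the CDF deviation). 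With no ties, selecting the $\npa$ lowest-scored positives and the $\nnb$ highest-scored negatives is precisely thresholding at the empirical quantiles $\hat{t}_{1-\alpha}=f(\fxpo{\npa})$ and $\hat{t}_\beta=f(\fxno{\nnb})$, whereas the population risk thresholds at $t_{1-\alpha}(f),t_\beta(f)$. I would therefore first record the two representations
\[ \aucf=\int\!\!\int \ind{s\le t_{1-\alpha}(f)}\,\ind{u\ge t_\beta(f)}\,\ind{s<u}\;dG_+(s)\,dG_-(u), \]
and the identical double integral for $\aucs$ taken against $d\hat{G}_+\,d\hat{G}_-$ with $\hat{t}_{1-\alpha},\hat{t}_\beta$ in place of $t_{1-\alpha}(f),t_\beta(f)$. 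This collapses a two-sample comparison into the control of two one-dimensional CDF deviations, and only boundedness of $\lt$ by $1$ is used.

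Next I would bound $\aucf-\aucs$ by telescoping through four hybrids, changing one ingredient at a time in the order (i) $t_{1-\alpha}(f)\to\hat{t}_{1-\alpha}$, (ii) $t_\beta(f)\to\hat{t}_\beta$, (iii) $G_+\to\hat{G}_+$, (iv) $G_-\to\hat{G}_-$, taking absolute values at each step. In the threshold steps the integrand changes only on the score interval between the old and new threshold, and since the remaining factors lie in $[0,1]$, step (i) is at most the $G_+$-mass of that interval, $|G_+(\hat{t}_{1-\alpha})-\alpha|$. Using $\hat{G}_+(\hat{t}_{1-\alpha})=\npa/\np=\alpha$ (valid when $\npa=\np\alpha$ is an integer, so no floor slack) together with $|G_+-\hat{G}_+|\le\Delta_+$ pointwise, this is at most $\Delta_+$; step (ii) is symmetric and at most $\Delta_-$.

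For the measure steps, once the empirical thresholds are frozen the inner integral over the partner class produces a nonnegative, monotone weight bounded by $1$ — e.g. in step (iii) the weight $g(s)=\ind{s\le\hat{t}_{1-\alpha}}\int\ind{u\ge\hat{t}_\beta}\,\ind{s<u}\,dG_-(u)$ is nonincreasing in $s$. I would then bound $\int g\,d(G_+-\hat{G}_+)$ by a layer-cake argument: writing $g=\int_0^{\|g\|_\infty}\ind{g>\tau}\,d\tau$, each superlevel set $\{g>\tau\}$ is a half-line, so its $(G_+-\hat{G}_+)$-measure is a single CDF difference of magnitude at most $\Delta_+$; integrating over $\tau\in[0,\|g\|_\infty]\subseteq[0,1]$ yields at most $\Delta_+$ (equivalently, Riemann--Stieltjes integration by parts against a weight of total variation $\le 1$). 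Step (iv) is symmetric and at most $\Delta_-$. Summing the four contributions gives $\aucf-\aucs\le 2(\Delta_++\Delta_-)$.

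The main obstacle is the interplay of the \emph{data-dependent} empirical quantile thresholds with the \emph{data-dependent} integrating measures: one must decouple shifting a threshold from swapping a measure, and verify that after the thresholds are frozen the partner-class weight stays monotone and bounded by $1$, so that each elementary step collapses to a single supremum-CDF deviation rather than a genuine two-sample fluctuation. Keeping these weights monotone is exactly what pins the constant to $2$ — one unit per class for the threshold shift and one for the measure swap — instead of a larger constant, while the integrality $\npa,\nnb\in\mathbb{N}$ is what removes the $O(1/\np+1/\nn)$ floor terms that would otherwise contaminate the threshold steps.
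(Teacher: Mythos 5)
Your proposal is correct and follows essentially the same route as the paper's proof, which likewise telescopes through four hybrid quantities ($R_1$ through $R_5$), bounding the two threshold-swap steps via the exact quantile identities (relying on $\npa,\nnb\in\mathbb{N}$ and the no-ties assumption, as in its steps $(a_1)$--$(a_3)$) and the two measure-swap steps by a single sup-CDF deviation each, for a total of $2(\Delta_+ + \Delta_-)$. The only difference is the order of hybridization: the paper swaps each empirical measure in while the partner class is still conditioned pointwise, so every measure-swap integrand is a half-line indicator of the form $\ind{\fxp \le \min\{\fxnj, \delta\}}$ and a direct $\sup_\delta$ bound suffices, whereas your thresholds-first ordering makes the frozen weight a monotone $[0,1]$-valued function and hence requires your layer-cake (or Fubini) argument --- a valid, mildly heavier substitute that yields the same constant.
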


    \begin{thm}[Excess Risk Bound]\label{thm:gen}
      Assume that there are no ties in the datasets, and the surrogate loss function $\ell$ with range $[0,1]$, is an upper bound of the $0\text{-}1$ loss, then, for all $\f \in \mathcal{F}$, and all $(\alpha, \beta) \in \mathcal{I}^1_{suff}(\mathcal{S})$, the following inequality holds with probability at least $1 - \delta$ over the choice of $\mathcal{S}$:
\begin{align*}
    \aucf - ~ \rpsi(\f, \mathcal{S}) \lesssim&  \sqrt{\frac{\VC \cdot \log(\np) + \log(1/\delta)}{\np}} \\ 
    & ~ + \sqrt{\frac{\VC \cdot \log(\nn) + \log(1/\delta)}{\nn}} ,
  \end{align*}

  where $\VC$ is the VC dimension of the hypothesis class: 
  \begin{align*}
    \mathcal{T}(\mathcal{F}) \triangleq \{\mathsf{sign}(\f(\cdot) - \delta): \f \in \mathcal{F}, ~ \delta \in \mathbb{R}\}
  \end{align*}
  and

  \begin{align*}
    \mathcal{I}^1_{suff}(\mathcal{S}) =  \bigg\{ &(\alpha,\beta): \alpha \in (0,1),~ \beta \in(0,1),~ \npa \in \mathbb{N}_+,~ \\  
    &\nnb \in \mathbb{N}_+, \text{condition (a) in Prop.\ref{prop:concon} holds} \bigg\},
   \end{align*}

    \end{thm}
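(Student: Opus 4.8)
The plan is to reduce the population-level excess risk to two one-dimensional uniform deviations and then control those with VC theory. Concretely, I would establish the chain
\begin{equation*}
  \aucf - \rpsi(\f,\mathcal{S}) \le \aucf - \rorg(\mathcal{S},\f) \le \aucf - \aucs \le 2(\Delta_+ + \Delta_-).
\end{equation*}
The first inequality is Assumption (a): membership in $\mathcal{I}^1_{suff}(\mathcal{S})$ is precisely condition (a) of Prop.\ref{prop:concon}, which gives $\rorg(\mathcal{S},\f) \le \rpsi(\mathcal{S},\f)$. The second uses that $\ell$ dominates $\lt$ pointwise while both empirical risks share the identical bottom-$\npa$/top-$\nnb$ score-based selection, so the $0\text{-}1$ empirical risk satisfies $\aucs \le \rorg(\mathcal{S},\f)$. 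The third is exactly Lemma \ref{lem:decomp}. Hence it suffices to bound $\sup_{f\in\mathcal{F}}\Delta_+$ and $\sup_{f\in\mathcal{F}}\Delta_-$ with high probability over the draw of $\mathcal{S}$.

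The core step is to read $\Delta_+$ and $\Delta_-$ as suprema of empirical processes over threshold classes. Writing $\Delta_+ = \sup_{\delta\in\mathbb{R}}\left|\frac{1}{\np}\sump\ind{\fxpi \le \delta} - \eP\left[\ind{\fxp \le \delta}\right]\right|$ and appending the outer supremum over $f$, the governing index family is $\{\bm{x}\mapsto\ind{f(\bm{x})\le\delta}: f\in\mathcal{F},\,\delta\in\mathbb{R}\}$. Since $\ind{f(\bm{x})\le\delta}$ is a function of $\mathsf{sign}(f(\bm{x})-\delta)$, this family has VC dimension equal to $\VC = \VC(\mathcal{T}(\mathcal{F}))$. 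Applying the standard VC uniform-deviation inequality (growth function $\lesssim \np^{\VC}$, followed by symmetrization and a Hoeffding/union step) yields, with probability at least $1-\delta/2$,
\begin{equation*}
  \sup_{f\in\mathcal{F}}\Delta_+ \lesssim \sqrt{\frac{\VC\cdot\log(\np) + \log(1/\delta)}{\np}},
\end{equation*}
and the identical argument on the negatives gives the matching $\nn$-term for $\sup_{f\in\mathcal{F}}\Delta_-$ with probability at least $1-\delta/2$. A union bound over the two events, and absorption of constants into $\lesssim$, assembles the stated two-term rate, uniformly over $f\in\mathcal{F}$ and over all $(\alpha,\beta)\in\mathcal{I}^1_{suff}(\mathcal{S})$.

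I expect the main obstacle to be the double supremum over both the score function $f$ and the threshold $\delta$. A plain Dvoretzky--Kiefer--Wolfowitz bound controls $\delta$ for a single fixed CDF, but here $f$ roams over the entire hypothesis class, so one must pass to the joint indicator class and certify that its combinatorial complexity collapses to $\VC(\mathcal{T}(\mathcal{F}))$. The delicate points are extracting only a $\log(\np)$ factor from the growth function (rather than a polynomial in $\np$) and ensuring the data-dependent selection thresholds $t_{1-\alpha}(f)$ and $t_\beta(f)$ are tracked uniformly in $f$ --- which is exactly what the passage to $\Delta_+,\Delta_-$ via Lemma \ref{lem:decomp} buys. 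Everything after the VC bound is bookkeeping that $\lesssim$ absorbs.
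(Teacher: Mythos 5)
Your proposal is correct and follows essentially the same route as the paper: the chain $\aucs \le \rorg(\mathcal{S},\f) \le \rpsi(\mathcal{S},\f)$ via surrogate domination and condition (a), the reduction to $\Delta_+,\Delta_-$ via Lemma \ref{lem:decomp}, and VC control of the joint class $\{\ind{f(\cdot)\le\delta}\}$ through $\mathcal{T}(\mathcal{F})$. The only cosmetic difference is that the paper delegates the uniform-deviation step on $\Delta_\pm$ to Lemma 1 of the OPAUC paper \cite{partial3}, whereas you sketch the symmetrization/growth-function argument directly.
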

    According to the theorem, for all $\alpha, \beta$ satisfying condition (a) of Prop.\ref{prop:concon} and any model in $\mathcal{F}$, the generalization error represented by the loss version of TPAUC $ \aucf  =   1- {\mathsf{AUC}}_\alpha^\beta(\f, \mathcal{S})$ is no larger than the empirical loss $\rpsi(\f, \mathcal{S})$ plus a complexity term. The complexity term is affected by two factors. On one hand, it vanishes with large enough training datasets. On the other hand, it remains moderate if the model hypothesis class's VC dimension is not too large. Moreover, moderate upper bounds for the VC dimension are now available for typical models ranging from linear models to deep neural networks. Finally, for a well-trained model,  the empirical loss $\rpsi(\f, \mathcal{S})$ is restricted to be small in our framework; one can then reach reasonable generalization results with high probability.

    \subsubsection{\textbf{Results under Assumption (b)}}\label{subsubsec:b}

    When assumption (b) is available, we have:
    \begin{equation*}
      \mathcal{R}^\ell_{\alpha, \beta} -  \hat{\mathcal{R}}^\ell_{\psi}  \le   \mathcal{R}^\ell_{\psi} -  \hat{\mathcal{R}}^\ell_{\psi}. 
    \end{equation*}
    \noindent This allows us to instead focus on  the excess risk $\mathcal{R}^\ell_{\psi} -  \hat{\mathcal{R}}^\ell_{\psi}$. Before elaborating on the details, we first provide the preliminaries.

    Since $\hat{\mathcal{R}}^\ell_{\psi}$ is an unbiased estimation of $\mathcal{R}^\ell_{\psi}$, we can now get rid of the error decomposition. Typically, the standard technique for deriving uniform upper bounds of the excess risks requires the objective function to be a finite sum of independent terms. However, in our case, the terms in $\mathcal{R}^\ell_{\psi}$ exert certain degrees of interdependency.  For instance, the terms associated with $(\xp_1, \xn_1) $ and $(\xp_2, \xn_2)$ are interdependent as long as $\xp_1 = \xp_2$ or $\xn_1 = \xn_2$. To address this issue, previous work applies coloring based Rademacher complexity  \cite{part,entropy}, making the calculation of the (local) Rademacher complexity much more complicated. To avoid this trick, we instead derive the following error decomposition lemma:
    \begin{lem}\label{lem:errde}
      The following error decomposition inequality holds:
    \begin{align*}
      &\sup_{f \in \mathcal{F}} \bigg[ \expe_{\xp,\xn} \left[ \gf(\xp,\xn)   \right]] \\ 
       &~~~~- \frac{1}{\np\nn}\sumpn  \frac{K^2}{(K-1)^2}\gf(\xpi,\xnj) \bigg]\\  
      \le& 
       \sup_{f \in \mathcal{F}} \bigg[ \expe_{\xp,\xn} \left[ \gf(\xp,\xn)   \right] \\ 
       &~~~~~- \frac{K}{(K-1)} \frac{1}{\np} \sump \expe_{\xn} \gf(\xpi,\xn) \bigg] \\
       +&   \sup_{f \in \mathcal{F}} 
      \bigg[ \frac{K}{(K-1)} \frac{1}{\np} \sump \expe_{\xn} \gf(\xpi,\xn) \\ 
      &~~~~~ - \frac{1}{\np\nn} \sumpn  \frac{K^2}{(K-1)^2}\gf(\xpi,\xnj)  \bigg] 
    \end{align*}
    where $\gf(\xp,\xn) = \psi_\gamma(1-\f(\xp)) \psi_\gamma(\f(\xn)) \ell(\f(\xp) - \f(\xn))$.
    \end{lem}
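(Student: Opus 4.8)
The plan is to prove the inequality by inserting an intermediate ``half-sampled'' risk and then invoking the subadditivity of the supremum. Introduce the functional in which the positives are averaged empirically while the negatives are kept in expectation,
\begin{equation*}
  B(f) = \frac{K}{K-1}\cdot\frac{1}{\np}\sump \expe_{\xn}\, \gf(\xpi,\xn),
\end{equation*}
and write $A(f) = \expe_{\xp,\xn}[\gf(\xp,\xn)]$ for the full population risk and $C(f) = \frac{1}{\np\nn}\sumpn \frac{K^2}{(K-1)^2}\gf(\xpi,\xnj)$ for the fully empirical risk. The quantity inside the supremum on the left-hand side is then precisely $A(f) - C(f)$.

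First I would use the pointwise identity $A(f) - C(f) = \big(A(f) - B(f)\big) + \big(B(f) - C(f)\big)$, which holds for every $f$ since the inserted term $B(f)$ cancels. This step is purely algebraic and requires no assumptions on $\gf$ or on the constants involved.

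The second and final step is to apply subadditivity of the supremum: for any functionals $u,v$ on $\mathcal{F}$, one has $\sup_{f}[u(f)+v(f)] \le \sup_f u(f) + \sup_f v(f)$, because each fixed $f$ satisfies $u(f)+v(f)\le \sup_g u(g)+\sup_g v(g)$, after which one may take the supremum over $f$ on the left. Setting $u = A - B$ and $v = B - C$ yields exactly the stated decomposition.

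There is no genuine obstacle in the lemma itself; it is a deliberate reshaping of the excess risk. The only design decision worth remarking on is the particular choice of the intermediate term $B$ with factor $\tfrac{K}{K-1}$: it is selected so that the two resulting pieces are each separately tractable, with $\sup_f[A-B]$ depending only on the positive sample and $\sup_f[B-C]$ reducing (conditionally on the positives) to a sum of independent terms. The real work, namely bounding these two suprema via (local) Rademacher complexity arguments, is carried out in the subsequent results rather than here.
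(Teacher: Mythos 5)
Your proposal is correct and matches the paper's proof exactly: the paper likewise inserts the intermediate term $\frac{K}{K-1}\frac{1}{\np}\sump \expe_{\xn}\gf(\xpi,\xn)$, uses the pointwise telescoping identity, and concludes by subadditivity of the supremum. Your additional remark about why the $\frac{K}{K-1}$ factor makes each piece separately tractable is accurate and consistent with how the lemma is used in the proof of Theorem \ref{thm:abs}.
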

With the lemma above, the generalization bound could be divided into two parts. Both of them could be expressed as a  sum over i.i.d terms. This allows us to employ the standard local Rademacher complexity with proper extensions.

 With all the preliminaries mentioned above, we first provide a generic result under \textbf{Asm.\ref{asm:imb}} as the following theorem:
 \begin{thm}\label{thm:abs}
  Assume that the weighting function $\psi$ is $L_v$ Lipschitz continuous, the loss function $\ell$ is $L_\ell$ Lipschitz continuous with  $||\psi||_\infty  = v_\infty$, $||\ell||_\infty = L_\infty$, and the covering number of $\mathcal{F}$ has the following form with respect to $||\cdot||_\infty$ norm:
  \begin{align*}
    \log\left(\mathcal{N}\left(\mathcal{F}, \epsilon, ||\cdot||_\infty  \right)\right) \le A log(R/\epsilon),
  \end{align*}
  then for all $f \in \mathcal{F}$ and $(\alpha,\beta) \in \mathcal{I}^2_{suff}$,  the following inequality holds with probability at least $1-\delta$:
  \begin{align*}
    &\mathcal{R}^{\alpha,\beta}(f, \mathcal{S}) - \frac{K^2}{(K-1)^2}\hat{\mathcal{R}}^{\ell}_\psi(f,\mathcal{S})\\ 
     \le&  C \cdot\frac{A\log\left(\Gamma\nn\right) + \log(2/\delta)}{\np},~~ \forall K >1
  \end{align*}
where $C, \Gamma$ are universal constants depending on $v_\infty, \ell_\infty, L_\ell, L_v, K, R$, and
  \begin{align*}
  \mathcal{I}^2_{suff}(\mathcal{S}) =  \bigg\{ &(\alpha,\beta): \alpha \in (0,1),~ \beta \in(0,1),~ \npa \in \mathbb{N}_+,~ \\  
  &\nnb \in \mathbb{N}_+, \text{condition (b) in Prop.\ref{prop:concon} holds} \bigg\}.
\end{align*}

\end{thm}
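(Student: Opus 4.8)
The plan is to combine Assumption~\ref{asum:bound}-(b), the error decomposition of Lemma~\ref{lem:errde}, and a localized (fast-rate) concentration argument. First, by Assumption~\ref{asum:bound}-(b) we have $\mathcal{R}^\ell_{\alpha,\beta}(f) \le \mathcal{R}^\ell_\psi(f)$ for every $f\in\mathcal{F}$ with $(\alpha,\beta)\in\mathcal{I}^2_{suff}(\mathcal{S})$, so it suffices to bound, uniformly over $f\in\mathcal{F}$, the quantity
\[
\mathcal{R}^\ell_\psi(f) - \tfrac{K^2}{(K-1)^2}\,\hat{\mathcal{R}}^\ell_\psi(f).
\]
Since $\hat{\mathcal{R}}^\ell_\psi$ is an unbiased estimate of $\mathcal{R}^\ell_\psi$ built from the pairwise kernel $g_f(\xp,\xn)=\psi_\gamma(1-f(\xp))\psi_\gamma(f(\xn))\ell(f(\xp)-f(\xn))$, I would pass to $\sup_{f\in\mathcal{F}}$ and apply Lemma~\ref{lem:errde} to split the gap into a part (A), which (after taking the inner expectation $\mathbb{E}_{\xn}$, giving $h_f(\xp)=\mathbb{E}_{\xn}g_f(\xp,\xn)$) concentrates only over the $\np$ positive draws, and a part (B), which conditioned on the positives concentrates only over the $\nn$ negative draws. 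Each part is now a supremum of a deviation of a genuine i.i.d.\ average, so the standard (local) Rademacher machinery applies.

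The crucial mechanism for the fast $1/n$ rate is the multiplier $\tfrac{K}{K-1}>1$ sitting in front of each empirical average in Lemma~\ref{lem:errde}: part (A) is $\mathbb{E}_{\xp}h_f-\tfrac{K}{K-1}\tfrac{1}{\np}\sump h_f(\xpi)$, and part (B) is $\tfrac{K}{K-1}\tfrac{1}{\np}\sump[\mathbb{E}_{\xn}g_f(\xpi,\xn)-\tfrac{K}{K-1}\tfrac{1}{\nn}\sumn g_f(\xpi,\xnj)]$, so both reduce to one-sided multiplicative deviations. For a nonnegative class bounded by $B\lesssim v_\infty^2 L_\infty$, the self-bounding / Bernstein condition $\mathrm{Var}(g_f)\le B\,\mathbb{E}[g_f]$ holds, and feeding this into Talagrand's concentration inequality together with a peeling argument over the level sets $\{f:\mathbb{E}[g_f]\le r\}$ yields, with probability $1-\delta/2$ per part,
\[
\mathbb{E}[g_f]-\tfrac{K}{K-1}\hat{\mathbb{E}}[g_f]\ \lesssim\ (K-1)^{-1}\Bigl(r^\ast+\tfrac{B\log(2/\delta)}{n}\Bigr),
\]
where $n\in\{\np,\nn\}$ and $r^\ast$ is the fixed point of the local Rademacher complexity of the relevant class. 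The $\tfrac{K}{K-1}$ slack is exactly what absorbs the usual $\sqrt{\mathrm{Var}/n}$ term, leaving only the $1/n$ contribution.

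It remains to bound $r^\ast$ from the entropy hypothesis. Because $\psi_\gamma$ is $L_v$-Lipschitz and bounded by $v_\infty$ while $\ell$ is $L_\ell$-Lipschitz and bounded by $L_\infty$, the map $f\mapsto g_f$ and its conditional expectation $\xp\mapsto h_f(\xp)$ are Lipschitz from $(\mathcal{F},\|\cdot\|_\infty)$ into the induced loss class, so the composite class inherits a parametric entropy bound $\log\mathcal{N}(\epsilon)\lesssim A\log(R'/\epsilon)$ with $R'$ absorbing $v_\infty,L_\infty,L_v,L_\ell$. Plugging this into the Dudley entropy integral under localization (variance proxy $\le r$) gives a sub-root function whose fixed point satisfies $r^\ast\lesssim \tfrac{A\log(\Gamma n)}{n}$. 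Taking $n=\np$ in part (A), $n=\nn$ in part (B), using Assumption~\ref{asm:imb} ($\np\ll\nn$) so the $1/\nn$ term is dominated by the $1/\np$ term, and a union bound over the two parts, yields the claimed bound with $C,\Gamma$ absorbing $v_\infty,L_\infty,L_\ell,L_v,K,R$.

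The main obstacle I anticipate is the localized entropy computation rather than the decomposition itself: one must verify the self-bounding variance condition for the composite kernel $g_f$ (and for $h_f$ after the inner expectation is taken), and then solve the sub-root fixed-point equation carefully so that the parametric entropy $A\log(R/\epsilon)$ produces exactly the $A\log(\Gamma\nn)/\np$ scaling with no residual $r$-dependence that would spoil the fast rate. The other delicate bookkeeping step is tracking how the $\tfrac{K}{K-1}$ factor propagates through Talagrand's inequality and the peeling, so that the final constant remains $O((K-1)^{-1})$ uniformly for all $K>1$.
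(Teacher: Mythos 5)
Your proposal follows essentially the same route as the paper's proof: reduce via Assumption~\ref{asum:bound}-(b), decompose with Lemma~\ref{lem:errde} into a deviation over the $\np$ positives and per-positive conditional deviations over the $\nn$ negatives, transfer the $\|\cdot\|_\infty$ entropy of $\mathcal{F}$ to the induced loss classes through the Lipschitz constants $L_v,L_\ell$ (the paper's Lemmas~\ref{lem:cover_1} and~\ref{lem:cover_2}), exploit the nonnegativity-based Bernstein condition and the multiplicative $K/(K-1)$ slack to get the fast rate, and finish with the union bound of Lemma~\ref{lem:prob} (costing the absorbed $\log(\np)$ term) plus Assumption~\ref{asm:imb}. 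The only difference is presentational: you sketch re-deriving the localized Talagrand-plus-peeling concentration that the paper simply imports as black-box results (Theorems~2 and~4 of \cite{localchain}).
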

\noindent Let $K \rightarrow \infty$, we reach the desired result:
\begin{equation*}
  \sup\left[ \mathcal{R}^{\alpha,\beta}(f, \mathcal{S}) -  \hat{\mathcal{R}}^{\ell}_\psi(f,\mathcal{S}) \right] = O\left(\frac{A\log\left(\Gamma\nn\right)}{\np}\right).
\end{equation*}
with high probability. Moreover, we have the following crucial remarks on the theorem above:
\begin{rem}\label{rem:2}
It is easy to show that when $\psi(\cdot) \equiv 1$, TPAUC objective function degenerates to a normal AUC risk function. Moreover, if we set the weight of the positive examples to be 1, TPAUC objective function degenerates to an OPAUC approximated risk function. In this sense, our result could be extended generally to the case of AUC and OPAUC optimization settings.
\end{rem}

\begin{rem}\label{rem:3}
Note that our upper bound scales as $O(\frac{\mathsf{Polylog}(\nn)}{\np})$ with respect to the sample size, that is much sharper than the existing results such as \cite{genal,aucrade,geninde,genpac}, if we degenerate our result to ordinary AUC. Taking a step further, if the dataset is roughly balanced in the sense that $\np = \theta(\np + \nn)$, we can even recover the result of $O(1/n)$ for U-statistics-based analysis \cite{cle1} with much weaker assumptions. 
  \end{rem}

Next, we present a concrete upper bound for deep convolution neural networks.  First, we introduce the setting of the deep neural networks employed in the forthcoming theoretical analysis, which is adopted from \cite{convbound}.\\ 
 \textbf{Settings.} We focus on the deep neural networks with $N_{conn}$ fully-connected layers and $N_{conv}$ convolutional layers. The $i$-th convolutional layer has a kernel $\bm{K}^{(i)} \in \mathbb{R}^{k_i \times k_i \times c_{i-1} \times c_i}$. Recall that convolution is a linear operator. For a given kernel $\bm{K}$, we denote its associated matrix as $op(\bm{K})$, such that $\bm{K}(\bm{x}) = op(\bm{K}) \bm{x}$. Moreover, we assume that, each time, the convolution layer is followed by a componentwise non-linear activation function and an optional pooling operation. We assume that the activation functions and the pooling operations are all $1$-Lipschitz. For the $i$-th fully-connected layer,  we denote its weight as $\bm{V}^{(i)}$. Above all, the complete parameter set of a given deep neural network could be represented as $\bm{P}= \{\bm{K}^{(1)},\cdots, \bm{K}^{(N_{conv})}, \bm{V}^{(1)}, \cdots, \bm{V}^{(N_{conn})}\}$. Again, we also assume that the loss function is $\phil$-Lipschitz and $Range\{\ell\} \subseteq [0,B]$. Finally, given two deep neural networks with parameters $\bm{P}$ and $\bm{\tilde{P}}$, we adopt a metric $d_{NN}(\cdot,\cdot)$ to measure their distance:
\begin{equation*}
\begin{split}
&d_{NN}(\bm{P},\bm{\tilde{P}}) = \sum_{i=1}^{N_{conv}} ||op(\bm{K}^{(i)}) -op(\bm{\tilde{K}}^{(i)})||_2 \\
&+ \sum_{i=1}^{N_{conn}} ||\bm{V}^{(i)} -\bm{\tilde{V}}^{(i)}||_2.
\end{split}
\end{equation*} \\
\textbf{Constraints Over the Parameters}. First, we define  $\mathcal{P}^{(0)}_{\nu}$ as the class for \underline{initialization of the parameters}:
\begin{equation*}
\begin{split}
\mathcal{P}^{(0)}_{\nu} &= \bigg\{\bm{P}: \left(\max_{i\in \{1,\cdots, N_{conv}\}}||op(\bm{K}^{(i)})||_2 \right)\\
&\le 1+\nu, ~ \left(\max_{j \in \{1,\cdots, N_{conn}\}} ||\bm{V}^{(j)}||_2\right) \le 1+\nu \bigg\}.
\end{split}
\end{equation*}
Now we further assume that the learned parameters should be chosen from a class denoted by $\mathcal{P}_{\beta, \nu}$, where the distance between the learned parameter and the fixed initialization residing in $\mathcal{P}^{(0)}_{\nu}$ is no bigger than $d$:
\begin{equation*}
\mathcal{P}_{\beta, \nu} = \bigg\{\bm{P}: d_{NN}(\bm{P},\bm{\tilde{P}}_0) \le d, ~\bm{\tilde{P}}_0 \in \mathcal{P}^{(0)}_\nu \bigg\}.
\end{equation*}

 \begin{table*}[ht]
  \centering
  \setlength{\abovecaptionskip}{0pt}    
  \setlength{\belowcaptionskip}{15pt}    
  \caption{Data Description.}
  \setlength{\tabcolsep}{5pt}
  \resizebox{0.95\textwidth}{!}{%
   \begin{tabular}{lllll}
    \toprule
    Dataset  & Pos. Class ID & Pos. Class Name & \# Pos. Examples & \# Neg. Examples \\
    \toprule
    CIFAR-10-LT-1  & $2$   & birds         &$2,155$  & $12,729$ \\
    CIFAR-10-LT-2  & $1$   & automobiles   &$3,596$  & $11,288$ \\
    CIFAR-10-LT-3  & $3$   & cats          &$1,292$  & $13,592$ \\
    \midrule
    CIFAR-100-LT-1 & $6,7,14,18,24$   & insects              &$2,756$  & $18,938$ \\
    CIFAR-100-LT-2 & $0,51,53,57,83$  & fruits and vegetables &$1,266$ & $20,428$ \\
    CIFAR-100-LT-3 & $15,19,21,32,38$ & large omnivores and herbivores &$1,678$ & $20,016$ \\
    \midrule
    Tiny-ImageNet-200-LT-1  & $24,25,26,27,28,29$   & dogs   &$3,000$  & $97,000$ \\
    Tiny-ImageNet-200-LT-2  & $11, 20, 21, 22$   & birds   &$2,000$  & $98,000$ \\
    Tiny-ImageNet-200-LT-3  & $70, 81, 94, 107, 111, 116, 121, 133, 145, 153, 164, 166$   & vehicles   &$6,000$  & $94,000$ \\
    \bottomrule
   \end{tabular}
  }
  \label{tab:dataset}
 \end{table*}

\noindent Now we can apply Thm.\ref{thm:abs} to obtain the following corollary based on the result in \cite{convbound}. 
  \begin{col} \label{thm:cnn} Under the assumption of Thm.\ref{thm:abs},
 assume that for all $\x \in \mathcal{X}, ||\bm{x}||_F \le R_\mathcal{X}$. Define $\mathcal{F}_{\beta,\nu}$ as the hypothesis class of the scoring function induced by the above-mentioned CNN:
  \begin{equation*}
    \begin{split}
    \mathcal{F}_{d,\nu} = \{&s_{\bm{P}}:\mathbb{R}^{N_{N_L-1}} \rightarrow [0,1]| ~ \bm{P} \in \mathcal{P}_{\beta, \nu} \}.
    \end{split}
    \end{equation*}
For all $f \in \mathcal{F}_{d, \nu}$, and $(\alpha,\beta) \in \mathcal{I}^2_{suff}$, with the same assumption in Thm.\ref{thm:abs}, the following inequality holds with probability at least $1-\delta$:
  \begin{equation*}
    \mathcal{R}^{\alpha,\beta}(f, \mathcal{S}) \lesssim \hat{\mathcal{R}}^{\ell}_\psi(f,\mathcal{S}) +  \frac{N_{par}\log(C_L \cdot n_-)+\log(1/\delta)}{n_+}.
  \end{equation*}
\end{col}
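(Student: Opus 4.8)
The plan is to treat this corollary as a direct specialization of Thm.~\ref{thm:abs}: all the structural hypotheses of that theorem (Lipschitzness and boundedness of $\psi$ and $\ell$) are assumed again here, so the only genuinely new work is to verify that the CNN class $\mathcal{F}_{d,\nu}$ admits a sup-norm covering number of the required form $\log \mathcal{N}(\mathcal{F}_{d,\nu},\epsilon,||\cdot||_\infty) \le A\log(R/\epsilon)$ and to read off $A$ and $R$ explicitly. Once that covering bound is in hand, I would substitute it into the conclusion of Thm.~\ref{thm:abs}, merge the resulting constants, and let $K\to\infty$ to arrive at the stated inequality, valid with probability at least $1-\delta$ uniformly over $f\in\mathcal{F}_{d,\nu}$ and $(\alpha,\beta)\in\mathcal{I}^2_{suff}$.

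First I would obtain the covering number by reducing a cover of the function class to a cover of the parameter class $\mathcal{P}_{\beta,\nu}$. The key ingredient, which is precisely the content of \cite{convbound}, is that the map $\bm{P}\mapsto s_{\bm{P}}$ is Lipschitz continuous from $(\mathcal{P}_{\beta,\nu}, d_{NN})$ into $(\mathcal{F}_{d,\nu}, ||\cdot||_\infty)$. Indeed, every admissible $\bm{P}$ has all its layer operator norms controlled (bounded by $1+\nu$ at the fixed initialization plus the allotted perturbation radius $d$), and the activations and pooling operations are $1$-Lipschitz; consequently a perturbation of the weights measured in $d_{NN}$ propagates through the composition of layers so as to change the network output by at most a constant $L$ times that perturbation, uniformly over inputs with $||\bm{x}||_F \le R_{\mathcal{X}}$. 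Concretely, one has $|s_{\bm{P}}(\bm{x}) - s_{\tilde{\bm{P}}}(\bm{x})| \le L\, d_{NN}(\bm{P},\tilde{\bm{P}})$ for all such $\bm{x}$, where $L$ depends only on $\nu$, $d$, $R_{\mathcal{X}}$ and the depth of the network.

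Given this Lipschitz estimate, an $(\epsilon/L)$-cover of the ball $\mathcal{P}_{\beta,\nu}$ in the $d_{NN}$ metric immediately induces an $\epsilon$-cover of $\mathcal{F}_{d,\nu}$ in the sup norm. Since $\mathcal{P}_{\beta,\nu}$ is a ball of radius $d$ inside a space whose effective dimension is the total number of parameters $N_{par}$, a standard volumetric argument gives $\log \mathcal{N}(\mathcal{F}_{d,\nu},\epsilon,||\cdot||_\infty) \le N_{par}\log(3dL/\epsilon)$. Thus the hypothesis of Thm.~\ref{thm:abs} is met with $A = N_{par}$ (up to an absolute constant) and $R = C_L$, a constant absorbing $d$, $L$ and the numerical factor. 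Plugging $A = N_{par}$ and $R = C_L$ into the theorem and folding $C_L$ into $\Gamma$, the polylog factor $\log(\Gamma\nn)$ collapses to $\log(C_L\cdot n_-)$; letting $K\to\infty$ erases the prefactor $K^2/(K-1)^2$ on $\rpsi(f,\mathcal{S})$, and absorbing the universal constant $C$ together with $\log(2/\delta)\le\log(1/\delta)+O(1)$ into the $\lesssim$ notation yields exactly the claimed bound.

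The main obstacle is the Lipschitz-in-parameters estimate for the convolutional network. Propagating an operator-norm perturbation of a single kernel through the remaining layers requires bounding the product of the operator norms of all other layers and controlling how activation and pooling interact with the perturbed signal; this is delicate for convolutions, since $op(\bm{K})$ is a structured (doubly block-circulant) matrix whose spectral norm must be tied back to the kernel perturbation in $d_{NN}$. This is the technical heart supplied by \cite{convbound}, so I would invoke their perturbation bound directly rather than re-derive it, and devote care only to checking that the boundedness of inputs ($||\bm{x}||_F\le R_{\mathcal{X}}$) and of the admissible operator norms is exactly what their bound consumes.
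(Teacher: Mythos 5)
Your proposal takes essentially the same route as the paper: the corollary is obtained there as a direct application of Thm.\ref{thm:abs}, with the sup-norm covering bound $\log\mathcal{N}(\mathcal{F}_{d,\nu},\epsilon,||\cdot||_\infty)\lesssim N_{par}\log(C_L/\epsilon)$ imported from \cite{convbound} exactly as you describe, i.e., via the Lipschitz continuity of the parameterization $\bm{P}\mapsto s_{\bm{P}}$ on the bounded-operator-norm parameter ball plus a volumetric cover of that ball. Your substitution of $A=N_{par}$, $R=C_L$, the $K\to\infty$ step, and the absorption of constants into $\lesssim$ match the paper's (largely implicit) argument, so nothing is missing.
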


  \begin{table*}[htbp]
    \centering
    \small
    \caption{Performance Comparisons on CIFAR-10-LT with different metrics, where $(x,y)$ stands for $\mathsf{TPAUC}(x,y)$ in short and the first and second best results are highlighted with \textbf{bold text} and \underline{underline}, respectively.}
    \resizebox{0.95\textwidth}{!}{%
      \begin{tabular}{c|c|c|ccc|ccc|ccc}
        \toprule
        \multirow{2}[4]{*}{dataset} & \multirow{2}[4]{*}{type} & \multirow{2}[4]{*}{methods} & \multicolumn{3}{c|}{Subset1} & \multicolumn{3}{c|}{Subset2} & \multicolumn{3}{c}{Subset3} \\
        \cmidrule{4-12}         &      &      & (0.3,0.3)  & (0.4,0.4)  & (0.5,0.5)  & (0.3,0.3)  & (0.4,0.4)  & (0.5,0.5)  & (0.3,0.3)  & (0.4,0.4)  & (0.5,0.5) \\
        \midrule
        \multirow{13}[6]{*}{CIFAR-10-LT} & \multirow{9}[2]{*}{Competitors} & CE-RW & \cellcolor[rgb]{ 1,  .988,  .984} 9.09 & \cellcolor[rgb]{ 1,  .988,  .984} 30.86 & \cellcolor[rgb]{ .996,  .973,  .961} 47.99 & 72.83 & \cellcolor[rgb]{ 1,  .988,  .984} 83.33 & \cellcolor[rgb]{ 1,  .988,  .984} 88.71 & \cellcolor[rgb]{ .996,  .965,  .949} 23.47 & \cellcolor[rgb]{ .996,  .973,  .961} 44.44 & \cellcolor[rgb]{ .996,  .973,  .961} 59.69 \\
        &      & Focal & \cellcolor[rgb]{ 1,  .988,  .984} 9.84 & \cellcolor[rgb]{ 1,  .988,  .984} 30.89 & \cellcolor[rgb]{ .996,  .965,  .949} 50.83 & \cellcolor[rgb]{ 1,  .988,  .984} 75.72 & \cellcolor[rgb]{ .996,  .965,  .949} 85.10 & \cellcolor[rgb]{ .996,  .965,  .949} 90.06 & \cellcolor[rgb]{ 1,  .988,  .984} 21.47 & \cellcolor[rgb]{ .996,  .973,  .961} 45.88 & \cellcolor[rgb]{ .996,  .973,  .961} 59.09 \\
        &      & CBCE & 3.29 & 27.30 & 43.95 & 69.48 & 80.80 & 86.87 & 12.94 & 34.06 & 51.09 \\
        &      & CBFocal & \cellcolor[rgb]{ 1,  .988,  .984} 9.04 & \cellcolor[rgb]{ .996,  .973,  .961} 31.73 & \cellcolor[rgb]{ .996,  .973,  .961} 48.13 & \cellcolor[rgb]{ .996,  .973,  .961} 77.99 & \cellcolor[rgb]{ .996,  .949,  .925} 86.75 & \cellcolor[rgb]{ .996,  .949,  .925} 91.13 & \cellcolor[rgb]{ 1,  .988,  .984} 21.32 & \cellcolor[rgb]{ 1,  .988,  .984} 43.03 & \cellcolor[rgb]{ .996,  .973,  .961} 59.11 \\
        &      & SqAUC & \cellcolor[rgb]{ .996,  .965,  .949} 18.05 & \cellcolor[rgb]{ .996,  .949,  .925} 40.74 & \cellcolor[rgb]{ .996,  .949,  .925} 57.94 & \cellcolor[rgb]{ .996,  .949,  .925} 80.09 & \cellcolor[rgb]{ .996,  .949,  .925} 87.78 & \cellcolor[rgb]{ .996,  .949,  .925} 91.87 & \cellcolor[rgb]{ .988,  .925,  .882} 31.52 & \cellcolor[rgb]{ .996,  .949,  .925} 50.00 & \cellcolor[rgb]{ .996,  .949,  .925} 64.42 \\
        &      & TruncOPAUC & \cellcolor[rgb]{ .996,  .973,  .961} 15.00 & \cellcolor[rgb]{ .996,  .949,  .925} 42.05 & \cellcolor[rgb]{ .996,  .949,  .925} 58.27 & \cellcolor[rgb]{ .98,  .863,  .788} \underline{84.71} & \cellcolor[rgb]{ .98,  .863,  .788} 90.11 & \cellcolor[rgb]{ .98,  .863,  .788} 93.30 & \cellcolor[rgb]{ .988,  .925,  .882} 31.11 & \cellcolor[rgb]{ .988,  .925,  .882} 51.47 & \cellcolor[rgb]{ .98,  .863,  .788} 66.07 \\
        &      & Op-Poly & \cellcolor[rgb]{ .996,  .949,  .925} 20.69 & \cellcolor[rgb]{ .996,  .949,  .925} 41.85 & \cellcolor[rgb]{ .996,  .949,  .925} 58.97 & \cellcolor[rgb]{ .996,  .949,  .925} 80.54 & \cellcolor[rgb]{ .996,  .949,  .925} 87.97 & \cellcolor[rgb]{ .988,  .925,  .882} 92.09 & \cellcolor[rgb]{ .98,  .863,  .788} 32.04 & \cellcolor[rgb]{ .98,  .863,  .788} \underline{53.41} & \cellcolor[rgb]{ .98,  .863,  .788} 66.30 \\
        &      & OP-Exp & \cellcolor[rgb]{ .996,  .949,  .925} 20.86 & \cellcolor[rgb]{ .996,  .965,  .949} 36.70 & \cellcolor[rgb]{ .996,  .949,  .925} 58.27 & \cellcolor[rgb]{ .996,  .965,  .949} 79.42 & \cellcolor[rgb]{ .996,  .949,  .925} 86.95 & \cellcolor[rgb]{ .996,  .965,  .949} 90.93 & \cellcolor[rgb]{ .996,  .949,  .925} 28.94 & \cellcolor[rgb]{ .988,  .925,  .882} 51.23 & \cellcolor[rgb]{ .98,  .863,  .788} 66.34 \\
        &      & TruncTPAUC & \cellcolor[rgb]{ 1,  .988,  .984} 11.90 & \cellcolor[rgb]{ 1,  .988,  .984} 29.13 & \cellcolor[rgb]{ 1,  .988,  .984} 46.64 & \cellcolor[rgb]{ .996,  .949,  .925} 81.26 & \cellcolor[rgb]{ .98,  .863,  .788} 89.12 & \cellcolor[rgb]{ .988,  .925,  .882} 92.86 & \cellcolor[rgb]{ .996,  .965,  .949} 23.70 & \cellcolor[rgb]{ .996,  .965,  .949} 47.75 & \cellcolor[rgb]{ .996,  .949,  .925} 64.92 \\
        \cmidrule{2-12}         & \multirow{2}[2]{*}{Ours-TPAUC} & Poly & \cellcolor[rgb]{ .988,  .925,  .882} 21.43 & \cellcolor[rgb]{ .988,  .925,  .882} 44.41 & \cellcolor[rgb]{ .988,  .925,  .882} 59.10 & \cellcolor[rgb]{ .996,  .949,  .925} 80.66 & \cellcolor[rgb]{ .988,  .925,  .882} 88.07 & \cellcolor[rgb]{ .988,  .925,  .882} 92.15 & \cellcolor[rgb]{ .973,  .796,  .678} \textbf{36.54} & \cellcolor[rgb]{ .973,  .796,  .678} \textbf{54.48} & \cellcolor[rgb]{ .973,  .796,  .678} \underline{67.19} \\
        &      & Exp  & \cellcolor[rgb]{ .996,  .949,  .925} 20.86 & \cellcolor[rgb]{ .996,  .949,  .925} 41.78 & \cellcolor[rgb]{ .996,  .949,  .925} 58.38 & \cellcolor[rgb]{ .996,  .949,  .925} 81.22 & \cellcolor[rgb]{ .996,  .949,  .925} 87.88 & \cellcolor[rgb]{ .996,  .949,  .925} 91.93 & \cellcolor[rgb]{ .98,  .863,  .788} 32.47 & \cellcolor[rgb]{ .98,  .863,  .788} 53.86 & \cellcolor[rgb]{ .973,  .796,  .678} \textbf{67.32} \\
        \cmidrule{2-12}         & \multirow{2}[2]{*}{Ours-minmax} & Poly & \cellcolor[rgb]{ .973,  .796,  .678} \textbf{31.77} & \cellcolor[rgb]{ .973,  .796,  .678} \textbf{46.34} & \cellcolor[rgb]{ .973,  .796,  .678} \textbf{62.38} & \cellcolor[rgb]{ .973,  .796,  .678} \textbf{86.75} & \cellcolor[rgb]{ .973,  .796,  .678} \textbf{91.90} & \cellcolor[rgb]{ .973,  .796,  .678} \textbf{94.65} & \cellcolor[rgb]{ .98,  .863,  .788} \underline{33.24} & \cellcolor[rgb]{ .98,  .863,  .788} 53.19 & \cellcolor[rgb]{ .98,  .863,  .788} 66.76 \\
        &      & Exp  & \cellcolor[rgb]{ .98,  .863,  .788} \underline{25.53} & \cellcolor[rgb]{ .98,  .863,  .788} \underline{45.49} & \cellcolor[rgb]{ .98,  .863,  .788} \underline{60.82} & \cellcolor[rgb]{ .988,  .925,  .882} 83.11 & \cellcolor[rgb]{ .98,  .863,  .788} \underline{89.63} & \cellcolor[rgb]{ .98,  .863,  .788} \underline{93.32} & \cellcolor[rgb]{ .988,  .925,  .882} 31.91 & \cellcolor[rgb]{ .98,  .863,  .788} 53.27 & \cellcolor[rgb]{ .988,  .925,  .882} 65.80 \\
        \bottomrule
      \end{tabular}%
      
    }
    \label{tab:perf-cifar-10}%
  \end{table*}%

  \section{Experiments}\label{sec:exp}

  In this section, we present our empirical results and some of the details of the experiments. \emph{\textbf{Please see Appendix.\ref{sec:app_exp} for more details on the settings and results.}}



  
    
    

\begin{table*}[htbp]
	\centering
	\small
	\caption{Performance Comparisons on CIFAR-100-LT with different metrics, where $(x,y)$ stands for $\mathsf{TPAUC}(x,y)$ in short and the first and second best results are highlighted with \textbf{bold text} and \underline{underline}, respectively.}
	\resizebox{0.95\textwidth}{!}{%
		\begin{tabular}{c|c|c|ccc|ccc|ccc}
			\toprule
			\multirow{2}[4]{*}{dataset} & \multirow{2}[4]{*}{type} & \multicolumn{1}{c|}{\multirow{2}[4]{*}{methods}} & \multicolumn{3}{c|}{Subset1} & \multicolumn{3}{c|}{Subset2} & \multicolumn{3}{c}{Subset3} \\
      \cmidrule{4-12}         &      &      & (0.3,0.3)  & (0.4,0.4)  & (0.5,0.5)  & (0.3,0.3)  & (0.4,0.4)  & (0.5,0.5)  & (0.3,0.3)  & (0.4,0.4)  & (0.5,0.5) \\

			\midrule
			\multirow{13}[4]{*}{CIFAR-100-LT} & \multirow{9}[1]{*}{Competitors} & CE-RW & \cellcolor[rgb]{ .984,  .992,  .976} 31.43 & \cellcolor[rgb]{ .984,  .992,  .976} 52.60 & \cellcolor[rgb]{ .961,  .976,  .953} 66.21 & \cellcolor[rgb]{ .984,  .992,  .976} 79.70 & \cellcolor[rgb]{ .961,  .976,  .953} 88.06 & \cellcolor[rgb]{ .961,  .976,  .953} 92.64 & 3.09 & 21.32 & \cellcolor[rgb]{ .984,  .992,  .976} 40.75 \\
			&      & Focal & \cellcolor[rgb]{ .984,  .992,  .976} 36.51 & \cellcolor[rgb]{ .961,  .976,  .953} 61.71 & \cellcolor[rgb]{ .945,  .969,  .929} 73.25 & \cellcolor[rgb]{ .945,  .969,  .929} 83.08 & \cellcolor[rgb]{ .914,  .953,  .89} 90.35 & \cellcolor[rgb]{ .945,  .969,  .929} 93.76 & \cellcolor[rgb]{ .984,  .992,  .976} 8.09 & \cellcolor[rgb]{ .984,  .992,  .976} 28.88 & \cellcolor[rgb]{ .961,  .976,  .953} 49.89 \\
			&      & CBCE & 17.53 & 38.79 & 55.19 & 67.91 & 79.32 & 85.82 & 1.84 & 18.46 & 37.04 \\
			&      & CBFocal & \cellcolor[rgb]{ .961,  .976,  .953} 41.85 & \cellcolor[rgb]{ .945,  .969,  .929} 62.41 & \cellcolor[rgb]{ .945,  .969,  .929} 73.13 & \cellcolor[rgb]{ .961,  .976,  .953} 82.75 & \cellcolor[rgb]{ .945,  .969,  .929} 89.57 & \cellcolor[rgb]{ .961,  .976,  .953} 92.89 & \cellcolor[rgb]{ .984,  .992,  .976} 7.10 & \cellcolor[rgb]{ .984,  .992,  .976} 29.12 & \cellcolor[rgb]{ .984,  .992,  .976} 44.84 \\
			&      & SqAUC & \cellcolor[rgb]{ .886,  .937,  .851} 63.24 & \cellcolor[rgb]{ .835,  .91,  .784} 76.62 & \cellcolor[rgb]{ .835,  .91,  .784} 84.68 & \cellcolor[rgb]{ .776,  .878,  .706} 91.02 & \cellcolor[rgb]{ .835,  .91,  .784} 93.69 & \cellcolor[rgb]{ .886,  .937,  .851} 94.73 & \cellcolor[rgb]{ .886,  .937,  .851} 41.60 & \cellcolor[rgb]{ .886,  .937,  .851} 60.36 & \cellcolor[rgb]{ .886,  .937,  .851} 70.86 \\
			&      & TruncOPAUC & \cellcolor[rgb]{ .945,  .969,  .929} 56.51 & \cellcolor[rgb]{ .914,  .953,  .89} 70.56 & \cellcolor[rgb]{ .886,  .937,  .851} 81.03 & \cellcolor[rgb]{ .886,  .937,  .851} 87.72 & \cellcolor[rgb]{ .835,  .91,  .784} 93.26 & \cellcolor[rgb]{ .914,  .953,  .89} 94.23 & \cellcolor[rgb]{ .961,  .976,  .953} 22.75 & \cellcolor[rgb]{ .961,  .976,  .953} 51.30 & \cellcolor[rgb]{ .945,  .969,  .929} 66.78 \\
			&      & Op-Poly & \cellcolor[rgb]{ .914,  .953,  .89} 58.40 & \cellcolor[rgb]{ .914,  .953,  .89} 70.07 & \cellcolor[rgb]{ .914,  .953,  .89} 79.15 & \cellcolor[rgb]{ .886,  .937,  .851} 88.37 & \cellcolor[rgb]{ .886,  .937,  .851} 92.70 & \cellcolor[rgb]{ .886,  .937,  .851} 94.67 & \cellcolor[rgb]{ .914,  .953,  .89} 37.73 & \cellcolor[rgb]{ .914,  .953,  .89} 57.94 & \cellcolor[rgb]{ .914,  .953,  .89} 69.75 \\
			&      & OP-Exp & \cellcolor[rgb]{ .914,  .953,  .89} 59.01 & \cellcolor[rgb]{ .886,  .937,  .851} 73.00 & \cellcolor[rgb]{ .886,  .937,  .851} 81.36 & \cellcolor[rgb]{ .835,  .91,  .784} 89.18 & \cellcolor[rgb]{ .835,  .91,  .784} 93.06 & \cellcolor[rgb]{ .835,  .91,  .784} 95.08 & \cellcolor[rgb]{ .945,  .969,  .929} 30.83 & \cellcolor[rgb]{ .961,  .976,  .953} 52.12 & \cellcolor[rgb]{ .945,  .969,  .929} 65.34 \\
			&      & TruncTPAUC & \cellcolor[rgb]{ .961,  .976,  .953} 46.42 & \cellcolor[rgb]{ .961,  .976,  .953} 60.23 & \cellcolor[rgb]{ .914,  .953,  .89} 78.66 & \cellcolor[rgb]{ .914,  .953,  .89} 85.99 & \cellcolor[rgb]{ .886,  .937,  .851} 92.31 & \cellcolor[rgb]{ .886,  .937,  .851} 94.90 & \cellcolor[rgb]{ .961,  .976,  .953} 26.34 & \cellcolor[rgb]{ .945,  .969,  .929} 54.69 & \cellcolor[rgb]{ .945,  .969,  .929} 66.77 \\
			\cmidrule{2-12}         & \multirow{2}[2]{*}{Ours-TPAUC} & Poly & \cellcolor[rgb]{ .776,  .878,  .706} \textbf{68.02} & \cellcolor[rgb]{ .776,  .878,  .706} \textbf{79.11} & \cellcolor[rgb]{ .776,  .878,  .706} \underline{85.17} & \cellcolor[rgb]{ .776,  .878,  .706} \underline{91.13} & \cellcolor[rgb]{ .835,  .91,  .784} \underline{93.78} & \cellcolor[rgb]{ .776,  .878,  .706} \underline{95.69} & \cellcolor[rgb]{ .776,  .878,  .706} \textbf{47.07} & \cellcolor[rgb]{ .776,  .878,  .706} \textbf{65.89} & \cellcolor[rgb]{ .776,  .878,  .706} \textbf{75.08} \\
			&      & Exp  & \cellcolor[rgb]{ .886,  .937,  .851} 63.24 & \cellcolor[rgb]{ .835,  .91,  .784} 77.94 & \cellcolor[rgb]{ .835,  .91,  .784} 84.62 & \cellcolor[rgb]{ .835,  .91,  .784} 90.69 & \cellcolor[rgb]{ .835,  .91,  .784} 93.74 & \cellcolor[rgb]{ .776,  .878,  .706} 95.41 & \cellcolor[rgb]{ .835,  .91,  .784} \underline{44.54} & \cellcolor[rgb]{ .835,  .91,  .784} 64.58 & \cellcolor[rgb]{ .835,  .91,  .784} 73.02 \\
			\cmidrule{2-12}         & \multirow{2}[1]{*}{Ours-minmax} & Poly & \cellcolor[rgb]{ .835,  .91,  .784} 65.74 & \cellcolor[rgb]{ .835,  .91,  .784} \underline{78.35} & \cellcolor[rgb]{ .776,  .878,  .706} \textbf{85.24} & \cellcolor[rgb]{ .776,  .878,  .706} \textbf{91.40} & \cellcolor[rgb]{ .776,  .878,  .706} \textbf{94.05} & \cellcolor[rgb]{ .776,  .878,  .706} \textbf{95.81} & \cellcolor[rgb]{ .835,  .91,  .784} 44.24 & \cellcolor[rgb]{ .835,  .91,  .784} \underline{64.68} & \cellcolor[rgb]{ .835,  .91,  .784} \underline{73.60} \\
			&      & Exp  & \cellcolor[rgb]{ .835,  .91,  .784} \underline{66.79} & \cellcolor[rgb]{ .835,  .91,  .784} 77.72 & \cellcolor[rgb]{ .835,  .91,  .784} 84.87 & \cellcolor[rgb]{ .835,  .91,  .784} 90.27 & \cellcolor[rgb]{ .835,  .91,  .784} 93.25 & \cellcolor[rgb]{ .776,  .878,  .706} 95.41 & \cellcolor[rgb]{ .835,  .91,  .784} 43.99 & \cellcolor[rgb]{ .835,  .91,  .784} 64.62 & \cellcolor[rgb]{ .886,  .937,  .851} 71.76 \\
			
			\bottomrule
		\end{tabular}%
	}
	\label{tab:perf-cifar-100}%
\end{table*}%

\subsection{Evaluation Metrics}
Aiming at optimizing the TPAUC metrics, we consider TPAUC with $\alpha = 0.3, \beta = 0.3$, $\alpha = 0.4, \beta = 0.4$, $\alpha = 0.5, \beta = 0.5$, respectively. Moreover, to normalize the range of their magnitude to $[0,1]$, we adopt the following variant of the TPAUC metric:
\begin{equation*}
  \mathsf{TPAUC}(\alpha,\beta)  = 1- \psumpn \frac{\lt\left(\f(\xpio) - f(\xnjo)  \right)}{\npa\nnb}.
\end{equation*}


  \begin{figure}[th]  
    \centering
  
    \subfigure[Subset1, $\alpha = 0.3, \beta = 0.3$]{
      \includegraphics[width=0.46\columnwidth]{./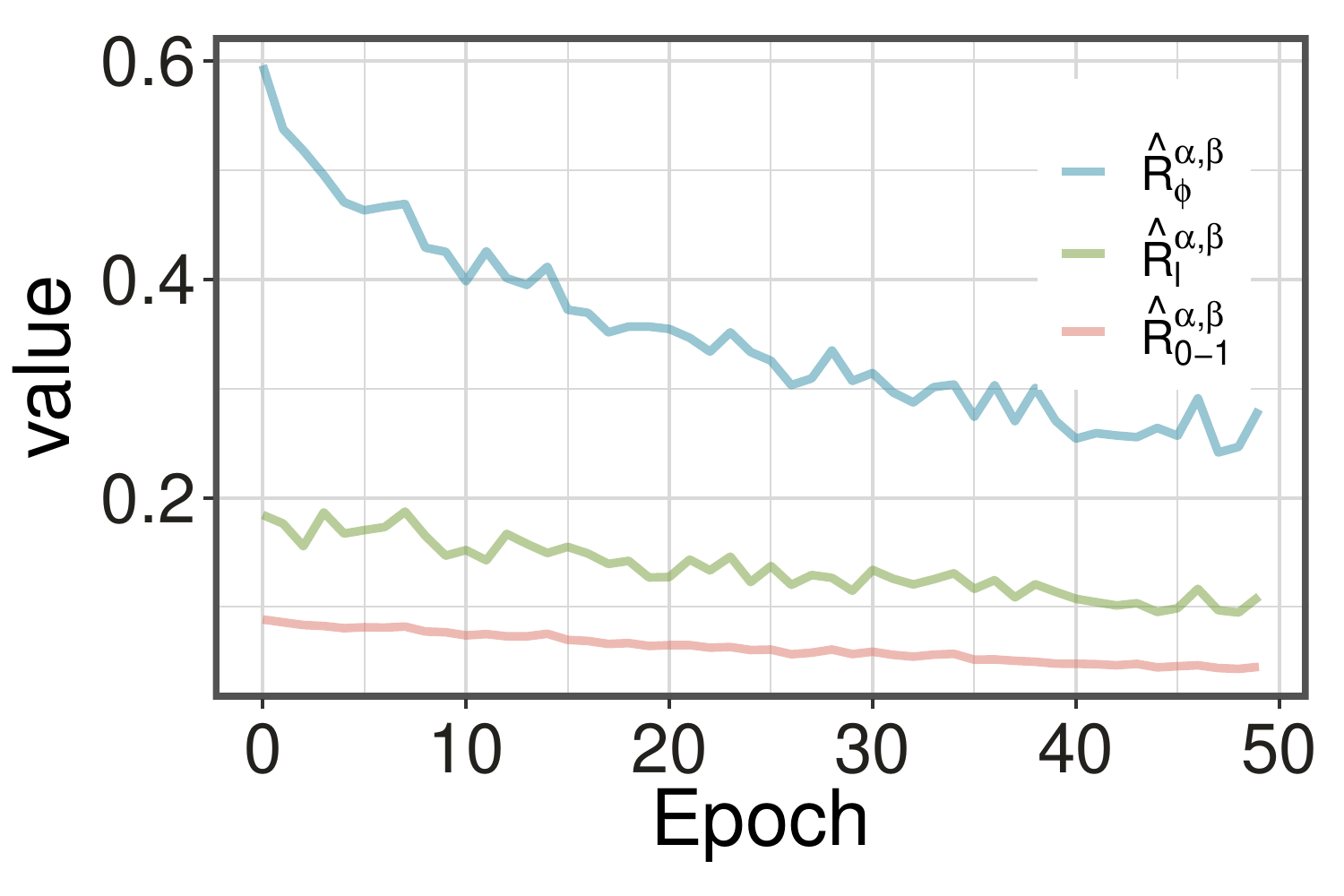} 
    }
    \subfigure[Subset1, $\alpha = 0.4, \beta = 0.4$]{
      \includegraphics[width=0.46\columnwidth]{./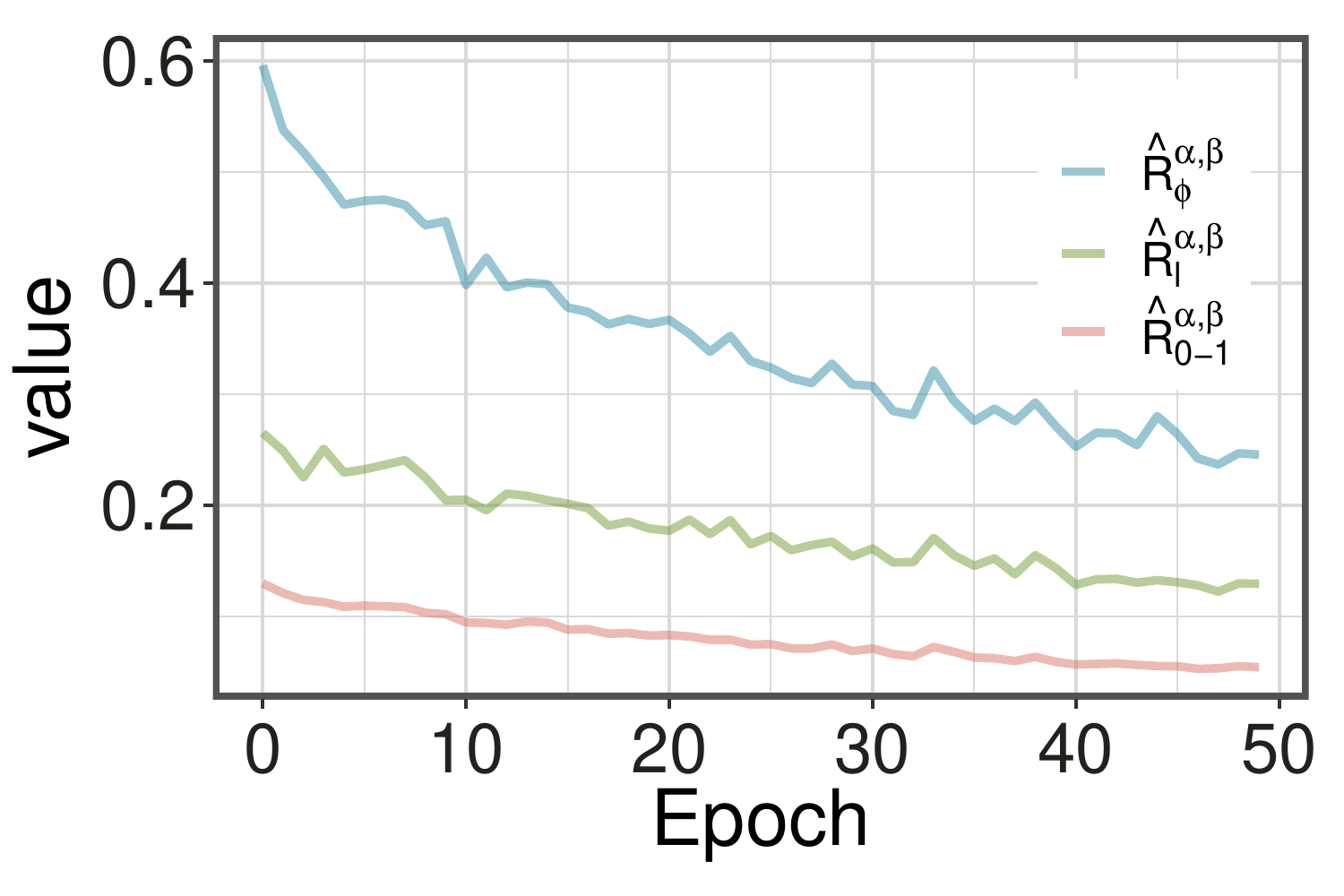} 
    }
  
    \subfigure[Subset1, $\alpha = 0.5, \beta = 0.5$]{
      \includegraphics[width=0.46\columnwidth]{./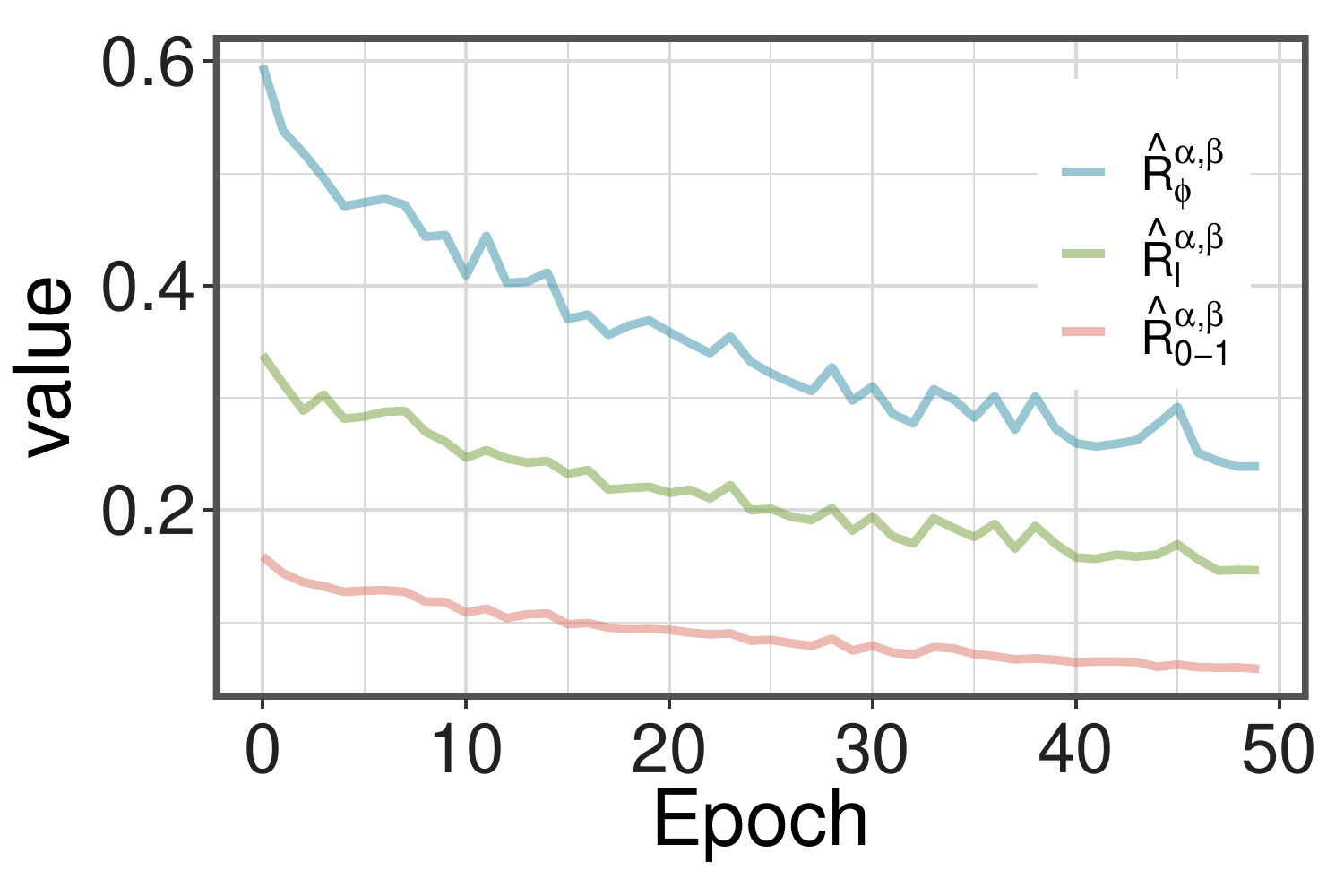} 
    }
    \subfigure[Subset2, $\alpha = 0.3, \beta = 0.3$]{
      \includegraphics[width=0.46\columnwidth]{./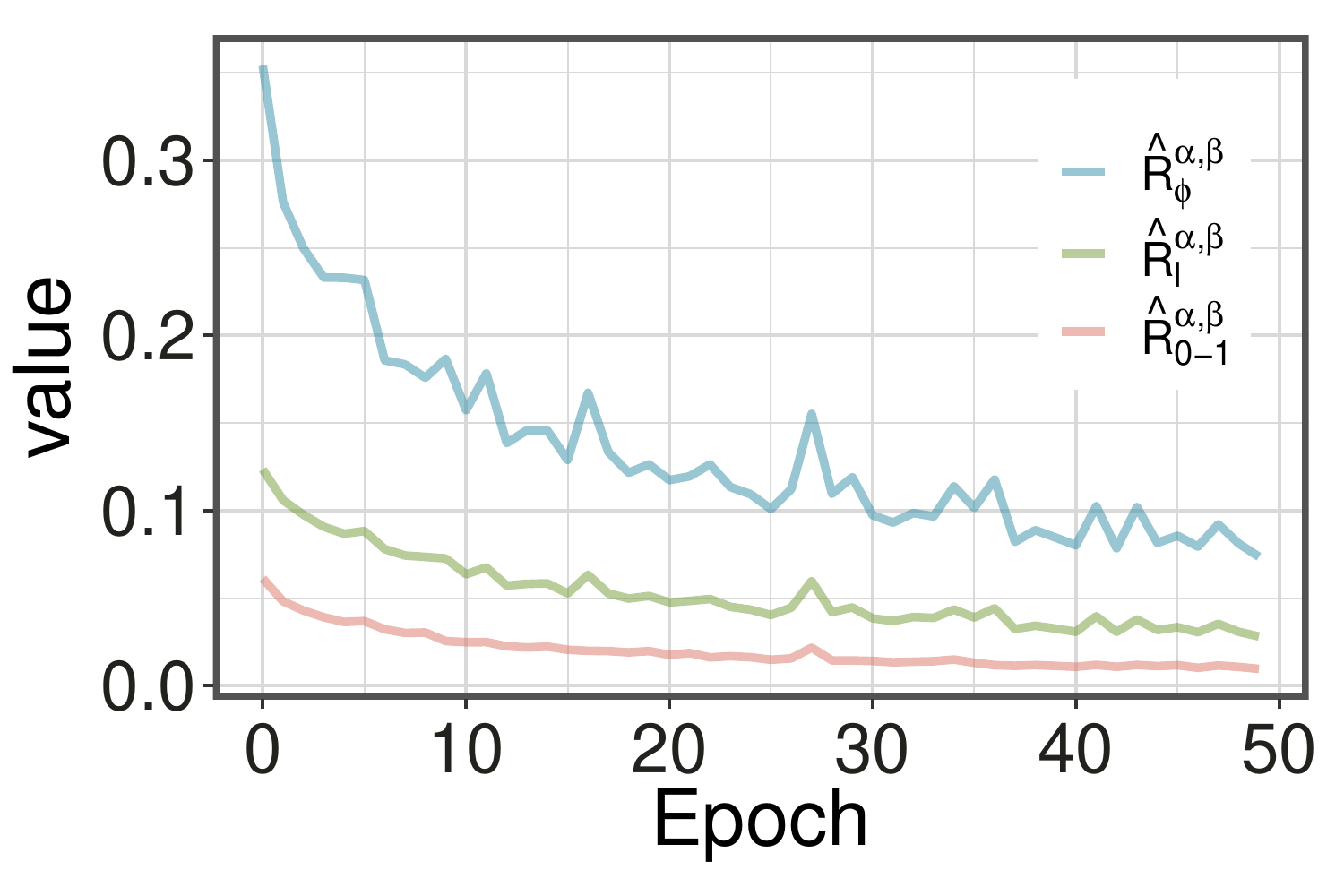} 
    }
  
    \subfigure[Subset2, $\alpha = 0.4, \beta = 0.4$]{
      \includegraphics[width=0.46\columnwidth]{./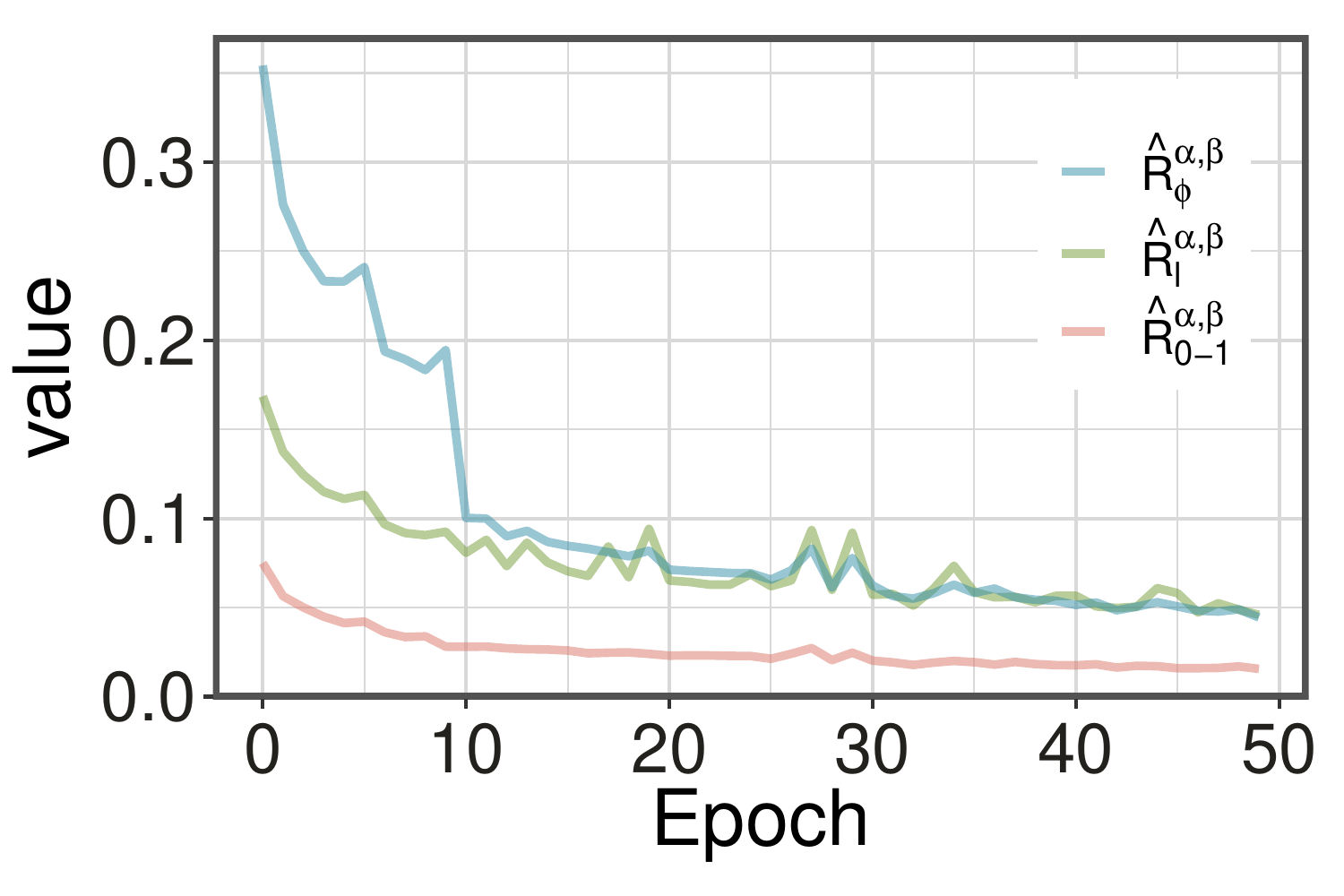} 
    }
    \subfigure[Subset2, $\alpha = 0.5, \beta = 0.5$]{
      \includegraphics[width=0.46\columnwidth]{./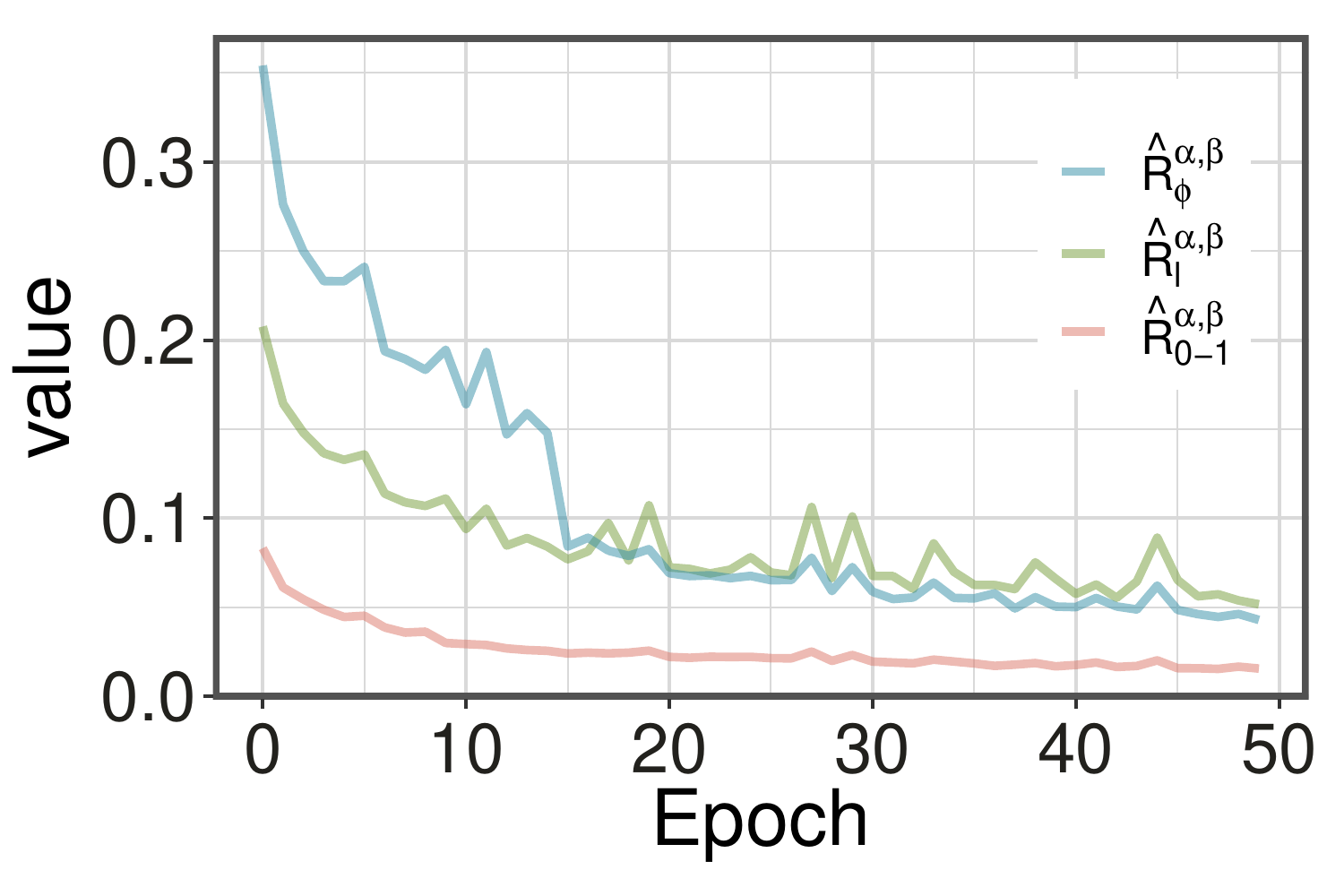} 
    }
  
    \subfigure[Subset3, $\alpha = 0.3, \beta = 0.3$]{
      \includegraphics[width=0.46\columnwidth]{./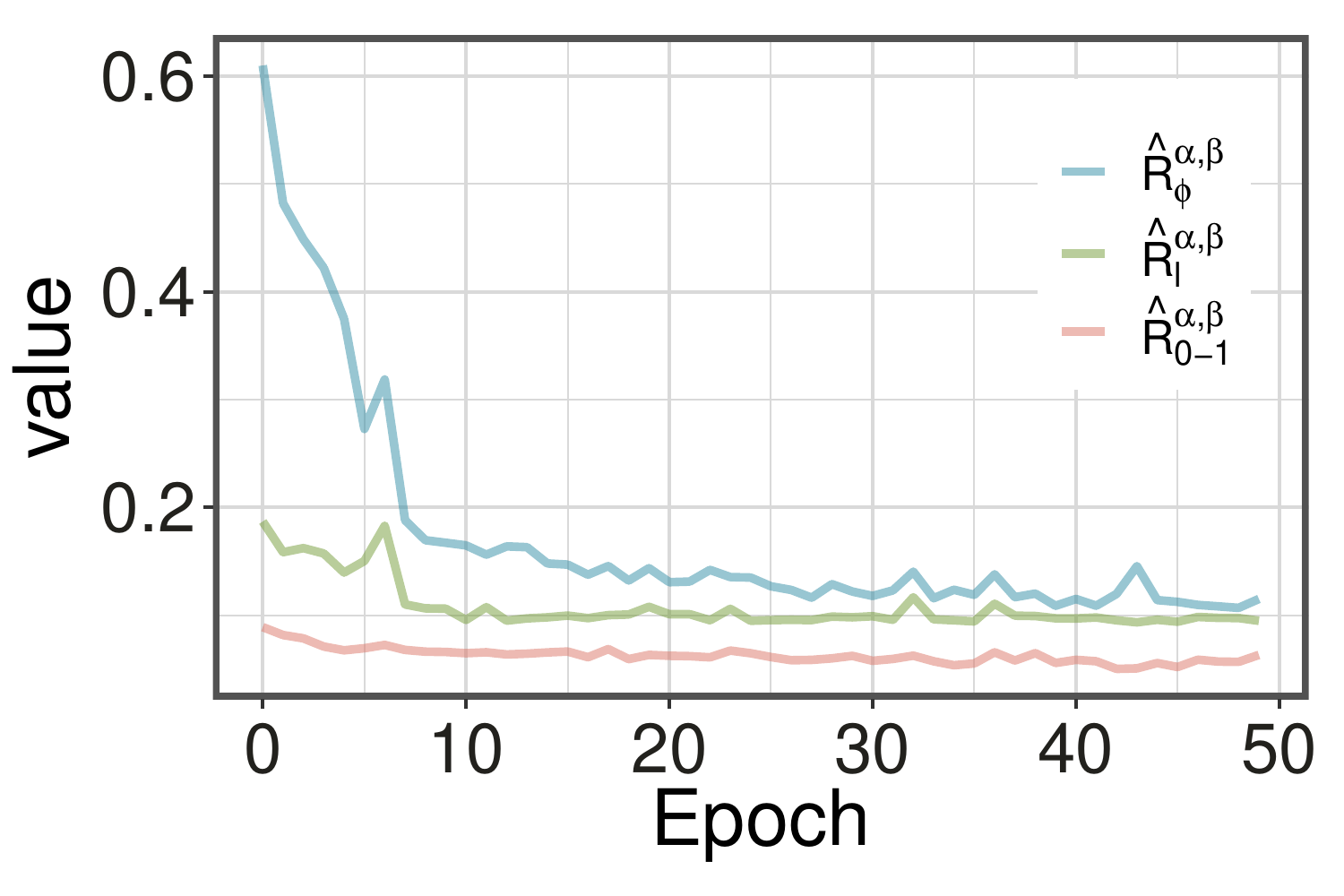} 
    }
    \subfigure[Subset3, $\alpha = 0.4, \beta = 0.4$]{
      \includegraphics[width=0.46\columnwidth]{./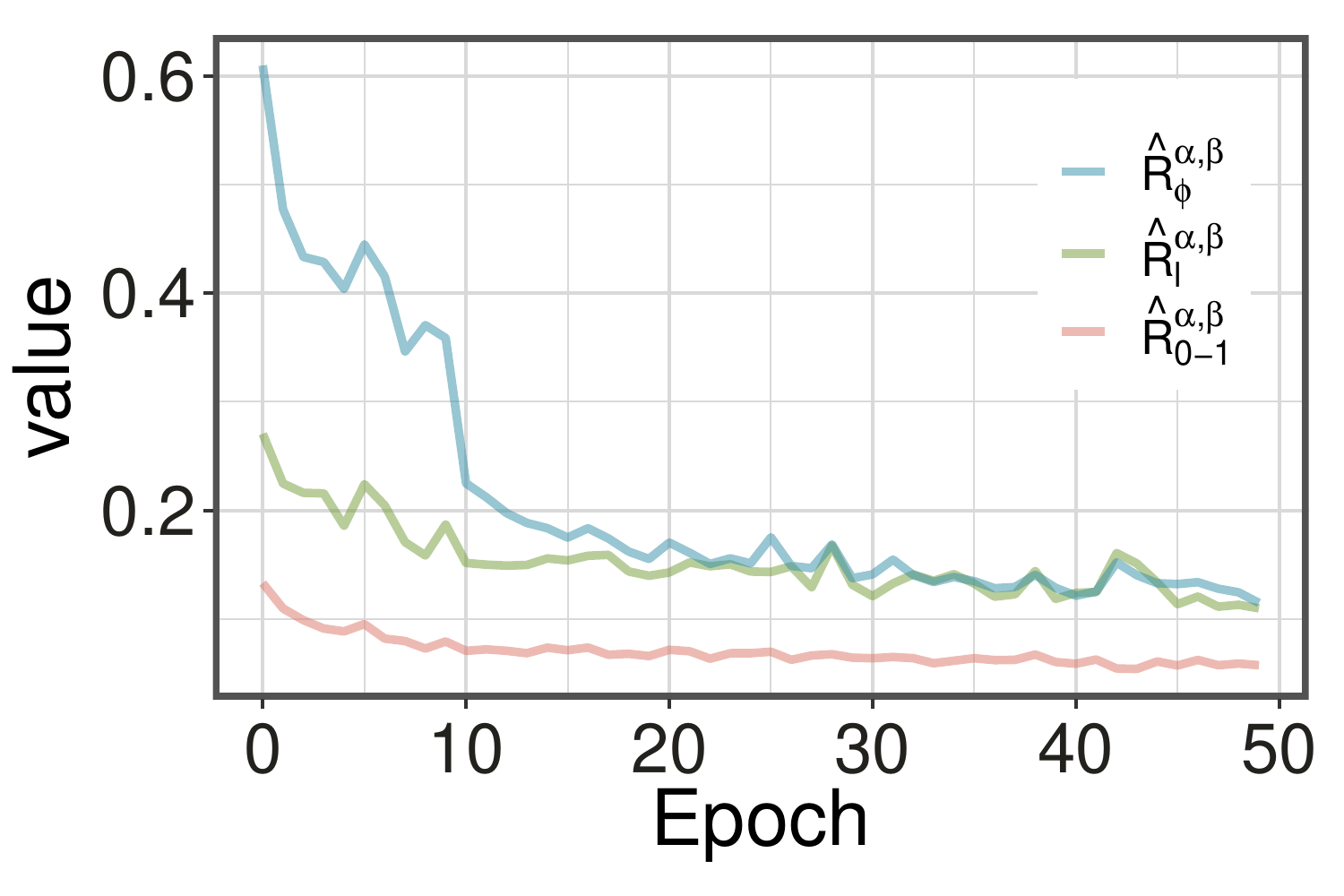} 
    }
  
    \subfigure[Subset3, $\alpha = 0.5, \beta = 0.5$]{
      \includegraphics[width=0.46\columnwidth]{./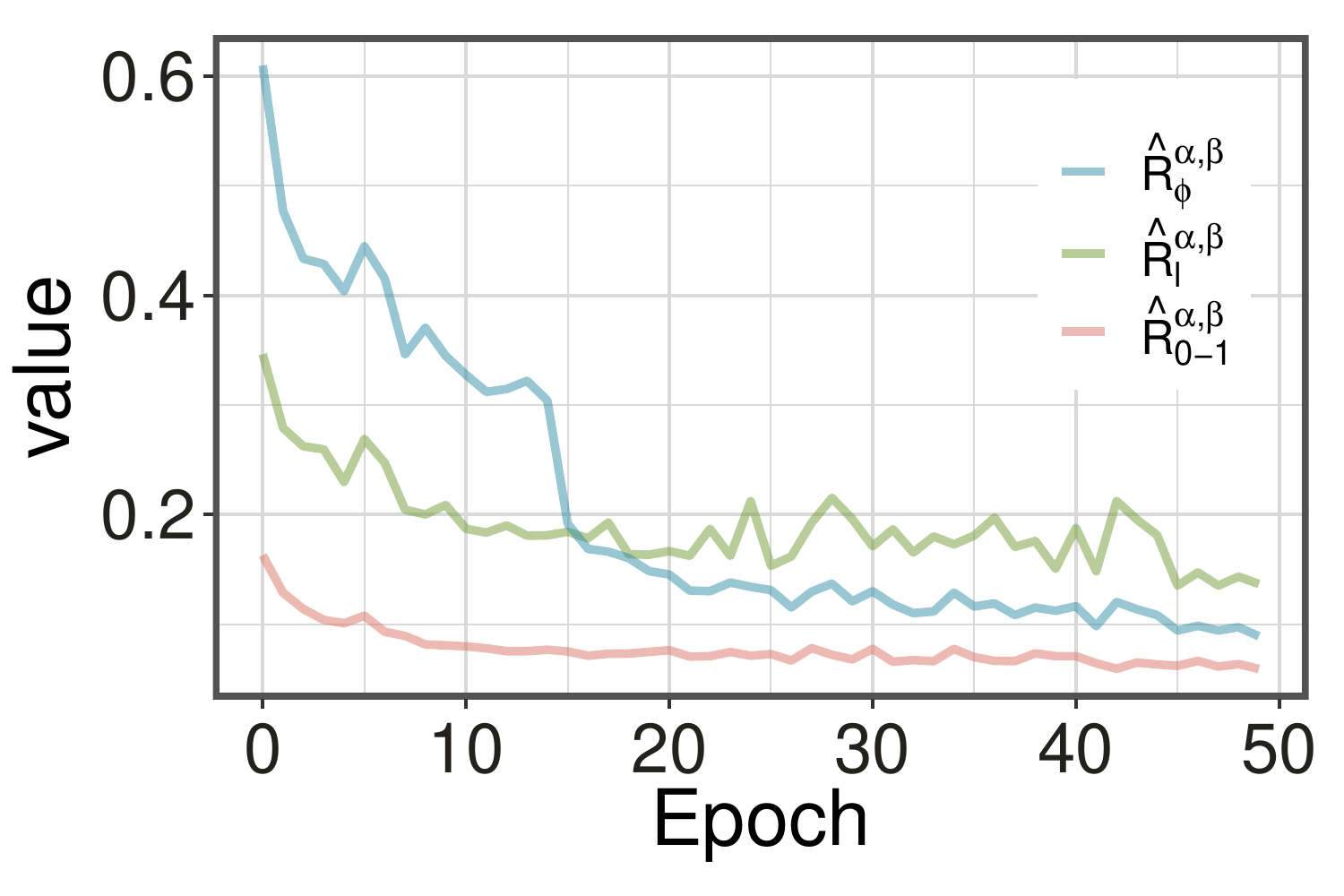} 
    }
  
    \caption{Empirical validations for Proposition 2(a). $\hat{\mathcal{R}}_\Phi^{\alpha,\beta}$, $\hat{\mathcal{R}}_\ell^{\alpha,\beta}$, $\hat{\mathcal{R}}_{0-1}^{\alpha,\beta}$ in the training process with\texttt{Poly} weighiting are plotted in each subfigure. Our loss $\hat{\mathcal{R}}_\Phi^{\alpha,\beta}$ is always an upper bound of the original loss.}\label{fig:val}
  \end{figure}
  \subsection{Dataset Description}
  Note that AUC is aimed at dealing with binary classification problems, hence we construct  long-tail binary datasets as follows. 
  
  \noindent \textbf{Binary CIFAR-10-LT Dataset}. The original CIFAR-10 dataset consists of 60,000 $32\times 32$ color images in 10 classes, with 6,000 images per class. There are 50,000 and 10,000 images in the training set and the test set, respectively. We create a long-tailed CIFAR-10 where the sample sizes across different classes decay exponentially, and the ratio of sample sizes of the least frequent to the most frequent class $\rho$ is set to 0.01. We then create binary long-tailed datasets based on CIFAR-10-LT by selecting one category as positive examples and the others as negative examples. We construct three binary subsets, in which the positive categories are  \textbf{1) }birds, \textbf{2) } automobiles, and \textbf{3) }cats, respectively. The datasets are split into training, validation and test sets with ratios $70\%, 15\%, 15\%$, respectively. More details are provided in Tab. \ref{tab:dataset}.

\noindent \textbf{Binary CIFAR-100-LT Dataset}. The original CIFAR-100 dataset is similar to CIFAR-10, except it has 100 classes with each containing 600 images. The 100 classes in the CIFAR-100 are grouped into 20 superclasses. We create CIFAR-100-LT in the same way as CIFAR-10-LT, and transform it into three binary long-tailed datasets by selecting a superclass as positive class examples each time. Specifically, the positive superclasses are \textbf{1) }fruits and vegetables, \textbf{2) }insects and \textbf{3) }large omnivores and herbivores, respectively. More details are provided in Tab. \ref{tab:dataset}.

\noindent \textbf{Implementation details on CIFAR Datasets}. We utilize the ResNet-20 \cite{resnet} as the backbone, which takes images with size $32\times 32 \times 3$ as input and outputs 64-d features. Then the features are mapped into $[0, 1]$ with an FC layer and Sigmoid function. During the training phase, we apply data augmentation including random horizontal flipping (50\%), random rotation (from $-15^\circ$ to $15^\circ$) and random cropping ($32\times 32$).

 \noindent  \textbf{Binary Tiny-ImageNet-200-LT Dataset}. The Tiny-ImageNet-200 dataset contains 100,000 $256\times 256$ color images  from 200 different categories, with 500 images per category. Similar to the CIFAR-100-LT dataset, we choose three positive superclasses to construct binary subsets: \textbf{1) }dogs, \textbf{2) }birds and \textbf{3) }vehicles. The datasets are further split into training, validation and test sets according to the ratio of $0.7:0.15:0.15$. See Tab. \ref{tab:dataset} for more details.

 \noindent \textbf{Implementation details on Tiny-ImageNet-200 }. The implementation details are basically the same with CIFAR-10-LT and CIFAR-100-LT datasets, except the backbone network is implemented with ResNet-18 \cite{resnet}, which takes images with size $224\times 224 \times 3$  as input and outputs 512-d features.

 \begin{figure}[th]  
  \centering
    
    \subfigure[Effect of $\gamma$ on subset1, $\alpha=0.3, \beta=0.3$]{
      \includegraphics[width=0.145\textwidth]{./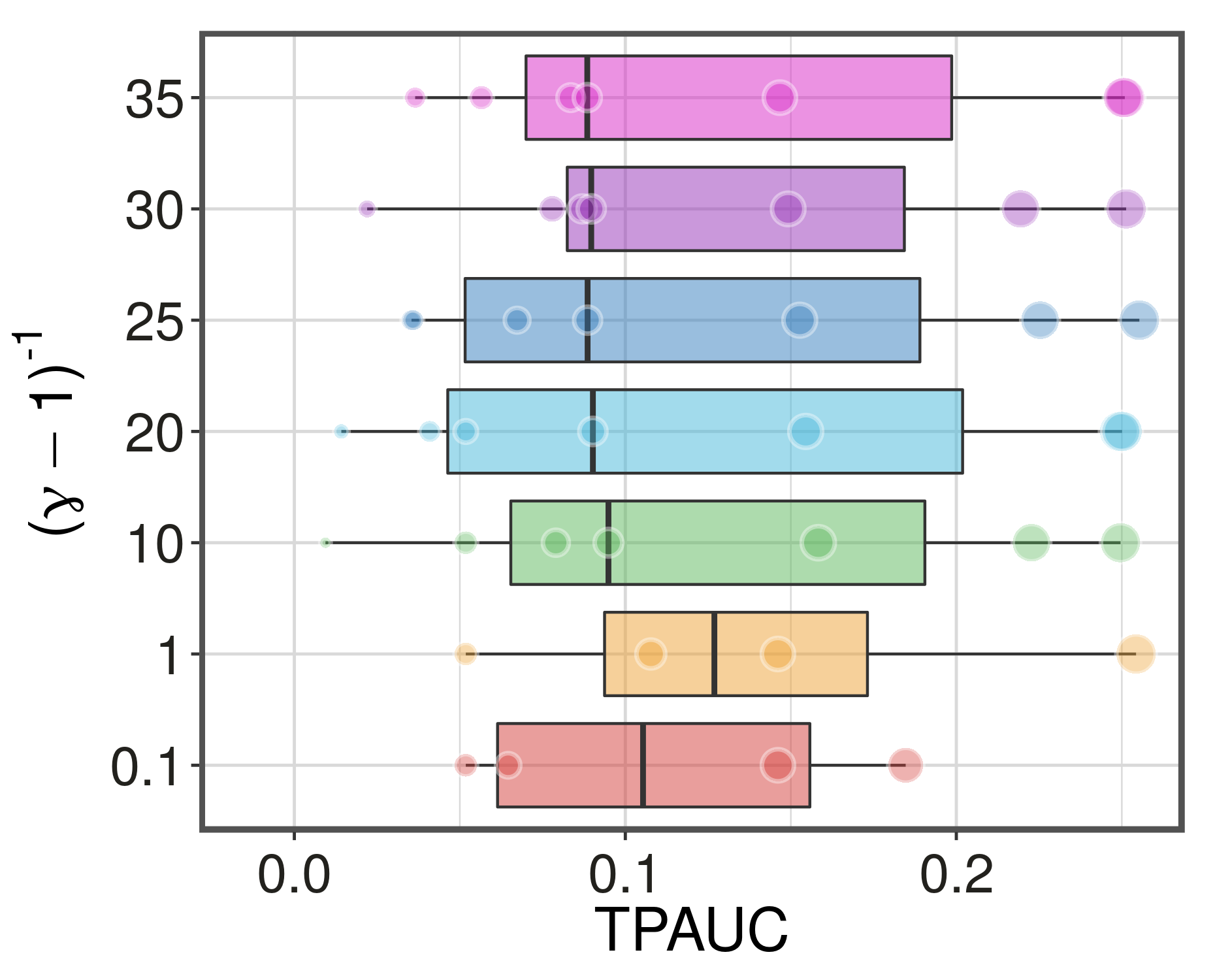} 
    }~~
    \subfigure[Effect of $\gamma$ on subset1, $\alpha=0.4, \beta=0.4$]{
      \includegraphics[width=0.145\textwidth]{./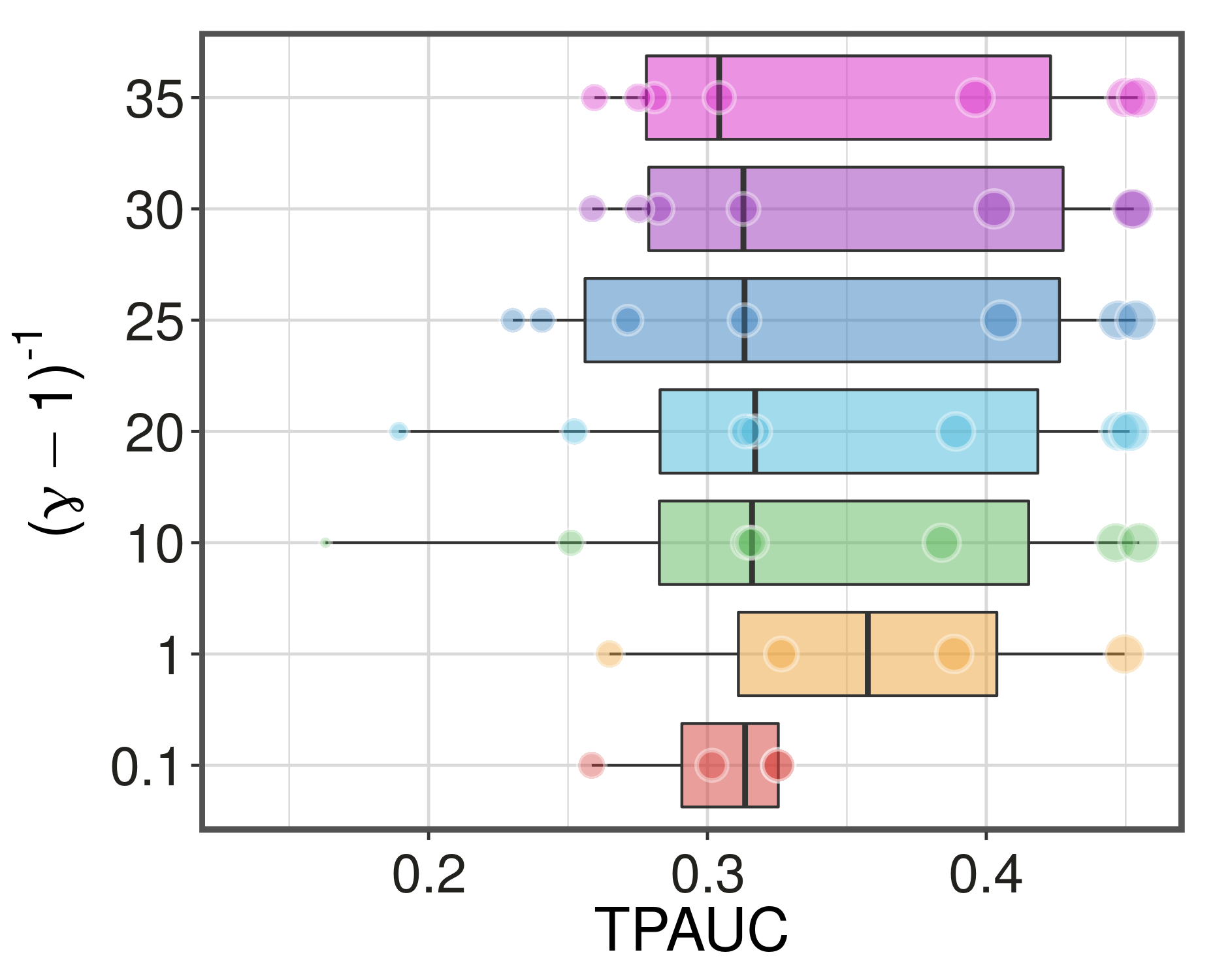} 
    }~~
    \subfigure[Effect of $\gamma$ on subset1, $\alpha=0.5, \beta=0.5$]{
      \includegraphics[width=0.145\textwidth]{./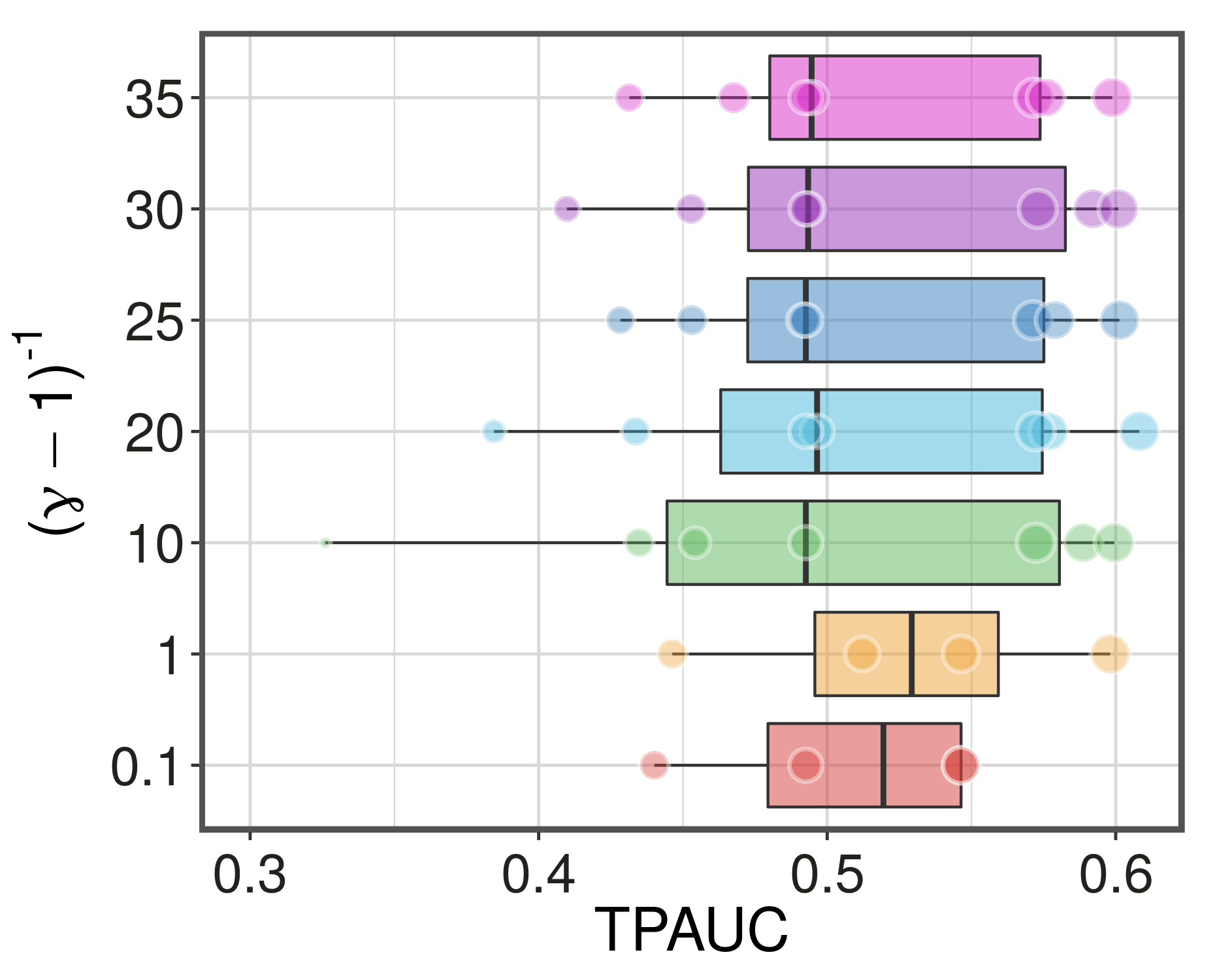} 
    }

    \subfigure[Effect of $\gamma$ on subset2, $\alpha=0.3, \beta=0.3$]{
      \includegraphics[width=0.145\textwidth]{./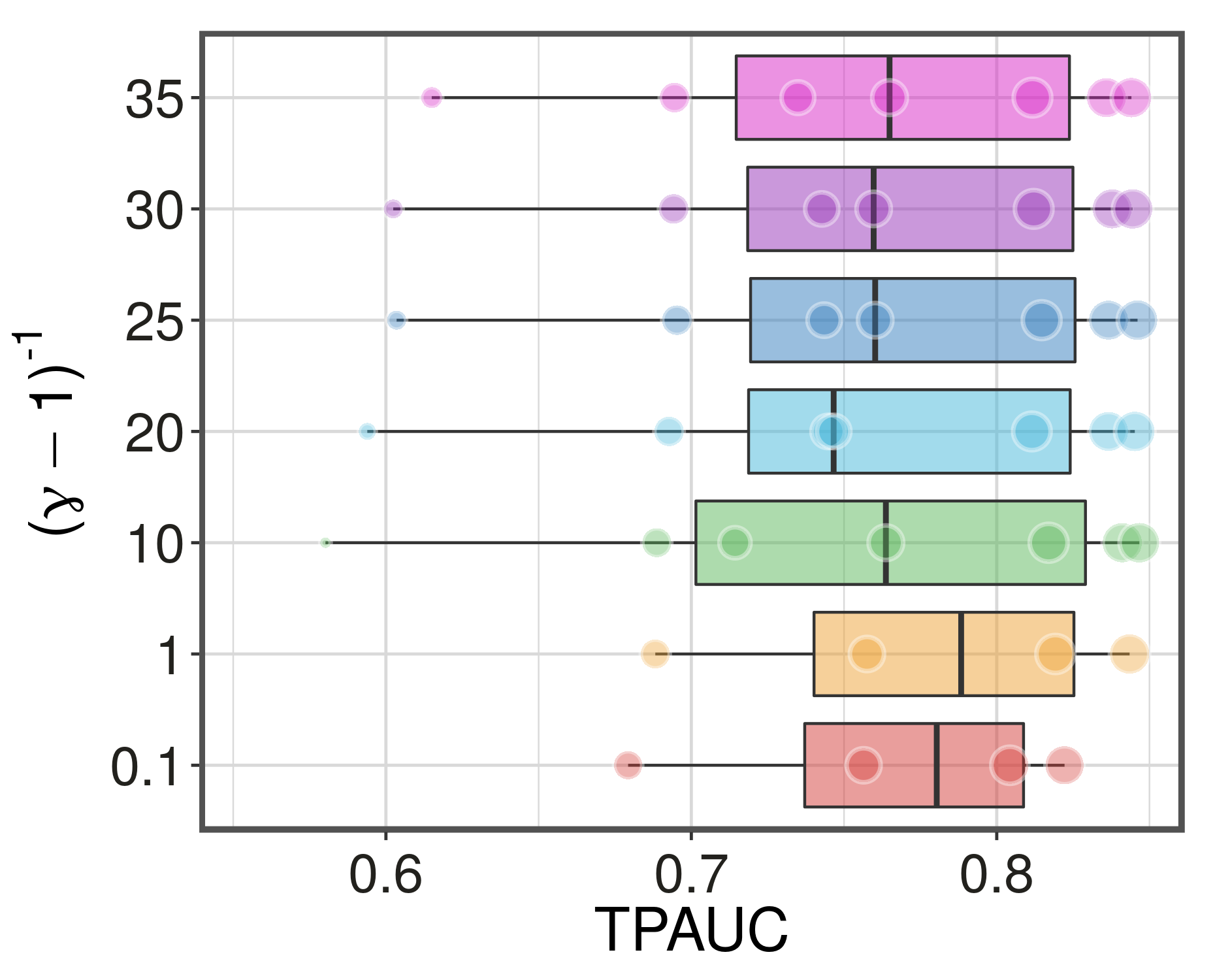} 
    }~~
    \subfigure[Effect of $\gamma$ on subset2, $\alpha=0.4, \beta=0.4$]{
      \includegraphics[width=0.145\textwidth]{./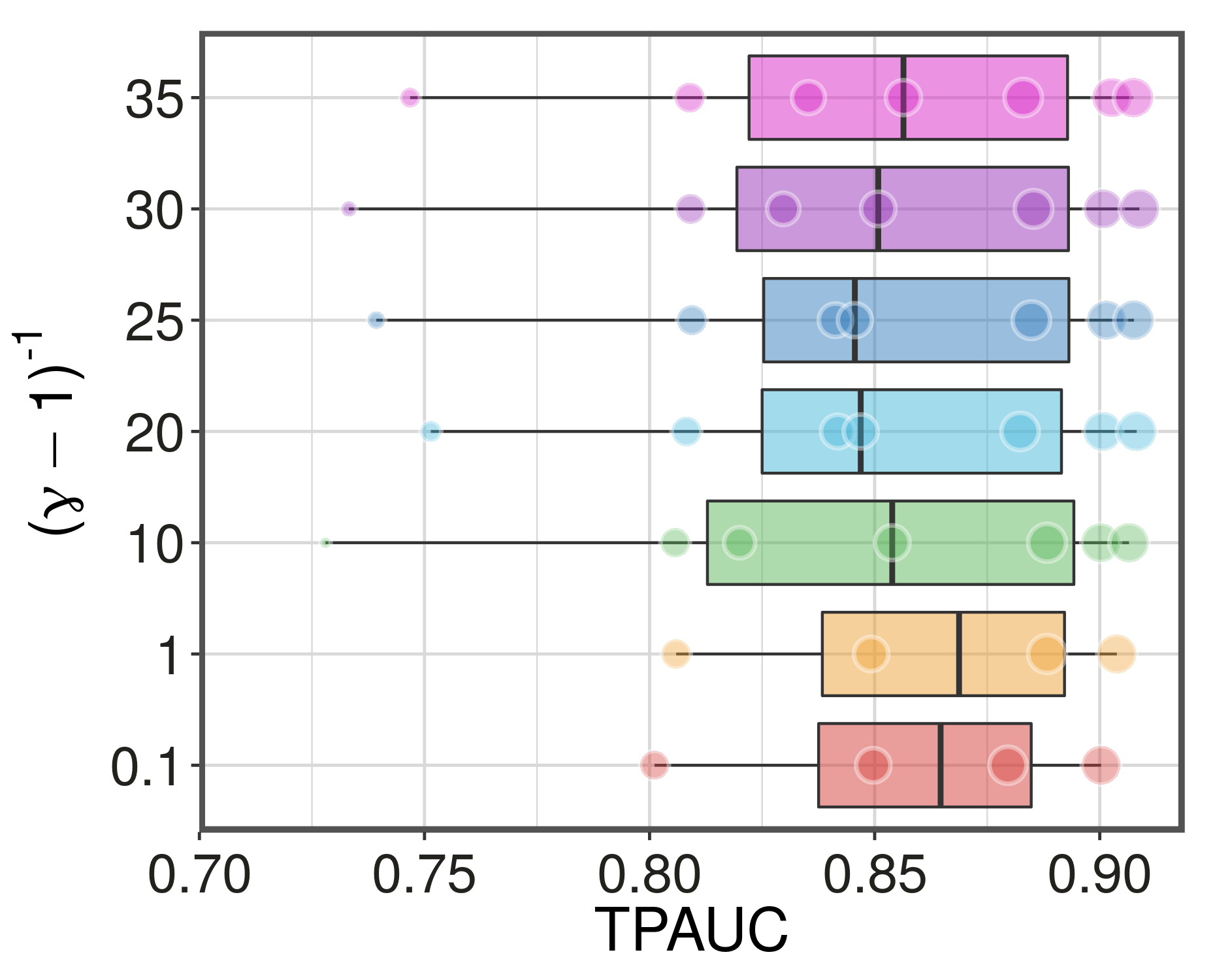} 
    }~~
    \subfigure[Effect of $\gamma$ on subset2, $\alpha=0.5, \beta=0.5$]{
      \includegraphics[width=0.145\textwidth]{./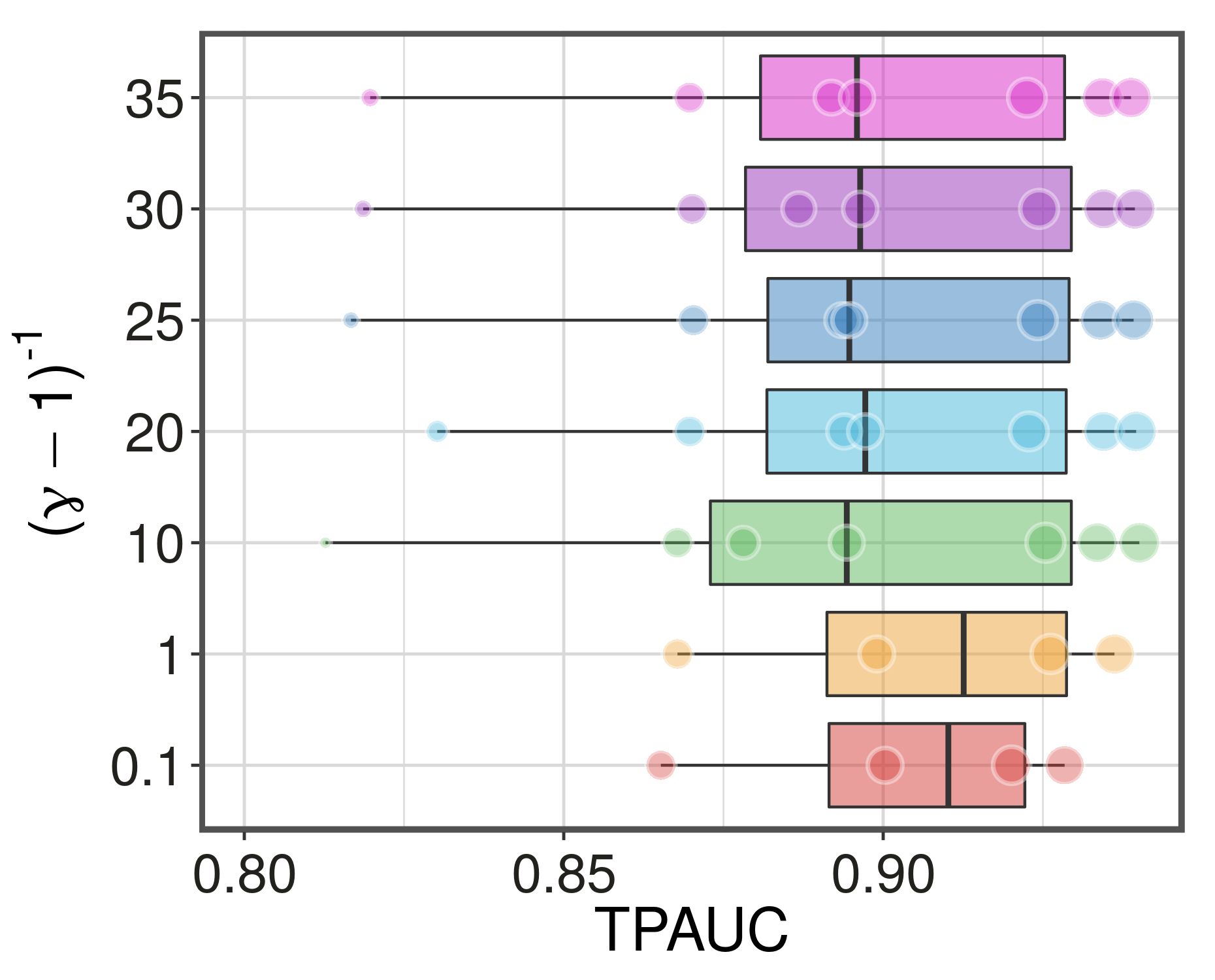} 
    }

    \subfigure[Effect of $\gamma$ on subset3, $\alpha=0.3, \beta=0.3$]{
      \includegraphics[width=0.145\textwidth]{./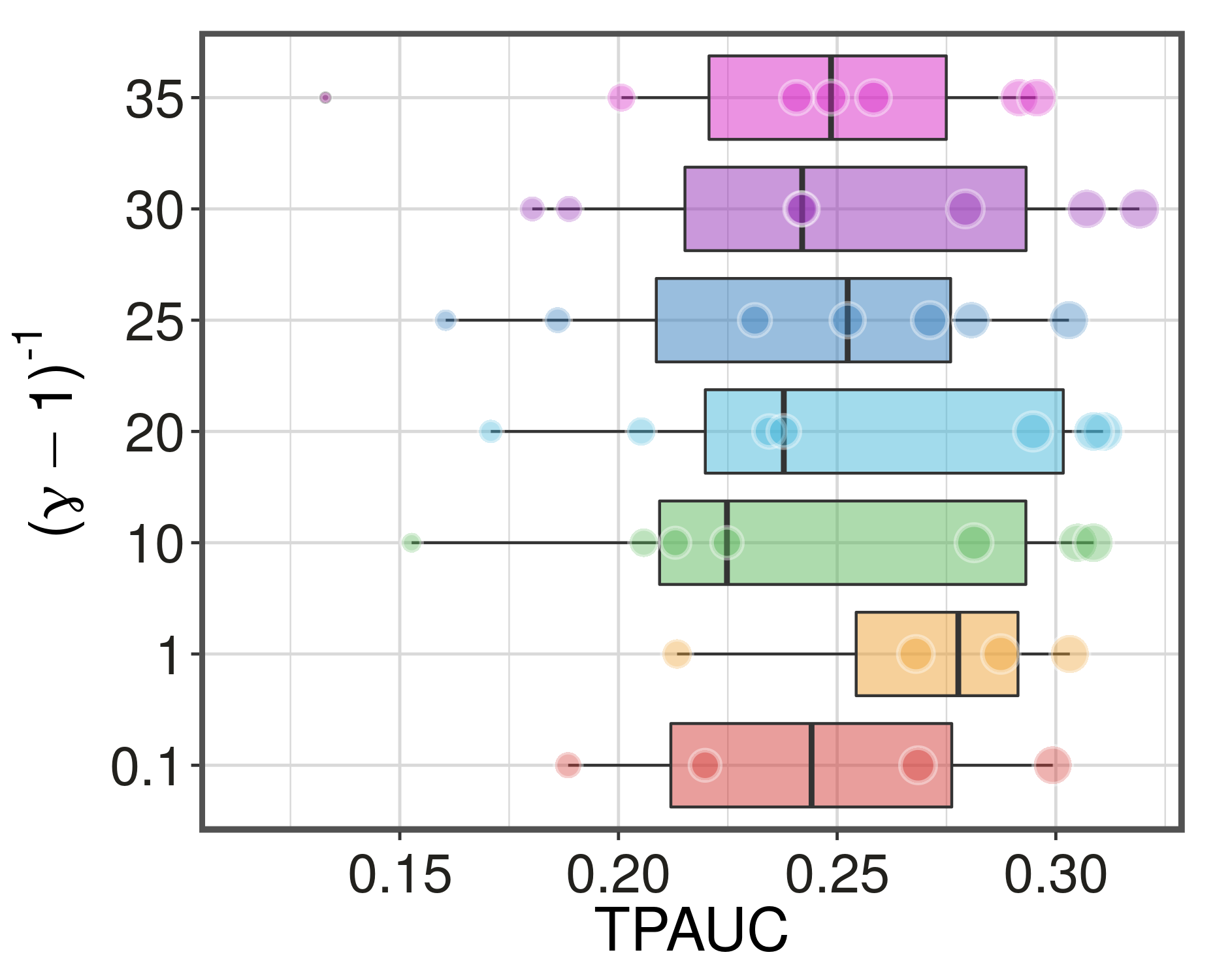} 
    }~~
    \subfigure[Effect of $\gamma$ on subset3, $\alpha=0.4, \beta=0.4$]{
      \includegraphics[width=0.145\textwidth]{./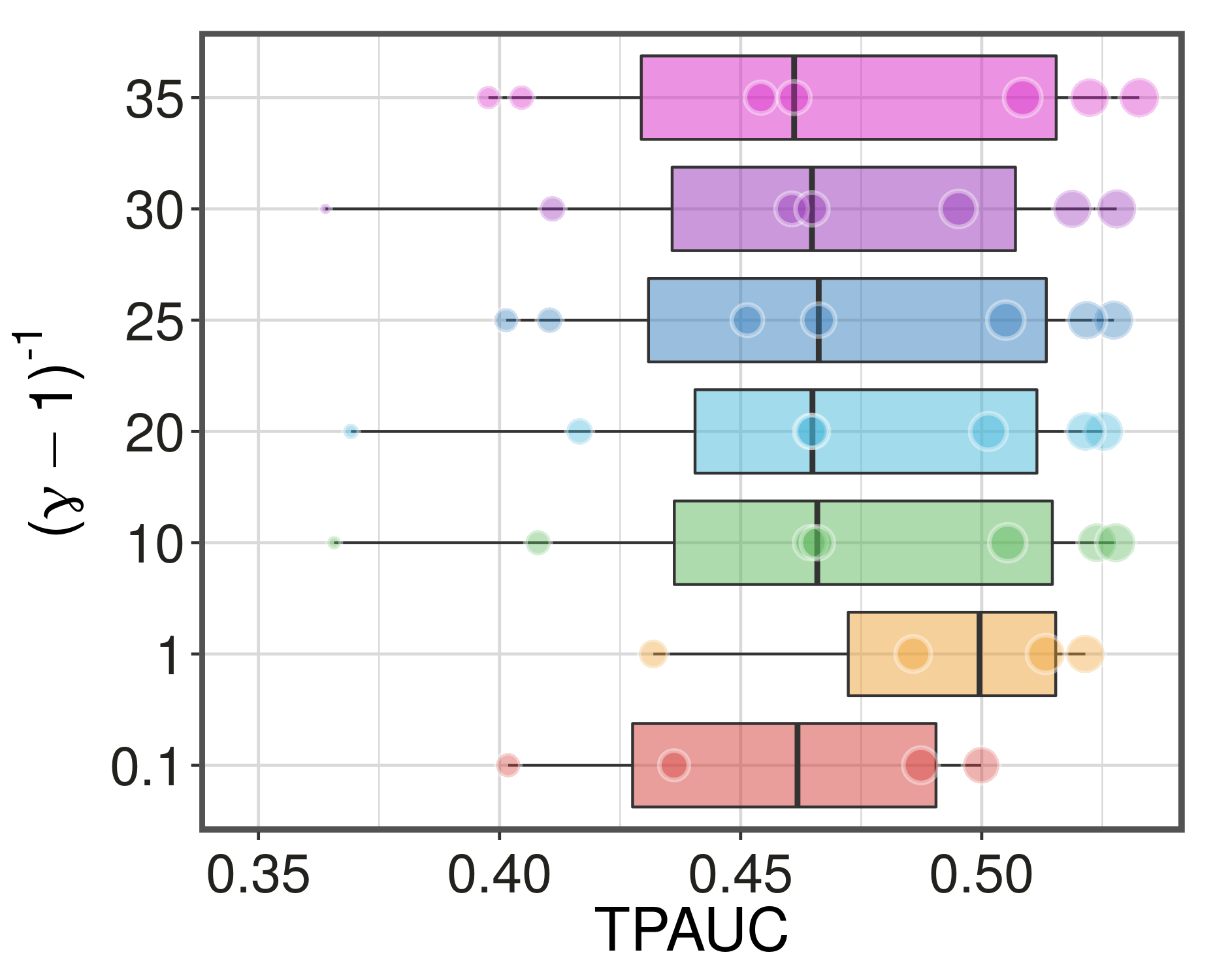} 
    }~~
    \subfigure[Effect of $\gamma$ on subset3, $\alpha=0.5, \beta=0.5$]{
      \includegraphics[width=0.145\textwidth]{./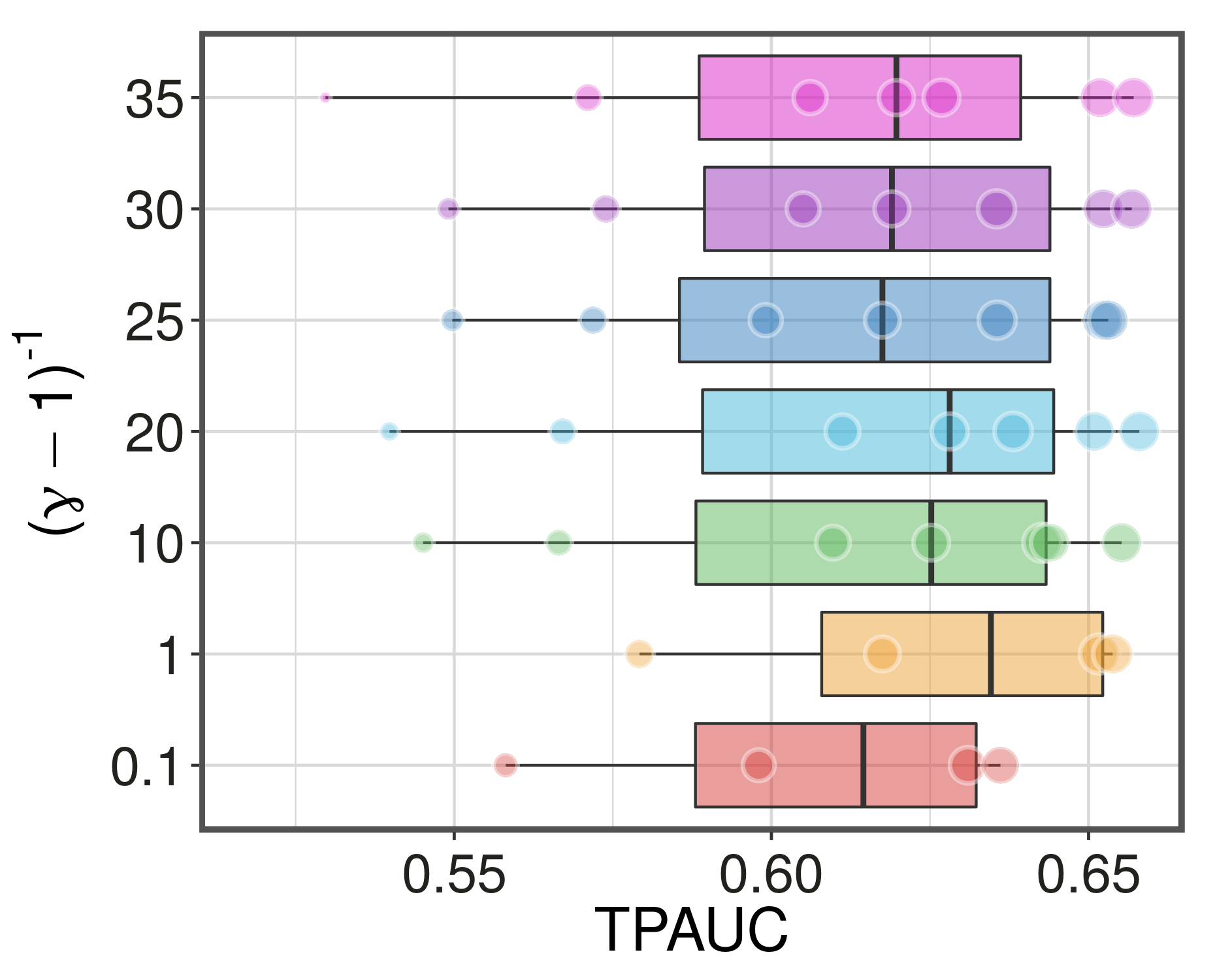} 
    }
    \caption{\label{fig:gammaexp}Sensitivity analysis of \texttt{Exp} on $\gamma$. Experiments are conducted on CIFAR-10-LT. For each box, $\gamma$ is fixed as the y-axis value, and the scattered points along the box show the variation of $\gamma$.}
\end{figure}

  \subsection{Competitors}
  
  To validate the effectiveness of our proposed methods, we consider two types of competitors in our experiments. On one hand, we compare our proposed methods with other methods dealing with imbalanced data:
  \begin{enumerate}[ itemindent=0pt, leftmargin =13pt]
    \item \textbf{CE}: Here use a class-wise reweighted version of the CE loss as one of our competitors, the sample weight is set to $1/n_y$, where $n_y$ the frequency of the class the sample belongs to.
    \item \textbf{Focal}: \cite{FOCAL} It tackles the imbalance problem by adding a modulating factor to the cross-entropy loss to highlight the hard and minority samples during the training process. 
  
    \item \textbf{CB-CE}:  It refers to the loss function that applies the reweighting scheme proposed in \cite{CB-CE-FOCAL} on the cross-entropy loss.
    
    \item \textbf{CB-Focal}:  It refers to the loss function that applies the reweighting scheme proposed in \cite{CB-CE-FOCAL} on the Focal loss.
  \end{enumerate}
  On the other hand, we also include standard AUC optimization methods as our baseline.
  \begin{enumerate}[ itemindent=0pt, leftmargin =13pt]
    \item \textbf{SqAUC}:  Perform a standard AUC optimization with the surrogate loss function $\ell_{sq}(t) = (1-t)^2$.
  \end{enumerate}
  To show the effect of OPAUC optimizing on TPAUC, we implement the following algorithms:
  \begin{enumerate}[ itemindent=0pt, leftmargin =13pt]
    \item \textbf{TruncOPAUC} (inspired by \cite{samplepauc1, samplepauc2}): We implement a Truncated-base method, which performs OPAUC optimization with the following objective function:
  	\begin{equation*}
  		\sum_{i=1}^{n_+}\sum_{j=1}^{n^\beta_-} \frac{\ell_{sq}\left(\f(\bm{x}_i^+) - f(\xnjo)  \right)}{n_+n^\beta_-}
  	\end{equation*}
  where $\bm{x}_i^+$ denotes the $i$-th positive instance. In each epoch, we first rank the negative instances among $\XN$ in ascending order based on their prediction scores. Then, the hard negative samples are generated from the top $1$-th to $\nnb$-th positions and all positive samples are involved in the optimization process.
    \item  \textbf{OPAUC-Poly}:  Perform OPAUC optimization with the objective function:
    \begin{equation*}
      \begin{split}
        \frac{1}{\np\nn}\sumpn & \psi^{\mathsf{poly}}_\gamma(\f(\vnj)) \cdot\ell(\f, \xpi, \xnj)
      \end{split}
    \end{equation*} 
  \item \textbf{OPAUC-Exp}:  Perform TPAUC optimization with the objective function:
  \begin{equation*}
    \begin{split}
      \frac{1}{\np\nn}\sumpn & \psi^{\mathsf{Exp}}_\gamma(\f(\vnj))\cdot \ell(\f, \xpi, \xnj)
    \end{split}
  \end{equation*} 
\end{enumerate}
Moreover, we include the truncation-based TPAUC method as a baseline of TPAUC optimization:
\begin{enumerate}[ itemindent=0pt, leftmargin =13pt]
 \item \textbf{TruncTPAUC}: We implement a  {Trunc}ation-based \textbf{TPAUC} method, which performs TPAUC optimization with the following objective function:
  \begin{equation*}
    \psumpn \frac{\ell_{sq}\left(\f(\xpio) - f(\xnjo)  \right)}{n_+^\alpha n_-^\beta}
  \end{equation*}
  and the other settings follow from TruncOPAUC.
\end{enumerate}
  Finally, we implement our proposed methods on top of SqAUC:
  \begin{enumerate}[ itemindent=0pt, leftmargin =13pt]
  \item \textbf{TPAUC-Poly}:  Perform TPAUC optimization with the objective function:
  \begin{equation*}
    \begin{split}
      \frac{1}{\np\nn}\sumpn &\psi^{\mathsf{poly}}_\gamma(1-\f(\xpi))  \cdot \psi^{\mathsf{poly}}_\gamma(\f(\vnj)) \\
       &\cdot\ell(\f, \xpi, \xnj)
    \end{split}
  \end{equation*} 
  \item \textbf{TPAUC-Exp}:  Perform TPAUC optimization with the objective function:
  \begin{equation*}
    \begin{split}
      \frac{1}{\np\nn}\sumpn &\psi^{\mathsf{Exp}}_\gamma(1-\f(\xpi))  \cdot \psi^{\mathsf{Exp}}_\gamma(\f(\vnj))\\
       &\cdot \ell(\f, \xpi, \xnj)
    \end{split}
  \end{equation*} 
  \item \textbf{TPAUC-Minmax-Poly}:  Perform the \textbf{TPAUC-Poly} algorithm with the minimax  reformulation in Thm.\ref{thm:reform}. 
  \item \textbf{TPAUC-Minmax-Exp}:  Perform the \textbf{TPAUC-Exp} algorithm with the minimax  reformulation in Thm.\ref{thm:reform}. 
  
  \end{enumerate}

  \begin{table*}[htbp]
    \centering
    \small
    \caption{Performance Comparisons on Tiny-ImageNet-200-LT with different metrics, where $(x,y)$ stands for $\mathsf{TPAUC}(x,y)$ in short and the first and second best results are highlighted with \textbf{bold text} and \underline{underline}, respectively.}
    \resizebox{0.95\textwidth}{!}{%
      \begin{tabular}{c|c|c|ccc|ccc|ccc}
        \toprule
        \multirow{2}[4]{*}{dataset} & \multicolumn{1}{c|}{\multirow{2}[4]{*}{type}} & \multicolumn{1}{c|}{\multirow{2}[4]{*}{methods}} & \multicolumn{3}{c|}{Subset1} & \multicolumn{3}{c|}{Subset2} & \multicolumn{3}{c}{Subset3} \\
        \cmidrule{4-12}          & \multicolumn{1}{c|}{} &       & (0.3,0.3) & (0.4,0.4) & \multicolumn{1}{c|}{(0.5,0.5)} & (0.3,0.3) & (0.4,0.4) & \multicolumn{1}{c|}{(0.5,0.5)} & (0.3,0.3) & (0.4,0.4) & (0.5,0.5) \\
        \midrule
        \multicolumn{1}{c|}{\multirow{11}[3]{*}{Tiny-200-LT}} & \multicolumn{1}{c|}{\multirow{9}[1]{*}{Competitors}} & CE-RW & \cellcolor[rgb]{ .906,  .941,  .973} 80.90 & \cellcolor[rgb]{ .965,  .976,  .988} 87.76 & \cellcolor[rgb]{ .988,  .992,  .996} 91.54 & \cellcolor[rgb]{ .906,  .941,  .973} 93.30 & \cellcolor[rgb]{ .933,  .957,  .98} 96.15 & \cellcolor[rgb]{ .906,  .941,  .973} 97.53 & \cellcolor[rgb]{ .965,  .976,  .988} 90.37 & 94.34 & \cellcolor[rgb]{ .863,  .918,  .965} 96.75 \\
        & \multicolumn{1}{c|}{} & Focal & \cellcolor[rgb]{ .863,  .918,  .965} 81.18 & \cellcolor[rgb]{ .906,  .941,  .973} 88.06 & \cellcolor[rgb]{ .933,  .957,  .98} 91.72 & \cellcolor[rgb]{ .933,  .957,  .98} 93.23 & \cellcolor[rgb]{ .965,  .976,  .988} 96.08 & \cellcolor[rgb]{ .906,  .941,  .973} 97.59 & \cellcolor[rgb]{ .933,  .957,  .98} 91.35 & \cellcolor[rgb]{ .933,  .957,  .98} 94.87 & \cellcolor[rgb]{ .933,  .957,  .98} 96.63 \\
        & \multicolumn{1}{c|}{} & CBCE & \cellcolor[rgb]{ .933,  .957,  .98} 80.64 & \cellcolor[rgb]{ .988,  .992,  .996} 87.58 & 91.17 & \cellcolor[rgb]{ .808,  .882,  .949} 93.77 & \cellcolor[rgb]{ .808,  .882,  .949} \underline{96.52} & \cellcolor[rgb]{ .808,  .882,  .949} \underline{97.77} & \cellcolor[rgb]{ .906,  .941,  .973} 91.66 & \cellcolor[rgb]{ .863,  .918,  .965} 95.19 & \cellcolor[rgb]{ .808,  .882,  .949} 96.79 \\
        & \multicolumn{1}{c|}{} & CBFocal & \cellcolor[rgb]{ .965,  .976,  .988} 80.44 & \cellcolor[rgb]{ .933,  .957,  .98} 87.95 & \cellcolor[rgb]{ .863,  .918,  .965} 91.91 & \cellcolor[rgb]{ .863,  .918,  .965} 93.46 & \cellcolor[rgb]{ .808,  .882,  .949} 96.43 & \cellcolor[rgb]{ .863,  .918,  .965} 97.64 & 91.06 & \cellcolor[rgb]{ .965,  .976,  .988} 94.82 & \cellcolor[rgb]{ .965,  .976,  .988} 96.62 \\
        & \multicolumn{1}{c|}{} & SqAUC & 80.16 & \cellcolor[rgb]{ .933,  .957,  .98} 87.99 & \cellcolor[rgb]{ .965,  .976,  .988} 91.67 & \cellcolor[rgb]{ .965,  .976,  .988} 93.10 & \cellcolor[rgb]{ .965,  .976,  .988} 96.07 & 97.32 & \cellcolor[rgb]{ .808,  .882,  .949} 92.15 & \cellcolor[rgb]{ .863,  .918,  .965} 95.16 & \cellcolor[rgb]{ .863,  .918,  .965} 96.75 \\
        & \multicolumn{1}{c|}{} & {TruncOPAUC} & \cellcolor[rgb]{ .965,  .976,  .988} 80.45 & \cellcolor[rgb]{ .863,  .918,  .965} 88.23 & \cellcolor[rgb]{ .933,  .957,  .98} 91.71 & \cellcolor[rgb]{ .863,  .918,  .965} 93.44 & \cellcolor[rgb]{ .863,  .918,  .965} 96.33 & \cellcolor[rgb]{ .863,  .918,  .965} 97.63 & \cellcolor[rgb]{ .863,  .918,  .965} 91.70 & \cellcolor[rgb]{ .906,  .941,  .973} 95.04 & \cellcolor[rgb]{ .906,  .941,  .973} 96.71 \\
        & \multicolumn{1}{c|}{} & {Op-Poly} & \cellcolor[rgb]{ .863,  .918,  .965} 81.15 & \cellcolor[rgb]{ .808,  .882,  .949} 88.41 & \cellcolor[rgb]{ .933,  .957,  .98} 91.73 & \cellcolor[rgb]{ .863,  .918,  .965} 93.53 & \cellcolor[rgb]{ .863,  .918,  .965} 96.30 & \cellcolor[rgb]{ .808,  .882,  .949} 97.74 & \cellcolor[rgb]{ .808,  .882,  .949} 92.22 & \cellcolor[rgb]{ .808,  .882,  .949} \underline{95.29} & \cellcolor[rgb]{ .808,  .882,  .949} 96.82 \\
        & \multicolumn{1}{c|}{} & {OP-Exp} & \cellcolor[rgb]{ .863,  .918,  .965} 81.02 & \cellcolor[rgb]{ .933,  .957,  .98} 87.99 & \cellcolor[rgb]{ .906,  .941,  .973} 91.83 & \cellcolor[rgb]{ .965,  .976,  .988} 93.10 & \cellcolor[rgb]{ .863,  .918,  .965} 96.36 & \cellcolor[rgb]{ .863,  .918,  .965} 97.67 & \cellcolor[rgb]{ .808,  .882,  .949} 92.15 & \cellcolor[rgb]{ .863,  .918,  .965} 95.16 & \cellcolor[rgb]{ .863,  .918,  .965} 96.75 \\
        & \multicolumn{1}{c|}{} & {TruncTPAUC} & \cellcolor[rgb]{ .933,  .957,  .98} 80.73 & 87.41 & \cellcolor[rgb]{ .965,  .976,  .988} 91.67 & \cellcolor[rgb]{ .988,  .992,  .996} 93.09 & \cellcolor[rgb]{ .965,  .976,  .988} 96.03 & \cellcolor[rgb]{ .906,  .941,  .973} 97.58 & \cellcolor[rgb]{ .906,  .941,  .973} 91.55 & \cellcolor[rgb]{ .863,  .918,  .965} 95.12 & \cellcolor[rgb]{ .808,  .882,  .949} 96.81 \\
        \cmidrule{2-12}         & \multicolumn{1}{c|}{\multirow{2}[2]{*}{Ours-TPAUC}} & Poly & \cellcolor[rgb]{ .965,  .976,  .988} 80.44 & \cellcolor[rgb]{ .863,  .918,  .965} 88.21 & \cellcolor[rgb]{ .863,  .918,  .965} 91.98 & 93.00 & 95.61 & \cellcolor[rgb]{ .933,  .957,  .98} 97.47 & \cellcolor[rgb]{ .808,  .882,  .949} 92.02 & \cellcolor[rgb]{ .808,  .882,  .949} 95.25 & \cellcolor[rgb]{ .808,  .882,  .949} \underline{96.84} \\
        & \multicolumn{1}{c|}{} & Exp  & \cellcolor[rgb]{ .741,  .843,  .933} \textbf{82.61} & \cellcolor[rgb]{ .741,  .843,  .933} \textbf{89.13} & \cellcolor[rgb]{ .808,  .882,  .949} \underline{92.62} & \cellcolor[rgb]{ .808,  .882,  .949} \underline{93.82} & \cellcolor[rgb]{ .933,  .957,  .98} 96.12 & \cellcolor[rgb]{ .965,  .976,  .988} 97.38 & \cellcolor[rgb]{ .933,  .957,  .98} 91.25 & \cellcolor[rgb]{ .933,  .957,  .98} 94.78 & \cellcolor[rgb]{ .988,  .992,  .996} 96.57 \\
        \cmidrule{2-12}         & \multirow{2}[2]{*}{Ours-minmax} & {Exp} & \cellcolor[rgb]{ .808,  .882,  .949} \underline{82.35} & \cellcolor[rgb]{ .808,  .882,  .949} 88.70 & \cellcolor[rgb]{ .808,  .882,  .949} 92.51 & \cellcolor[rgb]{ .808,  .882,  .949} 93.77 & \cellcolor[rgb]{ .988,  .992,  .996} 95.85 & \cellcolor[rgb]{ .988,  .992,  .996} 97.34 & \cellcolor[rgb]{ .741,  .843,  .933} \textbf{92.57} & \cellcolor[rgb]{ .988,  .992,  .996} 94.43 & 96.25 \\
        &      & {Poly} & \cellcolor[rgb]{ .808,  .882,  .949} 82.24 & \cellcolor[rgb]{ .808,  .882,  .949} \underline{88.78} & \cellcolor[rgb]{ .741,  .843,  .933} \textbf{92.79} & \cellcolor[rgb]{ .741,  .843,  .933} \textbf{94.55} & \cellcolor[rgb]{ .741,  .843,  .933} \textbf{96.76} & \cellcolor[rgb]{ .741,  .843,  .933} \textbf{97.92} & \cellcolor[rgb]{ .808,  .882,  .949} \underline{92.34} & \cellcolor[rgb]{ .741,  .843,  .933} \textbf{95.63} & \cellcolor[rgb]{ .741,  .843,  .933} \textbf{97.05} \\
        \bottomrule
      \end{tabular}%
    }
    \label{tab:perf-tiny}%
  \end{table*}%
  \subsection{General Implementation Details}
  \label{exp_detail}
  All the experiments are carried out on a ubuntu 16.04.1 server equipped with Intel(R) Xeon(R) Gold 5218 CPU@2.30GHz and an RTX 3090 GPU, and all codes are implemented with \texttt{PyTorch} (v-1.8.1) \cite{DBLP:conf/nips/PaszkeGMLBCKLGA19} under \texttt{python 3.7} environment. \texttt{Stochastic Gradient Descent} (SGD) \cite{DBLP:conf/icml/SutskeverMDH13} with Nesterov momentum is adopted to optimize the objective function. Empirically, for all datasets, the learning rate is $10^{-3}$; the $l_2$ regularization term is set as $10^{-5}$, and the Nesterov momentum is $0.9$. We also employ an exponential learning rate decay scheduler to adjust the learning rate after each training epoch, where the learning rate decay rate is set as $0.99$ for all methods. The training batch size is 128, and we restrict the ratio of positive and negative samples by $1:10$ in each batch. The batch size of validation/test examples is 256. Specifically, $E_k$ is searched in $\{0, 3, 5, 8, 10, 15, 20, 30, 35\}$. For\texttt{Poly}, $\gamma$ is searched in $\{0.01, 0.03, 0.05, 0.08, 0.1, 1, 5\}$. For\texttt{Exp}, $\gamma$ is searched in $\{0.1, 1, 10, 20, 25, 30, 35\}$. Finally, we select the model based on the best validation performance and report the test set results.

  \begin{figure*}[h]  
    \centering
      
      \subfigure[Effect of $\gamma$ on subset1, $\alpha=0.3, \beta=0.3$]{
        \includegraphics[width=0.3\textwidth]{./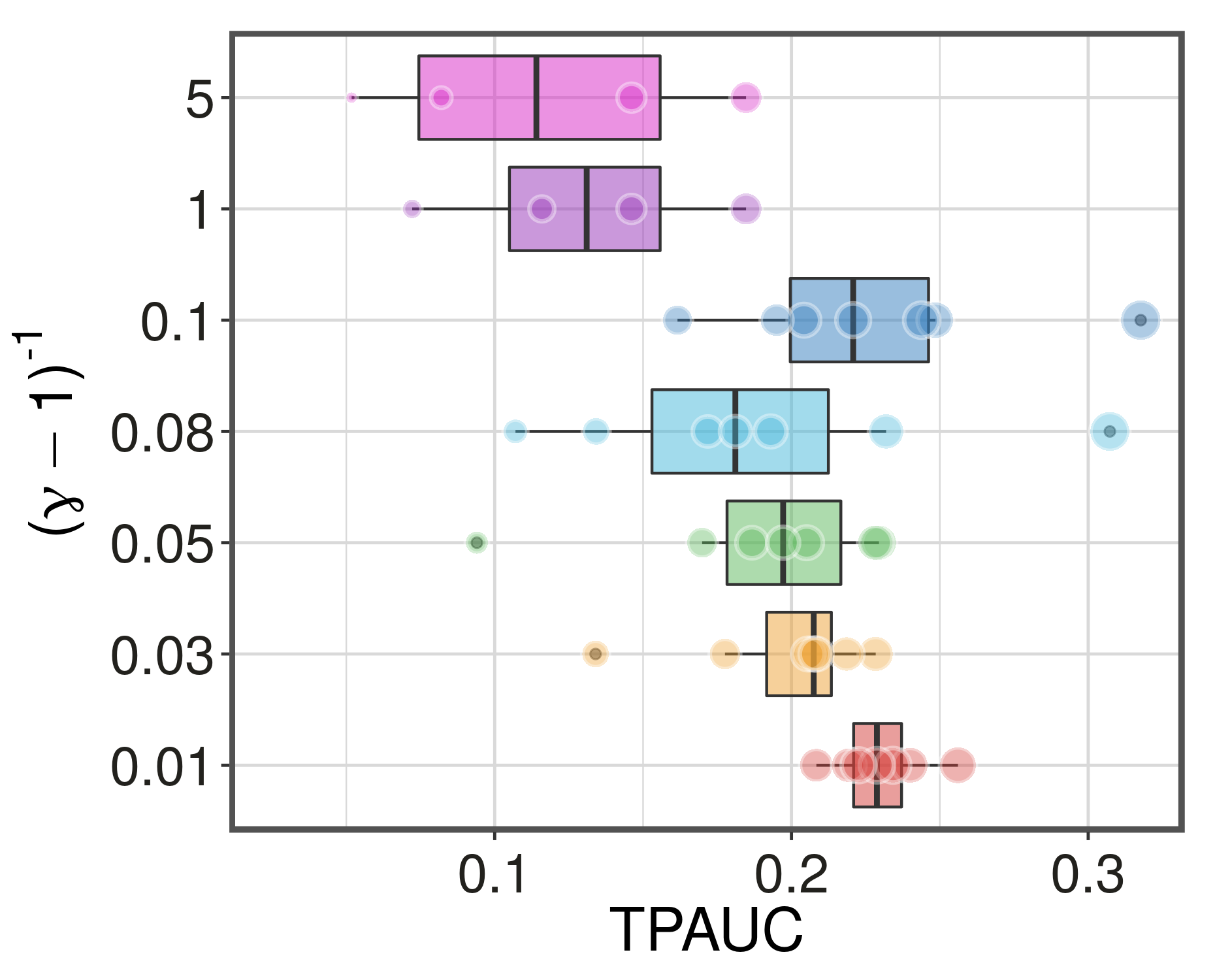} 
      }~~
      \subfigure[Effect of $\gamma$ on subset1, $\alpha=0.4, \beta=0.4$]{
        \includegraphics[width=0.3\textwidth]{./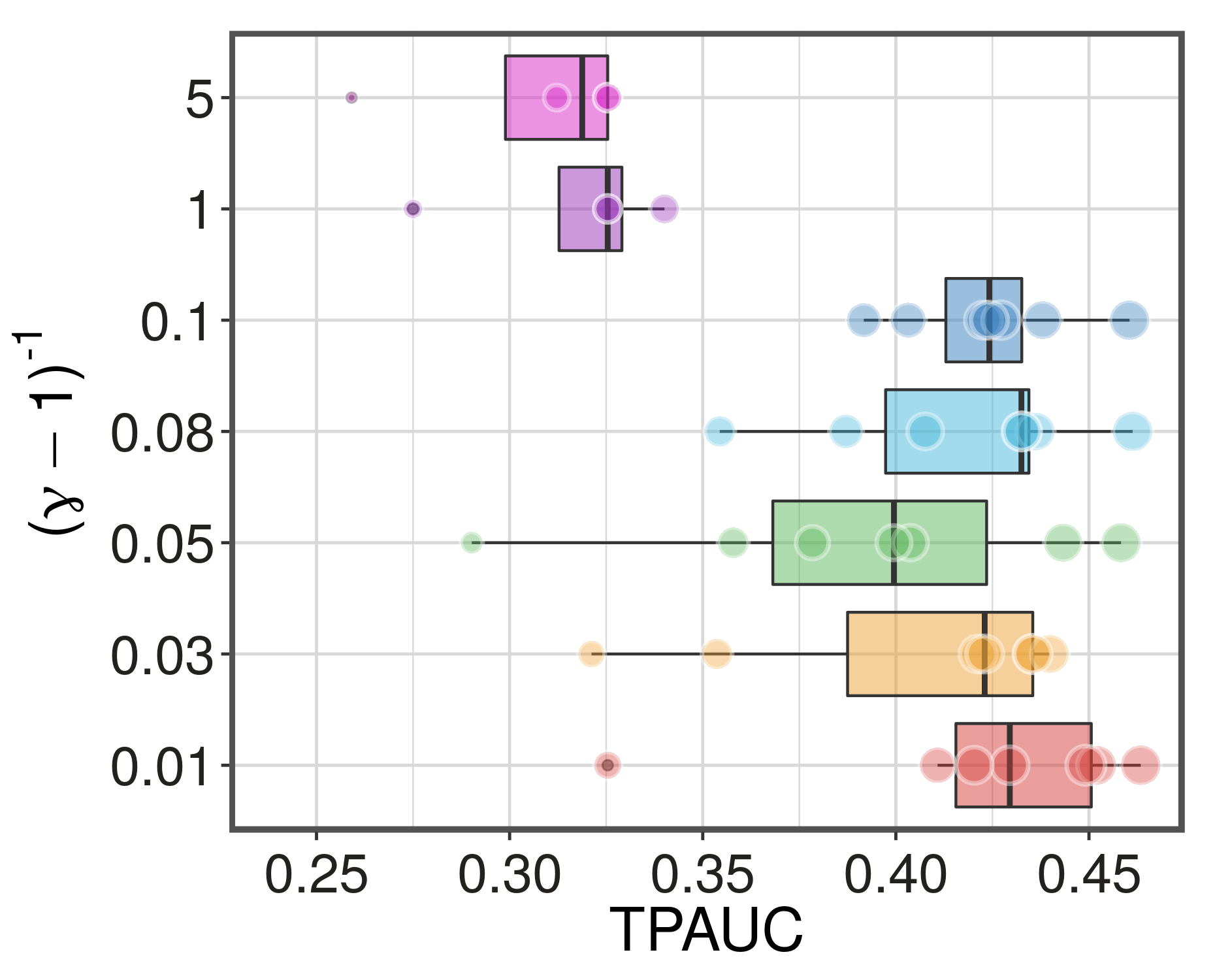} 
      }~~
      \subfigure[Effect of $\gamma$ on subset1, $\alpha=0.5, \beta=0.5$]{
        \includegraphics[width=0.3\textwidth]{./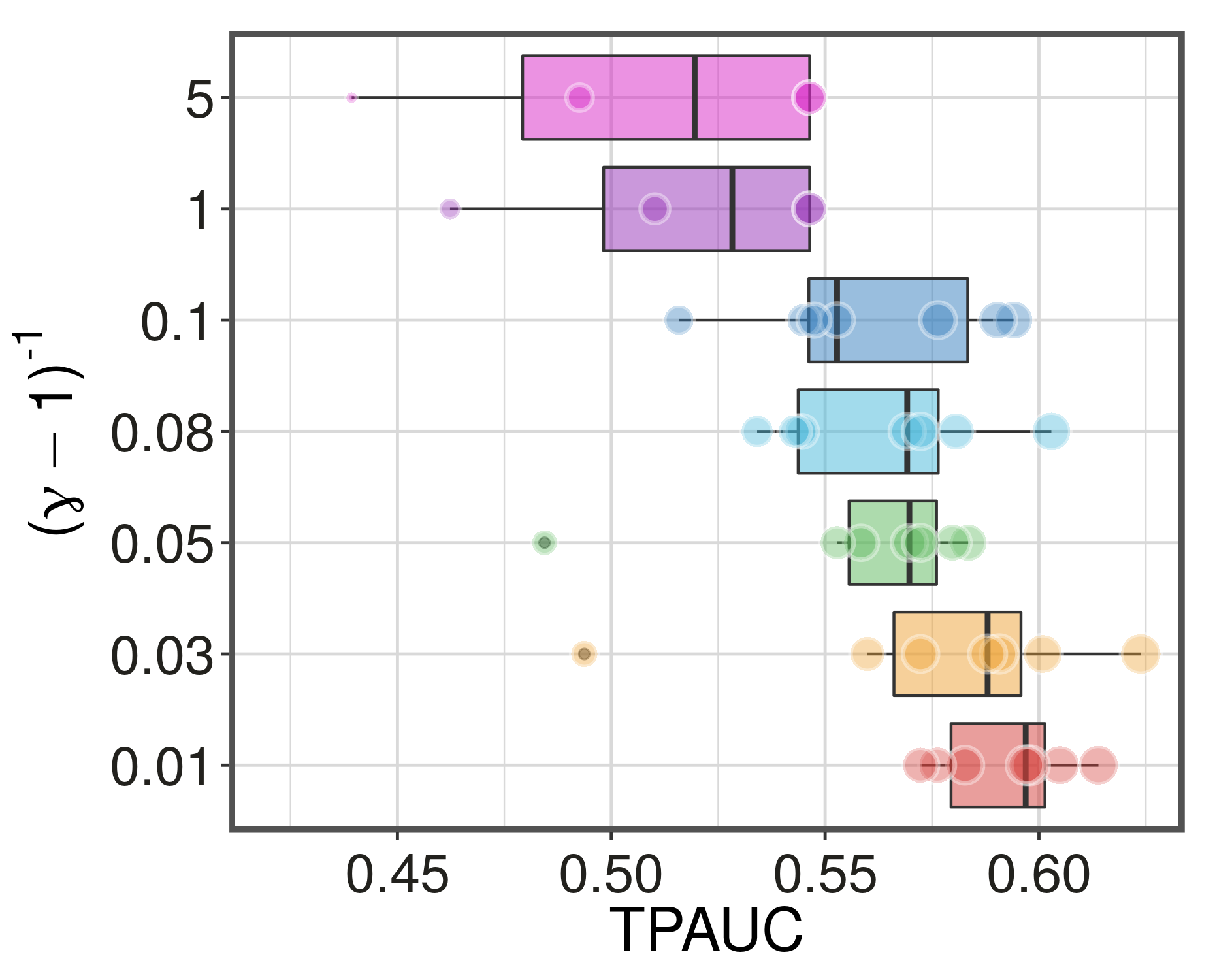} 
      }
  
      \subfigure[Effect of $\gamma$ on subset2, $\alpha=0.3, \beta=0.3$]{
        \includegraphics[width=0.3\textwidth]{./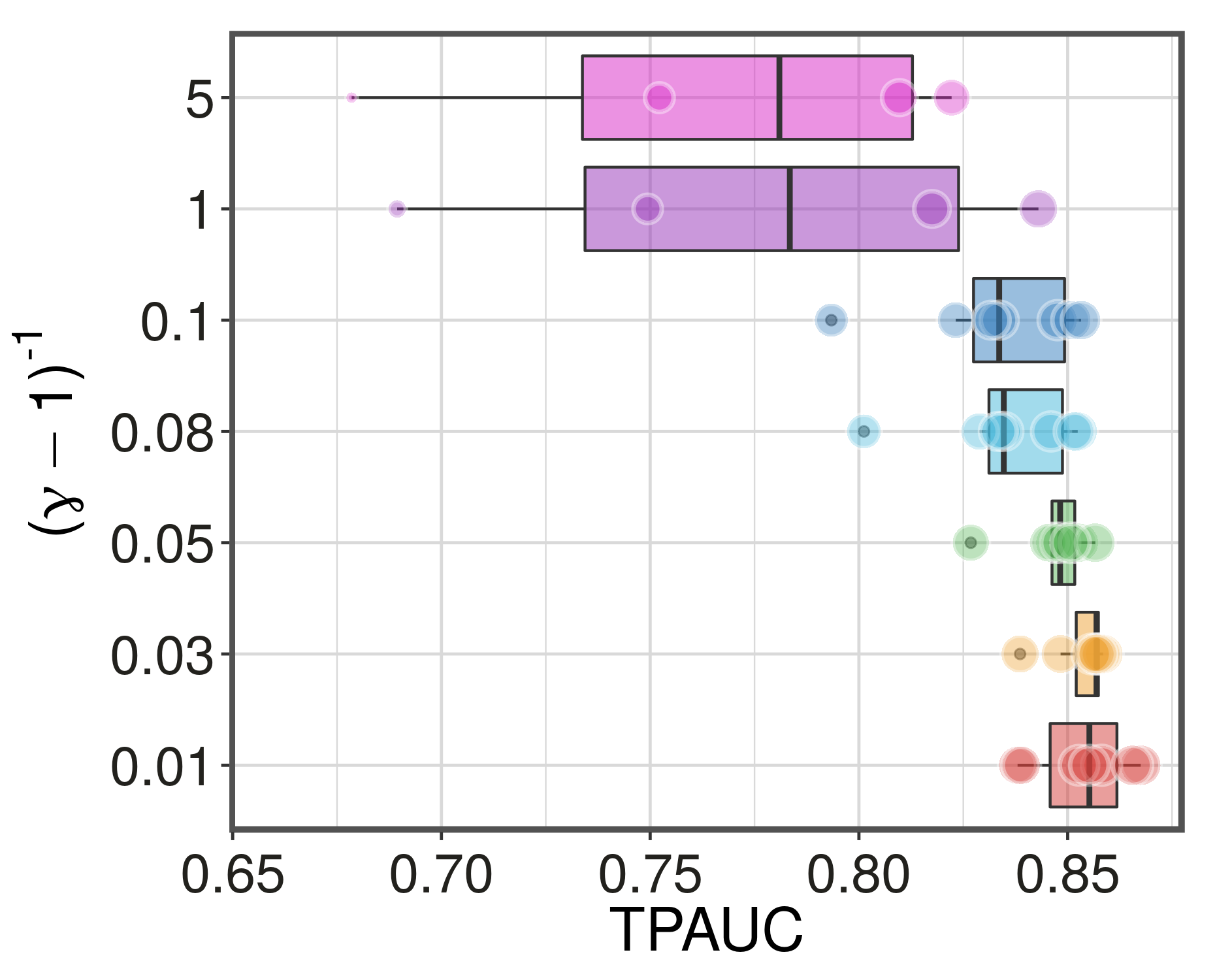} 
      }~~
      \subfigure[Effect of $\gamma$ on subset2, $\alpha=0.4, \beta=0.4$]{
        \includegraphics[width=0.3\textwidth]{./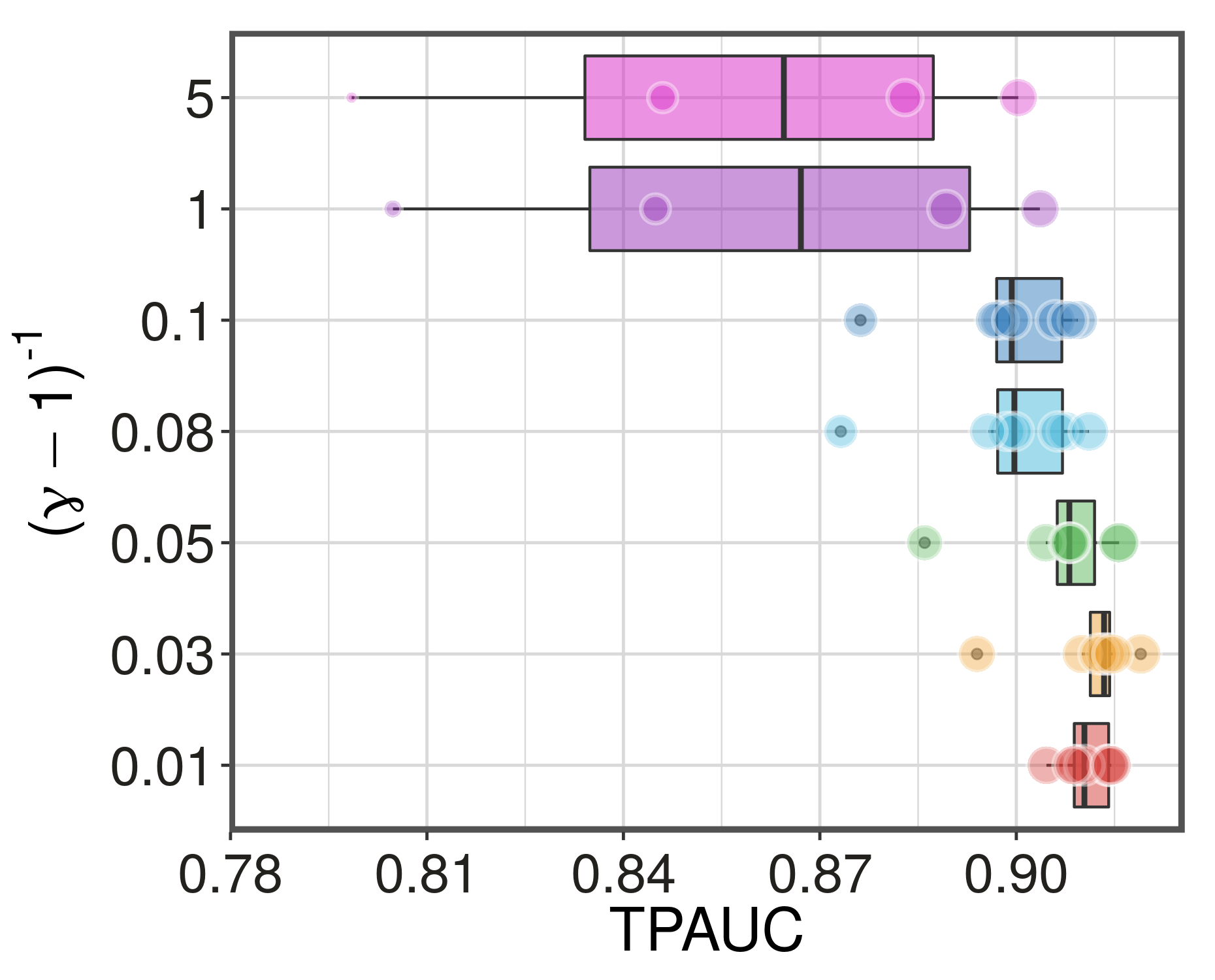} 
      }~~
      \subfigure[Effect of $\gamma$ on subset2, $\alpha=0.5, \beta=0.5$]{
        \includegraphics[width=0.3\textwidth]{./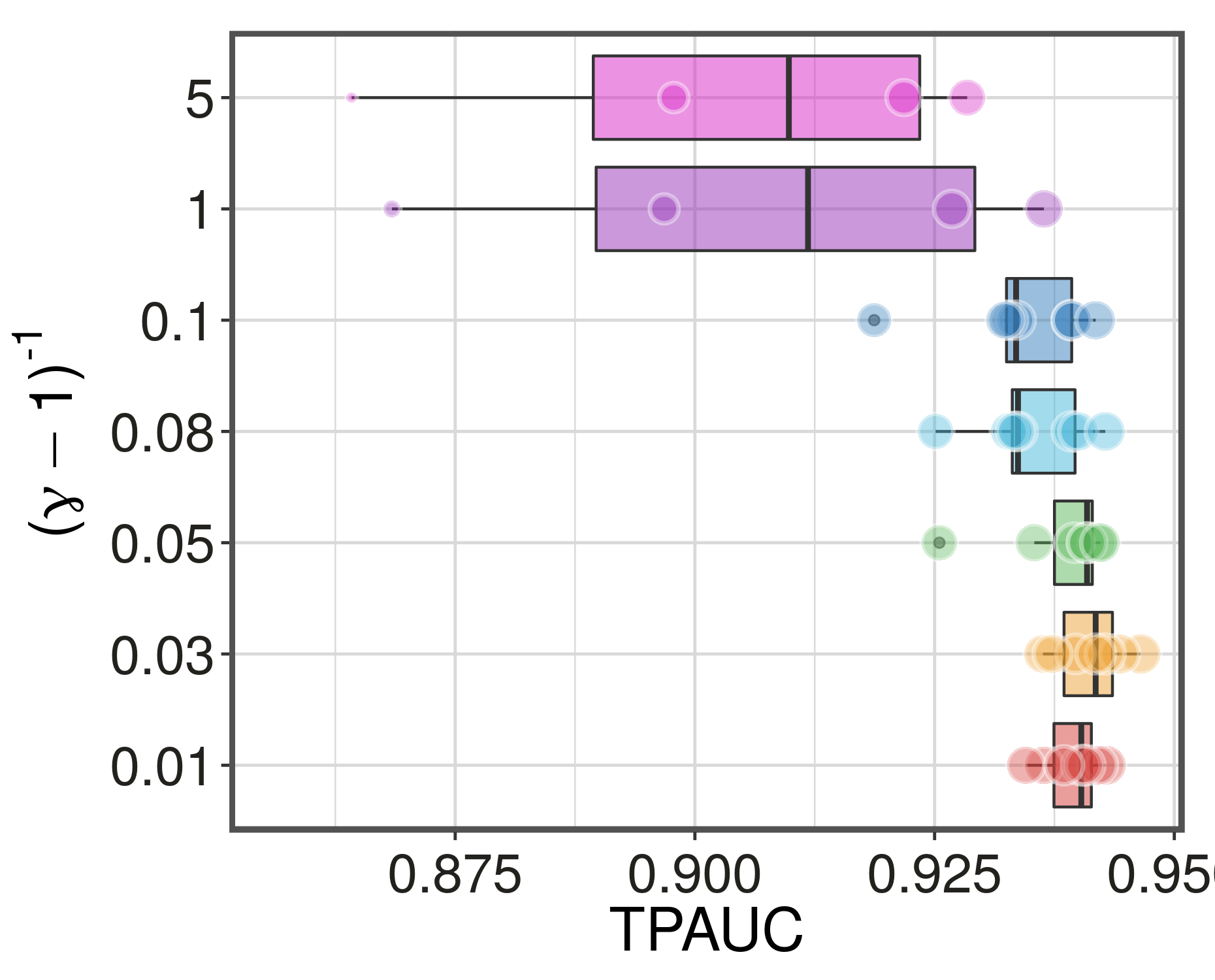} 
      }
  
      \subfigure[Effect of $\gamma$ on subset3, $\alpha=0.3, \beta=0.3$]{
        \includegraphics[width=0.3\textwidth]{./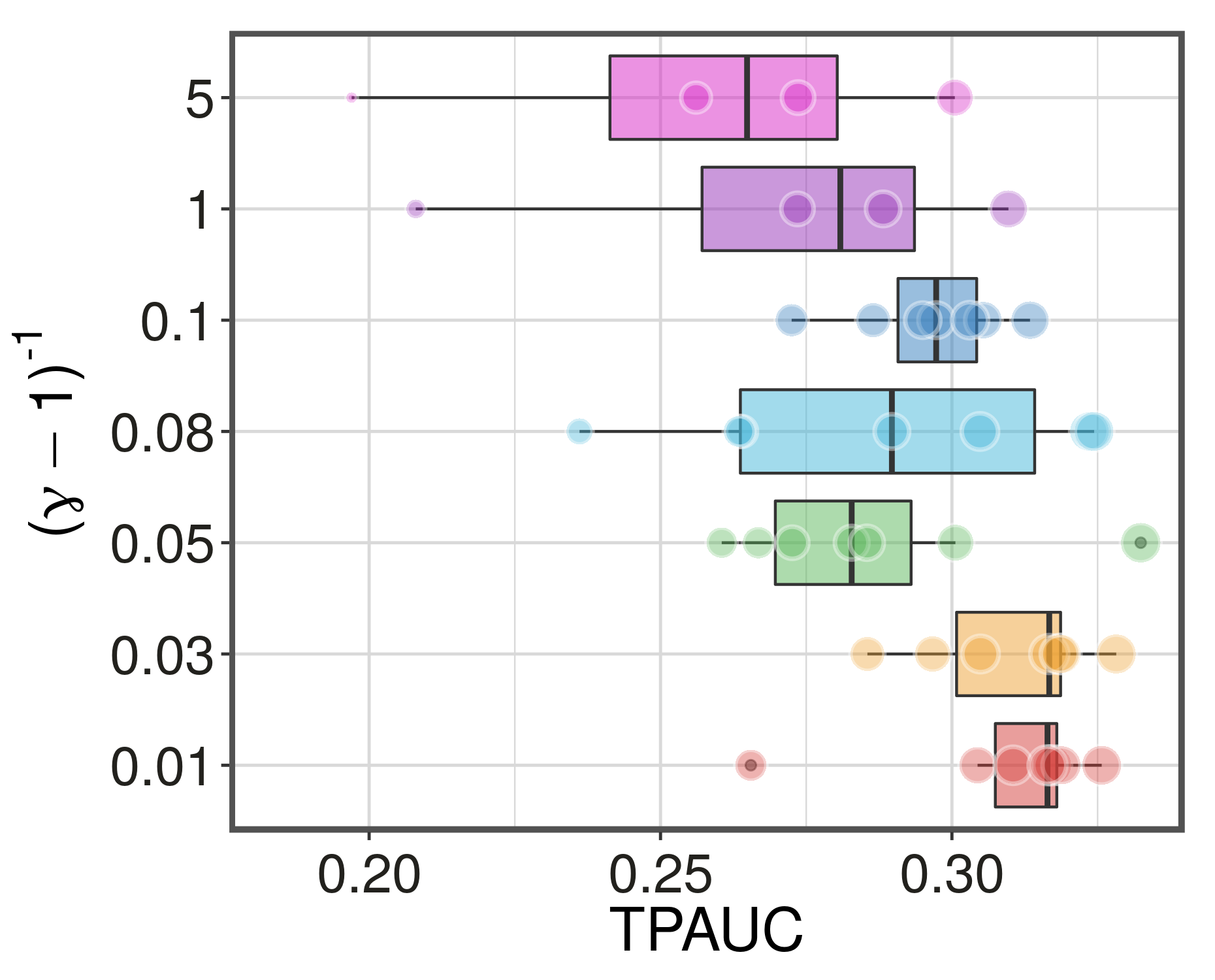} 
      }~~
      \subfigure[Effect of $\gamma$ on subset3, $\alpha=0.4, \beta=0.4$]{
        \includegraphics[width=0.3\textwidth]{./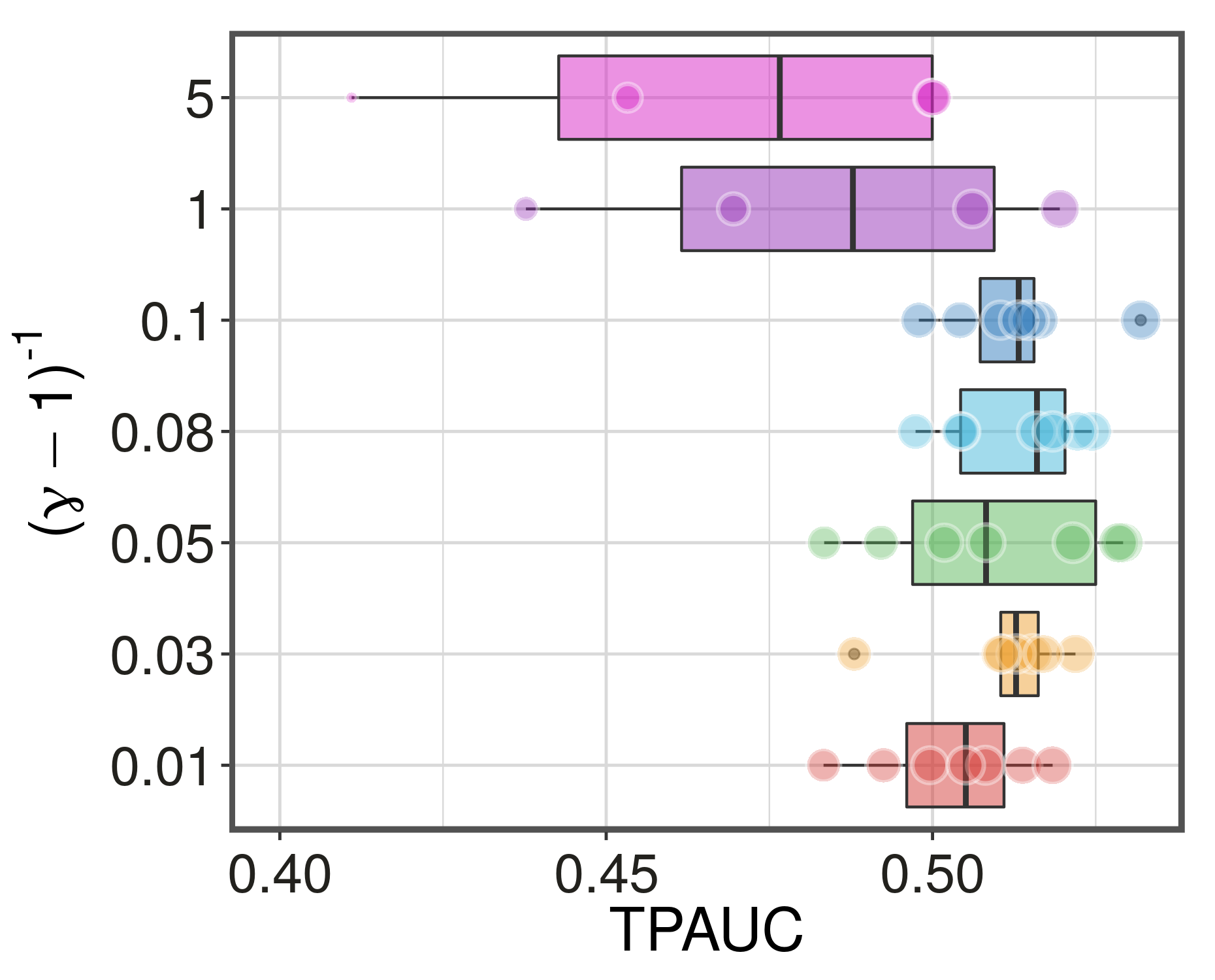} 
      }~~
      \subfigure[Effect of $\gamma$ on subset3, $\alpha=0.5, \beta=0.5$]{
        \includegraphics[width=0.3\textwidth]{./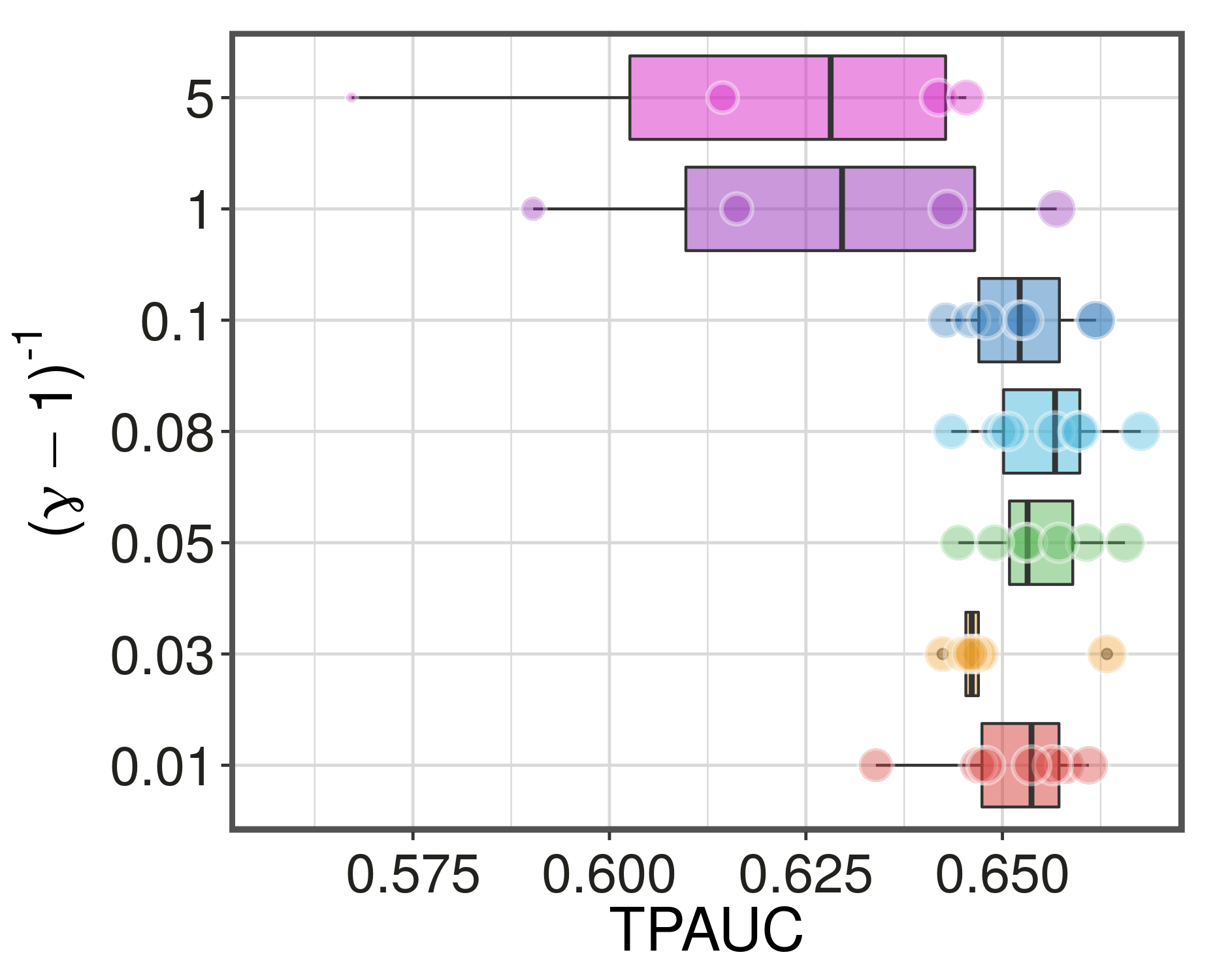} 
      }
      \caption{\label{fig:gamma}Sensitivity analysis of \texttt{Poly} on $\gamma$. Experiments are conducted on CIFAR-10-LT. For each box, $(\gamma-1)^{-1}$ is fixed as the y-axis value, and the scattered points along the box show the variation of $\gamma$.}
  \end{figure*}

\subsection{Warm-Up Training Phase With Delay Epochs}
Focusing on the hard examples at the beginning of the training process brings a high risk of over-fitting. It is thus necessary to focus on the entire dataset to capture the global information. Inspired by this investigation, we adopt a warm-up training strategy. Specifically, the model will go through a warm-up phase with \underline{\textbf{$E_k$ { Epochs}}} of ordinary AUC optimization training. Afterward, we start the TPAUC training phase by optimizing our proposed surrogate problems. We will show its effect in the next subsection.

\subsection{Validation of the Upper Bound Result in Prop.\ref{prop:concon} }
In this subsection, we validate the sufficient condition in Prop.\ref{prop:concon}-(a). To do this, we plot the training curve for the three subsets of CIFAR-10-LT under three TPAUC metrics with polynomial weighting. Here $ \hat{\mathcal{R}}^{\alpha,\beta}_{\phi}, \hat{\mathcal{R}}^{\alpha,\beta}_{\ell}, \hat{\mathcal{R}}^{\alpha,\beta}_{0-1}$ represent the approximated, surrogate, and 0-1 empirical risk function over the training data. We can observe that our proposed risk $\hat{\mathcal{R}}^{\alpha,\beta}_{\phi}$ is an upper bound of the other two quantities in most cases. Even  for the failure case Fig.\ref{fig:val}-(e),(f), we can also observe that the three curves at least exert a consistent decreasing trend. Hence the overall results show the effectiveness of our theoretical result.

\subsection{Overall Performance}
The performance comparisons for CIFAR-10-LT, CIFAR-100-LT and tiny-ImageNet-200-LT are shown in Tab.\ref{tab:perf-cifar-10}-\ref{tab:perf-tiny}, respectively. We have the following observations from the results:

\begin{enumerate}
  \item The best performance of our proposed methods consistently surpasses all the competitors significantly on all metrics. Moreover, the improvements are significant in most cases. 
  \item In most cases, the AUC-based algorithms often tend to outperform other algorithms. This shows that optimizing the AUC-like metric is more effective in improving the TPAUC metric. This observation could also be supported by Thm.\ref{prop:bayes}. Thm.\ref{prop:bayes} states optimizing TPAUC is equivalent to partially generate a consistent ranking with $\eta(\x)$. Meanwhile, optimizing AUC or OPAUC could also generate a consistent ranking $\eta(\x)$ globally or in a different partial region from TPAUC. In this sense, optimizing AUC or OPAUC shares similar (but different) behavior to optimizing TPAUC. 
  \item Sometimes, we see that the OPAUC-based algorithms could not outperform the sqAUC, specifically on the CIFAR-100-LT dataset. This suggests that OPAUC and TPAUC tend to provide inconsistent comparison results, which could also be supported by the analysis in Sec.\ref{sec:incon}.
  \item Comparing different ways to implement the partial AUC optimization, we observe that the weighting-based algorithms, \textit{i.e.}, OPAUC-Poly, OPAUC-Exp, TPAUC-Poly, TPAUC-Exp, outperform the truncated version, \textit{i.e.}, TruncOPAUC, TruncTPAUC. This suggests that our proposed weighting function is more effective than sample manipulation. 
  \item For our algorithms, the ordinary and minimax implementations tend to show different performances. This is not surprising since the reformulated objective function has a different landscape from the original one. Hence when they have to be solved with different optimization algorithms, they often end up with different local optimal solutions practically, though their global optimal solutions are proved to be the same in Thm.\ref{thm:reform}. 
\end{enumerate}

\subsection{Effect of $\gamma$}

 In this setting, we will use one same $\gamma$ for both the positive example weight $v_+$ and the negative weight $v_-$. In Fig.\ref{fig:gammaexp} and Fig.\ref{fig:gamma}, we show the sensitivity in terms of $\gamma$ on  CIFAR-10-LT for \texttt{Exp} and \texttt{Poly}, respectively. One can observe very different trends in these two methods. This is because that \texttt{Exp} and \texttt{Poly} have different characteristics in terms of the landscape of the weight function. As shown in Fig.\ref{fig:weight}-(c), (d), the weight  landscape of \texttt{Exp} is flat within a large subset of the domain. In this sense, it does not have a strong dependency on $\gamma$. As shown in Fig.\ref{fig:weight}-(a), (b), the weight  landscape of \texttt{Poly} is more sensitive toward $\gamma$. Moreover, the weighting function of \texttt{Poly} changes from a concave function ($(\gamma - 1)^{-1} \le 1$) to a concave function $(\gamma - 1)^{-1} \ge 1$ with increasing $\gamma$. It leads to a clear trend with various $\gamma$. Moreover, we see that concave weight functions own significantly better performance. \textbf{This validates our theoretical analysis in Prop.\ref{prop:concon}}. In appendix \ref{sec:app_exp}, we will present more analysis on the effect of $\gamma$.


\section{Conclusion}
In this paper, we initiate the study on TPAUC optimization. Since the original optimization problem could not be solved directly with an end-to-end framework, we propose a general framework to construct surrogate optimization problems for TPAUC. Following our dual correspondence theory, we can establish a surrogate problem once a calibrated penalty function or a calibrated weighting function is found. To see how and when our framework could provide efficient approximations of the original problem, we show that the surrogate objective function could reach the upper bound of the original one and that concave weighting functions are better choices than their convex counterparts. Moreover, we also provide high probability uniform upper bounds for the generalization error. The experiments on three datasets consistently show the advantage of our framework.

\section{Acknowledgement}
This work was supported in part by the National Key R\&D Program of China under Grant 2018AAA0102000, in part by National Natural Science Foundation of China: U21B2038, U1936208, 61931008, 62025604, 6212200758 and 61976202, in part by the Fundamental Research Funds for the Central Universities, in part by Youth Innovation Promotion Association CAS, in part by the Strategic Priority Research Program of Chinese Academy of Sciences, Grant No. XDB28000000, and in part by the National Postdoctoral Program for Innovative Talents under Grant BX2021298.

\ifCLASSOPTIONcaptionsoff
  \newpage
\fi

\bibliographystyle{abbrv}
\bibliography{example_paper}

\begin{IEEEbiography}[{\includegraphics[width=1in,height=1.25in,clip,keepaspectratio]{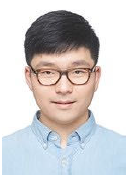}}]{Zhiyong Yang} received the M.Sc. degree in computer science and technology from University of Science and Technology Beijing (USTB) in 2017, and the Ph.D. degree from University of Chinese Academy of Sciences (UCAS) in 2021. He is currently a postdoctoral research fellow with the University of Chinese Academy of Sciences. His research interests lie in machine learning and learning theory, with special focus on AUC optimization, meta-learning/multi-task learning, and learning theory for recommender systems. He has authored or coauthored about 32 academic papers in top-tier international conferences and journals including T-PAMI/ICML/NeurIPS/CVPR. He served as a TPC member for IJCAI 2021 and a reviewer for several top-tier journals and conferences such as T-PAMI, TMLR, ICML, NeurIPS and ICLR.
\end{IEEEbiography}

\begin{IEEEbiography}
	[{\includegraphics[width=1in,height=1.25in,clip,keepaspectratio]{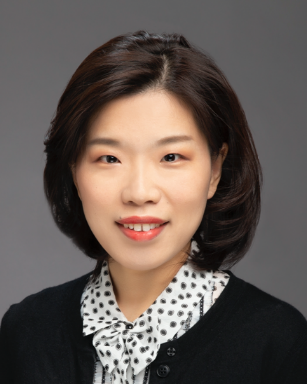}}]{Qianqian Xu}  received the B.S. degree in computer science from China University of Mining and Technology in 2007 and the Ph.D. degree in computer science from University of Chinese Academy of Sciences in 2013. She is currently an Associate Professor with the Institute of Computing Technology, Chinese Academy of Sciences, Beijing, China. Her research interests include statistical machine learning, with applications in multimedia and computer vision. She has authored or coauthored 50+ academic papers in prestigious international journals and conferences (including T-PAMI, IJCV, T-IP, NeurIPS, ICML, CVPR, AAAI, etc). Moreover, she serves as an associate editor of IEEE Transactions on Circuits and Systems for Video Technology, ACM Transactions on Multimedia Computing, Communications, and Applications, and Multimedia Systems.
\end{IEEEbiography}

\begin{IEEEbiography}[{\includegraphics[width=1in,height=1.25in,clip,keepaspectratio]{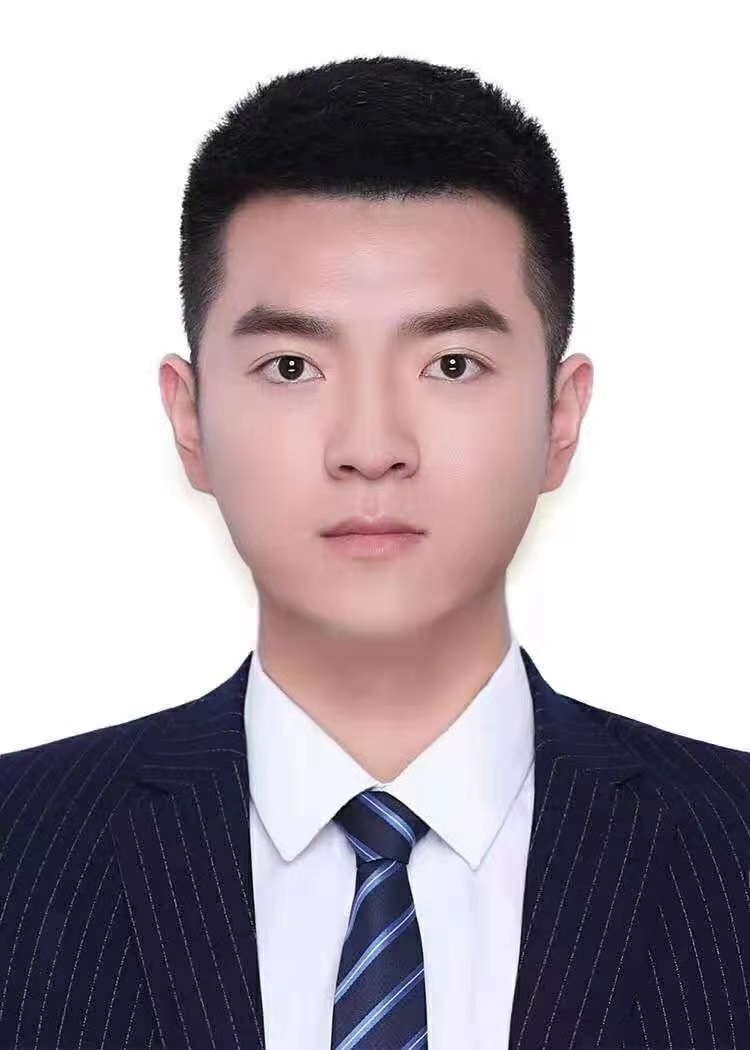}}]{\textbf{Shilong Bao}} 
	received the B.S. degree in College of Computer Science and Technology from Qingdao University in 2019. He is currently pursuing the Ph.D. degree with University of Chinese Academy of Sciences. His research interest is machine learning and data mining.
\end{IEEEbiography}
	
\begin{IEEEbiography}[{\includegraphics[width=1in,height=1.25in,clip,keepaspectratio]{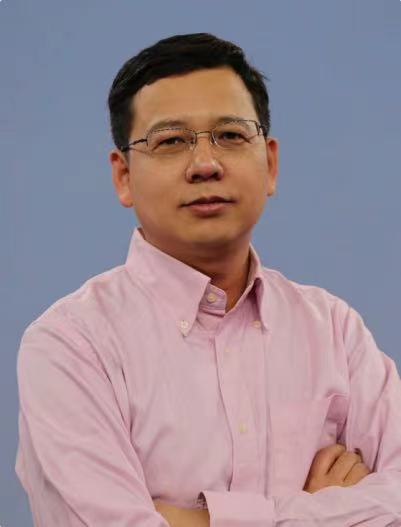}}]{\textbf{Yuan He}} 
  Yuan He received his B.S. degree and Ph.D. degree from Tsinghua University, P.R. China. He is a Senior Staff Engineer in the Security Department of Alibaba Group, and working on artificial intelligence-based content moderation and intellectual property protection systems. Before joining Alibaba, he was a research manager at Fujitsu working on document analysis system. He has published more than 30 papers in computer vision and machine learning related conferences and journals including CVPR, ICCV, ICML, NeurIPS, AAAI and ACM MM. His research interests include computer vision, machine learning, and AI security.
\end{IEEEbiography}

\begin{IEEEbiography}
	[{\includegraphics[width=1in,height=1.25in,clip,keepaspectratio]{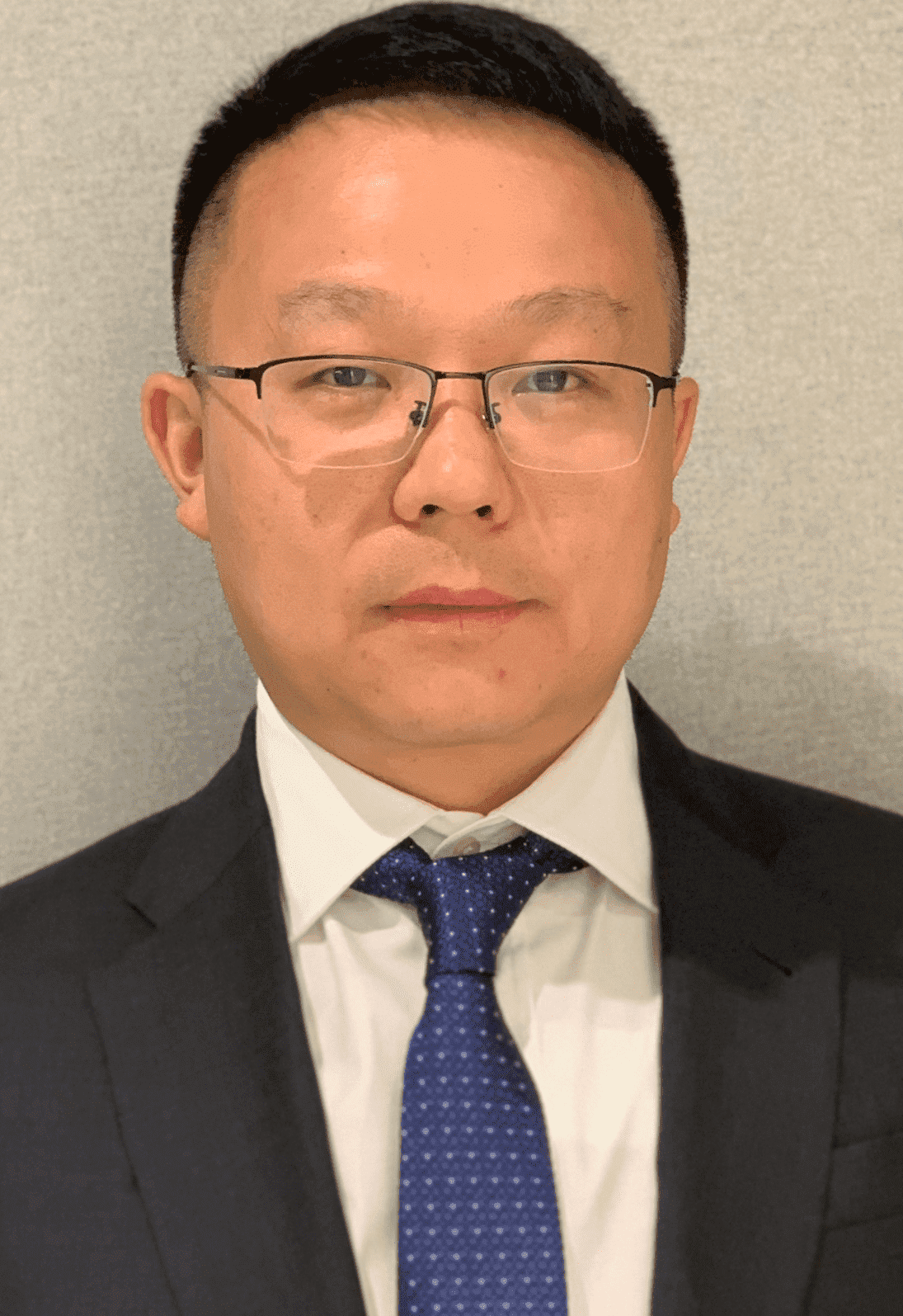}}]{Xiaochun Cao}, is a Professor of  School of Cyber Science and Technology, Shenzhen Campus, Sun Yat-sen University. He received the B.E. and M.E. degrees both in computer science from Beihang University (BUAA), China, and the Ph.D. degree in computer science from the University of Central Florida, USA, with his dissertation nominated for the university level Outstanding Dissertation Award. After graduation, he spent about three years at ObjectVideo Inc. as a Research Scientist. From 2008 to 2012, he was a professor at Tianjin University. From 2012 to 2022, he was a professor at Institute of Information Engineering, Chinese Academy of Sciences.  He has authored and coauthored over 200 journal and conference papers.  In 2004 and 2010, he was the recipients of the Piero Zamperoni best student paper award at the International Conference on Pattern Recognition. He is a fellow of IET and a Senior Member of IEEE. He is an associate editor of IEEE Transactions on Image Processing, IEEE Transactions on Circuits and Systems for Video Technology and IEEE Transactions on Multimedia.
\end{IEEEbiography}

\begin{IEEEbiography}
	[{\includegraphics[width=1in,height=1.25in,clip,keepaspectratio]{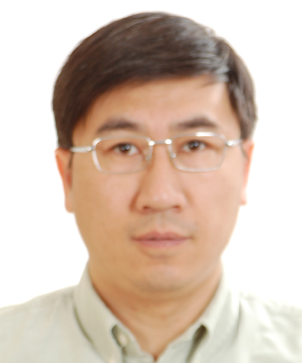}}]{Qingming Huang} is a chair professor in the University of Chinese Academy of Sciences and an adjunct research professor in the Institute of Computing Technology, Chinese Academy of Sciences. He graduated with a Bachelor degree in Computer Science in 1988 and Ph.D. degree in Computer Engineering in 1994, both from Harbin Institute of Technology, China. His research areas include multimedia computing, image processing, computer vision and pattern recognition. He has authored or coauthored more than 400 academic papers in prestigious international journals and top-level international conferences. He was the associate editor of IEEE Trans. on CSVT and Acta Automatica Sinica, and the reviewer of various international journals including IEEE Trans. on PAMI, IEEE Trans. on Image Processing, IEEE Trans. on Multimedia, etc. He is a Fellow of IEEE and has served as general chair, program chair, track chair and TPC member for various conferences, including ACM Multimedia, CVPR, ICCV, ICME, ICMR, PCM, BigMM, PSIVT, etc.
\end{IEEEbiography}

\clearpage
\onecolumn
\appendices


\section*{\textcolor{blue}{\Large{Contents}}}

\startcontents[sections]
\printcontents[sections]{l}{1}{\setcounter{tocdepth}{3}}
\newpage

\section{Proof of Proposition \ref{prop:reform}} \label{app:prop1}
First, we need the following lemma to finish the proof:
\begin{flem}\label{lem:equi}
   For $\{t_i\}_{i=1}^n$ with $t_i \ge 0$, assume that $\min_{i \neq j} |t_i - t_j| >0$. Then for the problem:
   \begin{equation*}
     \begin{split}
       &\max_{v_i \in [0,1], ~\sum_{i=1}^{n} v_i \le k} \sum_{i=1}^{n_+} v_i \cdot t_i,\\
     \end{split}
   \end{equation*}
    the unique solution  is $v^\star_i = \ind{t_i \ge t^\downarrow_{(k)}}$, where $k < n,~  k \in \mathbb{N}_+$, $t^\downarrow_{(k)}$ is top $k\text{-}th$ element in $\{t_i\}_{i=1}^n$.
 
\end{flem}

 \noindent\rule[0.15\baselineskip]{\columnwidth}{1pt}
 \begin{proof}
  For a set of weights $\{v_i\}_{i=1}^n$, let us denote $v^\downarrow_{(i)}$ as the weight for $t^\downarrow_i$. For any $\{v'_i\}_{i=1}^n \neq \{v^\star_i\}_{i=1}^n$. We can write down the difference between the objective functions as:

   \begin{equation*}
     \begin{split}
       \sum_{i=1}^{n} (v^\star_i - v'_i )\cdot t_i &  \\ 
       &= \sum_{i \le k} (1 - v^{'\downarrow}_{(i)} )\cdot t^\downarrow_{(i)}  - \sum_{j > k}  v^{'\downarrow}_{(j)} \cdot  t^\downarrow_{(j)} \\ 
       &\overset{(*)}{>}  (k  - \sum_{i \le k} v^{'\downarrow}_{(i)} )\cdot t^\downarrow_{(k)}   - \sum_{j > k}  v^{'\downarrow}_{(j)} \cdot  t^\downarrow_{(j)} \\ 
       &\overset{(**)}{>} (k  - \sum_{i \le k} v^{'\downarrow}_{(i)} )\cdot t^\downarrow_{(k)} -   \sum_{j > k}  v^{'\downarrow}_{(j)} \cdot  t^\downarrow_{(k)} \\ 
       & = (k  - \sum_{i=1}^n v^{'\downarrow}_{(i)} ) \cdot t^\downarrow_{(k)}\\
       & \ge 0,
      \end{split}
   \end{equation*}
   where $(*), (**)$, follow the assumption that  $\min_{i \neq j} |t_i - t_j| >0$. Note that since the $\{v'_i\}_{i=1}^n$ is arbitrarily chosen, the proof is thus completed.
 \end{proof}
 \noindent\rule[0.15\baselineskip]{\columnwidth}{1pt}

 \begin{frthm1}
   For any $\alpha, \beta \in (0,1)$, if scores $\f(\x) \in [0,1]$,  and there are no ties in the scores, the original optimization problem is equivalent to the following problem:
   \begin{equation*}
     \begin{split}
        &\min_{\bm{\theta}}  \frac{1}{\np\nn}\sumpn \vpi \cdot \vnj \cdot \ell(\f, \xpi, \xnj)\\ 
        s.t.~~&\vp = \argmax_{\vpi \in [0,1], \sum_{i=1}^{\np} \vpi \le \npa} \sum_{i=1}^{n^+} \left(\vpi \cdot (1- \f(\xpi))\right)\\ 
        &\vn = \argmax_{\vnj \in [0,1], \sum_{j=1}^{\nn} \vnj \le \nnb} \sum_{j=1}^{\nn} \left(\vnj \cdot \f(\xnj) \right)
     \end{split}
   \end{equation*}
   \end{frthm1}
   \noindent\rule[0.15\baselineskip]{\columnwidth}{1pt}
   \begin{proof}

     First, it is easy to see that $\opz$ could be formulated as follows:
     \begin{equation*}
       \begin{split}
          &\min_{\bm{\theta}}  \frac{1}{\np\nn}\sumpn \vpi \cdot \vnj \cdot \ell(\f, \xpi, \xnj)\\ 
          s.t~~&\vpi = \begin{cases}
            1, & 1 - \f(\xpi) \ge 1 - \f(\fxpo{\npa})\\ 
            0, & \text{otherwise}
          \end{cases}  \\ 
          &\vnj = \begin{cases}
           1, & \f(\xnj) \ge \f(\fxpo{\nnb})\\ 
           0, & \text{otherwise}
          \end{cases}
       \end{split}
     \end{equation*}
 Then the rest of the proof follows Lem.\ref{lem:equi} directly.
   \end{proof}
   \noindent\rule[0.15\baselineskip]{\columnwidth}{1pt}

 \section{Proof of Proposition \ref{prop:concon}}\label{app:prop2}
 \begin{flem}[H\"older's Inequality]\label{lem:hol}
   $\forall p>1, ~q>1$ such that $1/p +1/q = 1$, we have:
   \begin{equation*}
     \expeb\left[|XY|\right] \le (\expeb\left[|X|^p\right])^{1/p} \cdot (\expeb\left[|Y|^q\right])^{1/q}  
   \end{equation*}
 \end{flem}

 \begin{flem}\label{lem:invhol}
   $\forall 0<p<1, ~q = -p/(1-p)$, we have:
   \begin{equation*}
     \expeb\left[|X'Y'|\right] \ge (\expeb\left[|X'|^p\right])^{1/p} \cdot (\expeb\left[|Y'|^q\right])^{1/q}  
   \end{equation*}
 \end{flem}
 \noindent\rule[0.15\baselineskip]{\columnwidth}{1pt}
 \begin{proof}
   It could be proved by applying Lem.\ref{lem:hol} to $X = |X'Y'|^p$ and $Y = |Y'|^{-p}$.
 \end{proof}
 \noindent\rule[0.15\baselineskip]{\columnwidth}{1pt}

 \begin{frthm2}
   Given a strictly increasing weighting function $\psi_\gamma: [0,1] \rightarrow [0,1]$, such that $\vpi = \psi_\gamma(1-\f(\xpi))$, $\vnj = \psi_\gamma(\f(\xnj))$, denote:
   \begin{equation*}
     \begin{split}
       &\Ionep = \left\{x_+: x_+ \in \XP, f(x_+) \ge f(x^{(\npa)}_+) \right\}, \\ 
       &\Ionen = \left\{x_-: x_- \in \XN, f(x_-) \le f(x^{(\nnb)}_-) \right\},
     \end{split}
   \end{equation*}
   denote $\Itwo$ as $(\XP \times  \XN) \backslash (\Ionep \times  \Ionen)$; denote $\expiop[x]$ as the empirical expectation of $x$ over the set $\Ionep$, and  $\expion[x],~ \expiopn,$ $~\expit$ are defined similarly; define
   $l_{i,j} = \ell(\f, \xpi,\xnj)$. We assume that
   \[\npa \in \mathbb{N}, ~\nnb \in \mathbb{N}, ~\fxp, \fxn \in (0,1), \] 
   then:
   \begin{enumerate}
     \item[(a)] A sufficient condition for $\rorg(\mathcal{S},\f) \le \rpsi(\mathcal{S},\f)$ is that:
     \begin{equation*}
       \begin{split}
         \sup_{ p \in (0,1), q = -\frac{p}{1-p} } \left[\rho_p -\xi_q \right] \ge 0,
       \end{split}
     \end{equation*} 
     where
     \begin{equation*}
       \begin{split}
       \rho_p &= \frac{\left(\expit\left[v^{p}_+ \cdot v^{p}_-\right]\right)^{1/p}  }{\left(\expiopn\left[(1 - v_+v_-)^2\right]\right)^{1/2}}, \\
         \xi_q &=  \frac{\alpha\beta}{1-\alpha\beta} \cdot \frac{\left( \expit(\ell^2_{i,j}) \right)^{1/2}}{\left( \expiopn(\ell^q_{i,j}) \right)^{1/q}}.
       \end{split}
     \end{equation*}
     \item[(b)] A sufficient condition for $ \mathcal{R}^\ell_{\alpha,\beta}(f) \le  \mathcal{R}^\ell_{\psi}(f) $ is that:
     \begin{equation*}
       \begin{split}
         \sup_{ p \in (0,1), q = -\frac{p}{1-p} } \left[\rho_p -\xi_q \right] \ge 0,
       \end{split}
     \end{equation*} 
     where
     \begin{equation*}
       \begin{split}
       \rho_p &= \frac{\left(\expite\left[v^{p}_+ \cdot v^{p}_-\right]\right)^{1/p}  }{\left(\expiopne\left[(1 - v_+v_-)^2\right]\right)^{1/2}}, \\
         \xi_q &=  \frac{\alpha\beta}{1-\alpha\beta} \cdot \frac{\left( \expite(\ell^2_{i,j}) \right)^{1/2}}{\left( \expiopne(\ell^q_{i,j}) \right)^{1/q}}.
       \end{split}
     \end{equation*}
     \item[(c)] If there exists at least one strictly concave $\psi_\gamma$  such that the  $\rorg(\mathcal{S},\f) > \rpsi(\mathcal{S},\f)$, then  $\rorg(\mathcal{S},\f) > \rpsi(\mathcal{S},\f)$ holds for all convex $\psi_\gamma$.
   \end{enumerate}
 \end{frthm2}
 \noindent\rule[0.15\baselineskip]{\columnwidth}{1pt}
 \begin{proof} First, $l_{i,j} = \ell(\f, \xpi,\xnj)$, we can reformulate $\rpsi - \rorg$ as follows.
   \begin{equation}\label{eq:diffr}
     \begin{split}
       \rpsi - \rorg &= \frac{1}{\np\nn}\sumpn \vpi  \cdot \vnj \cdot \ell_{i,j} - \frac{1}{\np\nn}\psumpn {\ell_{i,j}}\\
       &= \frac{1}{\np\cdot \nn} \cdot \sum_{\xpi, \xnj \in \mathcal{I}_2} \vpi\cdot\vnj\cdot \ell_{i,j} - \frac{1}{\np\cdot \nn} \cdot \expiopn  (1 -\vpi\vnj) \ell_{i,j} \\ 
     & = (1 - \alpha \beta) \cdot \frac{1}{|\mathcal{I}_2|} \cdot \sum_{\xpi, \xnj \in \mathcal{I}_2} \vpi\cdot\vnj\cdot \ell_{i,j} - (\alpha\cdot \beta) \cdot \frac{1}{|\mathcal{I}_1|} \cdot \expiopn  (1 -\vpi\vnj) \ell_{i,j}\\
     & =  (1 - \alpha \beta) \cdot \expit\left[ \vp \cdot \vn \cdot \ell  \right]  - \alpha \cdot \beta \expiopn\left[ (1- \vp \cdot \vn) \cdot \ell  \right]
     \end{split}
   \end{equation}
 Now we prove (a)-(b) based on this result.
   \begin{enumerate}
     \item[(a)]According to Lem.\ref{lem:invhol}, $\forall~ 1>p>0,~ q = -p/(1-p)$, we have:
     \begin{equation*}
       \begin{split}
         (1 - \alpha \beta) \cdot \expit\left[ \vp \cdot \vn \cdot \ell  \right] \ge 
         \underbrace{(1 - \alpha \beta) \cdot \left(\expit \left[ \vp^p \cdot \vn^p\right]\right)^{1/p} \cdot  \left(\expit \left[ \ell^q \right]\right)^{1/q} }_{(a)}
       \end{split}
     \end{equation*}
     Meanwhile, we have:
     \begin{equation*}
       \begin{split}
         \alpha \cdot \beta \expiopn\left[ (1- \vp \cdot \vn) \cdot \ell  \right] \le 
         \underbrace{\alpha \cdot \beta \expiopn\left[ (1- \vp \cdot \vn)^2\right]^{1/2} \cdot \expiopn\left[ \ell^2\right]^{1/2}}_{(b)}
       \end{split}
     \end{equation*}
     This shows that $(a) - (b) \ge 0$ implies $\rpsi \ge \rorg$. Moreover,  $(a) - (b) \ge 0$ is equivalent to $\rho_p - \xi_q \ge 0$. The proof of (a) is completed since $p$ and $q$ are arbitrarily chosen within their domain.
     \item[(b)] This is a natural extension of (a) in the population level, which is thus omitted.
     \item[(c)] Given a strictly concave function $\psi_\gamma: [0,1] \rightarrow [0,1]$ and a convex function $\widetilde{\psi}_\gamma: [0,1] \rightarrow [0,1]$. We have that \[\forall y \in [0,1],~ \psi_\gamma\left(y\right) = \psi_\gamma\left(0\cdot (1-y) + y\cdot 1\right)  > y \cdot \psi_\gamma(1) = y  \]
     \[\forall y \in (0,1),~ \widetilde{\psi}_\gamma\left(y\right) =  \widetilde{\psi}_\gamma\left(0\cdot (1-y) + y\cdot 1\right)  \le y \cdot \widetilde{\psi}_\gamma(1) = y  \]   
     This implies that $\psi_\gamma\left(y\right) >  \widetilde{\psi}_\gamma\left(y\right) ,~\forall y \in (0,1)$. The proof then follows that \[ \rpsi - \rorg \propto \min_{i,j} \left[v^+_i\cdot v^-_j\right] = \min_{i,j} \left[\psi(\f(\xp)) \cdot \psi(1 -\f(\xn))\right] \]
   and 
   $\fxp, \fxn \in (0,1)$.
 \end{enumerate}
 \end{proof}
 \noindent\rule[0.15\baselineskip]{\columnwidth}{1pt}

 \section{Proof of Proposition \ref{prop:dual}} \label{app:prop3}
 \begin{frthm3}
   Given a strictly convex function $\varphi_\gamma$, and define $\psi_\gamma(t)$ as:
   \begin{equation*}
   \psi_\gamma(t) = \argmax_{v \in [0,1]} ~v \cdot t - \varphi_\gamma(v)
   \end{equation*} 
   then we can draw the following conclusions:
   \begin{enumerate}
     \item[(a)] If $\varphi_\gamma$ is a calibrated smooth penalty function, we have $\psi_\gamma(t) = \varphi_{\gamma}^{'-1}(t)$,which  is a calibrated weighting function.
     \item[(b)] If $\psi_\gamma$ is a calibrated weighting function such that $v = \psi_\gamma(t)$, we have $\varphi_\gamma(v) = \int \psi^{-1}_\gamma(v)dv +const.$, which is a  calibrated smooth penalty function.
   \end{enumerate}
   \end{frthm3}
   \noindent\rule[0.15\baselineskip]{\columnwidth}{1pt}
   \begin{proof}
     {\color{white}{dsadsa}}\\
     \begin{enumerate}
       \item[(a)] Since $\varphi_\gamma$ is strictly convex, $v \cdot t - \varphi_\gamma(v)$ is strictly concave, then  $\psi_\gamma$ has a unique global optimal solution. To reach the optimal solution, we have:
       \begin{equation*}
         \begin{split}
            \left(v \cdot t - \varphi_\gamma(v)\right)'  = t- \varphi_\gamma'(v) = 0 
         \end{split}
       \end{equation*} 
     Note that $v  =\varphi_\gamma(t)$, we have:
     \begin{equation*}
       \begin{split}
          t- \varphi'_\gamma(\psi_\gamma(t)) = 0 
       \end{split}
     \end{equation*}
     Equivalently, note that $\varphi^{'}_\gamma(t)$ is invertible since it is strictly increasing ($\varphi^{''}_\gamma(t) >0$),  
     we have:
     \begin{equation*}
       \begin{split}
          \psi_\gamma(t)= \varphi^{'-1}_\gamma(t) 
       \end{split}
     \end{equation*}
     Moreover, we have:
      \[ \psi'_\gamma(t)= \frac{1}{\varphi^{''}(\varphi^{'-1}_\gamma(t))} ~~, \psi''_\gamma(t)=- \frac{\varphi^{'''}(\varphi^{'-1}_\gamma(t))}{(\varphi^{''}(\varphi^{'-1}_\gamma(t)))^3 }.\]
      Since $\varphi^{''}_\gamma(x) > 0,   \varphi^{'''}_\gamma(x) > 0$, we know that $\psi'_\gamma(t)$ is a calibrated weighting function according to the definition.
 \item[(b)] Assume that $\psi_\gamma(t)$ is the solution of the optimization problem, recall the optimal condition:  
 \begin{equation*}
   \begin{split}
      t- \varphi'_\gamma(v) = 0 
   \end{split}
 \end{equation*} 
 Since $t = \psi^{-1}(v)$, we have:
 \begin{equation*}
   \begin{split}
     \psi^{-1}(v) = \varphi'_\gamma(v)  
   \end{split}
 \end{equation*} 
 leading to the fact that 
 \begin{equation*}
   \begin{split}
    \int \psi^{-1}(v) dv = \varphi_\gamma(v)  
   \end{split}
 \end{equation*} 
 Moreover, we have:
 \[ \varphi'_\gamma(v)= {\psi_\gamma^{-1}\left(v\right)} ~~, \varphi''_\gamma(v)= \frac{1}{\psi_\gamma^{'}\left(\psi^{-1}(v)\right) }, ~~   \varphi'''_\gamma(v)=  - \frac{\psi_\gamma^{''}(\psi^{-1}(v))}{\left(\psi_\gamma^{'}(\psi^{-1}(v))\right)^3} \]
 Since $\psi^{-1}_\gamma(x) > 0,   \psi_\gamma^{'}(x) > 0$ and $\psi^{''}_\gamma(x) <0$, $\varphi_\gamma$  is then a calibrated weighting function according to the definition.

     \end{enumerate}
   \end{proof}
   \noindent\rule[0.15\baselineskip]{\columnwidth}{1pt}

 \section{Proof of Theorem \ref{thm:reform}}\label{sec:1}
 First, we will need the following lemma.

 \begin{lem}\label{lem:optsq} Define $\mathcal{C}_1 = [0, B_1]$, $\mathcal{C}_2 = [0, B_2]$, $\forall p \in \mathcal{C}_1, q \in \mathcal{C}_2$, we have:
   \begin{align*}
     p \cdot q &= \min_{ a_1 \in \mathcal{C}_1, a_2 \in \mathcal{C}_2} \max_{b \in \mathcal{C}_3} \frac{1}{2} \cdot \nu(b, a_1, a_2,p,q) \\   
     & = \max_{b \in \mathcal{C}_3} \min_{ a_1 \in \mathcal{C}_1, a_2 \in \mathcal{C}_2}  \frac{1}{2} \cdot \nu(b, a_1, a_2,p,q)
     \end{align*}
 where:
 \begin{align*}
   \nu(b,a_1,a_2,e,f) &=  w(b, e+f) - w(a_1,e) - w(a_2,f)\\ 
   w(x, y) &= 2 x\cdot y - x^2,\\
   \mathcal{C}_3 &= [0, B_1 + B_2].
 \end{align*}
 \end{lem}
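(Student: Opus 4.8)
The plan is to exploit the additively separable structure of $\nu$: with $e=p$, $f=q$ held fixed, the three summands $w(b,p+q)$, $-w(a_1,p)$, and $-w(a_2,q)$ depend on the disjoint optimization variables $b$, $a_1$, $a_2$ respectively. Because there is no coupling among these variables, each optimization can be carried out coordinatewise, and consequently the order of $\min_{a_1,a_2}$ and $\max_{b}$ is immaterial; this single observation will establish both the minimax and the maximin identities at once.

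First I would rewrite the elementary block $w$ by completing the square,
\begin{equation*}
w(x,y) = 2xy - x^2 = y^2 - (x-y)^2,
\end{equation*}
which turns each one-variable optimization into a projection onto an interval. For a target value $y$ lying in the feasible interval $[0,C]$, the unique maximizer of $w(\cdot,y)$ over $[0,C]$ is $x=y$, so that $\max_{x\in[0,C]}w(x,y)=y^2$. Applying this three times is the crux, and it is exactly here that feasibility of the optimizers must be checked: the maximizer of $w(b,p+q)$ over $\mathcal{C}_3=[0,B_1+B_2]$ is $b=p+q$, which is feasible because $p\in[0,B_1]$ and $q\in[0,B_2]$ force $p+q\in\mathcal{C}_3$; likewise $a_1=p\in\mathcal{C}_1$ and $a_2=q\in\mathcal{C}_2$ maximize $w(a_1,p)$ and $w(a_2,q)$. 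The deliberate choice $\mathcal{C}_3=[0,B_1+B_2]$ (rather than a tighter box) is precisely what guarantees this, and is the only place the hypotheses on $p,q$ are used.

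Combining the three separated optimizations, I would obtain
\begin{equation*}
\min_{a_1\in\mathcal{C}_1,\,a_2\in\mathcal{C}_2}\ \max_{b\in\mathcal{C}_3}\ \tfrac12\nu(b,a_1,a_2,p,q)
= \tfrac12\Big[\max_{b}w(b,p+q) - \max_{a_1}w(a_1,p) - \max_{a_2}w(a_2,q)\Big]
= \tfrac12\big[(p+q)^2 - p^2 - q^2\big] = pq,
\end{equation*}
and the identical computation performed with the order of optimization reversed produces the same value, which yields the maximin form as well. There is no genuine analytic obstacle—the statement is ultimately a completion-of-square identity—so the only care needed is the bookkeeping that the stationary points land inside the prescribed intervals and the explicit remark that separability legitimizes interchanging $\min$ and $\max$.
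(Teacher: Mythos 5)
Your proof is correct. The core computation is exactly the paper's: both proofs rest on the completion-of-square identity $w(x,y)=2xy-x^2=y^2-(x-y)^2$, equivalently the variational representation $y^2=\max_x\,(2xy-x^2)$ attained at $x=y$, together with the feasibility check that $b=p+q\in\mathcal{C}_3$, $a_1=p\in\mathcal{C}_1$, $a_2=q\in\mathcal{C}_2$, which combine to give $\tfrac12\left[(p+q)^2-p^2-q^2\right]=pq$. Where you genuinely diverge is in justifying the second (max--min) identity: the paper verifies that $\nu(\cdot,a_1,a_2,p,q)$ is concave in $b$ and $\nu(b,\cdot,\cdot,p,q)$ is convex in $(a_1,a_2)$ over compact convex domains and then invokes von Neumann's minimax theorem to swap the order of optimization, whereas you observe that $\nu$ is additively separable in the disjoint blocks $b$, $a_1$, $a_2$, so the objective splits as $F(b)+G(a_1)+H(a_2)$ and the $\min$ and $\max$ commute for purely structural reasons, with no convexity or compactness hypothesis needed. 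Your route is more elementary and delivers both displayed identities from a single computation; the paper's appeal to von Neumann is valid (its hypotheses do hold here) but is heavier machinery than the lemma requires, so your separability remark can fairly be read as a simplification of the published argument.
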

 \begin{proof}
 Since 
 \begin{align*}
   p \cdot q = \frac{1}{2} \cdot \left((p+q)^2 - p^2 -  q^2)\right)
 \end{align*}
 and that 
 \begin{align*}
   & y^2 = \max_{x} 2x\cdot y - x^2.
  \end{align*}
  with the maximum attained at $x=y$.
 Since $p+q \in \mathcal{C}_3, p \in \mathcal{C}_1, q \in \mathcal{C}_2$, we then reach the conclusion that:
 \begin{align*}
   p \cdot q &= \min_{ a_1 \in \mathcal{C}_1, a_2 \in \mathcal{C}_2} \max_{b \in \mathcal{C}_3} \frac{1}{2} \cdot \nu(b, a_1, a_2,p,q). 
   \end{align*}
 Moreover, since $\mathcal{C}_1, \mathcal{C}_2$ are compact and $ \nu(b,\cdot,\cdot,p,q)$ is convex, $ \nu(\cdot, a_1,a_2,p,q)$ is concave, we have:
 \begin{align*}
  &\min_{ a_1 \in \mathcal{C}_1, a_2 \in \mathcal{C}_2} \max_{b \in \mathcal{C}_3} \frac{1}{2} \cdot \nu(b, a_1, a_2,p,q) \\   
   & = \max_{b \in \mathcal{C}_3} \min_{ a_1 \in \mathcal{C}_1, a_2 \in \mathcal{C}_2}  \frac{1}{2} \cdot \nu(b, a_1, a_2,p,q)
   \end{align*}
 according to von Neumann's minimax theorem.

 \noindent The proof is then completed.
 \end{proof}
 \noindent\rule[0.15\baselineskip]{\columnwidth}{1pt}

 \begin{frthm4}
   Denote $v^\gamma_\infty = \sup_{{x}} |\psi_\gamma(x)|$, $f_\infty = \sup_{{x}} |\f(x)|$. assume that $v^\gamma_\infty < \infty, f_\infty < \infty$, $\ell(t) = (1-t)^2$, $(OP1)$ could be reformulated as \footnote{the inequalities $\bm{0} \le \bm{a} \le \bm{c}_a$ and $\bm{0} \le \bm{b} \le  \bm{c}_b$ should be understood  elementwise.}:
   \begin{equation*}
    \min_{\bm{\theta},  \bm{0}_{10} \le \bm{a} \le \bm{c}_a } \max_{ \bm{0}_{8} \le \bm{b} \le  \bm{c}_b }~\bm{a}^\top \zeta_1 + \bm{b}^\top \zeta_2 -||\tilde{\bm{b}}||^2+||\tilde{\bm{a}}||^2,
  \end{equation*}
  where
  \begin{align*}
     \zeta_1  &= -[\cp,~ \cn,~ 2(\f^+ + \cn),~ 2\cp,~ 2\f^-,~ \cn,~ \f^{+,2},\cp,~ \f^{-,2},\\
      &~~~~~~~~~~~~ 2(\f^+ + \f^-)] ,\\
    \zeta_2 &={\color{white}{-}} [\cp + \cn,~ 2\cn,~ 2\f^+,~ 2(\f^- +\cp) ,~ \cn + \f^{+,2},\\  
             &~~~~~~~~~~~~ \cp + \f^{-,2}, 2 \f^+,~ 2 \f^-], \\
     \tilde{\bm{a}} &= (1/\sqrt{2}) \cdot [a_1, a_2, \sqrt{2} \cdot a_3, \sqrt{2}a_4, \sqrt{2}a_5, a_6, a_7, a_8, \\ 
     &~~~~~~~~~~~~~~~~~~~~~a_9, \sqrt{2} \cdot  a_{10}], \\
    \tilde{\bm{b}} &= (1/\sqrt{2}) \cdot [b_1, \sqrt{2} \cdot b_2,\sqrt{2} \cdot b_3, \sqrt{2} b_4, b_5, b_6,\\ 
    & ~~~~~~~~~~~~~~~~~~~~~ \sqrt{2} \cdot b_7,\sqrt{2} \cdot b_8],\\
    \bm{c}_a &= [v^\gamma_\infty,v^\gamma_\infty,v^\gamma_\infty + f_\infty,v^\gamma_\infty,f_\infty,v^\gamma_\infty,f^2_\infty,v^\gamma_\infty,f^2_\infty,2  f_\infty], \\
    \bm{c}_b &= [2v^\gamma_\infty,v^\gamma_\infty,f_\infty,v^\gamma_\infty + f_\infty,v^\gamma_\infty + f^2_\infty,v^\gamma_\infty + f^2_\infty,f_\infty,f_\infty]. 
  \end{align*}
 \end{frthm4}

   \begin{pf}
     First, we have:
     \begin{align*}
      &~(1 - (\fxpi-
         \fxnj))^2\\  
      =&~ 1 - 2(\fxpi - \fxnj) + \fxpi^2 + \fxnj^2 \\  
      &~-2 \fxpi \cdot \fxnj
     \end{align*}
     Denote 
     \begin{equation*}
      r(\bm{\theta}) =  \frac{1}{\np}\frac{1}{\nn} \sump \sumn \vpi \cdot  \vnj (1 - (\f(\xpi) -  \f(\xnj)))^2.
    \end{equation*}
    
     By substituting the equation above, we then have:
     \begin{align}\label{eq:init}
      r(\bm{\theta}) =& \cp \cdot \cn - 2\cn\f^+ + 2\cp \f^-  + \cn \f^{+,2} + \cp \f^{-,2} - 2 \f^+\cdot \f^-
      \end{align}

     Accodring to Lem.\ref{lem:optsq}, we have the following reformulations:
    
     \begin{align*}
      \cp \cdot \cn &=  \min_{0 \le a_1 \le v^\gamma_\infty, 0 \le a_2 \le v^\gamma_\infty} \max_{ 0 \le b_1 \le 2  v^\gamma_\infty } \frac{1}{2} \cdot \nu(b_1, a_1, a_2,\cp,\cn) \\ 
       -2\cn\f^+ &= \min_{ 0 \le  a_3 \le f_\infty + v^\gamma_\infty } \max_{ 0 \le b_2 \le v^\gamma_\infty, 0 \le b_3 \le  f_\infty}  -\nu(a_3, b_2,b_3,\cn,\f^+)\\ 
       2\cp\f^- &= \min_{0 \le a_4 \le v^\gamma_\infty ,0 \le a_5 \le f_\infty} \max_{ 0 \le b_4 \le v^\gamma_\infty  + f_\infty}  \nu(b_4, a_4,a_5,\cp,\f^-)\\ 
       \cn \cdot \f^{+,2}  &= \min_{0\le a_6 \le v^\gamma_\infty, 0\le a_7 \le f^2_\infty } \max_{0\le b_5 \le v^\gamma_\infty + f^2_\infty} \frac{1}{2} \cdot \nu(b_5, a_6, a_7,\cn, \f^{+,2}) \\ 
       \cp \cdot \f^{-,2} &= \min_{0 \le a_8 \le v^\gamma_\infty, 0 \le a_9 \le f^2_\infty} \max_{0 \le b_6 \le v^\gamma_\infty + f^2_\infty} \frac{1}{2} \cdot  \nu(b_6, a_8, a_9,\cp, \f^{-,2}) \\ 
       -2 \f^{+} \cdot \f^{-} & = \min_{0 \le a_{10} \le 2 f_\infty} \max_{ 0\le b_7 \le f_\infty, 0 \le b_8 \le f_\infty}  - \nu(a_{10}, b_7, b_{8},\f^+, \f^-) \\
     \end{align*}
   Then the proof is completed by substituting each term into the expression of $r(\bm{\theta})$.
     \qed 
     \end{pf}
     \noindent\rule[0.15\baselineskip]{\columnwidth}{1pt}

 \section{Proof of the Bayes Error Analysis}\label{sec:2and3}
 \begin{frthm5}  Assume that $t_{\beta}(f) \le t_{1-\alpha}(f)$, and that there are no tied comparisons, $f$ is a Bayes  scoring function for $\mathsf{TPAUC}^\alpha_\beta$ if it is a solution to the  following problem:
   \begin{align*}
    \min_f &\int_{\x_1,\x_2 \in \cfab \otimes \cfab} p(\x_1) \cdot p(\x_2) \cdot \min\left\{\eta_{1,2},\eta_{2,1}\right\} d\x_1d\x_2 \\ 
  +& 2 \int_{\cfabup}  p(\x_1) \cdot p(\x_2) \cdot\eta_{2,1}\cdot d\x_1d\x_2\\ 
  +&  2 \int_{\cfabdown}  p(\x_1) \cdot p(\x_2) \cdot\eta_{1,2} \cdot d\x_1d\x_2  \\
     \end{align*}
 Specifically, for all $\x_1,\x_2 \in \cfab \otimes \cfab $, we have:
 \begin{equation*}
   \begin{split}
    \left(\eta(\bm{x}_1) - \eta(\bm{x}_2)\right) \cdot \left(f(\bm{x}_1) - f(\bm{x}_2)\right)  >0,\\ 
   \end{split}
 \end{equation*}
 where $\eta(\x) =  \mathbb{P}\left[y =1 | \x\right]$, $\eta_{1,2} = \eta(\x_1)(1-\eta(\x_2)),~ \eta_{2,1} = \eta(\x_2)(1-\eta(\x_1))$, $p(\x)$ is the p.d.f function of the marginal distribution of $\x$.
 \end{frthm5}
    \begin{proof}
    The expected risk of TPAUC could be written as:
    \begin{align*}
      \rfab  \propto \expe_{\x_1,\x_2}\bigg[& \eta(\x_1) \cdot (1-\eta(\x_2)) \cdot \ind{\mathcal{A}_{1,2}} + \eta(\x_2) \cdot (1-\eta(\x_1)) \cdot \ind{\mathcal{A}_{2,1}} \bigg] 
    \end{align*}
   
    \noindent where :
    \begin{align*}
      &\mathcal{A}_{1,2} ~\text{is the event}~  f(\x_1) < f(\x_2), f(\x_1) \le t_{1-\alpha}(f), f(\x_2) \ge t_{\beta}(f),  \\ 
      &\mathcal{A}_{2,1} ~\text{is the event}~  f(\x_2) < f(\x_1), f(\x_1) \ge t_{\beta}(f), f(\x_2) \le t_{1-\alpha}(f) \\
    \end{align*}
 We first minimize the risk for any fixed pair $\x_1, \x_2 \in \mathcal{X}, \x_1 \neq \x_2$. In other words, we need to minimize:
    \begin{align*}
     r(\x_1,\x_2) =&~ \eta(\x_1) \cdot (1-\eta(\x_2)) \cdot \ind{\mathcal{A}_{1,2}} \\ 
     & + \eta(\x_2) \cdot (1-\eta(\x_1)) \cdot \ind{\mathcal{A}_{2,1}}
    \end{align*}
   
    \noindent To do this, we consider the following cases:\\
    \noindent \textbf{Case 1}: $\x_1, \x_2 \in \cfab$. In this case the optimal $f$ should satisfy:
    \begin{equation*}
      \begin{cases}
        f(\x_2) > f(x_1), & \eta(\x_1) \cdot (1-\eta(\x_2)) < \eta(\x_2) \cdot (1-\eta(\x_1) ) \\
        f(\x_1) > f(x_2), & \eta(\x_2) \cdot (1-\eta(\x_1)) < \eta(\x_1) \cdot (1-\eta(\x_2) ), 
      \end{cases}
    \end{equation*}
    which implies that $\left(\eta(\bm{x}_1) - \eta(\bm{x}_2)\right) \cdot \left(f(\bm{x}_1) - f(\bm{x}_2)\right)  >0$.\\
    \noindent\textbf{Case 2}:  $f(\x_1) > \fa, f(\x_2) \le \fa$, in this sense we can only have:
    \begin{align*}
      r(\x_1,\x_2) = \eta(\x_2)(1-\eta(\x_1))
    \end{align*}
    \noindent\textbf{Case 3}: $f(\x_2) > \fa, f(\x_1) \le \fa$. Similarly, we have:
    \begin{align*}
     r(\x_1,\x_2) = \eta(\x_1)(1-\eta(\x_2))
   \end{align*}
   
   \noindent\textbf{Case 4} $f(\x_1) < \fb, f(\x_2) \ge \fb$.  In this sense we can only have:
   \begin{align*}
     r(\x_1,\x_2) = \eta(\x_1)(1-\eta(\x_2))
   \end{align*}
   \noindent\textbf{Case 5} $f(\x_2) < \fb, f(\x_1) \ge \fb$. We have:
   \begin{align*}
    r(\x_1,\x_2) =  \eta(\x_2)(1-\eta(\x_1))
  \end{align*}

  \noindent\textbf{Case 6} $f(\x_1) > \fa, f(\x_2) > \fa$ or  $f(\x_1) < \fb, f(\x_2) < \fb$. In either case, both  $\mathcal{A}_{1,2}$ and  $\mathcal{A}_{2,1}$ are inactivated. Hence we have: 
  \begin{align*}
    r(\x_1,\x_2) = 0.
  \end{align*} 
   
  Hence for any fixed choice of $f$, the population risk becomes

    \begin{align*}
   &\int_{\x_1,\x_2 \in \cfab \otimes \cfab} p(\x_1) \cdot p(\x_2) \cdot \min\left\{\eta(\x_1)(1-\eta(\x_2)),\eta(\x_2)(1-\eta(\x_1))\right\} d\x_1d\x_2 \\ 
 +& 2 \int_{\cfabup}  p(\x_1) \cdot p(\x_2) \cdot\eta(\x_2)(1-\eta(\x_1)) d\x_1d\x_2\\ 
 +&  2 \int_{\cfabdown}  p(\x_1) \cdot p(\x_2) \cdot\eta(\x_1)(1-\eta(\x_2)) d\x_1d\x_2  \\
    \end{align*}
 Here the factor $2$ comes from the symmetry of $\x_1$ and $\x_2$.

 Hence, to reach the Bayes solution, we have to minimize the choice of such $f$. Mathematically, to Bayes optimality, $f$ has to be a solution to:

 \begin{align*}
   \min_f &\int_{\x_1,\x_2 \in \cfab \otimes \cfab} p(\x_1) \cdot p(\x_2) \cdot \min\left\{\eta(\x_1)(1-\eta(\x_2)),\eta(\x_2)(1-\eta(\x_1))\right\} d\x_1d\x_2 \\ 
 +& 2 \int_{\cfabup}  p(\x_1) \cdot p(\x_2) \cdot\eta(\x_2)(1-\eta(\x_1)) d\x_1d\x_2\\ 
 +&  2 \int_{\cfabdown}  p(\x_1) \cdot p(\x_2) \cdot\eta(\x_1)(1-\eta(\x_2)) d\x_1d\x_2  \\
    \end{align*}
   \end{proof} 

 \begin{frthm9}
   Under the same setting as Thm.\ref{prop:bayes},  if we restrict our choice of $f$ in $\mathcal{F}_{a}$, where 
   \begin{align*}
   \mathcal{F}_{a} = \{&f: f ~\text{is}~ (\alpha,\beta)\text{-weakly consistent w.r.t}~\eta \},
   \end{align*}
 Then $f$ minimizes the expected risk in $\mathcal{F}_a$, if:
 \begin{align*}
   (f(\x_1) -f(\x_2)) \cdot (\eta(\x_1) -\eta(\x_2)) >0, (\x_1,\x_2) \in \mathcal{C}^{\alpha,\beta}_{\eta} \otimes  \mathcal{C}^{\alpha,\beta}_{\eta}.
 \end{align*} 
 \end{frthm9}

 \begin{proof}
   When:
   \begin{equation*}
    \mathcal{C}_f^{\alpha,\beta} =  \mathcal{C}_\eta^{\alpha,\beta},~ \tilde{\mathcal{C}}_f^{\alpha,\beta,\uparrow} =  \tilde{\mathcal{C}}_\eta^{\alpha,\beta,\uparrow},~ \tilde{\mathcal{C}}_f^{\alpha,\beta,\downarrow} =  \tilde{\mathcal{C}}_\eta^{\alpha,\beta,\downarrow}
   \end{equation*}
 we have: ${\mathcal{C}}_f^{\alpha,\beta,\uparrow} \equiv  {\mathcal{C}}_\eta^{\alpha,\beta,\uparrow}, {\mathcal{C}}_f^{\alpha,\beta,\downarrow} \equiv  {\mathcal{C}}_\eta^{\alpha,\beta,\downarrow}$. This way:
 \begin{align}
 2 \int_{\cfabup}  p(\x_1) \cdot p(\x_2) \cdot\eta(\x_2)(1-\eta(\x_1)) d\x_1d\x_2
 + 2 \int_{\cfabdown}  p(\x_1) \cdot p(\x_2) \cdot\eta(\x_1)(1-\eta(\x_2)) d\x_1d\x_2  \\
 \equiv   2 \int_{\mathcal{C}_\eta^{\alpha,\beta,\uparrow}}  p(\x_1) \cdot p(\x_2) \cdot\eta(\x_2)(1-\eta(\x_1)) d\x_1d\x_2
 + 2 \int_{\mathcal{C}_\eta^{\alpha,\beta,\downarrow}}  p(\x_1) \cdot p(\x_2) \cdot\eta(\x_1)(1-\eta(\x_2)) d\x_1d\x_2 
 \end{align}
 \noindent becomes a constant irrelevant with $f$.

 To minimize the Bayes error, it suffices to reach the minimal error on $\mathcal{C}^{\alpha,\beta}_{\eta} \otimes  \mathcal{C}^{\alpha,\beta}_{\eta}$:
 \begin{align*}
 \int_{\x_1,\x_2 \in \mathcal{C}^{\alpha,\beta}_{\eta} \otimes  \mathcal{C}^{\alpha,\beta}_{\eta}} p(\x_1) \cdot p(\x_2) \cdot \min\left\{\eta(\x_1)(1-\eta(\x_2)),\eta(\x_2)(1-\eta(\x_1))\right\}d\x_1d\x_2,
 \end{align*}
 with the optimal solution satisfying:
 \begin{align*}
 (f(\x_1) -f(\x_2)) \cdot (\eta(\x_1) -\eta(\x_2)) >0, (\x_1,\x_2) \in \cfab \otimes \cfab.
 \end{align*}
 \end{proof}

    \noindent\rule[0.15\baselineskip]{\columnwidth}{1pt}

 \section{Proof of Theorem \ref{thm:gen}} \label{sec:4}
 First, we need the following definitions about the population and empirical quantile of the scores:
 \begin{align*}
   &\da = \argmin_{\delta \in \mathbb{R}}\left[\delta \in \mathbb{R}:~ \eP\left[ \ind{\fxp \le \delta} \right] = \alpha  \right], ~\dah = \argmin_{\delta \in \mathbb{R}} \left[\delta \in \mathbb{R}:~ \frac{1}{\np} \sump \left[ \ind{\fxp \le \delta} \right] = \alpha  \right]
 \\
   &\db =\argmin_{\delta \in \mathbb{R}}\left[\delta \in \mathbb{R}:~ \eN\left[ \ind{\fxn \ge \delta} \right] = \beta  \right],
 ~
   \dbh =\argmin_{\delta \in \mathbb{R}} \left[\delta \in \mathbb{R}:~ \frac{1}{\nn} \sumn \left[ \ind{\fxn \ge \delta} \right] = \beta  \right] 
 \end{align*}
 Furthermore, we denote the population and empirical TPAUC risk function as:
 \begin{align*}
   &\aucf = \eN\eP\left[ \ind{\f(\xp) > \f(\xn)} \cdot \ind{\f(\xp) < \da} \cdot \ind{\f(\xn) > \db} \right] \\ 
   &\aucs = \sumterm \cdot  \sumn \sump \ind{\fxnj \ge \fxpi} \cdot \ind{\fxpi \le  \dah} \cdot \ind{\fxnj \ge \dbh}
 \end{align*}

 \begin{frlem1}
   For $\forall f \in \mathcal{F}$, we have:
   \begin{equation*}
     \begin{split}
    \aucf - \aucs \le  2 (\Delta_+ + \Delta_-)
     \end{split}
   \end{equation*}
 where
 \begin{align*}
   \Delta_+  &= \sup_{\delta  \in \mathbb{R}}  \left| \frac{1}{\np} \cdot \sump   \ind{\fxpi \le \delta} -  \eP \left[ \ind{\fxp \le \delta}  \right]
   \right|\\
   \Delta_- & = \sup_{\delta  \in \mathbb{R}}  \left| \sumtermp \cdot \sumn   \ind{\fxnj \ge \delta} -  \eN \left[ \ind{\fxn \ge \delta}  \right]
   \right|
 \end{align*}
 \end{frlem1}
 \noindent\rule[0.15\baselineskip]{\columnwidth}{1pt}
 \begin{proof}
   First, we define some intermediate variables:
   \begin{align*}
     \lpj &= \eP\left[ \ind{\fxp \le \da} \cdot \ind{\fxnj \ge \fxp}  \right]\\
    R_1  &=  \aucf = \eN\eP\left[ \ind{\f(\xn) \ge \f(\xp)} \cdot \ind{\f(\xp) \le \da} \cdot \ind{\f(\xn) \ge \db} \right]\\
    R_2  &= \sumtermp \cdot \sumn \lpj \cdot \ind{\fxnj \ge \db}\\
    R_3  & =\sumtermp \cdot  \sumn \lpj \cdot \ind{\fxnj \ge \dbh}\\
    R_4  & = \sumterm \cdot \sumn \sump \ind{\fxnj \ge \fxpi} \cdot \ind{\fxpi \le  \da} \cdot \ind{\fxnj \ge \dbh}\\ 
    R_5  &  = \aucs = \sumterm \cdot  \sumn \sump \ind{\fxnj \ge \fxpi} \cdot \ind{\fxpi \le  \dah} \cdot \ind{\fxnj \ge \dbh}
   \end{align*}
 In this sense, we can decompose $R_1 -R_5$ as:
 \begin{equation*}
   |R_1 -R_5| \le |R_1 - R_2| + |R_2 - R_3| + |R_3 -R_4| + |R_4 - R_5|
 \end{equation*}
 Now, we bound each term in the equation above successively.
 For $|R_1 - R_2|$, we have:
 \begin{align*}
   |R_1 - R_2| =&~ \bigg|\eP\bigg[\eN \left[ \ind{\f(\xn) \ge \f(\xp)} \cdot \ind{\f(\xp) \le \da} \cdot \ind{\f(\xn) \ge \db} \right]\\
    &-  \sumtermp \sumn  \ind{\fxnj \ge \fxp} \cdot \ind{\fxp \le \da}  \cdot \ind{\fxnj \ge \db} \bigg]\bigg|\\
    \le &~ \bigg|\sup_{\xp}\bigg[\eN \left[ \ind{\f(\xn) \ge \max\{\f(\xp), \db\} } \right]- \sumtermp \cdot \sumn  \ind{\fxnj \ge \max\{\fxpi, \db \}} \bigg]\bigg|
    \\
    \le &~ \sup_{\delta \in \mathbb{R}} \left| \eN \left[ \ind{\f(\xn) \ge \delta } \right] - \sumtermp \cdot \sumn  \ind{\fxnj \ge \delta} \right|
 \end{align*}
 For $|R_2 - R_3|$, we have:
 \begin{align*}
   |R_2 - R_3| =&~ \left| \sumtermp \cdot \sumn \lpj \cdot \ind{\fxnj \ge \db} - \sumtermp \cdot  \sumn \lpj \cdot \ind{\fxnj \ge \dbh} 
   \right|\\
   \overset{(a_1)}{\le} &~ \left| \sumtermp \cdot \sumn   \ind{\fxnj \ge \db} - \sumtermp \cdot  \sumn   \ind{\fxnj \ge \dbh} 
 \right|\\
  \overset{(a_2)}{=} &~  \left| \sumtermp \cdot \sumn   \ind{\fxnj \ge \db} - \beta
  \right| \\
  \overset{(a_3)}{=} &~  \left| \sumtermp \cdot \sumn   \ind{\fxnj \ge \db} -  \eN \left[ \ind{\fxnj \ge \db}  \right]
  \right|\\
 \le&~ \sup_{\delta  \in \mathbb{R}}  \left| \sumtermp \cdot \sumn  \ind{\fxnj \ge \delta} -  \eN \left[ \ind{\fxn \ge \delta}  \right]
 \right|
 \end{align*}
 Here, $(a_1)$ follows from the fact that $\ind{\fxnj \ge \db} - \ind{\fxnj \ge \dbh} $ must be simultaneously $\ge 0$ or $\le 0$; $(a_2)$ and $(a_3)$ are based on the definition of $\db$ and $\dbh$ and the assumption that no tie occurs in the dataset.
  
 For $|R_3 - R_4|$, we have:
 \begin{align*}
   |R_3 - R_4| =&~ \bigg|  \sumtermp \cdot  \sumn \lpj \cdot \ind{\fxnj \ge \dbh}\\
    &- \sumterm \cdot \sumn \sump \ind{\fxnj \ge \fxpi} \cdot \ind{\fxpi \le  \da} \cdot \ind{\fxnj \ge \dbh}
   \bigg|\\
   \le &~ \frac{1}{\nn} \cdot \sumn \left|  \lpj  - \frac{1}{\np} \cdot  \sump \ind{\fxpi \le \min\{\fxnj, \da \}} 
   \right|\\
   \le &~ \sup_{\delta  \in \mathbb{R}}  \left| \frac{1}{\np} \cdot \sump   \ind{\fxpi \le \delta} -  \eP \left[ \ind{\fxp \le \delta}  \right]
   \right|
 \end{align*}

 For $|R_4 - R_5|$, we have:
 \begin{align*}
   |R_4 - R_5| =&~ \bigg|  \sumterm \cdot \sumn \sump \ind{\fxnj \ge \fxpi} \cdot \ind{\fxpi \le  \da} \cdot \ind{\fxnj \ge \dbh} \\
   &- \sumterm \cdot  \sumn \sump \ind{\fxnj \ge \fxpi} \cdot \ind{\fxpi \le  \dah} \cdot \ind{\fxnj \ge \dbh}
   \bigg|\\
   \le&~  \frac{1}{\nn} \cdot \left(\sumn \left|\ind{\fxnj \ge \dbh}\right| \cdot \left|\frac{1}{\np} \cdot \sump \ind{\fxnj \ge \fxpi}  \cdot   \left( \ind{\fxpi \le  \da} -  \ind{\fxpi \le  \dah}   \right)\right|\right)\\
   \le &~  \frac{1}{\np}\sup_{\xn} \left[\left|\sump \ind{\fxn \ge \fxpi}  \cdot   \left( \ind{\fxpi \le  \da} -  \ind{\fxpi \le  \dah}   \right)\right|\right]\\
   \overset{(b_1)}{\le} &~ \frac{1}{\np} \left|\sump  \left( \ind{\fxpi \le  \da} -  \ind{\fxpi \le  \dah}   \right)\right|\\
   \overset{(b_2)}{\le} &~ \sup_{\delta  \in \mathbb{R}}  \left| \frac{1}{\np} \cdot \sump   \ind{\fxpi \le \delta} -  \eP \left[ \ind{\fxp \le \delta}  \right]
   \right|
 \end{align*}
 Here $(b_1)$ and $(b_2)$ follow a similar argument to $(a_1) \text{-} (a_3)$.
 \end{proof}
 \noindent\rule[0.15\baselineskip]{\columnwidth}{1pt}

 \begin{frthm6}
   Assume that there are no ties in the datasets, and the surrogate loss function $\ell$ with range $[0,1]$, is an upper bound of the $0\text{-}1$ loss, then, for all $\f \in \mathcal{F}$, and all $(\alpha, \beta) \in \mathcal{I}_{suff}(\mathcal{S})$, the following inequality holds with probability at least $1 - \delta$ over the choice of $\mathcal{S}$:
   \begin{align*}
     \aucf \le \rpsi(\f, \mathcal{S}) + C\left( \sqrt{\frac{\VC \cdot \log(\np) + \log(1/\delta)}{\np}} + \sqrt{\frac{\VC \cdot \log(\nn) + \log(1/\delta)}{\nn}}   \right),
   \end{align*}
   where $\VC$ is the VC dimension of the hypothesis class: 
   \begin{align*}
     \mathcal{T}(\mathcal{F}) \triangleq \{\mathsf{sign}(\f(\cdot) - \delta): \f \in \mathcal{F}, ~ \delta \in \mathbb{R}\}
   \end{align*}
   and
   \begin{align*}
     \mathcal{I}_{suff}(\mathcal{S}) =  \left\{(\alpha,\beta): \alpha \in (0,1),~ \beta \in(0,1),~ \npa \in \mathbb{N}_+,~ \nnb \in \mathbb{N}_+, \text{condition (a) in Prop.\ref{prop:concon} holds} \right\},
    \end{align*}
  
 \end{frthm6}
 \noindent\rule[0.15\baselineskip]{\columnwidth}{1pt}
 \begin{proof}

   First, we have:
   \begin{align*}
     &\prob \left[\sup_{f\in \mathcal{F}, (\alpha, \beta) \in  \mathcal{I}_{suff}(\mathcal{S}) }\left[|\aucf - \aucs |\right] > \epsilon \right]  \\
     \le&~  \prob\left[\sup_{f\in \mathcal{F}, (\alpha, \beta) \in  \mathcal{I}_{suff}(\mathcal{S}), \delta \in \mathbb{R}} \left[\Delta_+\right] > \epsilon/4\right] +  \prob \left[\sup_{f\in \mathcal{F},(\alpha, \beta) \in  \mathcal{I}_{suff}(\mathcal{S}), \delta \in \mathbb{R}} \left[\Delta_-\right] > \epsilon/4\right]\\
     =&~ \prob\left[\sup_{f\in \mathcal{F}, \delta \in \mathbb{R}} \left[\Delta_+\right] > \epsilon/4\right]+  \prob\left[\sup_{f\in \mathcal{F}, \delta \in \mathbb{R}} \left[\Delta_-\right] > \epsilon/4 \right]
   \end{align*}
   Following Lem.1 in \cite{partial3}, we have that, for all $\f \in \mathcal{T}(\mathcal{F})$, and all $\alpha, \beta \in (0,1)$ s.t. $\npa \in \mathbb{N}_+,~ \nnb \in \mathbb{N}_+$, the following inequality holds with probability at least $1 - \delta$:
   \begin{equation*}
     \aucf \le \aucs + C\left( \sqrt{\frac{\VC \cdot \log(\np) + \log(1/\delta)}{\np}} + \sqrt{\frac{\VC \cdot \log(\nn) + \log(1/\delta)}{\nn}}   \right).
   \end{equation*}

   Since $\alpha,~\beta \in  \mathcal{I}_{suff}(\mathcal{S})$, $\aucs \le \rorg(\f, \mathcal{S})\le \rpsi(\f, \mathcal{S})$, we have the following inequality holds with probability at least $1 - \delta$ under the same condition:
 \begin{equation*}
   \aucf \le \rpsi(\f, \mathcal{S}) + C\left( \sqrt{\frac{\VC \cdot \log(\np) + \log(1/\delta)}{\np}} + \sqrt{\frac{\VC \cdot \log(\nn) + \log(1/\delta)}{\nn}}   \right).
 \end{equation*}

 \end{proof}
 \noindent\rule[0.15\baselineskip]{\columnwidth}{1pt}
 \section{Proof of Theorem \ref{thm:abs}} \label{sec:5}

 To avoid using the graph coloring technique adopted in \cite{entropy}, we in turn adopt an error decomposition scheme for the  proof of Thm.\ref{thm:abs}. To do this, we need the following two hypothesis classes:
 \begin{align*}
   &\egf = \left\{g: g(\x) = \expe_{\xp}[g_f(\xp,\x)], f \in \mathcal{F}  \right\}\\  
   &\gfi = \left\{g: g(\x) = g_f(\xp,\x), f \in \mathcal{F}  \right\}, ~~ \text{with} ~\xpi ~\text{fixed}
 \end{align*}
 where
 \begin{align*}
   \gf(\xp,\xn) = \psi_\gamma(1-\f(\xp)) \psi_\gamma(\f(\xn)) \ell(\f(\xp) - \f(\xn))).
 \end{align*}
 Then we have the following error decomposition:

 \begin{flem}
   The following error decomposition inequality holds:
 \begin{align*}
   &\sup_{f \in \mathcal{F}} \left[ \expe_{\xp,\xn} \left[ \gf(\xp,\xn)   \right]] - \frac{1}{\np\nn}\sumpn  \frac{K^2}{(K-1)^2}\gf(\xpi,\xnj) \right]\\  
   \le& 
    \sup_{f \in \mathcal{F}} \left[ \expe_{\xp,\xn} \left[ \gf(\xp,\xn)   \right]- \frac{K}{(K-1)} \frac{1}{\np} \sump \expe_{\xn} \gf(\xpi,\xn) \right] \\
    +&   \sup_{f \in \mathcal{F}} 
   \left[ \frac{K}{(K-1)} \frac{1}{\np} \sump \expe_{\xn} \gf(\xpi,\xn) - \frac{1}{\np\nn} \sumpn  \frac{K^2}{(K-1)^2}\gf(\xpi,\xnj)  \right] 
 \end{align*}
 \end{flem}
 \begin{proof}
   The lemma directly follows from the following inequality:
   \begin{align*}
     & \expe_{\xp,\xn} \left[ \gf(\xp,\xn)   \right] -\frac{1}{\np\nn} \sumpn  \frac{K^2}{(K-1)^2}\gf(\xpi,\xnj)\\ 
 \le&  \expe_{\xp,\xn} \left[ \gf(\xp,\xn)   \right]- \frac{K}{(K-1)} \frac{1}{\np} \sump \expe_{\xn} \gf(\xpi,\xn) \\ 
  &+    \frac{K}{(K-1)} \frac{1}{\np} \sump \expe_{\xn} \gf(\xpi,\xn) - \frac{1}{\np\nn}\sumpn  \frac{K^2}{(K-1)^2}\gf(\xpi,\xnj)   
   \end{align*}
 \end{proof}

 \begin{flem}\label{lem:cover_1}
   Suppose that the weighting function $v(\cdot)$ is $L_v$-Lipschitz continuous, and the loss function $\ell$ is $L_\ell$-Lipschitz continuous, then the following inequality holds given the definition of $\vinf, \linf$:
   \begin{equation*}
     \begin{split}
       \mathcal{N}\bigg(&\egf, \epsilon, ||\cdot||_{2,\np}  \bigg) \le  \mathcal{N}\left(\mathcal{F}, \frac{\epsilon}{2(\vinf\cdot \linf \cdot L_v + \vinf^2 \cdot L_\ell )}, ||\cdot||_{\infty}  \right).
     \end{split}
   \end{equation*}
 \end{flem}

 \begin{pf}
   \begin{align*}
     \ell_{i} &= \ell(\f(\xpi) - \f(\x_-)), &\tilde{\ell}_{i} &= \ell(\tlf(\xpi) - \tlf(\x_-))\\ 
     v_i &= \psi_\gamma\left(1-\f(\xpi)\right), &v_- &= \psi_\gamma\left(\f(\x)\right),\\
     \tilde{v}_i &= \psi_\gamma\left(1-\tlf(\xpi)\right), &\tilde{v}_- &= \psi_\gamma\left(\tlf(\x_-)\right)
   \end{align*}
   \noindent Given any 
   \[g  =   \gf \in  \egf, \tilde{g} =  g_{\tilde{f}}  \in \egf :\]

 
   \begin{align*}
  &\left\|\expe_{\x_-}g-\expe_{\x_-}\tilde{g}\right\|_{2,n_+} =  \sqrt{\frac{1}{\np}\sump \left(\expe_{\xn}\left[  v_i \cdot v_- \cdot \ell_{i}\right] - \expe_{\xn}\left[\tilde{v}_i \cdot \tilde{v}_- \cdot \tilde{\ell}_{i} \right] \right)^2 }\\ 
  &\le~ \max_{(\xpi,\x_-)} |  v_i \cdot v_- \cdot \ell_{i} - \tilde{v}_i \cdot \tilde{v}_- \cdot \tilde{\ell}_{i} |\\ 
  &\le~ \max_{(\xpi,\x_-)}  | {v}_i \cdot  v_- \cdot  \ell_{i} -  \tilde{v}_i \cdot  {v}_- \cdot  \ell_{i}|  + | \tilde{v}_i \cdot  {v}_- \cdot  \ell_{i} -  \tilde{v}_i \cdot  \tilde{v}_- \cdot  {\ell}_{i}|+~ | \tilde{v}_i \cdot  \tilde{v}_- \cdot  \ell_{i} -  \tilde{v}_i \cdot  \tilde{v}_- \cdot  \tilde{\ell}_{i}|\\ 
  &\le 2\cdot (\vinf^2 \cdot L_\ell + \vinf \cdot \linf \cdot L_v) \cdot ||f-\tilde{f}||_\infty.
 \end{align*}

 \noindent Define a $\frac{\epsilon}{2(\vinf\cdot \linf \cdot L_v + \vinf^2 \cdot L_\ell )}$-covering of the class $\mathcal{F}$ with $ ||\cdot||_{\infty}$ norm: 
 \[\left\{\mathcal{C}_1,\cdots, \mathcal{C}_{N}\right\}.\]

 with 

 \[N =  \mathcal{N}\left(\mathcal{F}, \frac{\epsilon}{2(\vinf\cdot \linf \cdot L_v + \vinf^2 \cdot L_\ell )}, ||\cdot||_{\infty}  \right)\]

 There exists a ${f}_{\mathcal{C}_i} \in \mathcal{F}$, such that for any $\f \in \mathcal{C}_i \cap \mathcal{F}$: 
 \begin{align*}
   ||\f - f_{\mathcal{C}_i}||_\infty \le \frac{\epsilon}{2(\vinf\cdot \linf \cdot L_v + \vinf^2 \cdot L_\ell )}.
 \end{align*}
 which implies that 
 \begin{align*}
   ||\expe_{x-}\f - \expe_{x-}g_{f_{\mathcal{C}_i}}||_{2,\np} \le 2(\vinf\cdot \linf \cdot L_v + \vinf^2 \cdot L_\ell ) \cdot \frac{\epsilon}{2(\vinf\cdot \linf \cdot L_v + \vinf^2 \cdot L_\ell )} = \epsilon. 
 \end{align*}
 Denote 
 \[\mathcal{C}_{g,i} = \left\{g_f: ||g_f(\x,\xn) - g_{f_{\mathcal{C}_i}}||_{\infty} \le \frac{\epsilon}{2(\vinf\cdot \linf \cdot L_v + \vinf^2 \cdot L_\ell )}, f \in \mathcal{F} \right\}, \] 
 then $\left\{\mathcal{C}_{g,1},\cdots,\mathcal{C}_{g,N}\right\}$ realizes an $\epsilon$-covering of $E(g\mathcal{F})$. Hence, the minimum size of the $\epsilon$-covering is at most $N$. Mathematically, we then have:
 \begin{align*}
       \mathcal{N}\bigg(\egf, \epsilon, ||\cdot||_{2,\np}  \bigg) \le  \mathcal{N}\left(\mathcal{F}, \frac{\epsilon}{2(\vinf\cdot \linf \cdot L_v + \vinf^2 \cdot L_\ell )}, ||\cdot||_{\infty}  \right).
 \end{align*}
 \qed
 \end{pf}

 \begin{flem}\label{lem:cover_2}
   Suppose that the weighting function $v(\cdot)$ is $L_v$-Lipschitz continuous, and the loss function $\ell$ is $L_\ell$-Lipschitz continuous, then the following inequality holds given the definition of $\vinf, \linf$:
   \begin{equation*}
     \begin{split}
       \mathcal{N}\bigg(&\gfi, \epsilon, ||\cdot||_{2,\nn}  \bigg) \le  \mathcal{N}\left(\mathcal{F}, \frac{\epsilon}{2(\vinf\cdot \linf \cdot L_v + \vinf^2 \cdot L_\ell )}, ||\cdot||_{\infty}  \right).
     \end{split}
   \end{equation*}
 \end{flem}
 \begin{pf}
   The proof follows a similar argument as Lem.\ref{lem:cover_1}, which is thus omitted.
 \end{pf}

 \begin{flem}\label{lem:prob} For $\epsilon_i \in \mathbb{R}, \forall i$, if  $\sum_i\epsilon_i = \epsilon$, then:
   \begin{align*}
     \prob\left[ \sum_i x_i \ge \epsilon  \right] \le \sum_i  \prob\left[ x_i \ge \epsilon_i  \right]. 
   \end{align*}
 \end{flem}
 \begin{proof}
   Denote event $ \mathcal{A}_i(\epsilon_i)$ as:
   \begin{align*}
     \mathcal{A}_i(\epsilon_i) = \{x_i: x_i \ge \epsilon_i\}. 
   \end{align*}
 and denote event $\mathcal{A}$ as:
 \begin{align*}
   \mathcal{A}(\epsilon) = \{(x_1,x_2,\cdots x_n): \sum_i x_i \ge \epsilon\}. 
 \end{align*}
  We first show that
   \[\mathcal{A}(\epsilon) \subseteq \bigcup_i \mathcal{A}_i(\epsilon_i), ~\sum_i \epsilon_i  =\epsilon.\]
 Assume that:
  \[\mathcal{A}(\epsilon) \nsubseteq  \bigcup_i \mathcal{A}_i(\epsilon_i),\]
 then there exist $x_1,x_2,\cdots x_n$ such that:
 \begin{align*}
   \sum_i x_i \ge \epsilon, ~\text{and}~, x_i < \epsilon_i, \forall i,
 \end{align*}
 which contradicts the fact that $\sum_i\epsilon_i = \epsilon$. Hence, we have:
 \begin{align*}
   \prob\bigg[ \sum_i x_i \ge \epsilon  \bigg]   = \prob[\mathcal{A}] \le  \prob\bigg[\bigcup_i \mathcal{A}_i(\epsilon_i)\bigg] \le \sum_i \prob\bigg[ \mathcal{A}_i(\epsilon_i)\bigg] = \sum_i  \prob\bigg[ x_i \ge \epsilon_i  \bigg] .
 \end{align*}
 \end{proof}

 \begin{frthm7}
   Assume that the weighting function $\psi$ is $L_v$ Lipschitz continuous, the loss function $\ell$ is $L_\ell$ Lipschitz continuous with  $||\psi||_\infty  = v_\infty$, $||\ell||_\infty = L_\infty$, and the covering number of $\mathcal{F}$ has the following form with respect to $|\cdot|_\infty$ norm:
   \begin{align*}
     \log(\mathcal{N}\bigg(\mathcal{F}, \epsilon, ||\cdot||_\infty  \bigg)) \le A \log\bigg(R/\epsilon\bigg),
   \end{align*}
   then for all $f \in \mathcal{F}$ and $(\alpha,\beta) \in \mathcal{I}^2_{suff}$,  the following inequality holds with probability at least $1-\delta$:
   \begin{equation*}
     \mathcal{R}^{\alpha,\beta}(f, \mathcal{S}) - \frac{K^2}{(K-1)^2}\hat{\mathcal{R}}^{\ell}_\psi(f,\mathcal{S}) \le  C \cdot\frac{A\log\left(\Gamma\nn\right) + \log(2/\delta)}{\nn},~~ \forall K >1
   \end{equation*}
 where $C, \Gamma$ are universal constants depending on $v_\infty, \ell_\infty, L_\ell, L_v, K, R$, and
   \begin{align*}
   \mathcal{I}^2_{suff}(\mathcal{S}) =  \bigg\{ &(\alpha,\beta): \alpha \in (0,1),~ \beta \in(0,1),~ \npa \in \mathbb{N}_+,~ \\  
   &\nnb \in \mathbb{N}_+, \text{condition (b) in Prop.\ref{prop:concon} holds} \bigg\}.
 \end{align*}
 \end{frthm7}
 \begin{proof}
 For the sake of convenience, we first define some intermediate terms:
 \begin{align*}
  & \Delta_1 =  \sup_{f \in \mathcal{F}} \left[ \expe_{\xp,\xn} \left[ \gf(\xp,\xn)   \right]- \frac{K}{(K-1)} \frac{1}{\np} \sump \expe_{\xn} \gf(\xpi,\xn) \right] \\
  & \Delta_2 =  \sup_{f \in \mathcal{F}} 
  \left[ \frac{K}{(K-1)} \frac{1}{\np} \sump \expe_{\xn} \gf(\xpi,\xn) - \frac{1}{\np\nn} \sumpn  \frac{K^2}{(K-1)^2}\gf(\xpi,\xnj)  \right] \\
  & \Delta_2(\xpi) =  \sup_{f \in \mathcal{F}} 
  \left[ \frac{K}{(K-1)} \expe_{\xn} \gf(\xpi,\xn) - \frac{1}{\nn} \sumn  \frac{K^2}{(K-1)^2}\gf(\xpi,\xnj)  \right]
 \end{align*}
 It is easy to check that: 
 \begin{align*}
   \Delta_2  \le \frac{1}{\np} \sump \Delta_2(\xpi).
 \end{align*}
 Moreover, we have the following smoothness property:
 \begin{align*}
   &\sup_{f \in \egf } (\expe_{\xp}(f)/Var_{\xp}(f)) \le v^2_\infty L_\infty \\ 
   & \sup_{f \in \gfi } (\expe_{\xn}(f)/Var_{\xn}(f)) \le v^2_\infty L_\infty, \forall i=1,2,\cdots, \np \\ 
 \end{align*}
 For $\Delta_1$, by combining lem.\ref{lem:cover_1},  a similar proof of Thm.2 in \cite{localchain} (by replacing the $||\cdot||_2$ for covering number with $||\cdot||_\infty$ norm), and Thm.4 in \cite{localchain}, the following inequality holds with probability at least $1-\delta/2$:
 \begin{align*}
   \Delta_1 \le c_1\left[\frac{A\log \Gamma_1 \np + \log(2/\delta)}{\np}  \right]
 \end{align*}
 Fix $\xpi$, similarly, the following inequality holds with probability at least $1-\delta/2$:
 \begin{align*}
   \Delta_2(\xpi) \le c_{2,i}\left[\frac{A\log (\Gamma_{2,i}\nn) + \log(2/\delta)}{\nn}  \right]
 \end{align*}
 or equivalently: 
 \begin{align*}
    \prob\left[ \Delta_2(\xpi) \ge \epsilon_i \right] \le  \exp\left( A\log(\Gamma_{2,i} \nn) - \frac{\nn \epsilon_i}{c_{2,i}}   \right)
 \end{align*}

 \noindent According to Lem.\ref{lem:prob}, we have:
 \begin{align*}
   \prob\left[ \Delta_2 \ge \epsilon \right]  \le  \sump \prob\left[ \Delta_2(\xpi) \ge \epsilon_i \right] \le \sump \exp\left(A\log(\Gamma_{2,i} \nn) - \frac{\nn \epsilon_i}{c_{2,i}}  \right)
 \end{align*}
 as long as $\frac{1}{\np}\sump \epsilon_i = \epsilon$. 
 By choosing:
 \begin{align*}
   \epsilon_i = c_{2,i} \frac{A\log(\Gamma_{2,i}\nn) + \log(2/\delta) + \log(\np)}{\nn} 
 \end{align*}
 we have: 
 \begin{align*}
   \prob\left[ \Delta_2 \ge \epsilon \right]  \le  \sump \frac{\delta/2}{\np}  = \delta/2
 \end{align*}
 Hence, with probability at least $1-\delta/2$ over the random drawn of $\mathcal{S}$, the following inequality holds:
 \begin{align*}
   \Delta_2 \le c_2 \cdot \frac{A\log(\Gamma_2\nn) + \log(2/\delta) + \log(\np)}{\nn} 
 \end{align*}
 where $c_2 = \max_i\{c_{2,i}\}, \Gamma_2 = \max_i\{\Gamma_{2,i}\}$.
 Finally, by Lem.\ref{lem:errde} and the union bound,  the following inequality holds for all $f \in \mathcal{F}$ with probability at least $1-\delta$ over the random drawn of $\mathcal{S}$: 
 \begin{align*}
   &\expe_{\xp, \xn}\left[ \gf(\xp,\xn)   \right]- \frac{1}{\np\nn} \sumpn  \frac{K^2}{(K-1)^2}\gf(\xpi,\xnj) \le C \cdot \frac{A\log\left(\Gamma\max\{\np,\nn\}\right) + \log(2/\delta)}{\min\{\np,\nn\}} \\ 
   \le& C \cdot \frac{A\log\left(\Gamma\nn\right) + \log(2/\delta)}{\np}
 \end{align*}
 where $C = 2 \max\{c_1,c_2\}, \Gamma =\max\{\Gamma_1,\Gamma_2\}$.
 The claim of the theorem then follows that 
 \begin{align*}
   & \hat{\mathcal{R}}^{\ell}_\psi(f,\mathcal{S}) = \frac{1}{\np\nn} \sumpn  \gf(\xpi,\xnj)\\ 
   & \mathcal{R}^{\alpha,\beta}(f, \mathcal{S}) \le \expe_{\xp,\xn}[\hat{\mathcal{R}}^{\ell}_\psi(f,\mathcal{S})] =   \expe_{\xp, \xn}\left[ \gf(\xp,\xn)   \right]
 \end{align*}
 \end{proof}

    \noindent\rule[0.15\baselineskip]{\columnwidth}{1pt}

 \section{Experiments}\label{sec:app_exp}

   \subsection{Dataset Description}

   \begin{table*}[ht]
     \centering
     \setlength{\abovecaptionskip}{0pt}    
     \setlength{\belowcaptionskip}{15pt}    
     \caption{Details on the datasets.}
     \setlength{\tabcolsep}{5pt}
     \resizebox{0.95\textwidth}{!}{%
      \begin{tabular}{lllll}
       \toprule
       Dataset  & Pos. Class ID & Pos. Class Name & \# Pos. Examples & \# Neg. Examples \\
       \toprule
       CIFAR-10-LT-1  & $2$   & birds         &$1,508$  & $8,907$ \\
       CIFAR-10-LT-2  & $1$   & automobiles   &$2,517$  & $7,898$ \\
       CIFAR-10-LT-3  & $3$   & cats          &$904$  & $9,511$ \\
       \midrule
       CIFAR-100-LT-1 & $6,7,14,18,24$   & insects              &$1,928$  & $13,218$ \\
       CIFAR-100-LT-2 & $0,51,53,57,83$  & fruits and vegetables &$885 $ & $14,261$ \\
       CIFAR-100-LT-3 & $15,19,21,32,38$ & large omnivores and herbivores &$1,172$ & $13,974$ \\
       \midrule
       Tiny-ImageNet-200-LT-1  & $24,25,26,27,28,29$   & dogs   &$2,100$  & $67,900$ \\
       Tiny-ImageNet-200-LT-2  & $11, 20, 21, 22$   & birds   &$1,400$  & $68,600$ \\
       Tiny-ImageNet-200-LT-3  & $70, 81, 94, 107, 111, 116, 121, 133, 145, 153, 164, 166$   & vehicles   &$4,200$  & $65,800$ \\
       \bottomrule
      \end{tabular}
     }
     \label{tab:dataset1}
    \end{table*}

   \textbf{Binary CIFAR-10-LT Dataset}. The original CIFAR-10 dataset consists of 60,000 $32\times 32$ color images in 10 classes, with 6,000 images per class. There are 50,000 and 10,000 images in the training set and the test set, respectively. We create a long-tailed CIFAR-10 where the sample sizes across different classes decay exponentially, and the ratio of sample sizes of the least frequent to the most frequent class $\rho$ is set to 0.01. We then create binary long-tailed datasets based on CIFAR-10-LT by selecting one category as positive examples and the others as negative examples. We construct three binary subsets, in which the positive categories are  \textbf{1) }birds, \textbf{2) } automobiles, and \textbf{3) }cats, respectively. The datasets are split into training, validation and test sets according to the ratio of $0.7:0.15:0.15$. More details are provided in Tab. \ref{tab:dataset}.

   \textbf{Binary CIFAR-100-LT Dataset}. The original CIFAR-100 dataset is similar to CIFAR-10, except it has 100 classes with each containing 600 images. The 100 classes in the CIFAR-100 are grouped into 20 superclasses. We create CIFAR-100-LT in the same way as CIFAR-10-LT, and transform it into three binary long-tailed datasets by selecting a superclass as positive class examples each time. Specifically, the positive superclasses are \textbf{1) }fruits and vegetables, \textbf{2) }insects and \textbf{3) }large omnivores and herbivores, respectively. More details are provided in Tab. \ref{tab:dataset}.

   \textbf{Implementation details On CIFAR Datasets}. We utilize the ResNet-20 \cite{resnet} as the backbone, which takes images with size $32\times 32 \times 3$ as input and outputs 64-d features. Then the features are mapped into $[0, 1]$ with an FC layer and Sigmoid function. During the training phase, we apply data augmentation including random horizontal flipping (50\%), random rotation (from $-15^\circ$ to $15^\circ$) and random cropping ($32\times 32$).

   \textbf{Binary Tiny-ImageNet-200-LT Dataset}. The Tiny-ImageNet-200 dataset contains 100,000 $256\times 256$ color images  from 200 different categories, with 500 images per category. Similar to the CIFAR-100-LT dataset, we choose three positive superclasses to construct binary subsets: \textbf{1) }dogs, \textbf{2) }birds and \textbf{3) }vehicles. The datasets are further split into training, validation and test sets according to the ratio of $0.7:0.15:0.15$. See Tab. \ref{tab:dataset} for more details.
  
   \textbf{Implementation details on Tiny-ImageNet-200 }. The implementation details are basically the same with CIFAR-10-LT and CIFAR-100-LT datasets, except the backbone network is implemented with ResNet-18 \cite{resnet}, which takes images with size $224\times 224 \times 3$  as input and outputs 512-d features.

   \subsection{Best Parameters}

   \begin{table}[h]
     \centering
     \caption{Best Parameters on CIFAR-10-LT}
  
     \begin{tabular}{ccccccccc}
       \toprule
       Loss  & subset & \multicolumn{1}{c}{$\eta_1$} & \multicolumn{1}{c}{$\eta_2$} & \multicolumn{1}{c}{$\eta_3$} & \multicolumn{1}{c}{$\alpha$} & \multicolumn{1}{c}{$\beta$} & \multicolumn{1}{c}{$(\gamma-1)^{-1}$} & \multicolumn{1}{c}{$E_k$} \\
       \midrule
      \texttt{Exp}   & $subset_1$ & $0.001$ & $0.001$ & $0.001$ & $0.3$   & $0.3$   & $25$    & $35$ \\
      \texttt{Exp}   & $subset_1$ & $0.001$ & $0.001$ & $0.001$ & $0.4$   & $0.4$   & $10$    & $35$ \\
      \texttt{Exp}   & $subset_1$ & $0.001$ & $0.001$ & $0.001$ & $0.5$   & $0.5$   & $20$    & $30$ \\
      \texttt{Exp}   & $subset_2$ & $0.001$ & $0.001$ & $0.001$ & $0.3$   & $0.3$   & $8$    & $5$ \\
      \texttt{Exp}   & $subset_2$ & $0.001$ & $0.001$ & $0.001$ & $0.4$   & $0.4$   & $35$    & $5$ \\
      \texttt{Exp}   & $subset_2$ & $0.001$ & $0.001$ & $0.001$ & $0.5$   & $0.5$   & $30$    & $5$ \\
      \texttt{Exp}   & $subset_3$ & $0.001$ & $0.001$ & $0.001$ & $0.3$   & $0.3$   & $30$    & $35$ \\
      \texttt{Exp}   & $subset_3$ & $0.001$ & $0.001$ & $0.001$ & $0.4$   & $0.4$   & $35$    & $30$ \\
      \texttt{Exp}   & $subset_3$ & $0.001$ & $0.001$ & $0.001$ & $0.5$   & $0.5$   & $20$    & $35$ \\
      \texttt{Poly} & $subset_1$ & $0.001$ & $0.001$ & $0.001$ & $0.3$   & $0.3$   & $0.1$       & $0 $\\
      \texttt{Poly} & $subset_1$ & $0.001$ & $0.001$ & $0.001$ & $0.4$   & $0.4$   & $0.01$ & $0$ \\
      \texttt{Poly} & $subset_1$ & $0.001$ & $0.001$ & $0.001$ & $0.5$   & $0.5$   & $0.03$ & $8$ \\
      \texttt{Poly} & $subset_2$ & $0.001$ & $0.001$ & $0.001$ & $0.3$   & $0.3$   & $0.01$ & $5$ \\
      \texttt{Poly} & $subset_2$ & $0.001$ & $0.001$ & $0.001$ & $0.4$   & $0.4$   & $0.03$ & $5$ \\
      \texttt{Poly} & $subset_2$ & $0.001$ & $0.001$ & $0.001$ & $0.5$   & $0.5$   & $0.03$ & $5$ \\
      \texttt{Poly} & $subset_3$ & $0.001$ & $0.001$ & $0.001$ & $0.3$   & $0.3$   & $0.05  $ & $30$ \\
      \texttt{Poly} & $subset_3$ & $0.001$ & $0.001$ & $0.001$ & $0.4$   & $0.4$   & $0.1$ & $35$ \\
      \texttt{Poly} & $subset_3$ & $0.001$ & $0.001$ & $0.001$ & $0.5$   & $0.5$   & $0.08$    & $10$ \\
       \bottomrule
       \end{tabular}%
     \label{tab:best_1}%
   \end{table}%
  
   \begin{table}[h]
   \centering
   \caption{Best Parameters on CIFAR-100-LT}
     \begin{tabular}{ccccccccc}
     \toprule
     Loss  & subset & \multicolumn{1}{c}{$\eta_1$} & \multicolumn{1}{c}{$\eta_2$} & \multicolumn{1}{c}{$\eta_3$}& \multicolumn{1}{c}{$\alpha$} & \multicolumn{1}{c}{$\beta$} & \multicolumn{1}{c}{$(\gamma-1)^{-1}$} & \multicolumn{1}{c}{$E_k$} \\
     \midrule
    \texttt{Exp}   & $subset_1$ & $0.001$ & $0.001$  & $0.001$ & $0.3$   & $0.3$   & $15$  & $35$\\
    \texttt{Exp}   & $subset_1$ & $0.001$ & $0.001$  & $0.001$ & $0.4$   & $0.4$   & $8$  & $35$ \\
    \texttt{Exp}   & $subset_1$ & $0.001$ & $0.001$  & $0.001$ & $0.5$   & $0.5$   & $8$  & $35$\\
    \texttt{Exp}   & $subset_2$ & $0.001$ & $0.001$  & $0.001$ & $0.3$   & $0.3$   & $15$  & $35$ \\
    \texttt{Exp}   & $subset_2$ & $0.001$ & $0.001$  & $0.001$ & $0.4$   & $0.4$   & $10$  & $35$ \\
    \texttt{Exp}   & $subset_2$ & $0.001$ & $0.001$  & $0.001$ & $0.5$   & $0.5$   & $15$  & $30$ \\
    \texttt{Exp}   & $subset_3$ & $0.001$ & $0.001$  & $0.001$ & $0.3$   & $0.3$   & $15$  & $35$\\
    \texttt{Exp}   & $subset_3$ & $0.001$ & $0.001$  & $0.001$ & $0.4$   & $0.4$   & $15$  & $35$\\
    \texttt{Exp}   & $subset_3$ & $0.001$ & $0.001$  & $0.001$ & $0.5$   & $0.5$   & $10$  & $35$\\
    \texttt{Poly} & $subset_1$ & $0.001$ & $0.001$  & $0.001$ & $0.3$   & $0.3$   & $0.01$ &  $15$ \\
    \texttt{Poly} & $subset_1$ & $0.001$ & $0.001$  & $0.001$ & $0.4$   & $0.4$   & $0.01$  & $3$\\
    \texttt{Poly} & $subset_1$ & $0.001$ & $0.001$  & $0.001$ & $0.5$   & $0.5$   & $0.01$  & $5$ \\
    \texttt{Poly} & $subset_2$ & $0.001$ & $0.001$  & $0.001$ & $0.3$   & $0.3$   & $0.01$ &  $35$ \\
    \texttt{Poly} & $subset_2$ & $0.001$ & $0.001$  & $0.001$ & $0.4$   & $0.4$   & $0.01$  & $20$ \\
    \texttt{Poly} & $subset_2$ & $0.001$ & $0.001$  & $0.001$ & $0.5$   & $0.5$   & $0.03$  & $0$\\
    \texttt{Poly} & $subset_3$ & $0.001$ & $0.001$  & $0.001$ & $0.3$   & $0.3$   & $0.03$ &  $20$\\
    \texttt{Poly} & $subset_3$ & $0.001$ & $0.001$  & $0.001$ & $0.4$   & $0.4$   & $0.1$  & $35$ \\
    \texttt{Poly} & $subset_3$ & $0.001$ & $0.001$  & $0.001$ & $0.5$   & $0.5$   & $0.01$  & $25$ \\
     \bottomrule
     \end{tabular}%
   \label{tab:best_2}%
   \end{table}%
   \begin{table}[h]
     \centering
     \caption{Best Parameters on Tiny-Imagenet-200-LT}
       \begin{tabular}{ccccccccc}
       \toprule
       Loss  & subsets & \multicolumn{1}{c}{$\eta_1$} & \multicolumn{1}{c}{$\eta_2$} & \multicolumn{1}{c}{$\eta_3$} & \multicolumn{1}{c}{$\alpha$} & \multicolumn{1}{c}{$\beta$} & \multicolumn{1}{c}{$(\gamma-1)^{-1}$} & \multicolumn{1}{c}{$E_k$} \\
       \midrule
      \texttt{Exp}  & $subset_1$& $0.001$ & $0.001$ & $0.001$ & $0.3$   & $0.3$   & $30$   & $15$\\
      \texttt{Exp}  & $subset_1$& $0.001$ & $0.001$ & $0.001$ & $0.4$   & $0.4$   & $20 $   & $3 $\\
      \texttt{Exp}  & $subset_1$& $0.001$ & $0.001$ & $0.001$ & $0.5$   & $0.5$   & $20 $   & $3 $\\
      \texttt{Exp}  & $subset_2$& $0.001$ & $0.001$ & $0.001$ & $0.3$   & $0.3$   & $35 $   & $15$ \\
      \texttt{Exp}  & $subset_2$& $0.001$ & $0.001$ & $0.001$ & $0.4$   & $0.4$   & $15 $   & $12$ \\
      \texttt{Exp}  & $subset_2$& $0.001$ & $0.001$ & $0.001$ & $0.5$   & $0.5$   & $20 $   & $12$ \\
      \texttt{Exp}  & $subset_3$& $0.001$ & $0.001$ & $0.001$ & $0.3$   & $0.3$   & $30 $   & $12$ \\
      \texttt{Exp}  & $subset_3$& $0.001$ & $0.001$ & $0.001$ & $0.4$   & $0.4$   & $8 $   & $12$ \\
      \texttt{Exp}  & $subset_3$& $0.001$ & $0.001$ & $0.001$ & $0.5$   & $0.5$   & $8 $   & $15$\\
      \texttt{Poly}  &$ subset_1$ &$ 0.001$ & $0.001$ & $0.001$ & $0.3$   & $0.3$   &$0.03 $    &$0$ \\
      \texttt{Poly}  &$ subset_1$ & $0.001$ & $0.001$ & $0.001$ & $0.4$   & $0.4$   & $0.03 $ &$0$\\
      \texttt{Poly}  &$ subset_1$ & $0.001$ & $0.001$ & $0.001$ & $0.5$   & $0.5$   & $0.01 $ &$0$\\
      \texttt{Poly}  &$ subset_2$ & $0.001$ & $0.001$ & $0.001$ & $0.3$   & $0.3$   & $0.05$  & $0$ \\
      \texttt{Poly}  &$ subset_2$ & $0.001$ & $0.001$ & $0.001$ & $0.4$   & $0.4$   & $0.05$  & $0$ \\
      \texttt{Poly}  &$ subset_2$ & $0.001$ & $0.001$ & $0.001$ & $0.5$   & $0.5$   & $0.05$  & $0$ \\
      \texttt{Poly}  &$ subset_3$ & $0.001$ & $0.001$ & $0.001$ & $0.3$   & $0.3$   & $0.03$ & $8$ \\
      \texttt{Poly}  &$ subset_3$ & $0.001$ & $0.001$ & $0.001$ & $0.4$   & $0.4$   & $0.03$ & $0$\\
      \texttt{Poly}  &$ subset_3$ & $0.001$ & $0.001$ & $0.001$ & $0.5$   & $0.5$   & $0.05 $ & $5 $\\
       \bottomrule
       \end{tabular}%
     \label{tab:addlabel}%
     \end{table}%

 \newpage

 \subsection{Performance Comparison In terms of OPAUC}
   Note that our proposed algorithm can be tailored to deal with OPAUC optimization by setting all $v_+ = 1$. In this sense, we compare our proposed method with the other OPAUC optimization methods to see whether such a degeneralization can work. To this end, we conduct experiments on CIFAR-100-LT dataset and the experimental configurations stay the same as Sec.\ref{exp_detail}. Specifically, the following methods are involved in our comparison a) \texttt{TruncOPAUC}. Note that, aiming at the OPAUC optimization, TruncTPAUC is the same as TruncOPAUC since $\alpha=1.0$. b) \texttt{OPAUC-Poly} and \texttt{OPAUC-Exp}: Perform the OPAUC optimization with setting all $v_+ = 1$. c) \texttt{minmax-OPAUC-Poly} and \texttt{minmax-OPAUC-Exp}: Perform the \texttt{OPAUC-Poly} and \texttt{OPAUC-Exp} optimization with the minimax reformulation in Thm.\ref{thm:reform} by setting all $v_+ = 1$. The results are summarized in Fig \ref{fig:lossop}. The result shows that the OP version of our proposed algorithm still outperforms its truncation-based counterpart in most cases.

 \begin{figure*}[h]  
   \centering
    
     \subfigure[Comparison in terms of the OPAUC metric on subset1, $\alpha=1, \beta=0.3$]{
       \includegraphics[width=0.3\textwidth]{./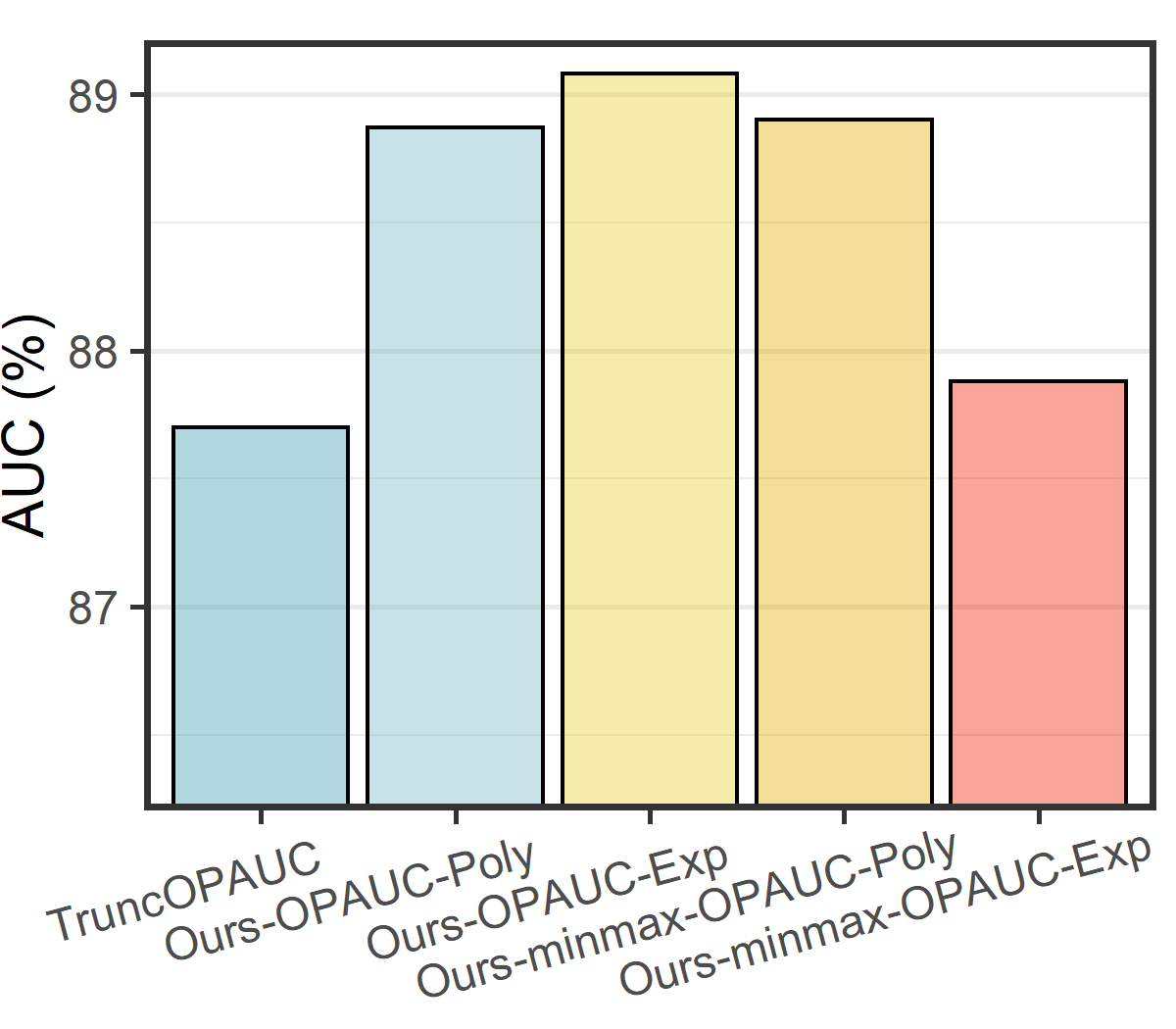} 
     }~~
     \subfigure[Comparison in terms of the OPAUC metric  on subset1, $\alpha=1, \beta=0.4$]{
       \includegraphics[width=0.3\textwidth]{./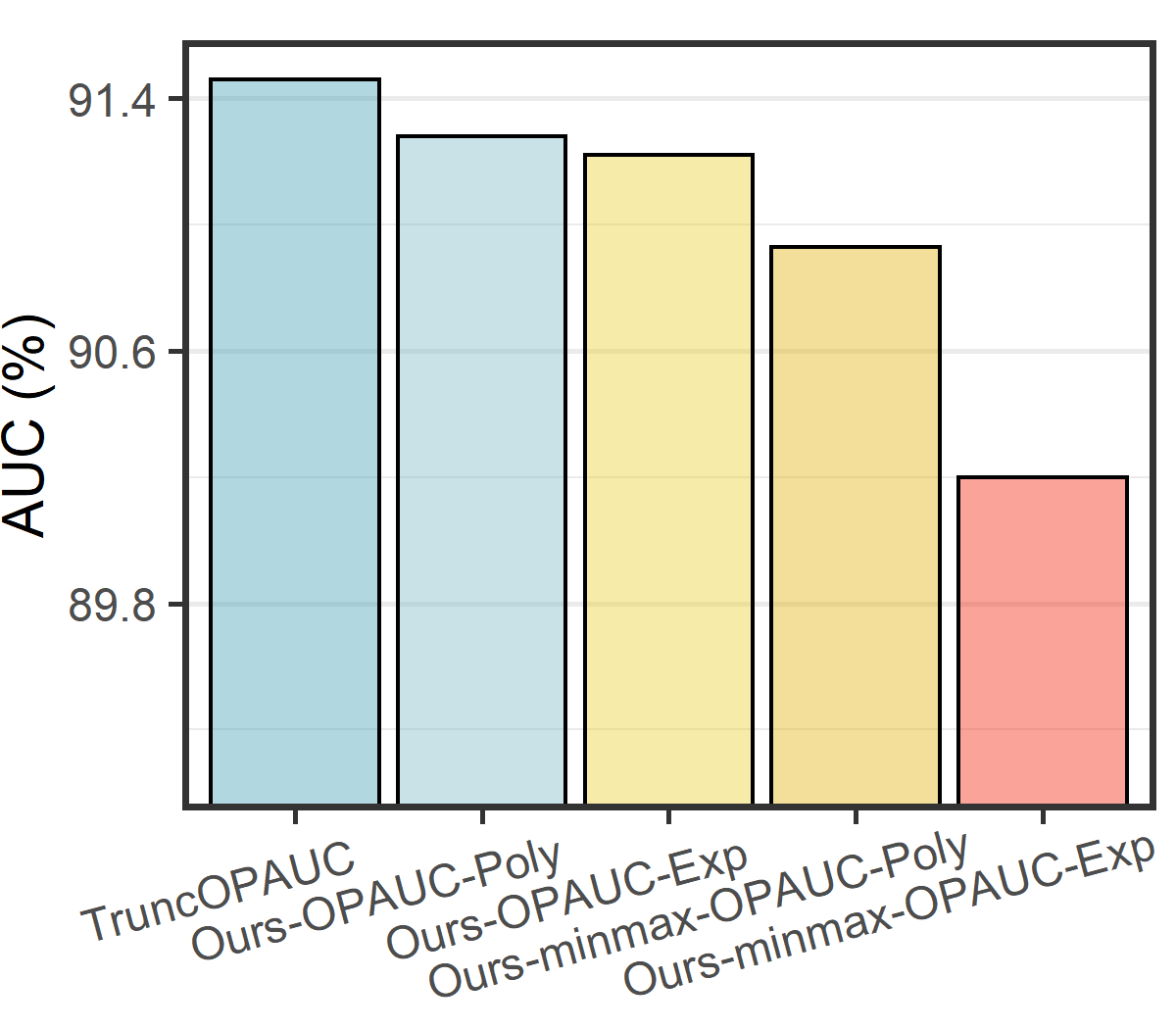} 
     }~~
     \subfigure[Comparison in terms of the OPAUC metric  on subset1, $\alpha=1, \beta=0.5$]{
       \includegraphics[width=0.3\textwidth]{./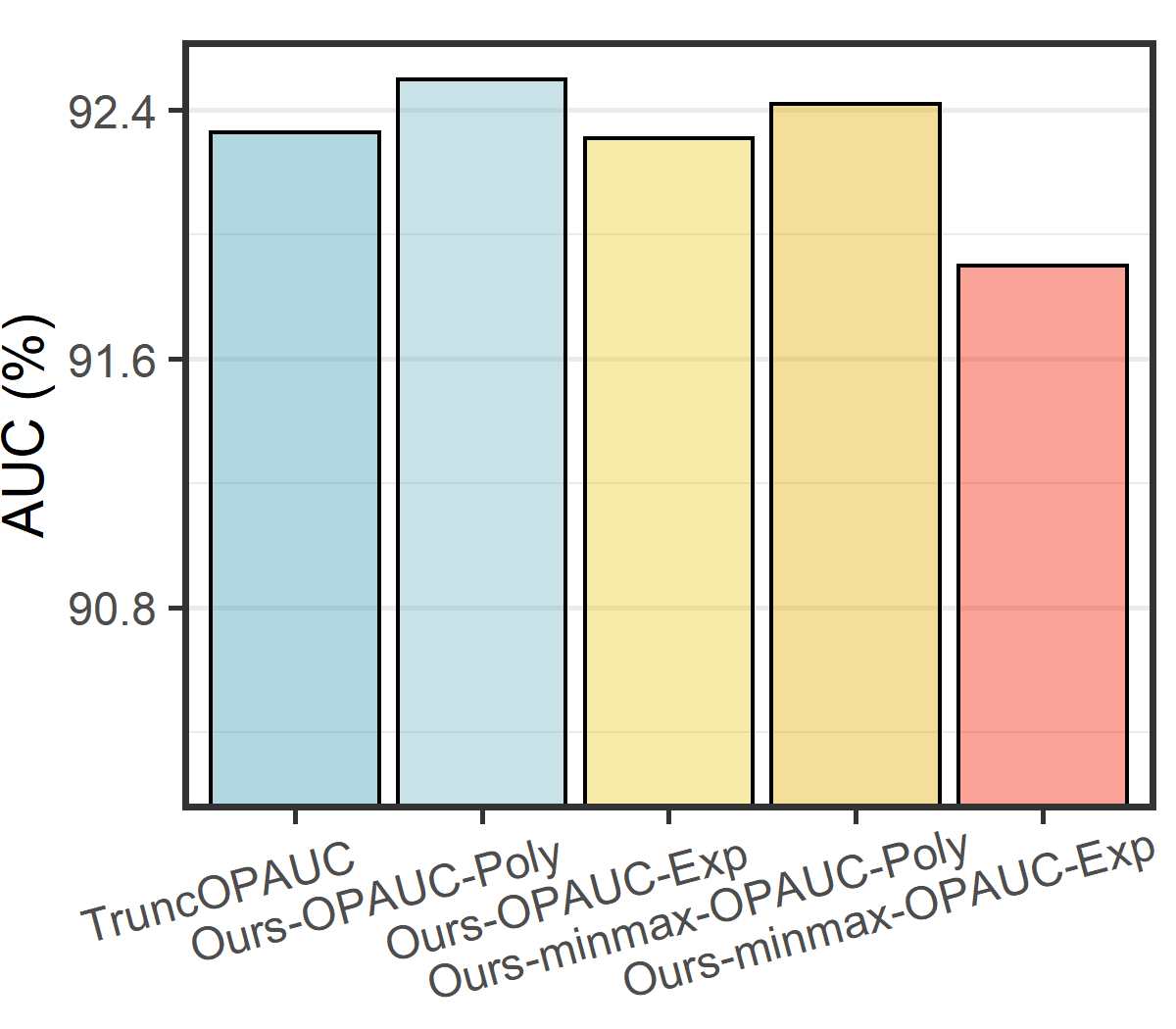} 
     }

     \subfigure[Comparison in terms of the OPAUC metric  on subset2, $\alpha=1, \beta=0.3$]{
       \includegraphics[width=0.3\textwidth]{./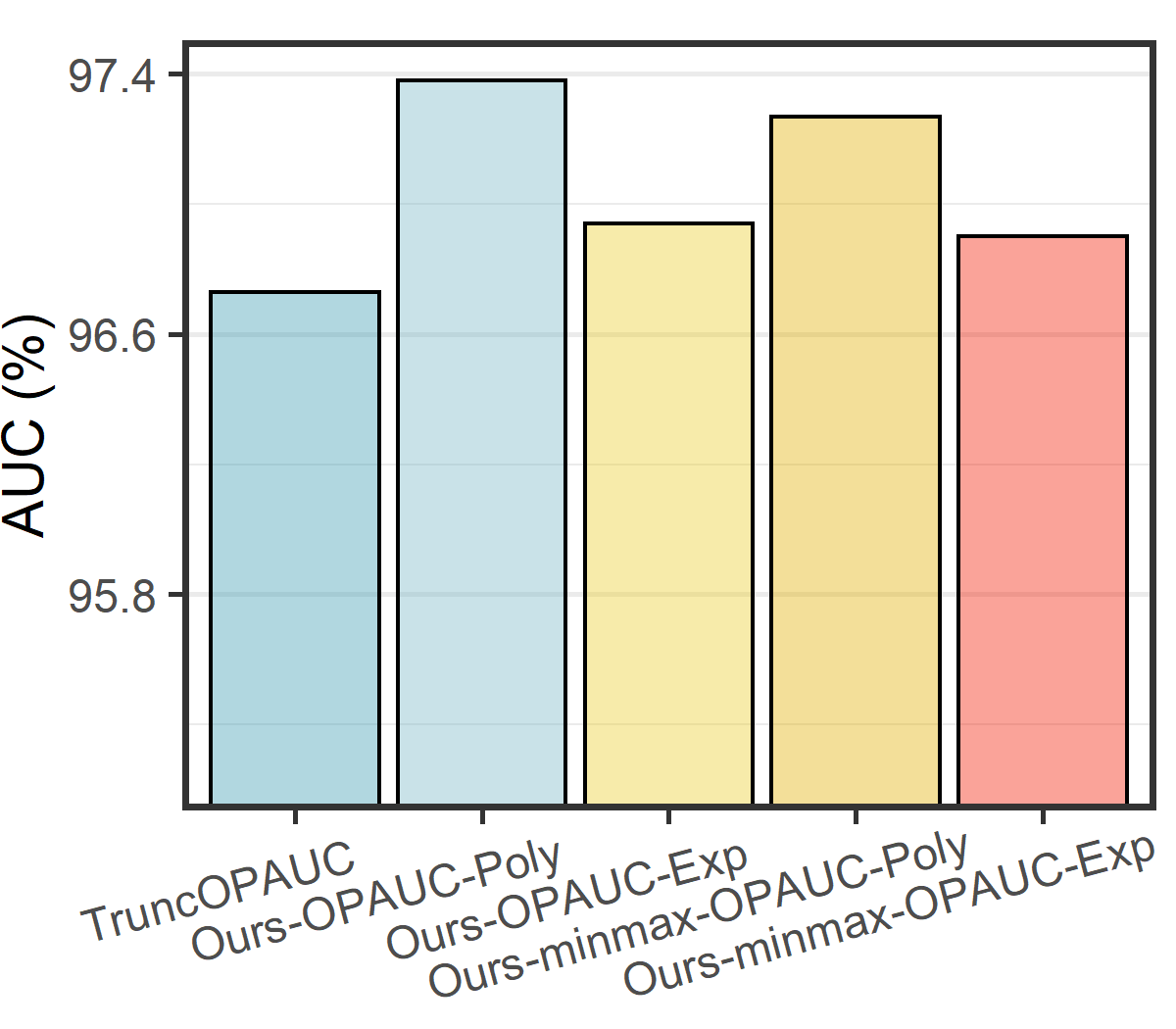} 
     }~~
     \subfigure[Comparison in terms of the OPAUC metric  on subset2, $\alpha=1, \beta=0.4$]{
       \includegraphics[width=0.3\textwidth]{./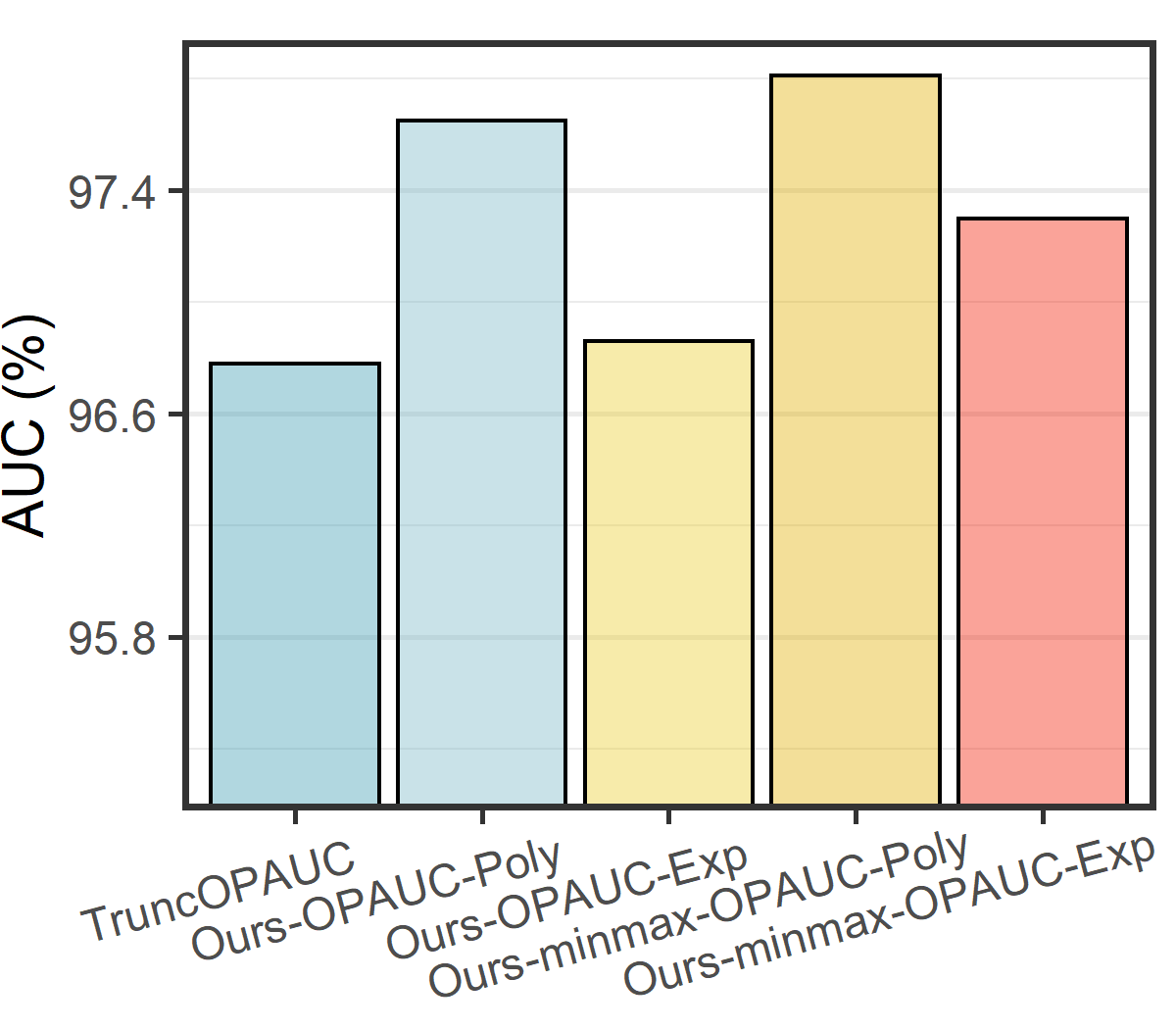} 
     }~~
     \subfigure[Comparison in terms of the OPAUC metric  on subset2, $\alpha=1, \beta=0.5$]{
       \includegraphics[width=0.3\textwidth]{./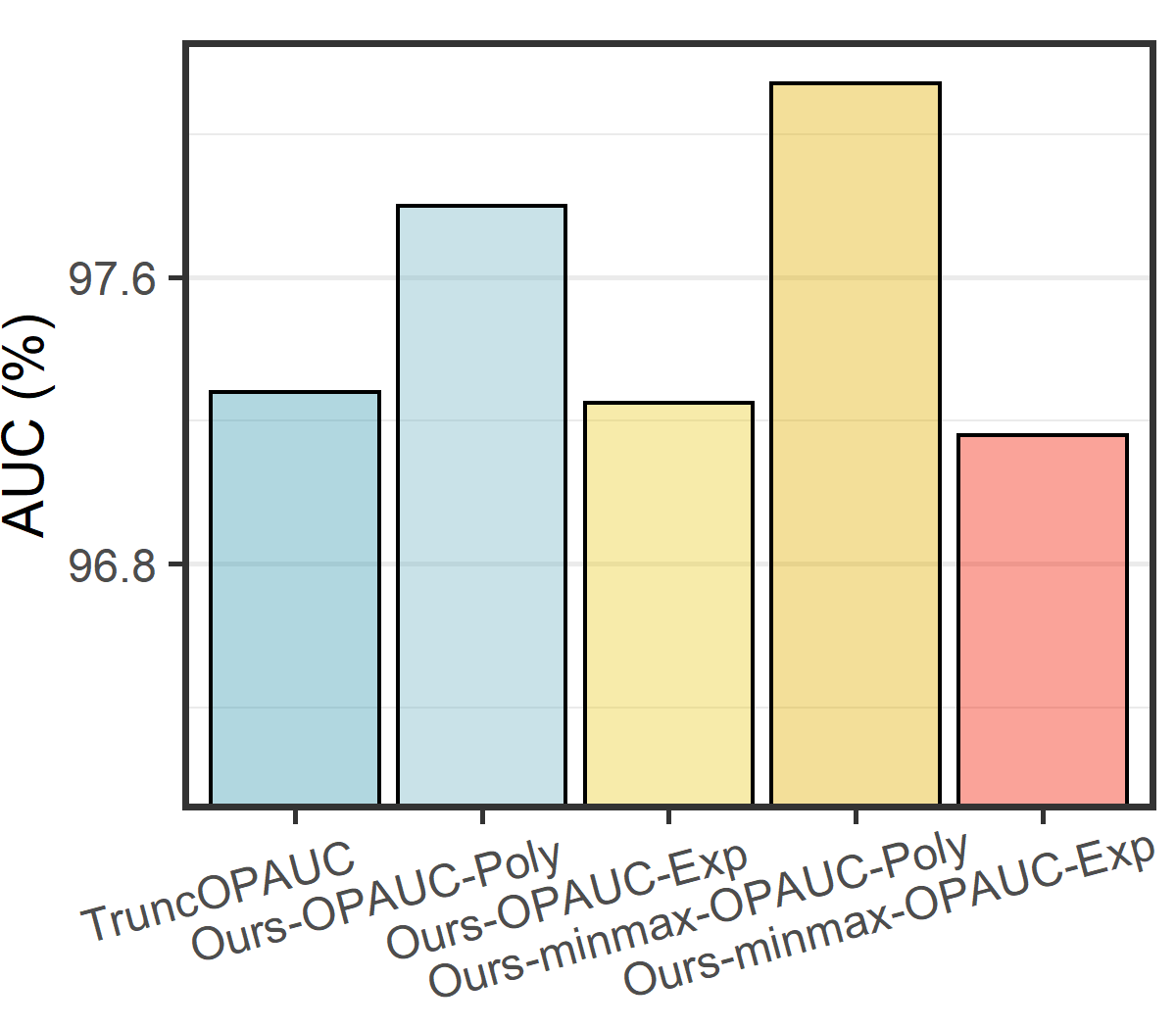} 
     }

     \subfigure[Comparison in terms of the OPAUC metric  on subset3, $\alpha=1, \beta=0.3$]{
       \includegraphics[width=0.3\textwidth]{./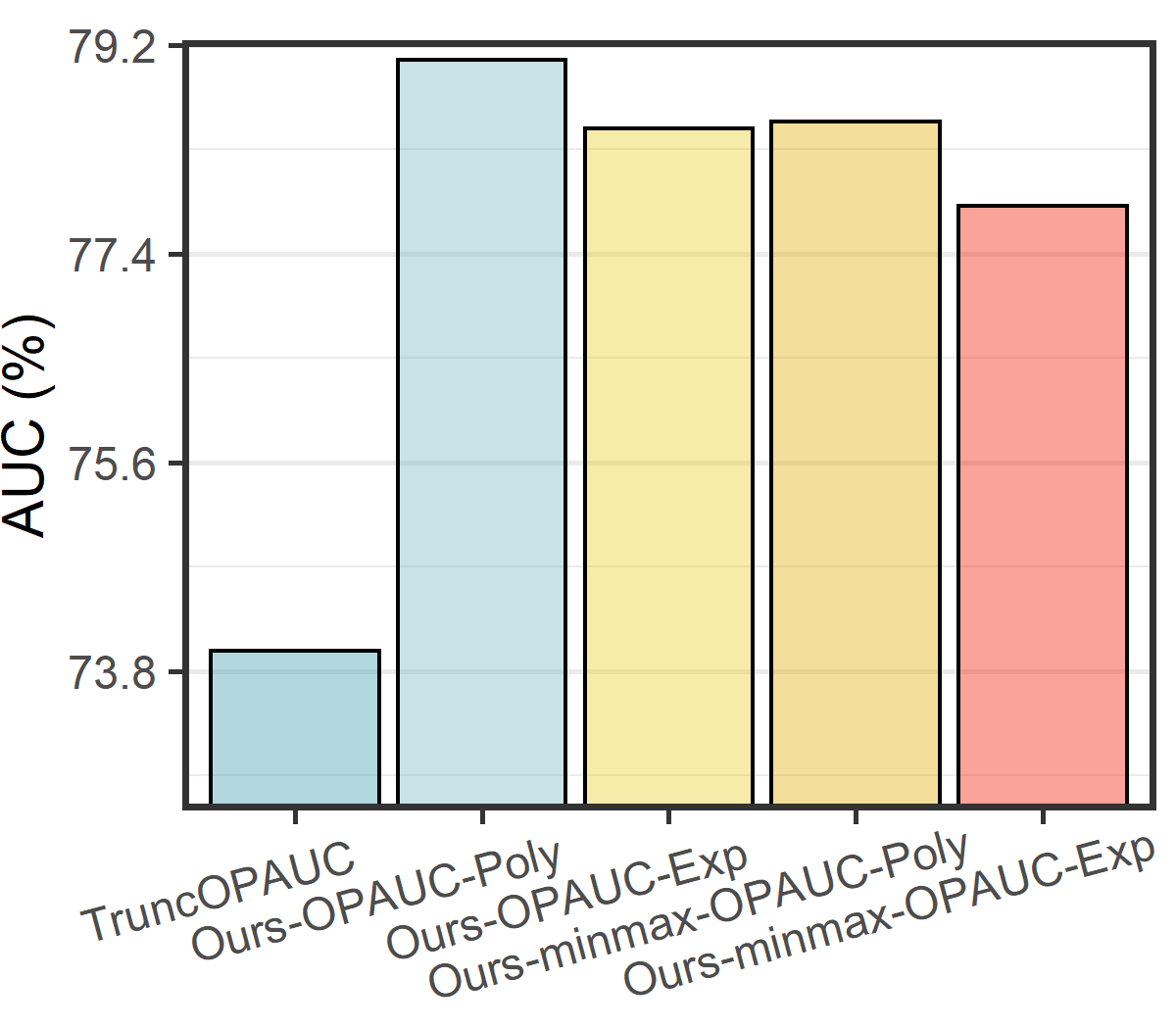} 
     }~~
     \subfigure[Comparison in terms of the OPAUC metric  on subset3, $\alpha=1, \beta=0.4$]{
       \includegraphics[width=0.3\textwidth]{./fig/OPAUC/subset3_0.3.png} 
     }~~
     \subfigure[Comparison in terms of the OPAUC metric  on subset3, $\alpha=1, \beta=0.5$]{
       \includegraphics[width=0.3\textwidth]{./fig/OPAUC/subset3_0.3.png} 
     }
     \caption{\label{fig:lossop} Comparison in terms of the OPAUC metric on CIFAR-100-LT.}
 \end{figure*}

 \begin{figure*}[h]  
   \centering
    
     \subfigure[Comparison for different Loss on subset1, $\alpha=0.3, \beta=0.3$]{
       \includegraphics[width=0.3\textwidth]{./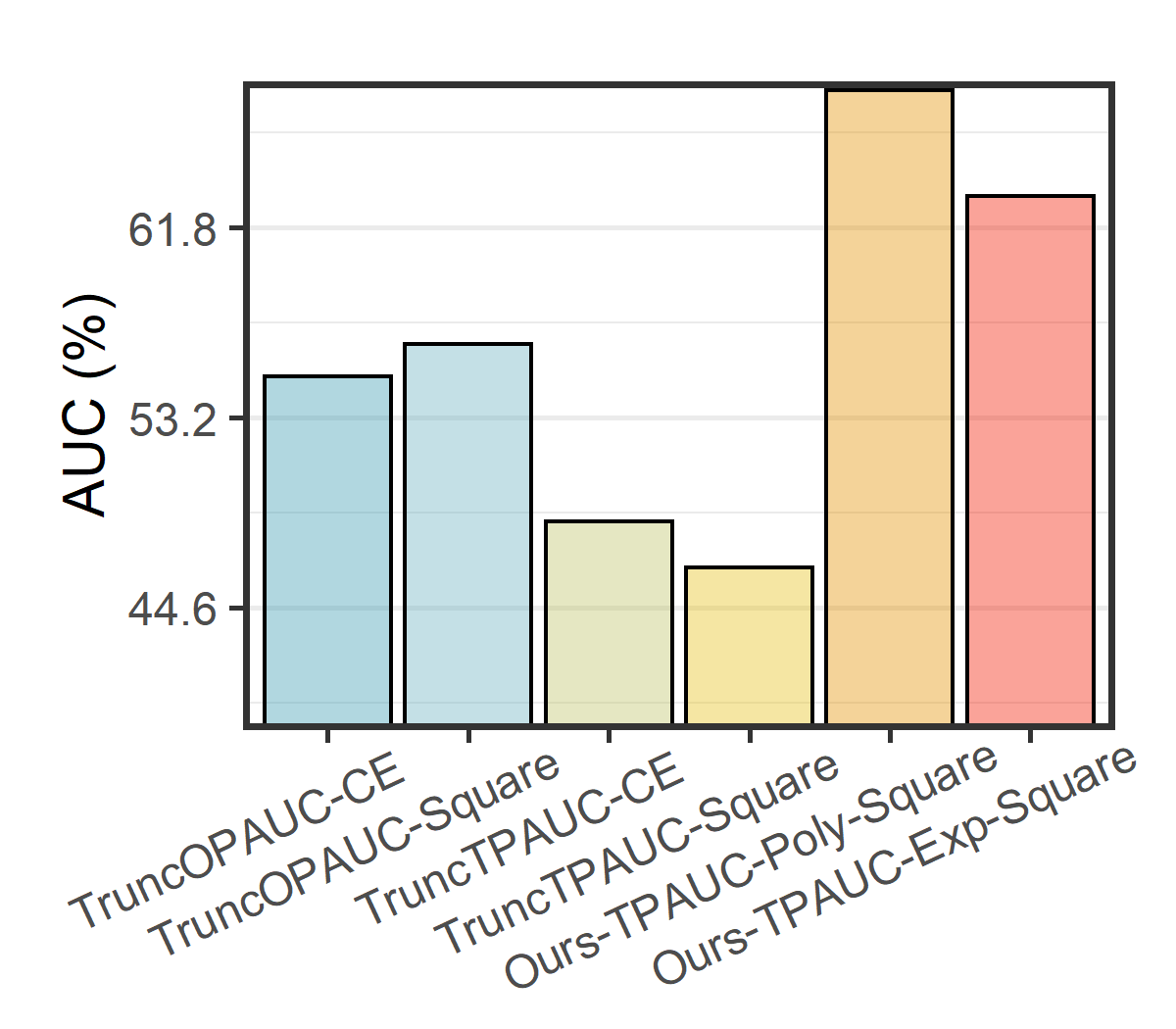} 
     }~~
     \subfigure[Comparison for different Loss  on subset1, $\alpha=0.4, \beta=0.4$]{
       \includegraphics[width=0.3\textwidth]{./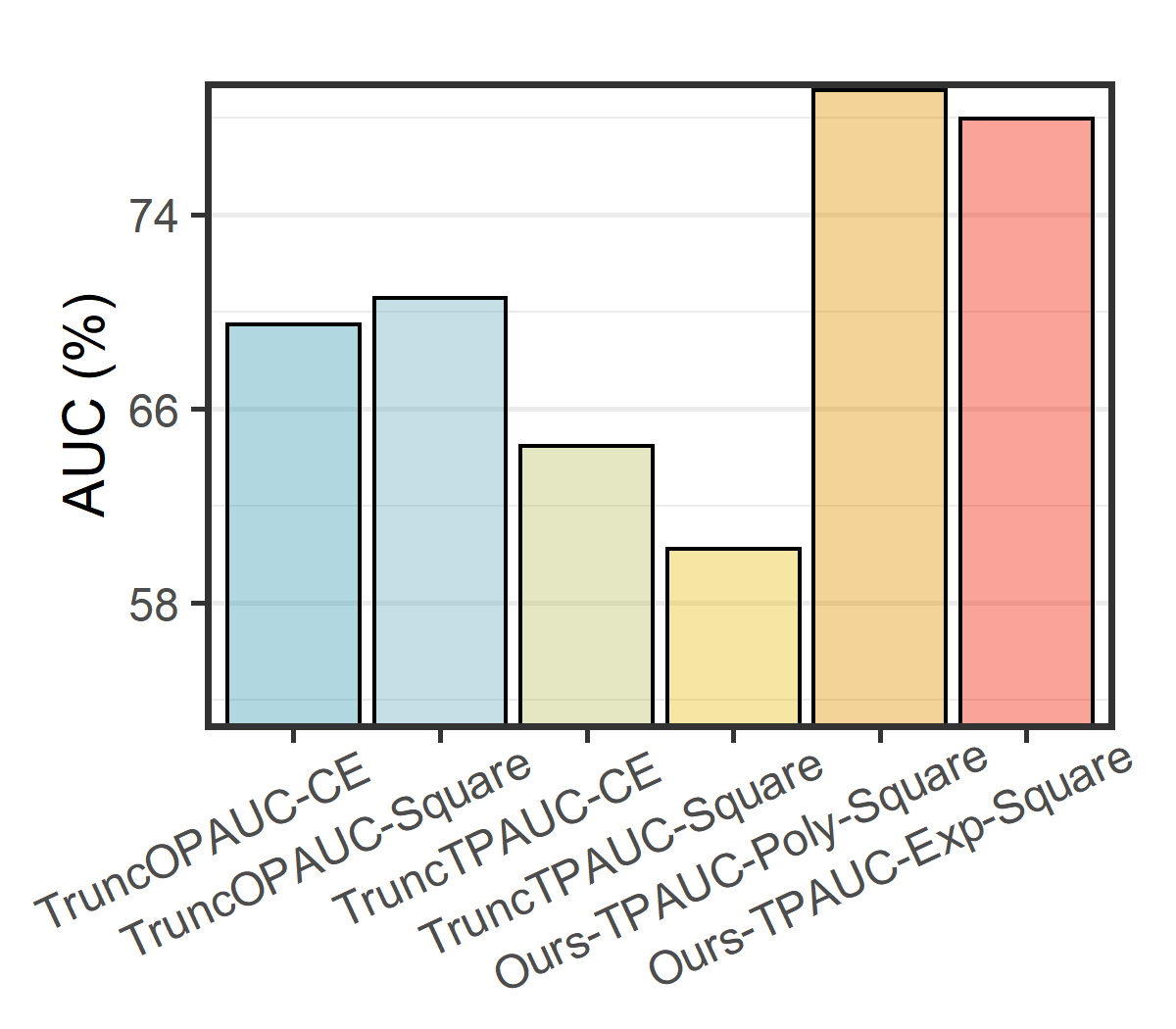} 
     }~~
     \subfigure[Comparison for different Loss  on subset1, $\alpha=0.5, \beta=0.5$]{
       \includegraphics[width=0.3\textwidth]{./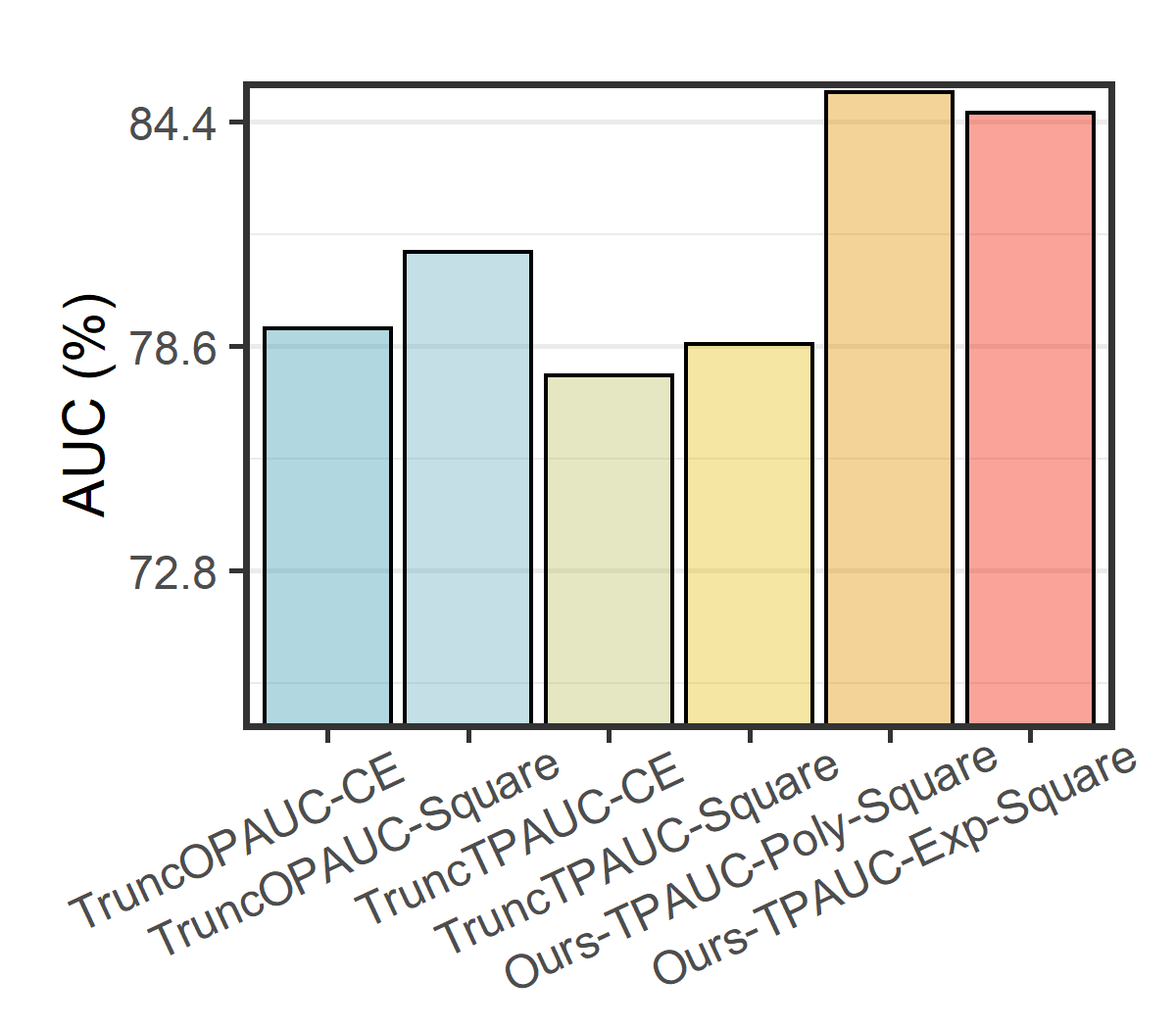} 
     }

     \subfigure[Comparison for different Loss  on subset2, $\alpha=0.3, \beta=0.3$]{
       \includegraphics[width=0.3\textwidth]{./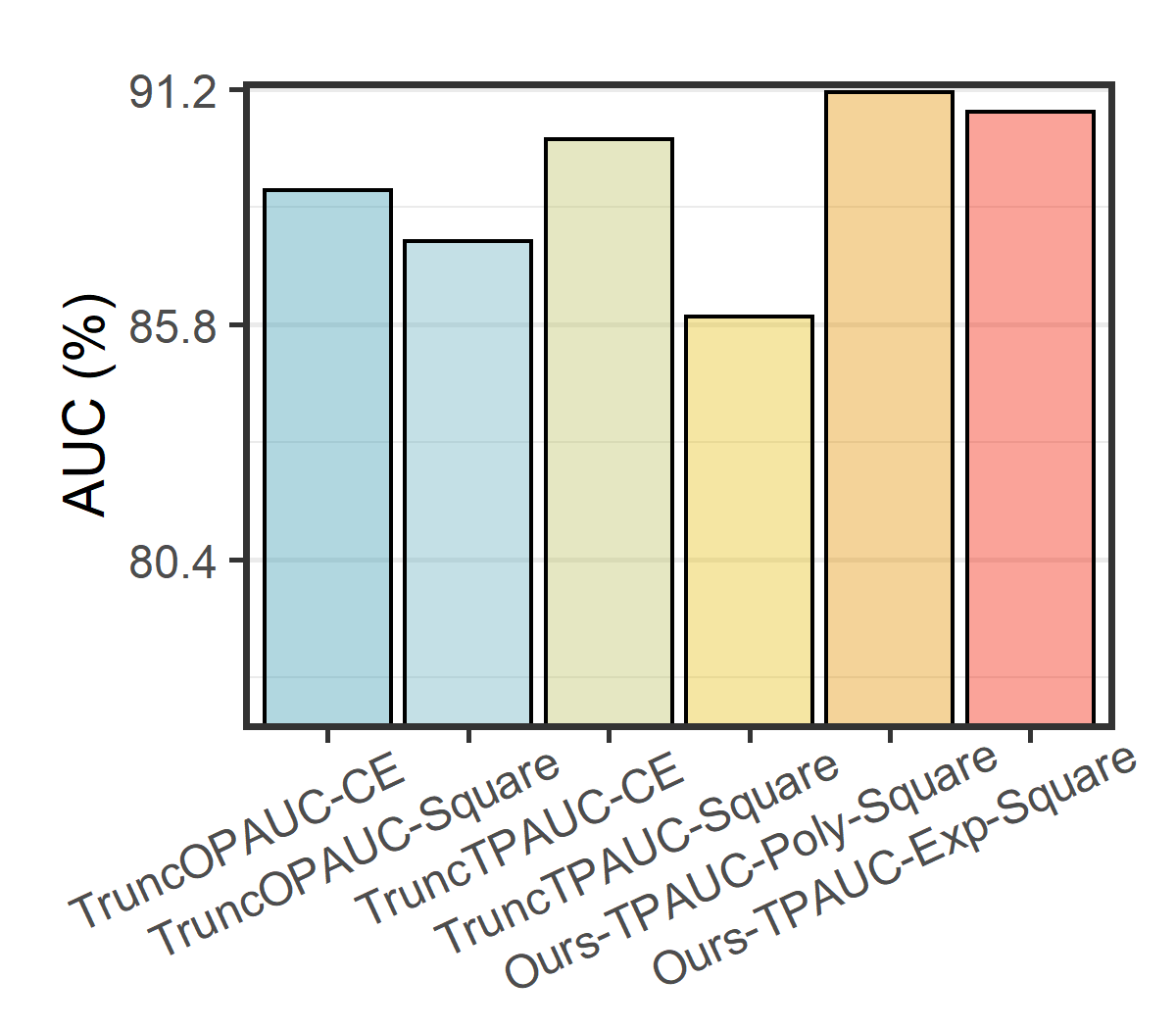} 
     }~~
     \subfigure[Comparison for different Loss  on subset2, $\alpha=0.4, \beta=0.4$]{
       \includegraphics[width=0.3\textwidth]{./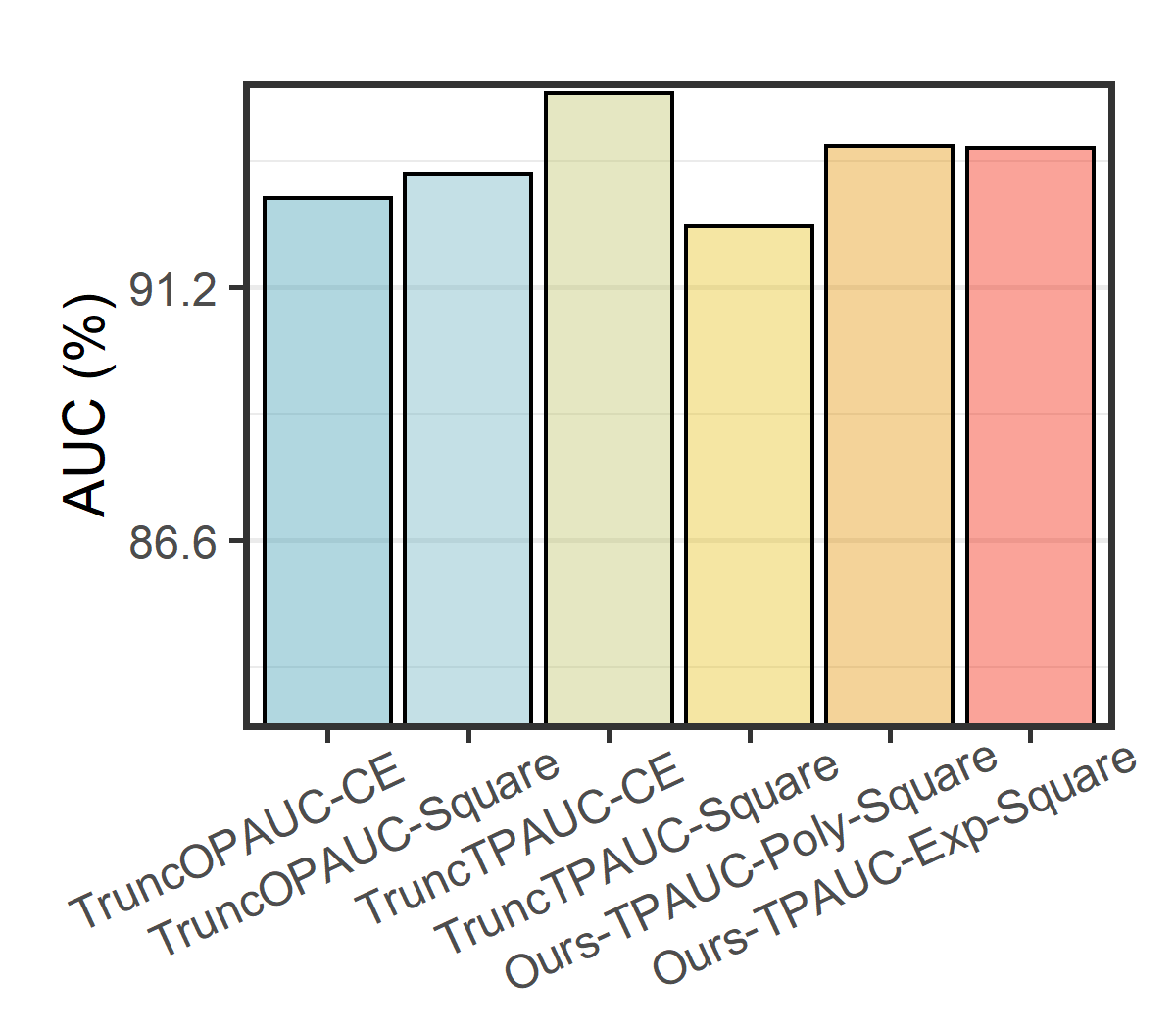} 
     }~~
     \subfigure[Comparison for different Loss  on subset2, $\alpha=0.5, \beta=0.5$]{
       \includegraphics[width=0.3\textwidth]{./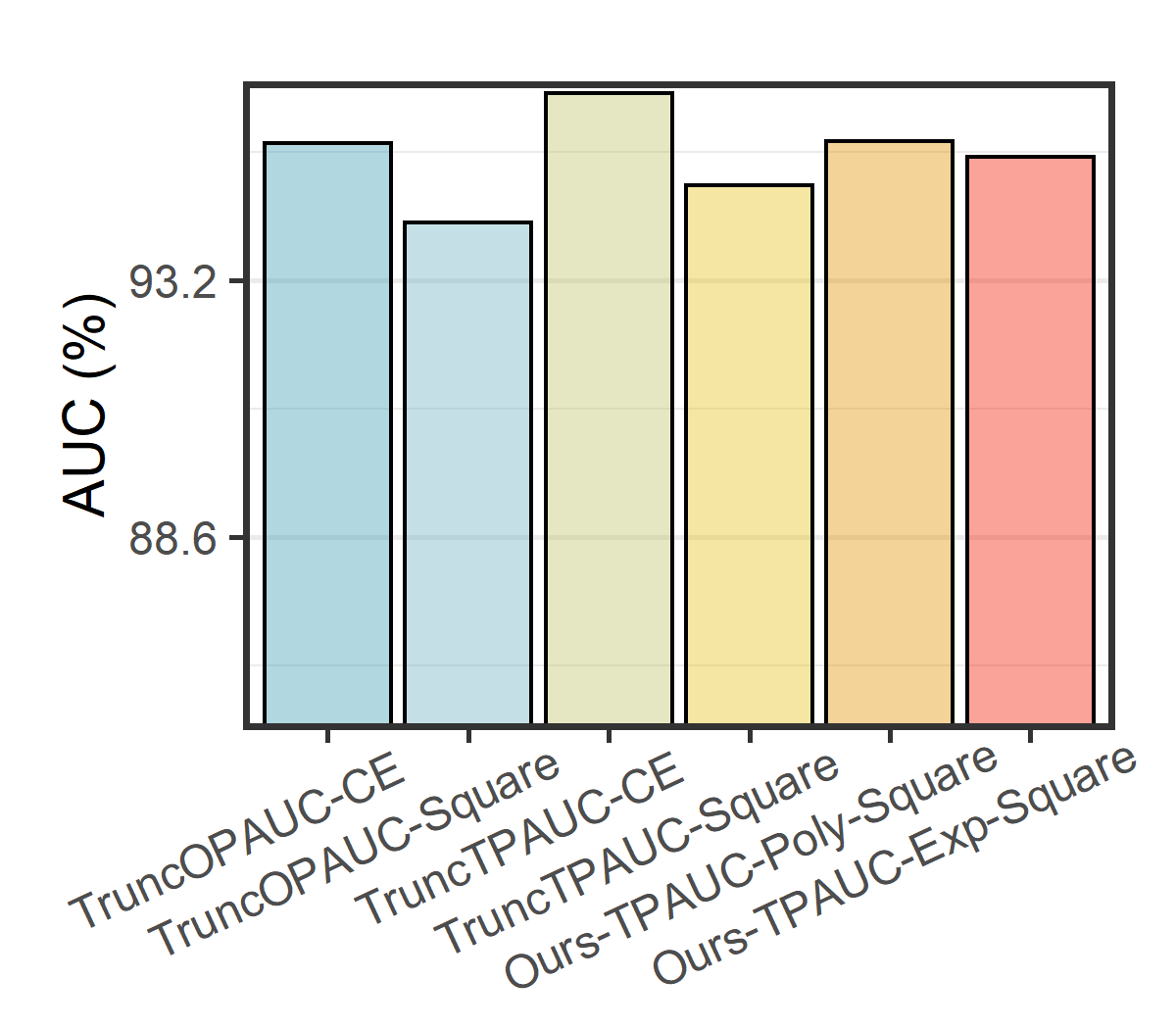} 
     }

     \subfigure[Comparison for different Loss  on subset3, $\alpha=0.3, \beta=0.3$]{
       \includegraphics[width=0.3\textwidth]{./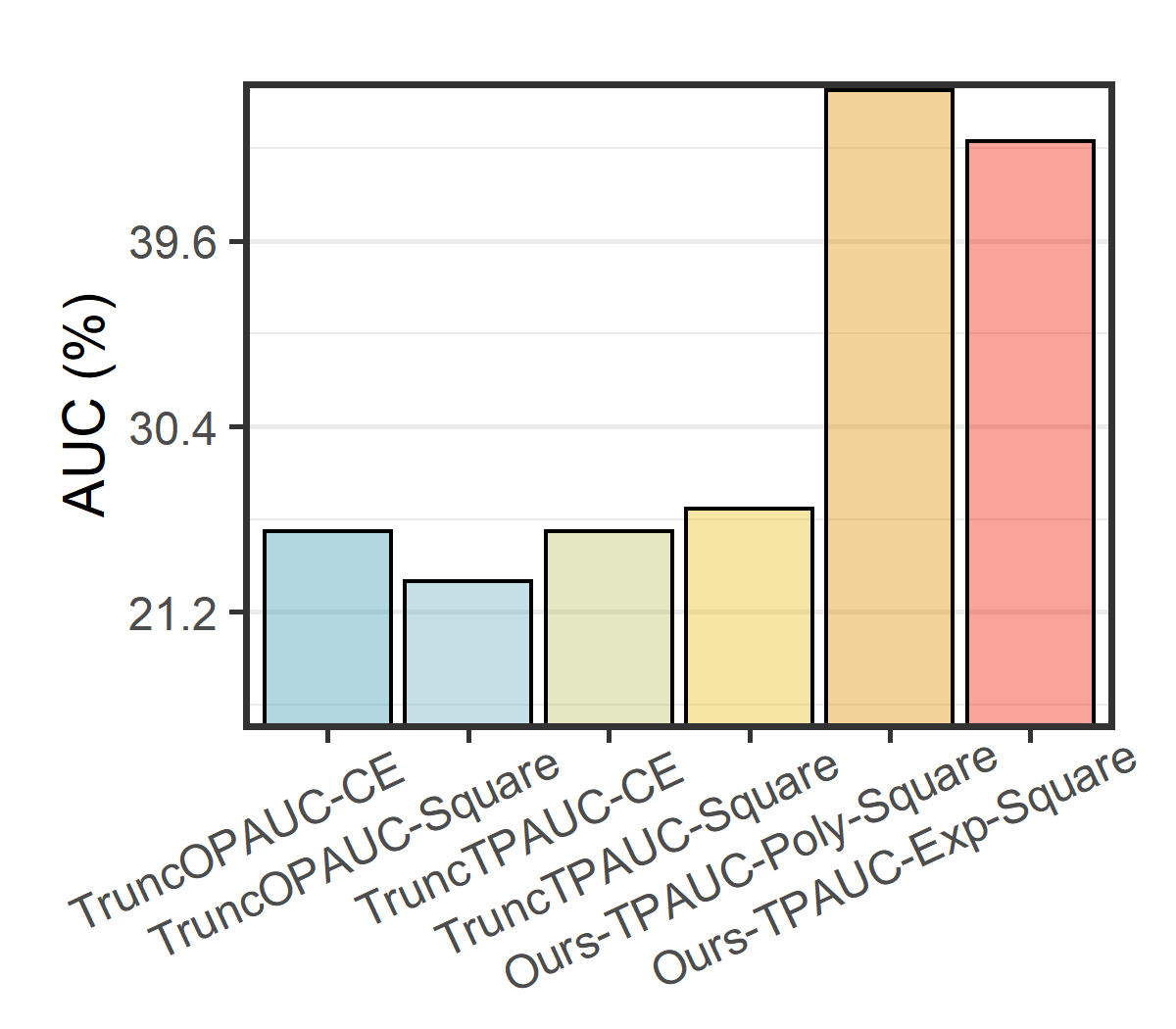} 
     }~~
     \subfigure[Comparison for different Loss  on subset3, $\alpha=0.4, \beta=0.4$]{
       \includegraphics[width=0.3\textwidth]{./fig/CEloss/subset3_0.3.png} 
     }~~
     \subfigure[Comparison for different Loss  on subset3, $\alpha=0.5, \beta=0.5$]{
       \includegraphics[width=0.3\textwidth]{./fig/CEloss/subset3_0.3.png} 
     }
     \caption{\label{fig:lossce} Comparison for CE and Square Loss on CIFAR-100-LT w.r.t. TPAUC.}
 \end{figure*}

 \subsection{Effect of Warm-up Phase}
 In this subsection, we show the sensitivity analysis results w.r.t  the warm-up iterations. First, we conduct the experiments for our method. The results are shown in Fig.\ref{fig:warm1} and Fig.\ref{fig:warm2}. It is easy to see that including a warm-up phase could improve the performance. Second, we show that warm-up is equally important for truncated methods.  Recall that in the main paper, we adopt the square loss as the surrogate loss, which is necessary for an efficient implementation. To be fair, we here implement truncated methods for CE loss, which is a more popular choice for classification problems with two new competitors: \texttt{CE-TruncOPAUC} and \texttt{CE-TruncTPAUC}. They perform Truncation-based OPAUC/TPAUC optimization by replacing $\ell_{sq}$ with a standard CE loss. The results are shown in Tab.\ref{tab:warmup}, where \texttt{w/o warm-up} implies warm-up phase is not employed.   The performances drop dramatically when the warm-up phase is removed. Sometimes for small $\alpha,\beta$ and hard subsets, we even observe a 0 TPAUC on the test set.  A probable reason is that, without warm-up, the models will only look at the hard examples all the time, which will incur severe overfitting, especially when the model used here is a large ResNet. Moreover, the performance drop is a general issue for both CE and square loss. This is also reasonable since the overfitting problem only has a weak connection with the landscape of $\ell$ employed. Finally, we again compare the Trunc-baselines with our proposed method, the results are shown in Fig.\ref{fig:lossce}. It can be seen that our proposed method outperform all the competitors in most cases.

 \begin{table}[htbp]
   \centering
   \caption{Warm-up Effect on CIFAR-100-LT}
   \begin{tabular}{c|c|c|ccc|ccc|ccc}
     \toprule
     \multirow{2}[4]{*}{dataset} & \multirow{2}[4]{*}{methods} & \multirow{2}[4]{*}{surrogate} & \multicolumn{3}{c|}{Subset1} & \multicolumn{3}{c|}{Subset2} & \multicolumn{3}{c}{Subset3} \\
 \cmidrule{4-12}          &       &       & 0.3   & 0.4   & 0.5   & 0.3   & 0.4   & 0.5   & 0.3   & 0.4   & 0.5 \\
     \midrule
     \multirow{8}[8]{*}{CIFAR-100-LT} & \multirow{2}[2]{*}{TruncOPAUC w/o warm-up} & CE    & 0.00  & 69.48  & 76.21  & 81.51  & 90.30  & 92.41  & 0.00  & 0.00  & 27.21  \\
           &       & Square & 13.23  & 77.84  & 86.44  & 83.61  & 89.83  & 92.69  & 0.00  & 4.87  & 38.03  \\
 \cmidrule{2-12}          & \multirow{2}[2]{*}{TruncOPAUC} & CE    & 55.06  & 69.48  & 79.07  & 88.89  & 92.82  & 95.66  & 25.20  & 43.74  & 58.26  \\
           &       & Square & 56.51  & 70.56  & 81.03  & 87.72  & 93.26  & 94.23  & 22.75  & 51.30 & 66.78  \\
 \cmidrule{2-12}          & \multirow{2}[2]{*}{TruncTPAUC w/o warm-up} & CE    & 0.00  & 5.95  & 20.95  & 29.97  & 52.88  & 93.34  & 0.00  & 0.00  & 4.67  \\
           &       & Square & 0.02  & 12.53  & 30.24  & 46.31  & 64.16  & 94.69  & 0.00  & 1.71  & 14.67  \\
 \cmidrule{2-12}          & \multirow{2}[2]{*}{TruncTPAUC} & CE    & 48.49  & 64.46  & 77.84  & 90.05  & 94.73  & 96.55  & 25.20  & 44.19  & 58.04  \\
           &       & Square & 46.42  & 60.23  & 78.66  & 85.99  & 92.31  & 94.90 & 26.34  & 54.69  & 66.77  \\
     \bottomrule
     \end{tabular}%

   \label{tab:warmup}%
 \end{table}%

 \subsection{Additional Effect of $\gamma$} 
 \noindent \textbf{heterogeneous Setting.} In this setting, we will use $\gamma_+$ for positive example weight and $\gamma_-$ for the negative weight with $\gamma_+ \neq \gamma_-$. Note that we fix $E_k = 35$, and the other experimental configurations stay the same as Sec.\ref{exp_detail}. In Fig.\ref{fig:gammapoly2} and Fig.\ref{fig:gammaexp2}, we show the sensitivity in terms of $\gamma$ for \texttt{Poly} and \texttt{Exp} weighting, respectively. Obviously, the performance is sensitive toward the heterogeneous setting. In this sense, adopting heterogeneous weights will generally result in better performance. However, for the sake of fairness, we only report  the overall performance when $\gamma_+ = \gamma_-$.

 \noindent\textbf{Optimal $\gamma$ for different metrics.} In this setting, we will study the relationship between the optimal $\gamma$ and the TPAUC to be optimized. Specifically, $\alpha$ is \textbf{fixed} to $0.2$ for all experiments, and we search the optimal $\gamma$ for $\beta = 0.4, 0.6, 0.8$, respectively. To ensure that the surrogate weighting function is concave, for \texttt{Poly}, the search space  of $\frac{1}{1-\gamma}$ is set to $\{0.0001, 0.0003, 0.0005, \\ 0.001, 0.003, 0.005, 0.01, 0.03, 0.05, 0.1\}$, and the search space for $\gamma$ is set to $\{5, 10, 15, 20, 25, 35, 100, 350, 1000\}$ for \texttt{Exp}. All experiments are conducted on three subsets of CIFAR-100-LT. The optimal $\gamma$ are shown in Fig.\ref{fig:gammabeta}. Note that, for \texttt{Poly} weighting, we will abuse $\gamma = \frac{1}{1-\gamma}$ in rest of our discussion. We have the following observation: For \texttt{Poly}, the optimal $\gamma$ decreases as $\beta$ increases, while for \texttt{Exp}, the optimal $\gamma$ increases as $\beta$ increases. This is consistent with the theoretical property of the weighting function since a smaller $\beta$ requires stronger attention to the bottom positive instances. This means we need the weighting function to provide increasingly sharp weight differences between hard and easy positive examples. To do this, for texttt{Poly}, we need a smaller $\gamma$ for a larger $\beta$; while for \texttt{Exp}, we need a larger $\gamma$ for a larger $\beta$. This is exactly the trend shown in the figure.

 \begin{figure*}[h]  
   \centering
    

     \subfigure[Relationship Between $\gamma$ and $\beta$ for \texttt{Exp} Weighting]{
       \includegraphics[width=0.48\textwidth]{./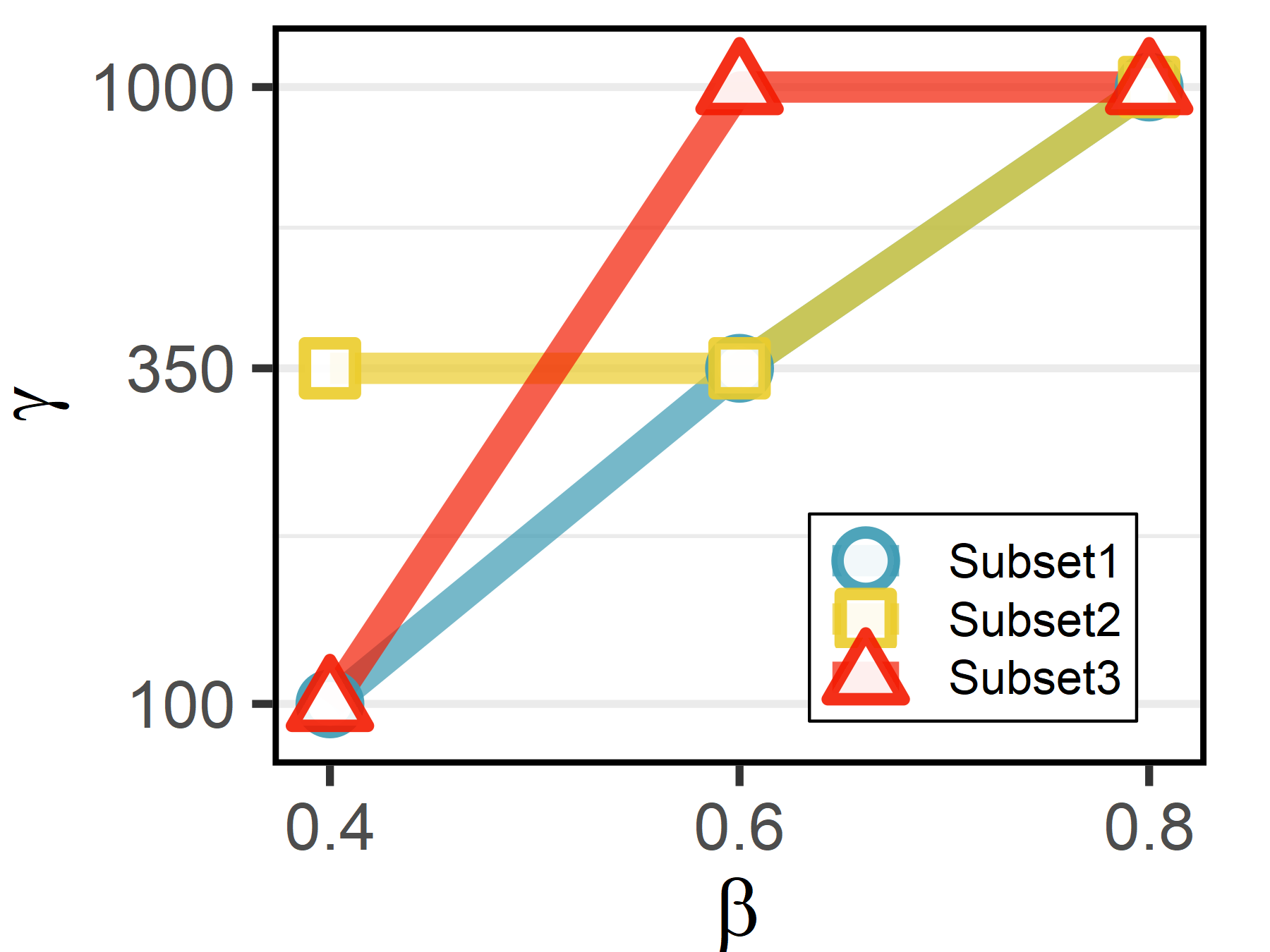} 
     }~~
     \subfigure[Relationship Between $\gamma$ and $\beta$ for \texttt{Poly} Weighting]{
       \includegraphics[width=0.48\textwidth]{./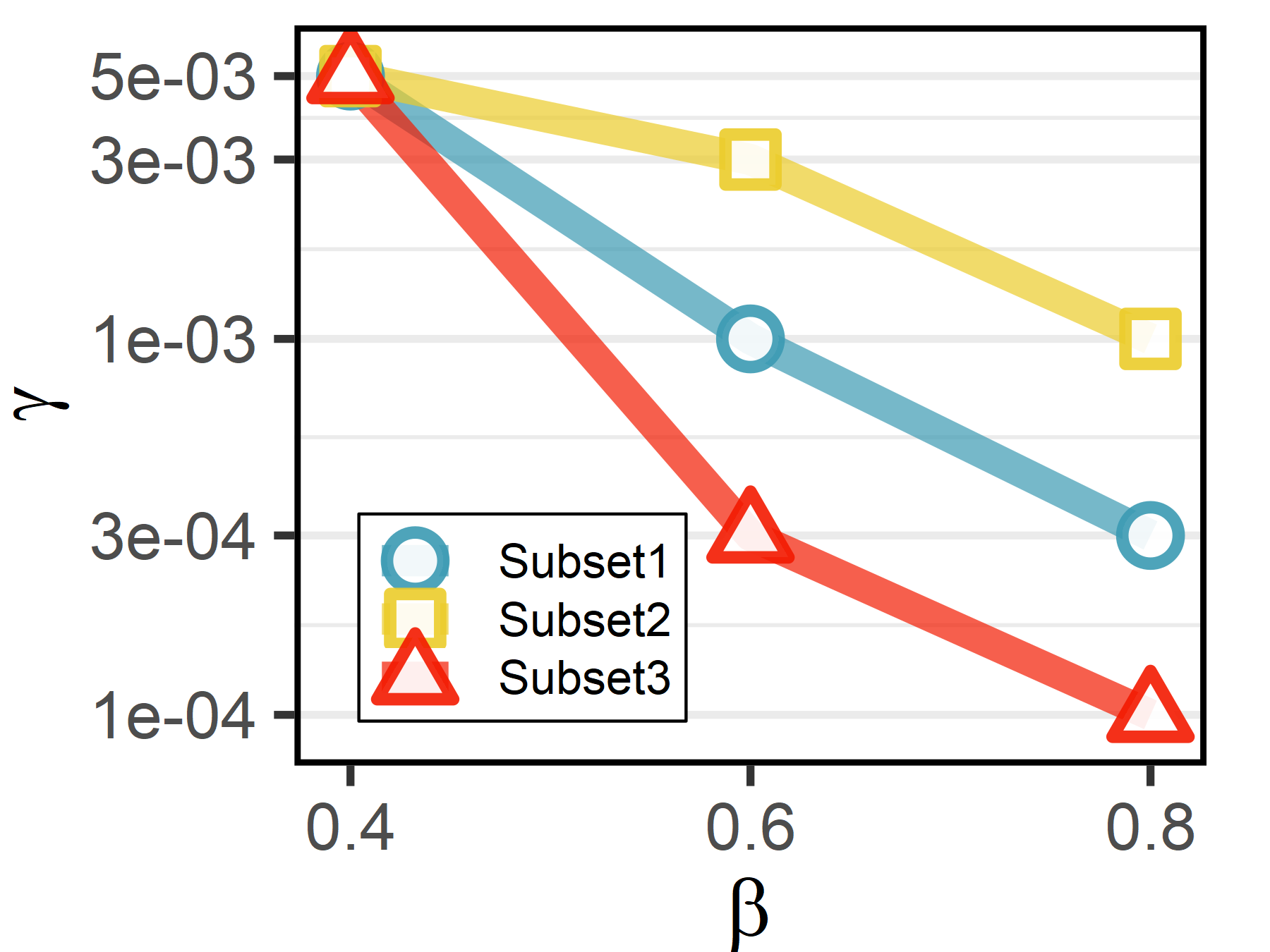} 
     }
     \caption{\label{fig:gammabeta}Optimal $\gamma$ for different $\beta$ with fixed $\alpha=0.2$ on CIFAR-100-LT. Here $(\gamma - 1)^{-1}$ are reported.}
   \end{figure*}


 \begin{figure*}[h]  
   \centering
    
     \subfigure[Effect of $\gamma_+,\gamma_-$ on subset1, $\alpha=0.3, \beta=0.3$]{
       \includegraphics[width=0.3\textwidth]{./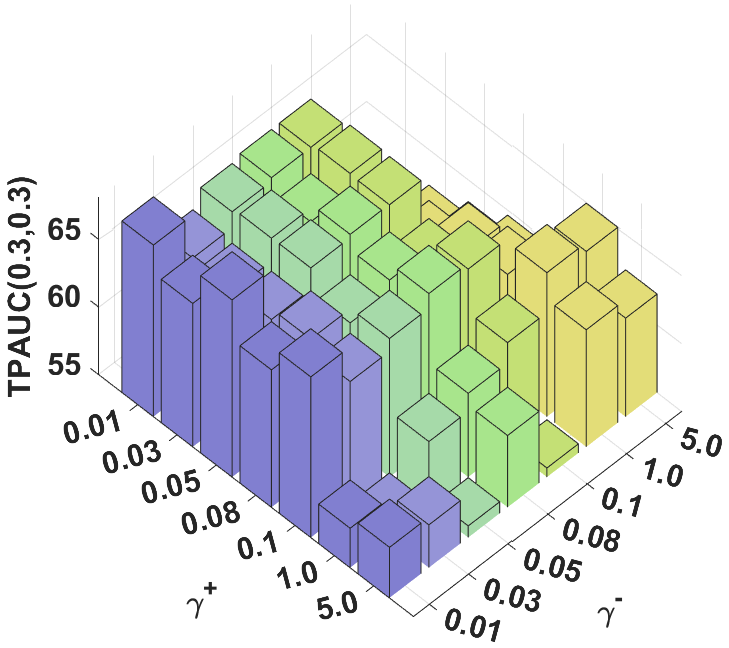} 
     }~~
     \subfigure[Effect of $\gamma_+,\gamma_-$ on subset1, $\alpha=0.4, \beta=0.4$]{
       \includegraphics[width=0.3\textwidth]{./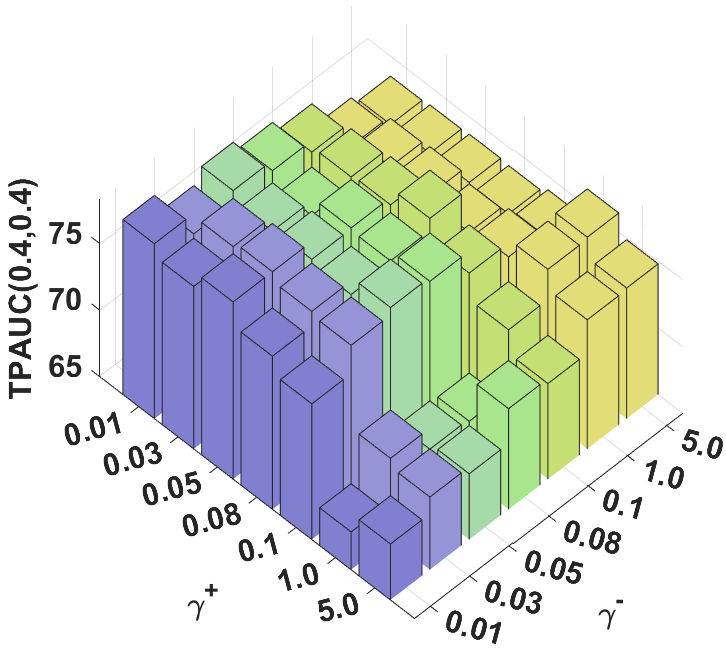} 
     }~~
     \subfigure[Effect of $\gamma_+,\gamma_-$ on subset1, $\alpha=0.5, \beta=0.5$]{
       \includegraphics[width=0.3\textwidth]{./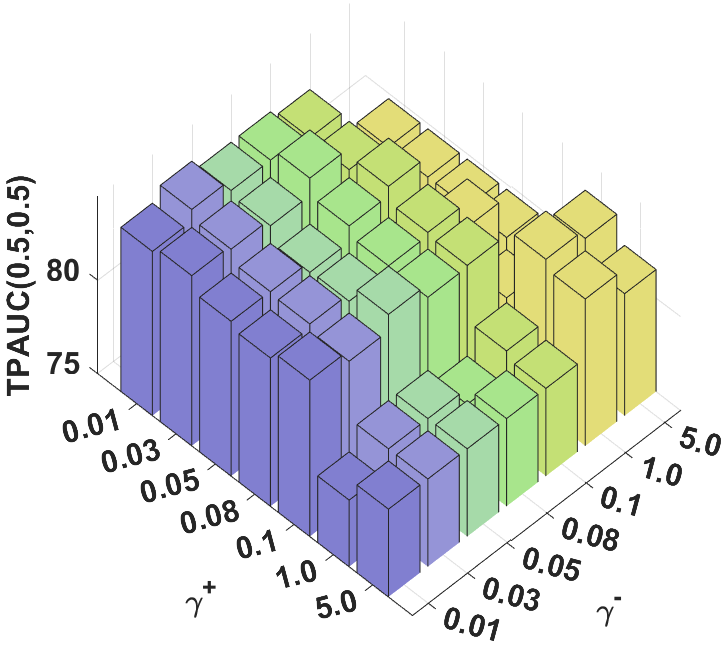} 
     }

     \subfigure[Effect of $\gamma_+,\gamma_-$ on subset2, $\alpha=0.3, \beta=0.3$]{
       \includegraphics[width=0.3\textwidth]{./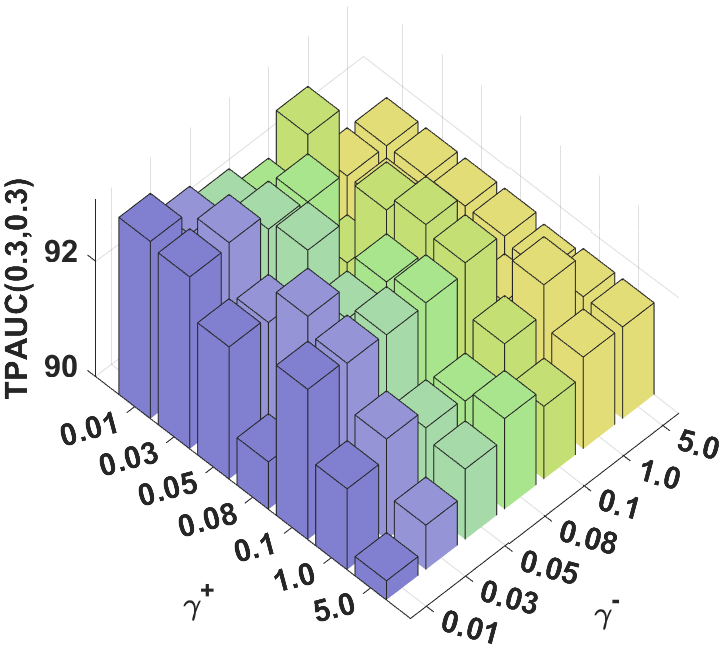} 
     }~~
     \subfigure[Effect of $\gamma_+,\gamma_-$ on subset2, $\alpha=0.4, \beta=0.4$]{
       \includegraphics[width=0.3\textwidth]{./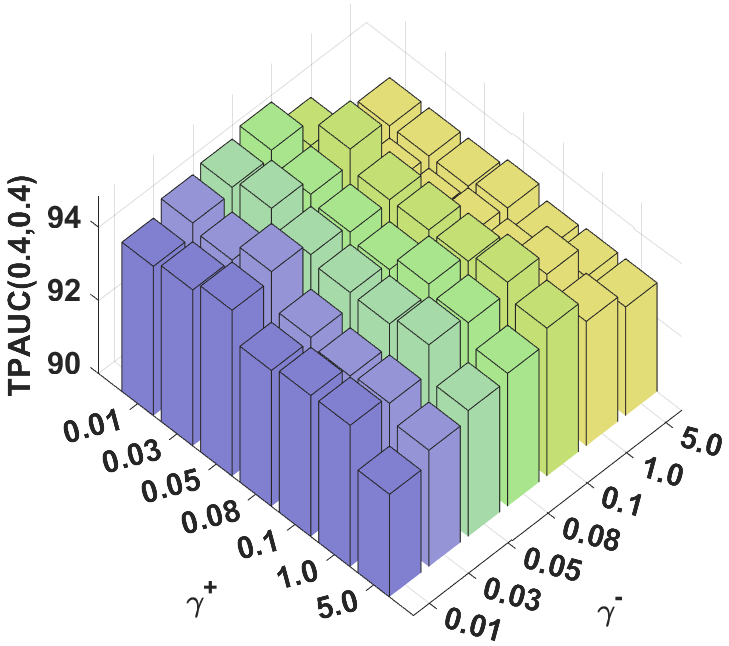} 
     }~~
     \subfigure[Effect of $\gamma_+,\gamma_-$ on subset2, $\alpha=0.5, \beta=0.5$]{
       \includegraphics[width=0.3\textwidth]{./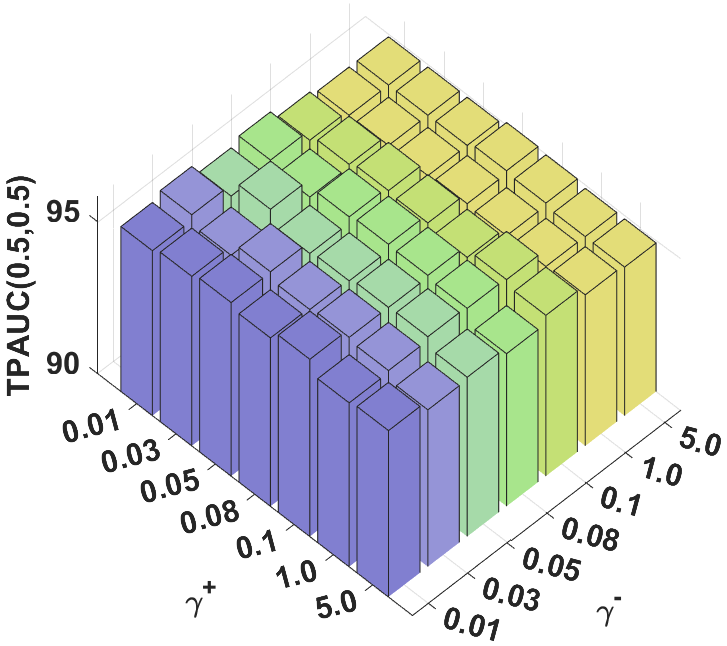} 
     }

     \subfigure[Effect of $\gamma_+,\gamma_-$ on subset3, $\alpha=0.3, \beta=0.3$]{
       \includegraphics[width=0.3\textwidth]{./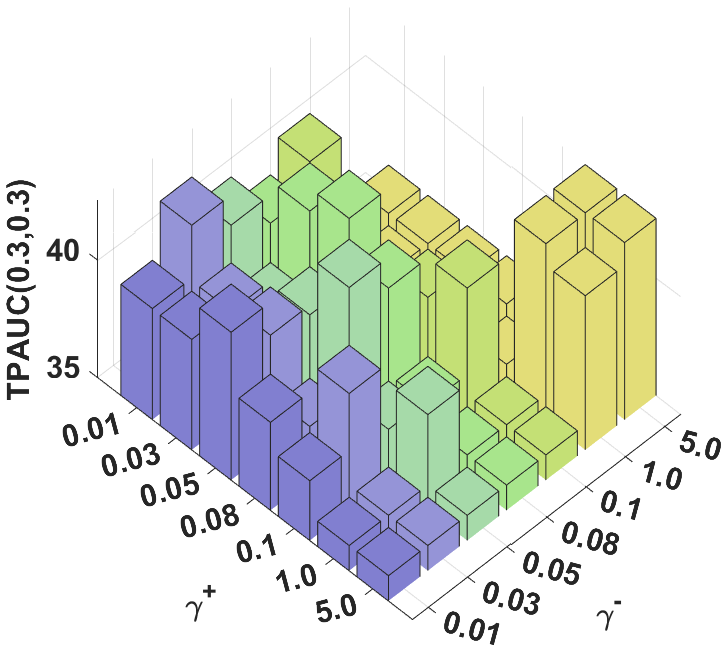} 
     }~~
     \subfigure[Effect of $\gamma_+,\gamma_-$ on subset3, $\alpha=0.4, \beta=0.4$]{
       \includegraphics[width=0.3\textwidth]{./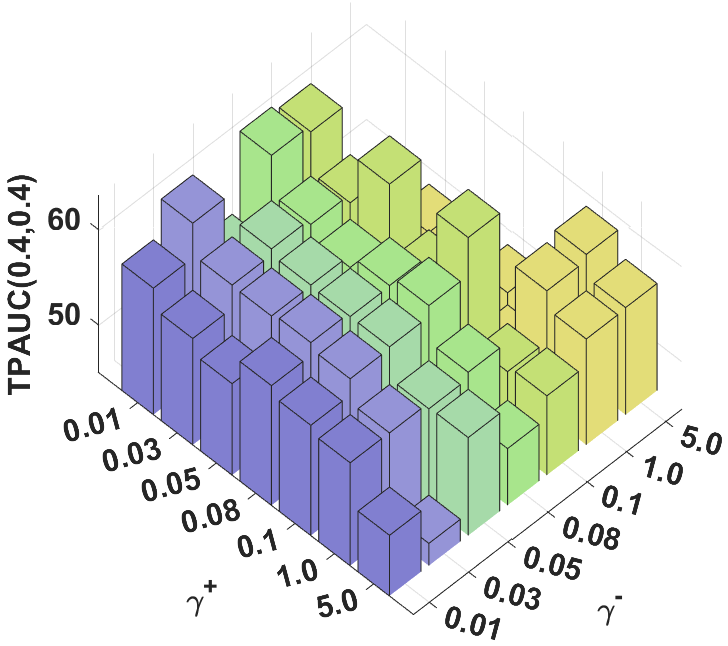} 
     }~~
     \subfigure[Effect of $\gamma_+,\gamma_-$ on subset3, $\alpha=0.5, \beta=0.5$]{
       \includegraphics[width=0.3\textwidth]{./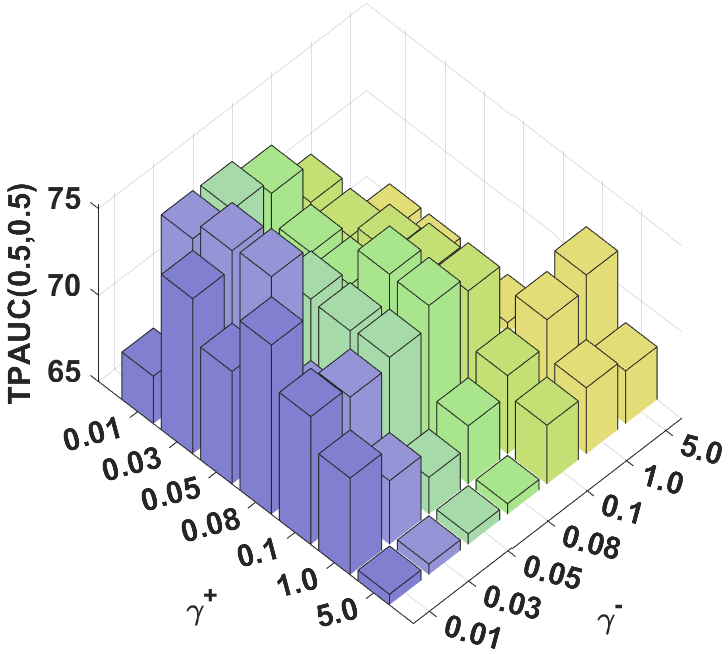} 
     }
     \caption{\label{fig:gammapoly2}Sensitivity analysis of \texttt{Poly} on $\gamma_+,\gamma_{-}$. Experiments are conducted on CIFAR-100-LT.}
 \end{figure*}

 \begin{figure*}[h]  
   \centering
    
     \subfigure[Effect of $\gamma_+,\gamma_-$ on subset1, $\alpha=0.3, \beta=0.3$]{
       \includegraphics[width=0.3\textwidth]{./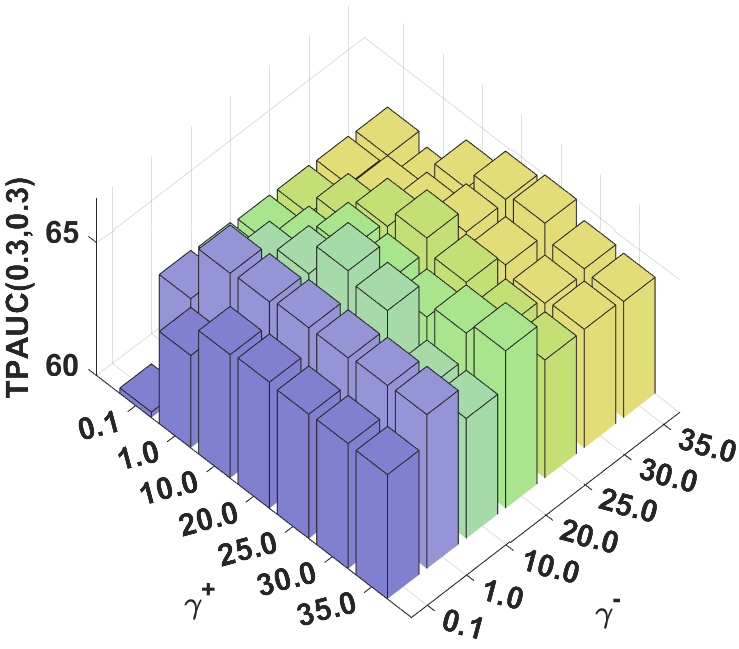} 
     }~~
     \subfigure[Effect of $\gamma_+,\gamma_-$ on subset1, $\alpha=0.4, \beta=0.4$]{
       \includegraphics[width=0.3\textwidth]{./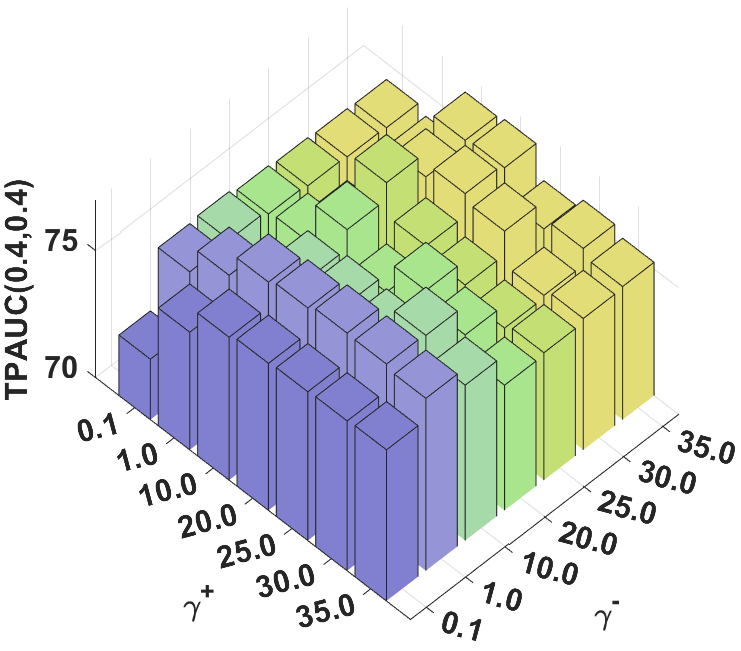} 
     }~~
     \subfigure[Effect of $\gamma_+,\gamma_-$ on subset1, $\alpha=0.5, \beta=0.5$]{
       \includegraphics[width=0.3\textwidth]{./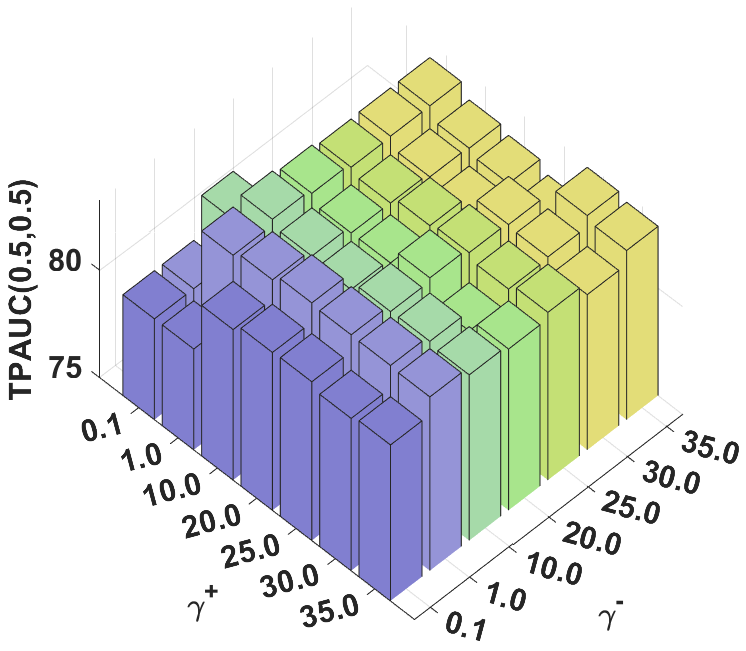} 
     }

     \subfigure[Effect of $\gamma_+,\gamma_-$ on subset2, $\alpha=0.3, \beta=0.3$]{
       \includegraphics[width=0.3\textwidth]{./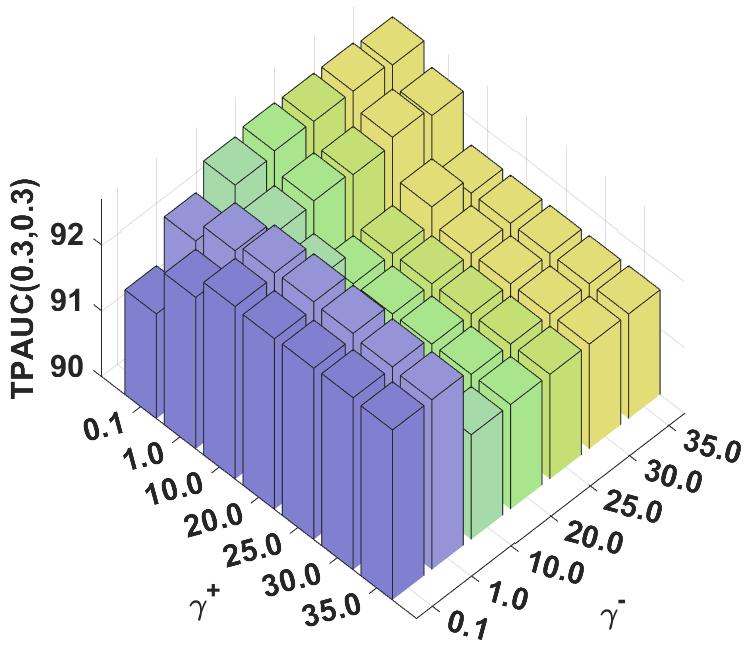} 
     }~~
     \subfigure[Effect of $\gamma_+,\gamma_-$ on subset2, $\alpha=0.4, \beta=0.4$]{
       \includegraphics[width=0.3\textwidth]{./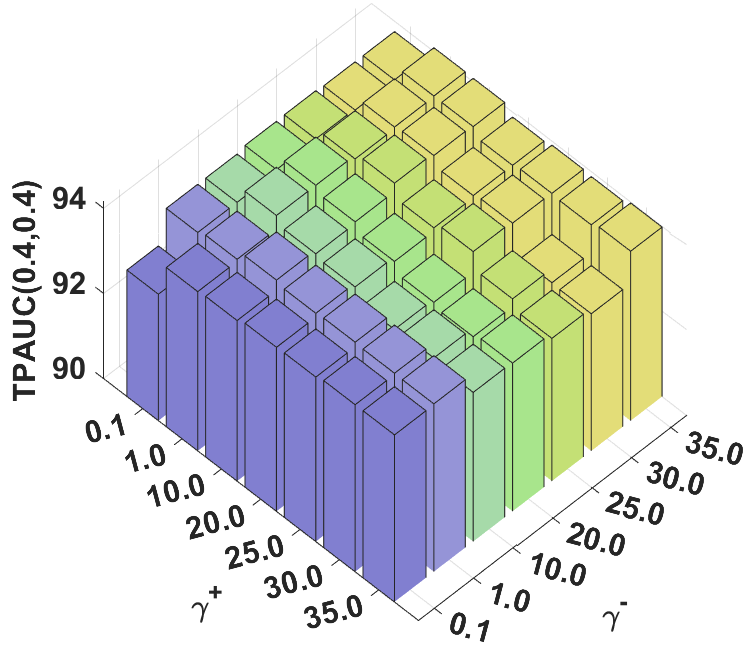} 
     }~~
     \subfigure[Effect of $\gamma_+,\gamma_-$ on subset2, $\alpha=0.5, \beta=0.5$]{
       \includegraphics[width=0.3\textwidth]{./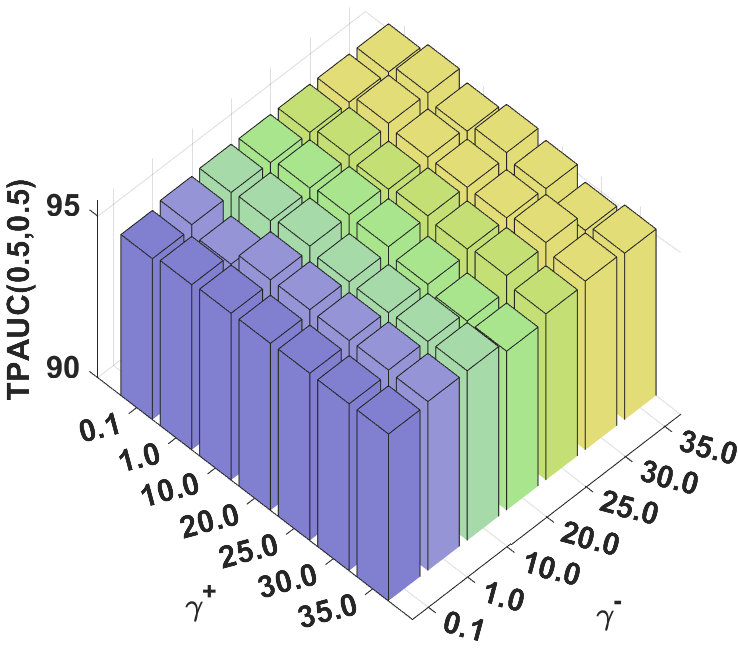} 
     }

     \subfigure[Effect of $\gamma_+,\gamma_-$ on subset3, $\alpha=0.3, \beta=0.3$]{
       \includegraphics[width=0.3\textwidth]{./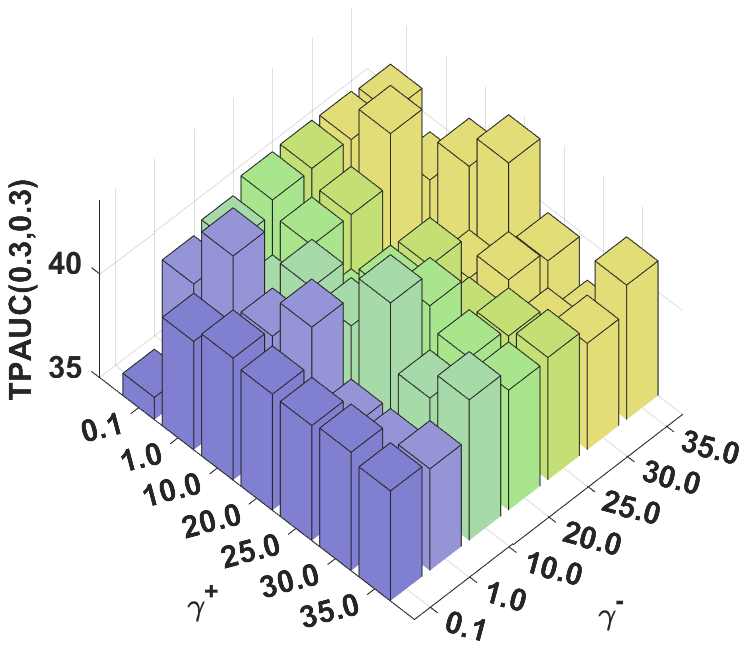} 
     }~~
     \subfigure[Effect of $\gamma_+,\gamma_-$ on subset3, $\alpha=0.4, \beta=0.4$]{
       \includegraphics[width=0.3\textwidth]{./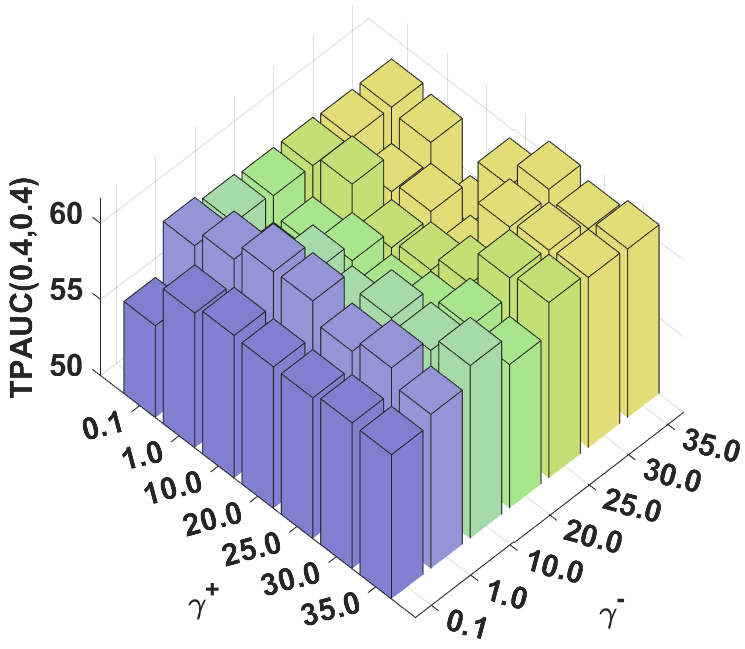} 
     }~~
     \subfigure[Effect of $\gamma_+,\gamma_-$ on subset3, $\alpha=0.5, \beta=0.5$]{
       \includegraphics[width=0.3\textwidth]{./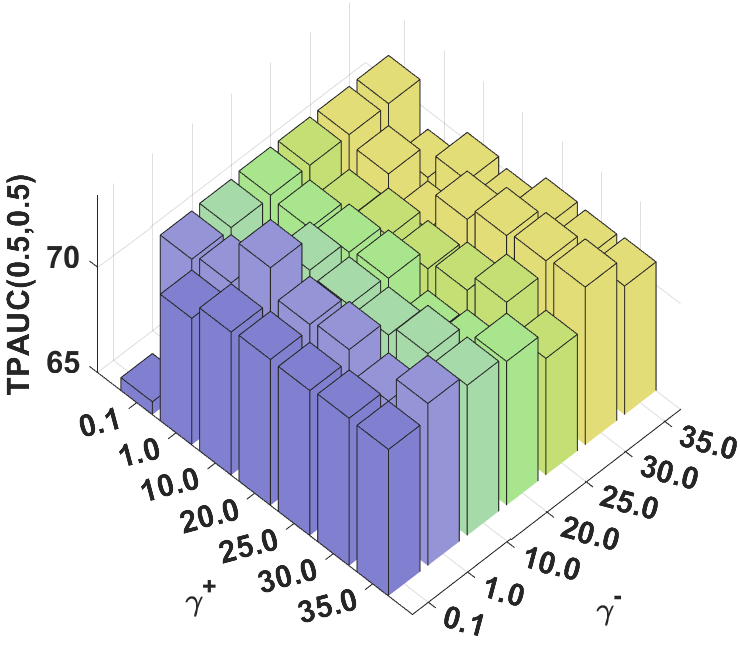} 
     }
     \caption{\label{fig:gammaexp2}Sensitivity analysis of \texttt{Poly} on $\gamma_+,\gamma_-$. Experiments are conducted on CIFAR-100-LT.}

   \end{figure*}
    
 
    

  



  



 \begin{figure*}[th]  
   \centering
  
   \subfigure[Subset1, $\alpha = 0.3, \beta = 0.3$]{
    
     \includegraphics[width=0.31\textwidth]{./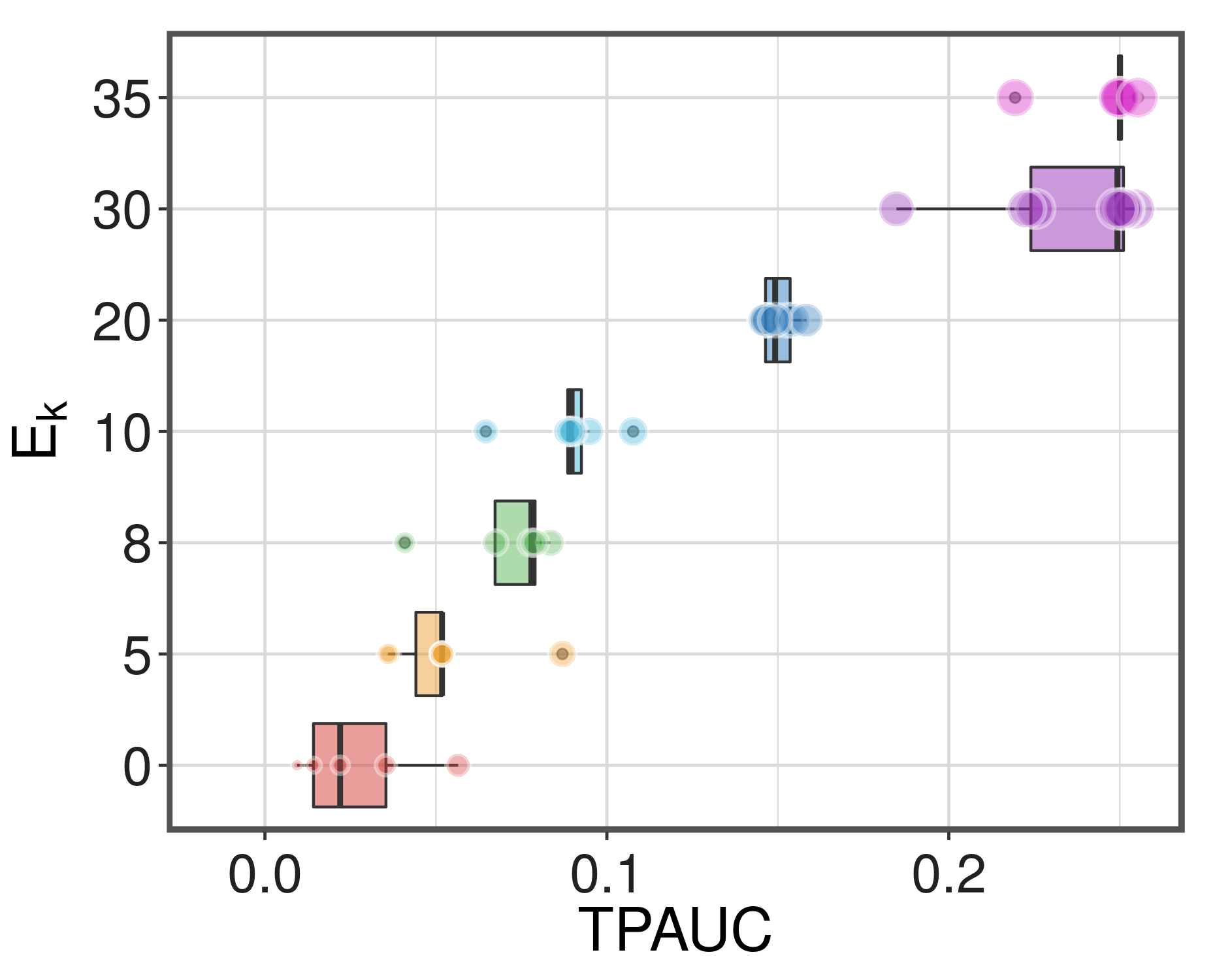} 
   }
   \subfigure[Subset1, $\alpha = 0.4, \beta = 0.4$]{
  
     \includegraphics[width=0.31\textwidth]{./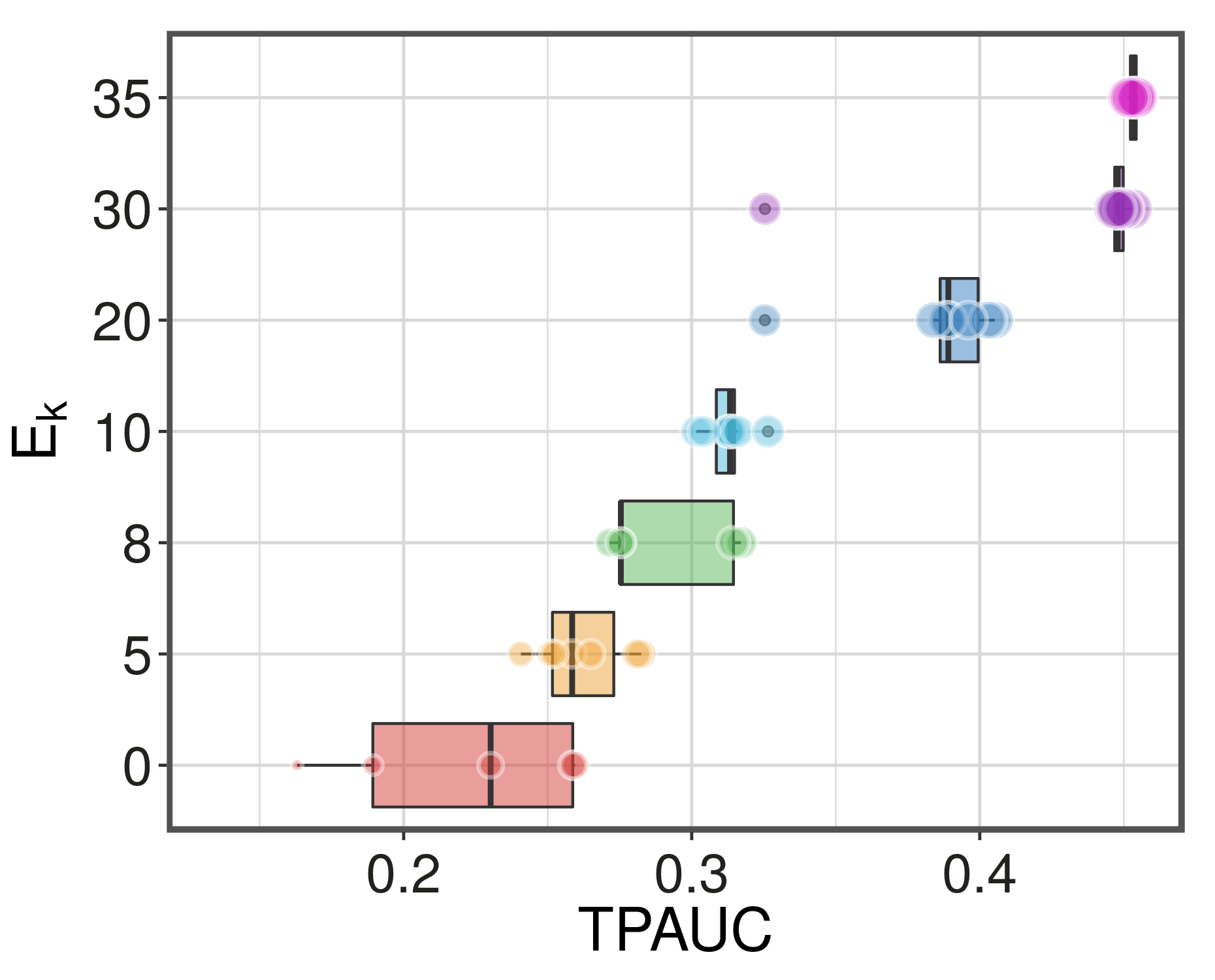} 
   }
   \subfigure[Subset1, $\alpha = 0.5, \beta = 0.5$]{
  
     \includegraphics[width=0.31\textwidth]{./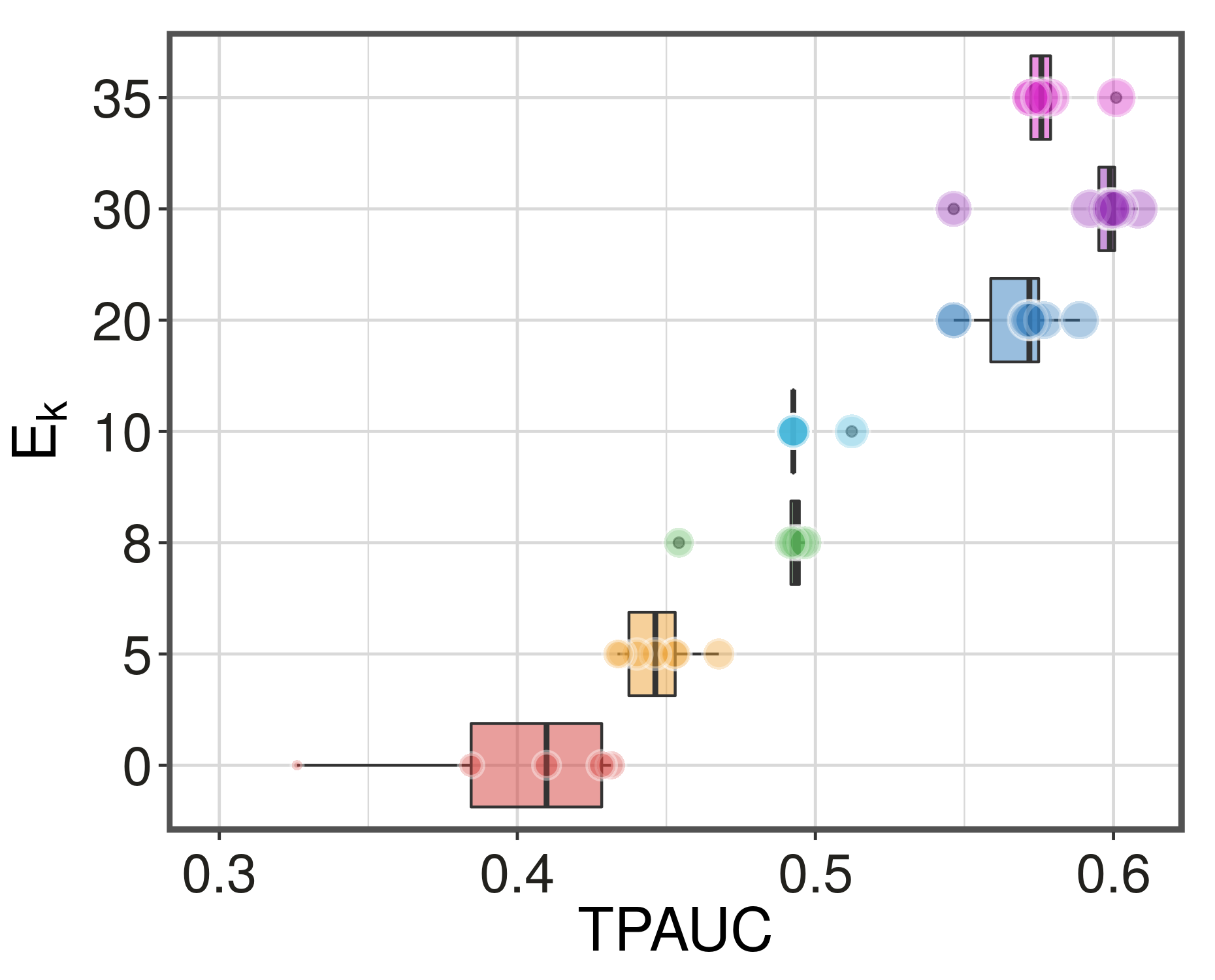} 
   }

     \subfigure[Subset2, $\alpha = 0.3, \beta = 0.3$]{
    
       \includegraphics[width=0.31\textwidth]{./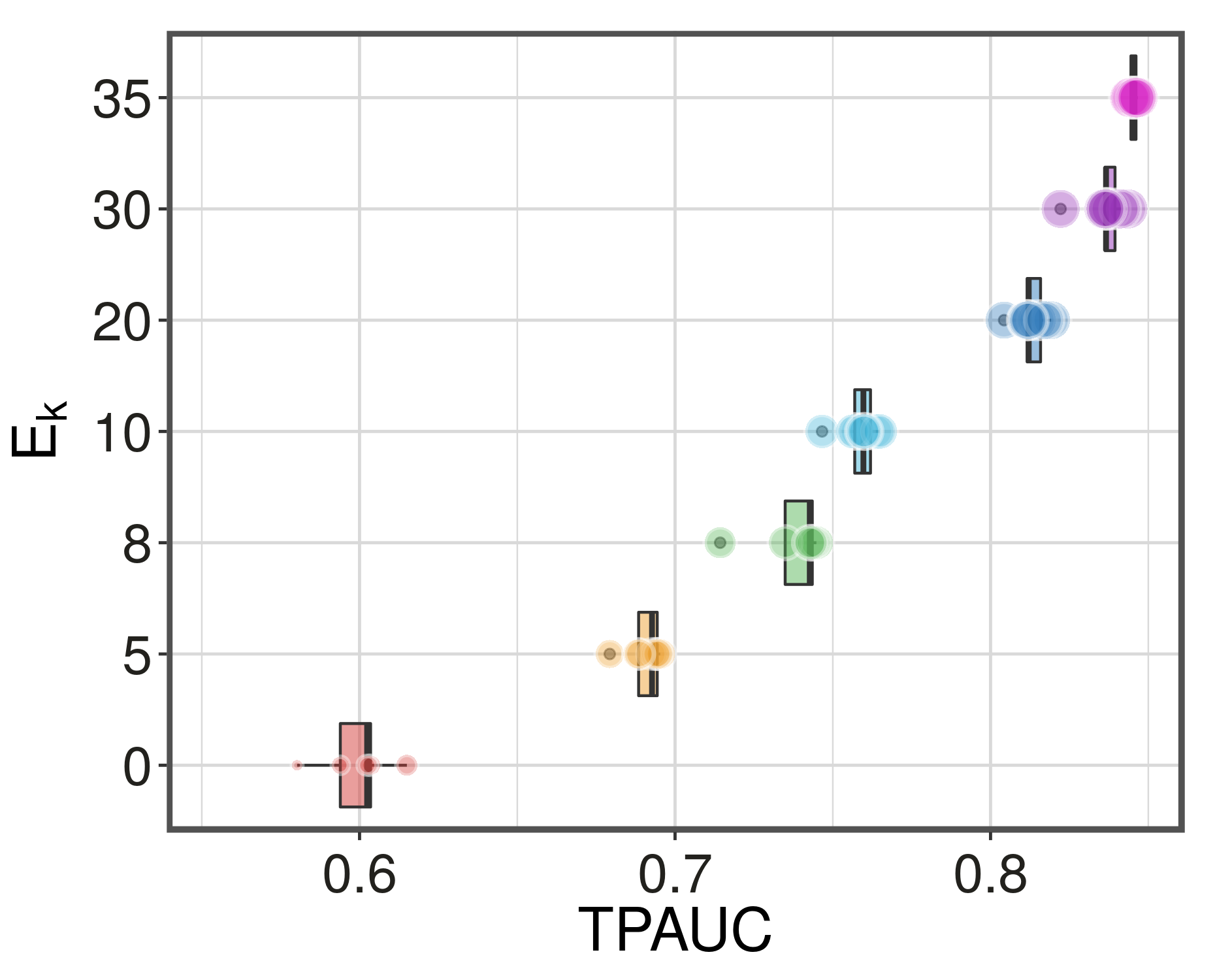} 
     }
     \subfigure[Subset2, $\alpha = 0.4, \beta = 0.4$]{
    
       \includegraphics[width=0.31\textwidth]{./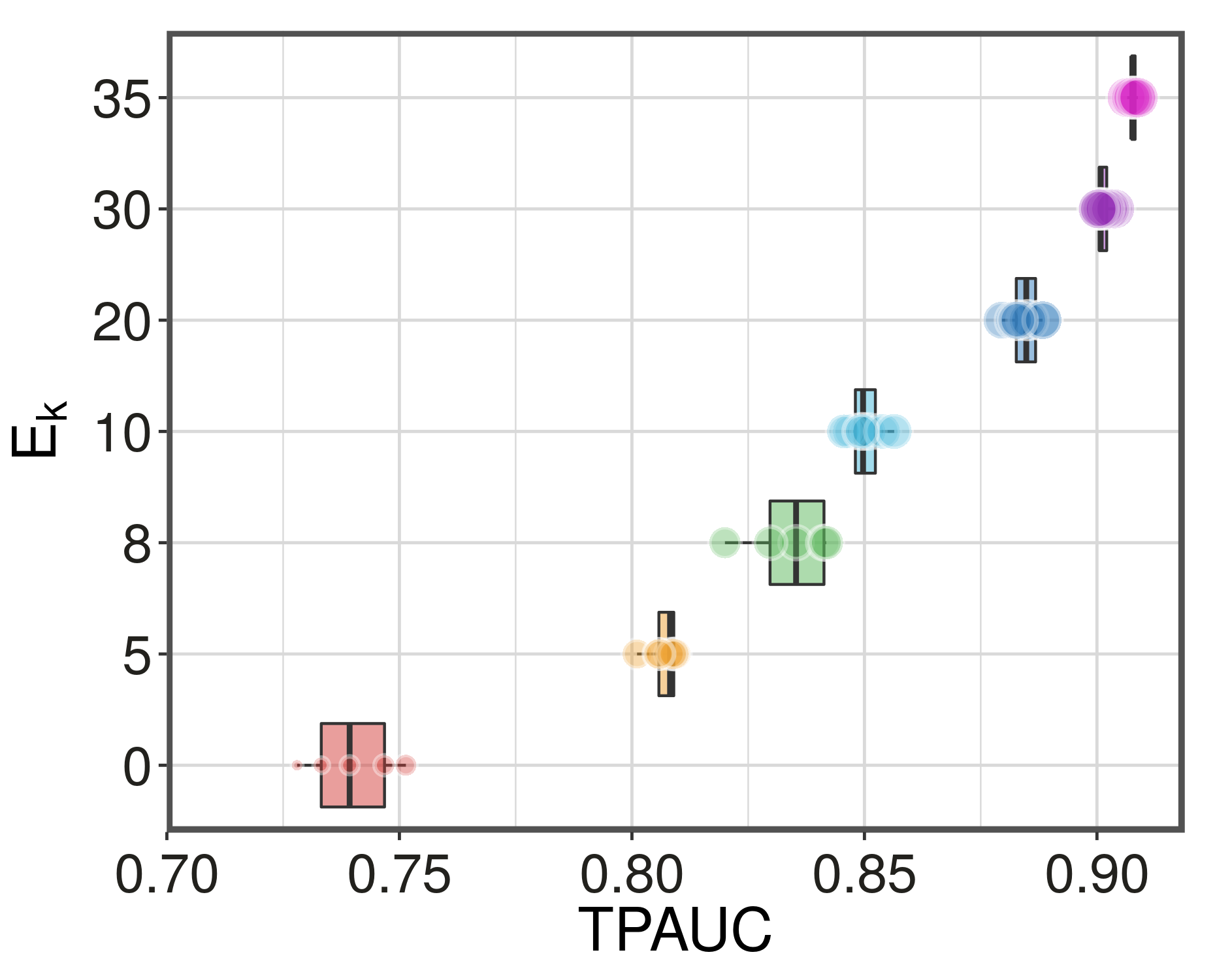} 
     }
     \subfigure[Subset2, $\alpha = 0.5, \beta = 0.5$]{
    
       \includegraphics[width=0.31\textwidth]{./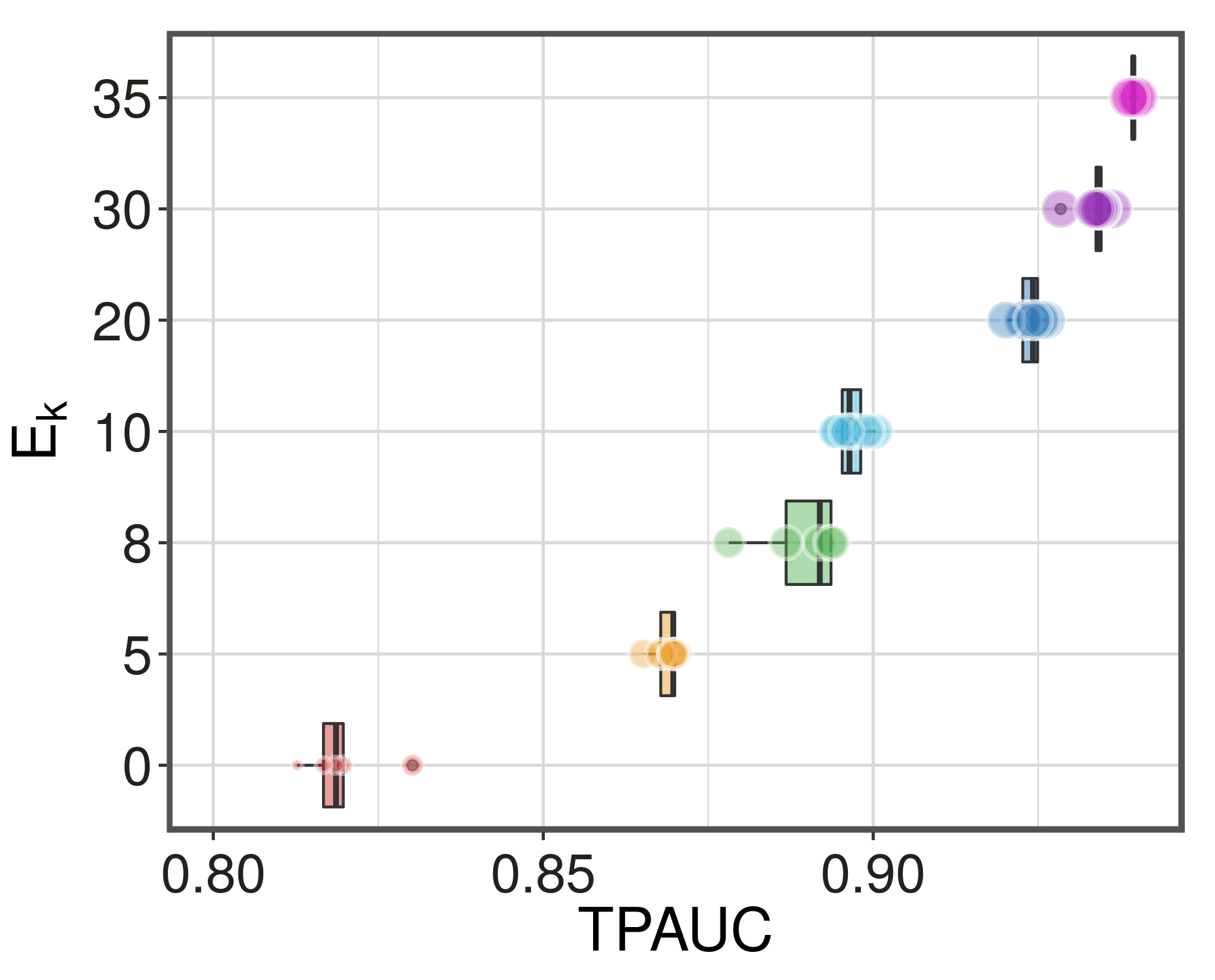} 
     }

     \subfigure[Subset3, $\alpha = 0.3, \beta = 0.3$]{
    
       \includegraphics[width=0.31\textwidth]{./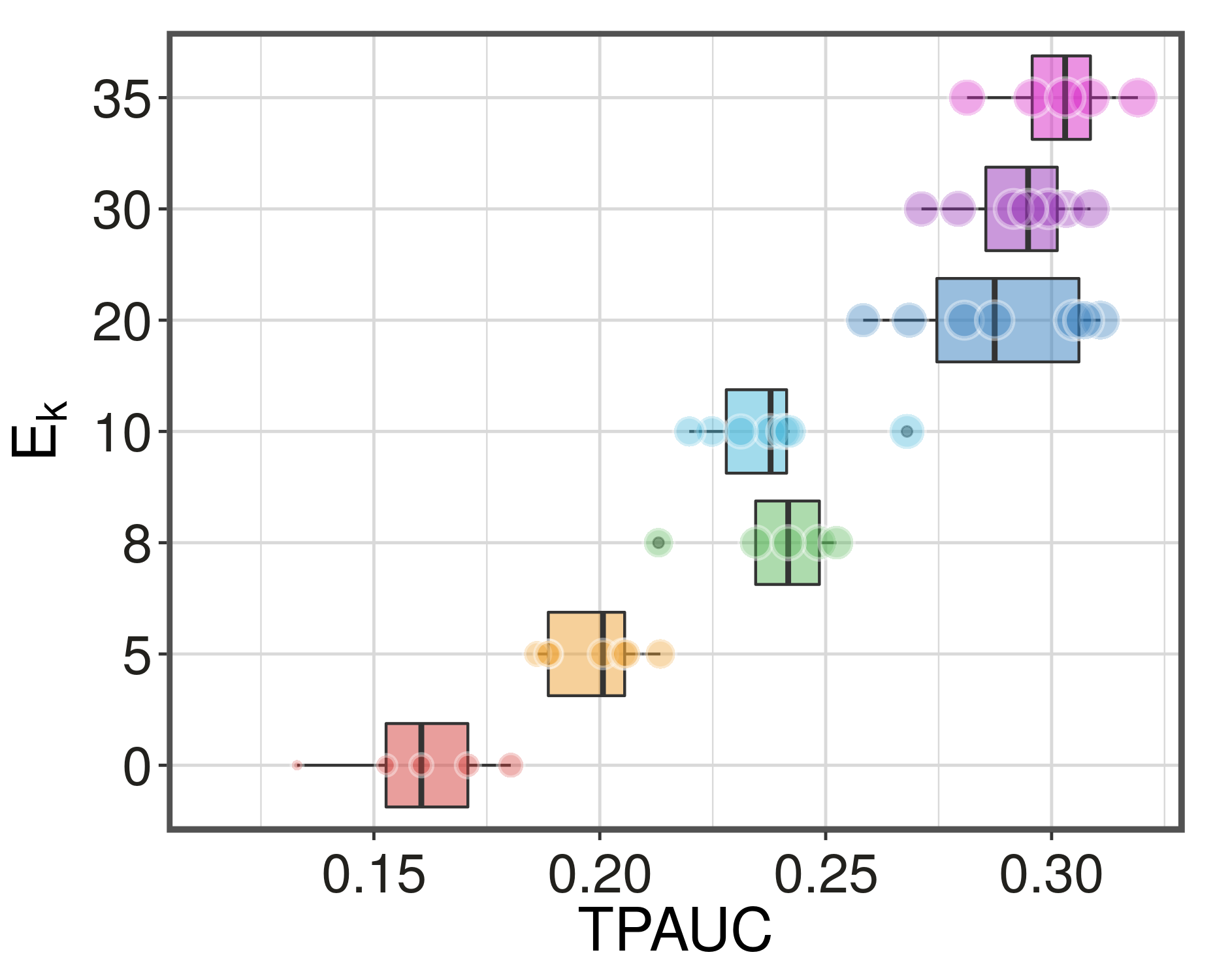} 
     }
     \subfigure[Subset3, $\alpha = 0.4, \beta = 0.4$]{
    
       \includegraphics[width=0.31\textwidth]{./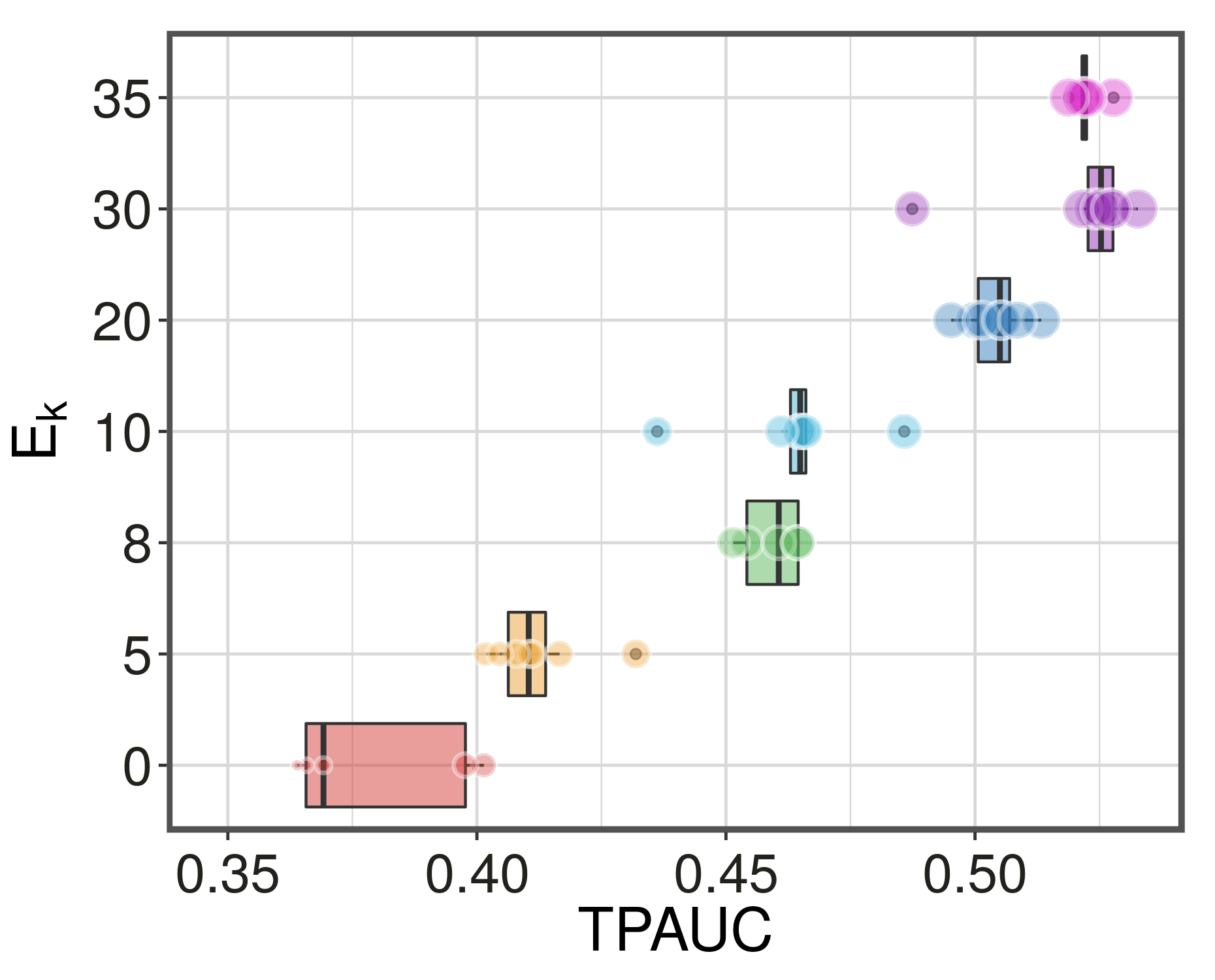} 
     }
     \subfigure[Subset3, $\alpha = 0.5, \beta = 0.5$]{
    
       \includegraphics[width=0.31\textwidth]{./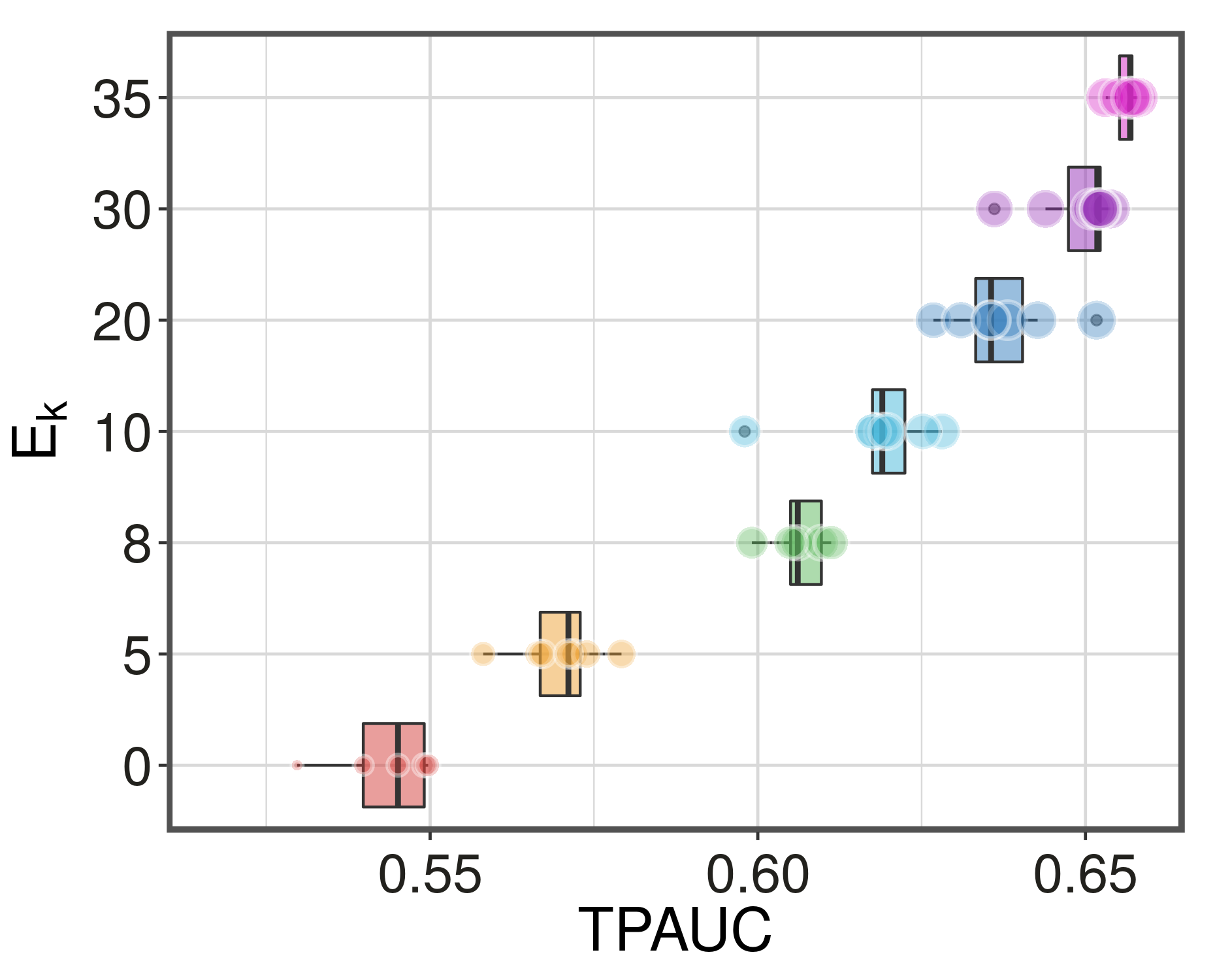} 
     }

     \caption{\label{fig:warm1} Sensitivity analysis on CIFAR-10-LT where TPAUC for \texttt{Exp} with respect to $E_k$. For each Box in the plots,  $E_k$  is fixed as the y-axis value, and the scattered points along the box show the variation of $(\gamma-1)^{-1}$.}
   \end{figure*}

     \begin{figure*}[th]  
       \centering
       \subfigure[Subset1, $\alpha = 0.3, \beta = 0.3$]{
        
         \includegraphics[width=0.31\textwidth]{./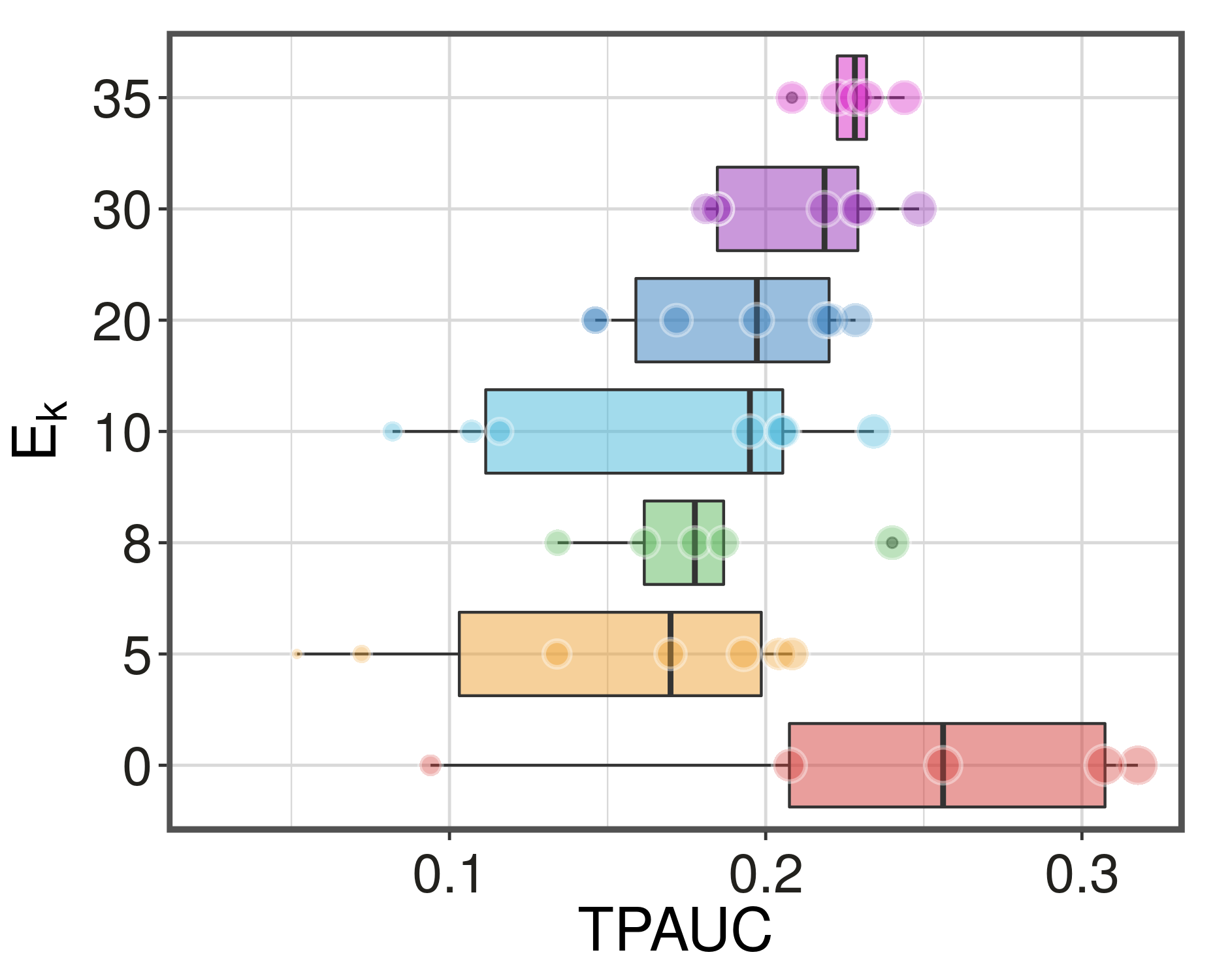} 
       }
       \subfigure[Subset1, $\alpha = 0.4, \beta = 0.4$]{
      
         \includegraphics[width=0.31\textwidth]{./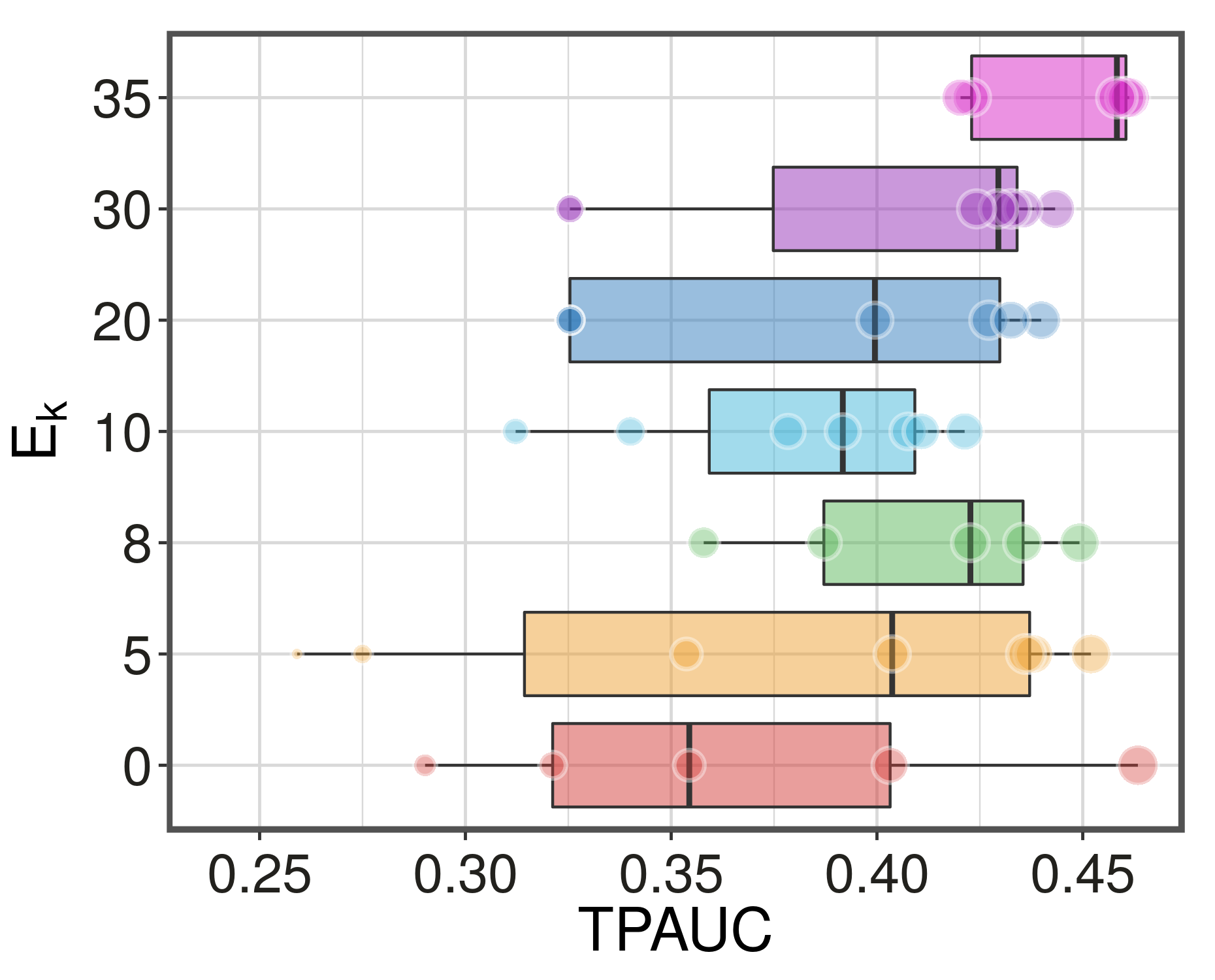} 
       }
       \subfigure[Subset1, $\alpha = 0.5, \beta = 0.5$]{
      
         \includegraphics[width=0.31\textwidth]{./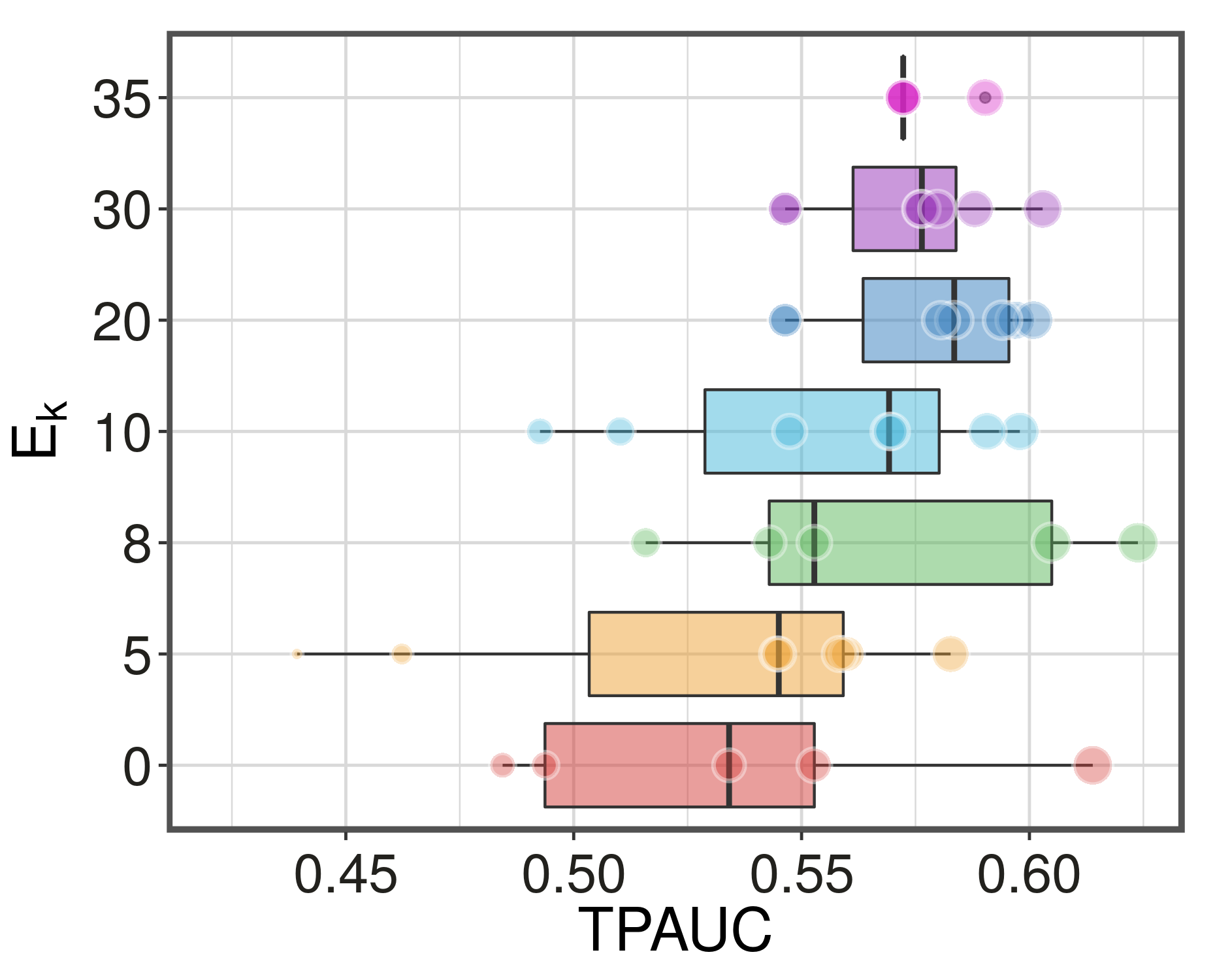} 
       }
  
       \subfigure[Subset2, $\alpha = 0.3, \beta = 0.3$]{
      
         \includegraphics[width=0.31\textwidth]{./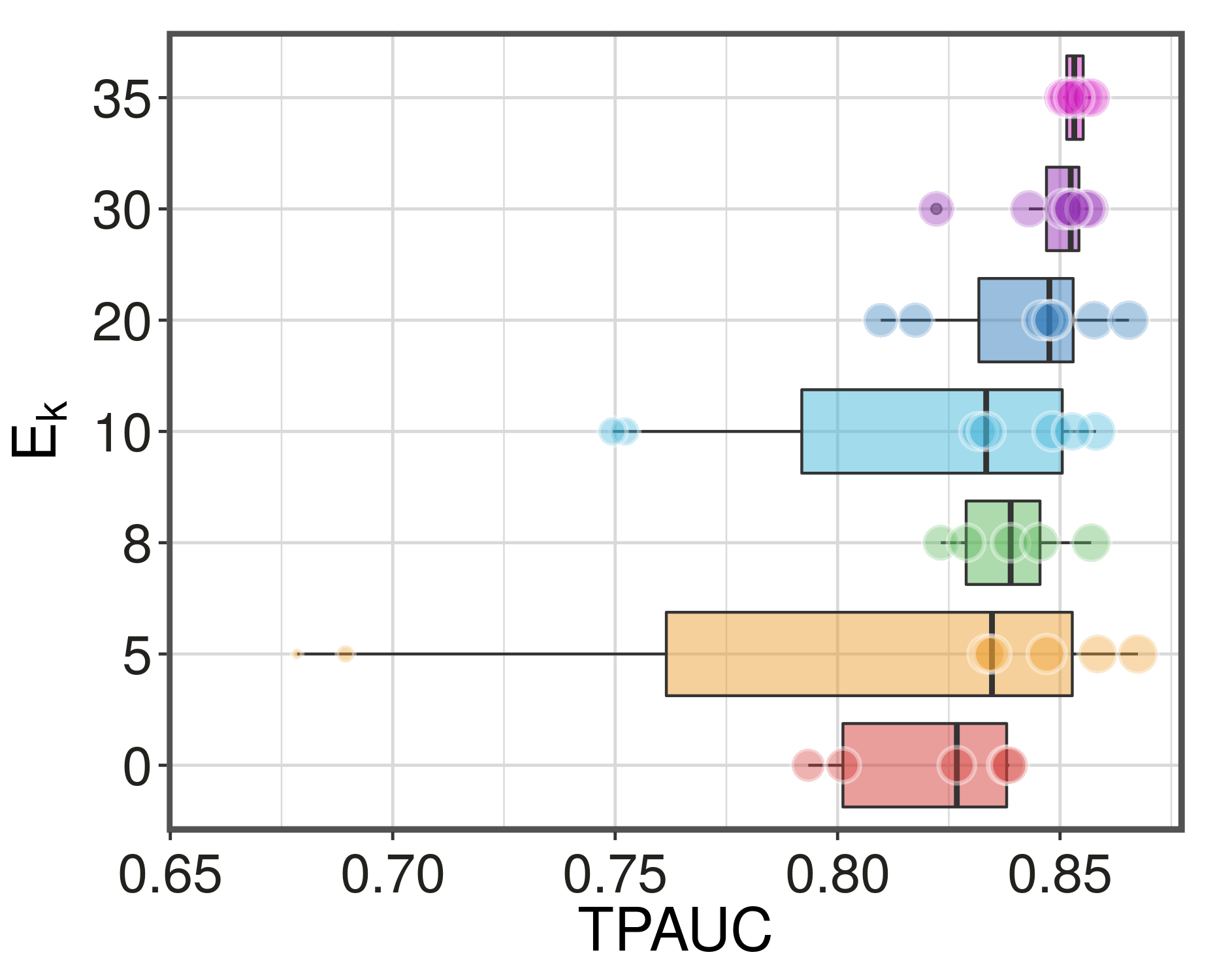} 
       }
       \subfigure[Subset2, $\alpha = 0.4, \beta = 0.4$]{
      
         \includegraphics[width=0.31\textwidth]{./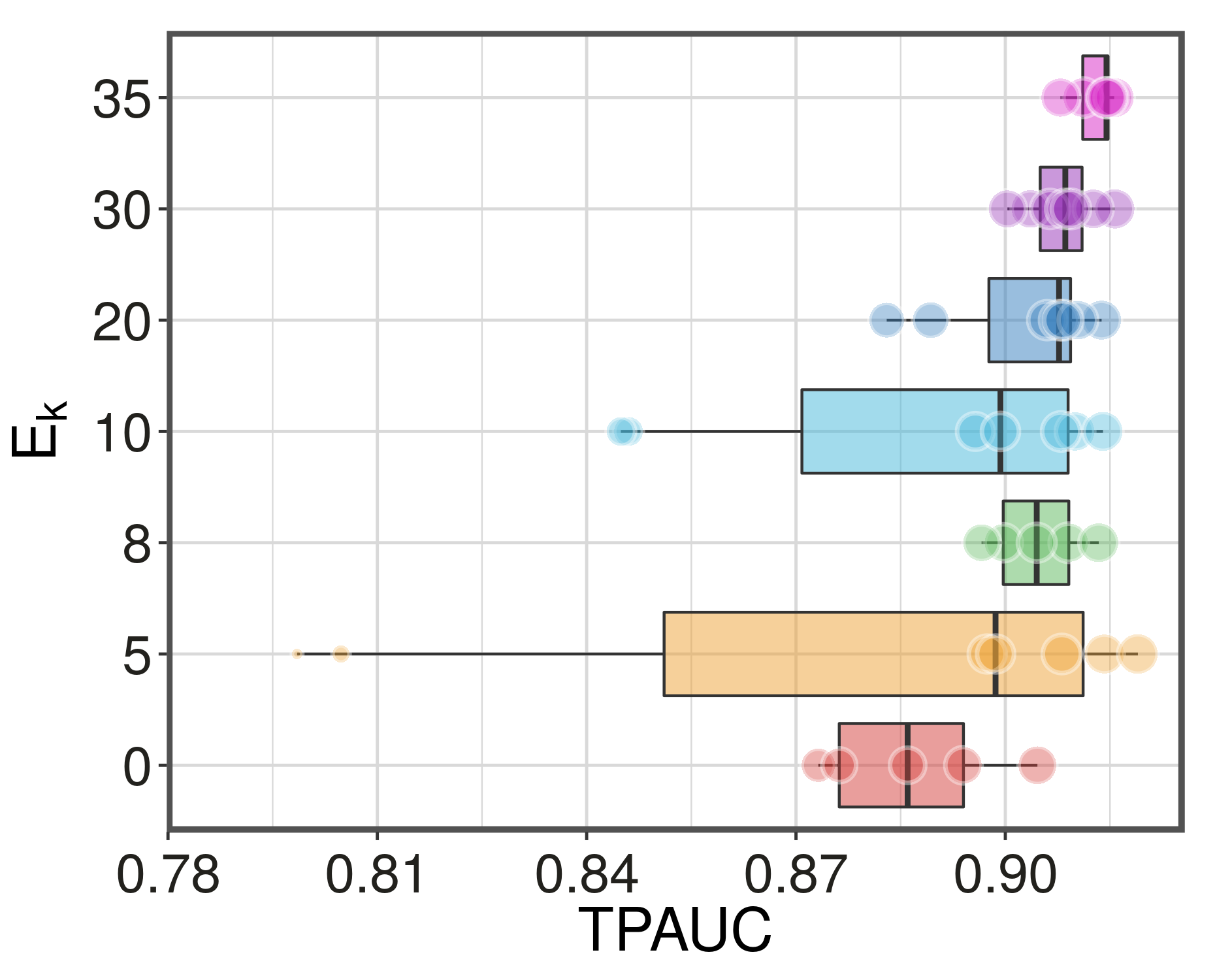} 
       }
       \subfigure[Subset2, $\alpha = 0.5, \beta = 0.5$]{
      
         \includegraphics[width=0.31\textwidth]{./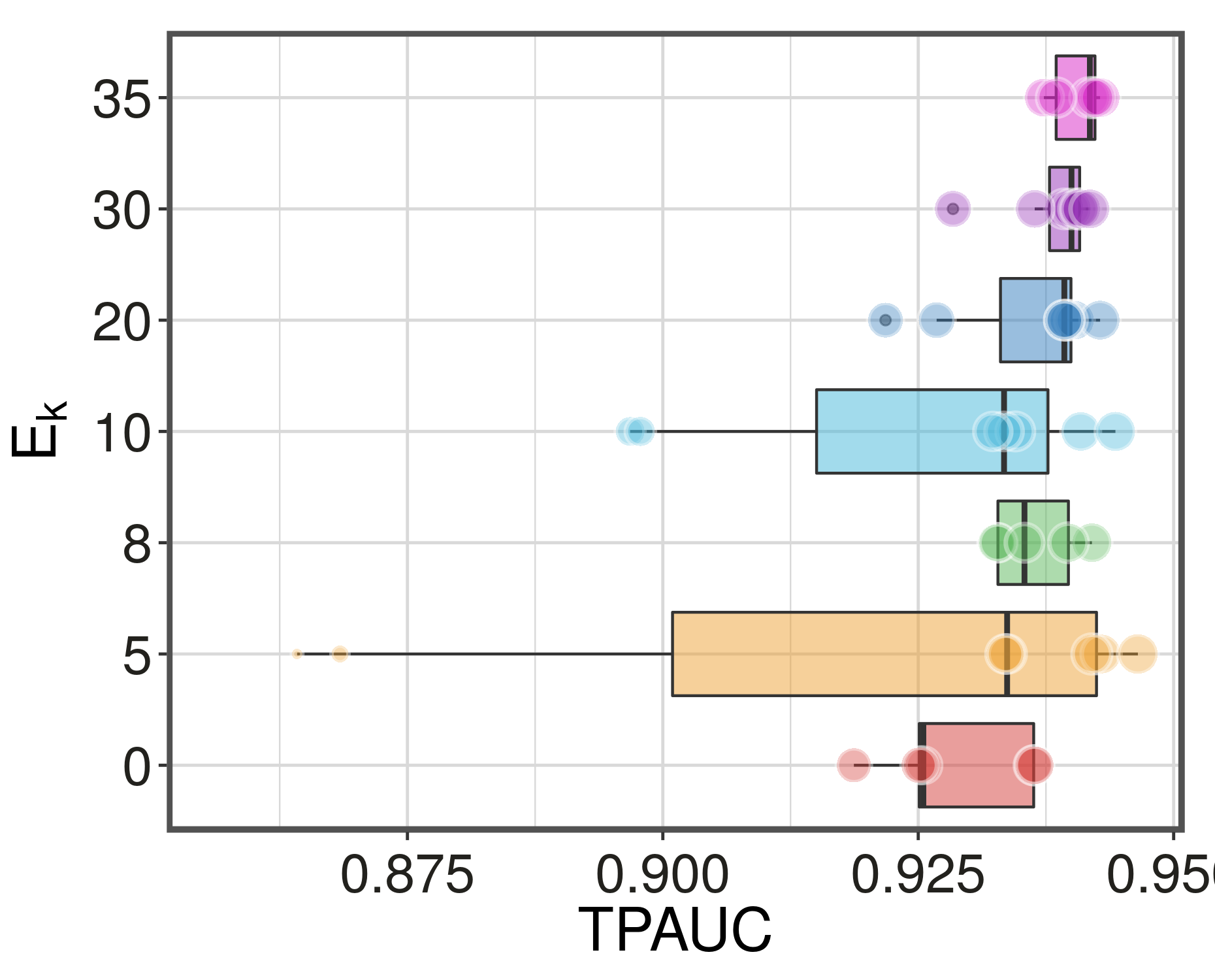} 
       }

         \subfigure[Subset3, $\alpha = 0.3, \beta = 0.3$]{
        
           \includegraphics[width=0.31\textwidth]{./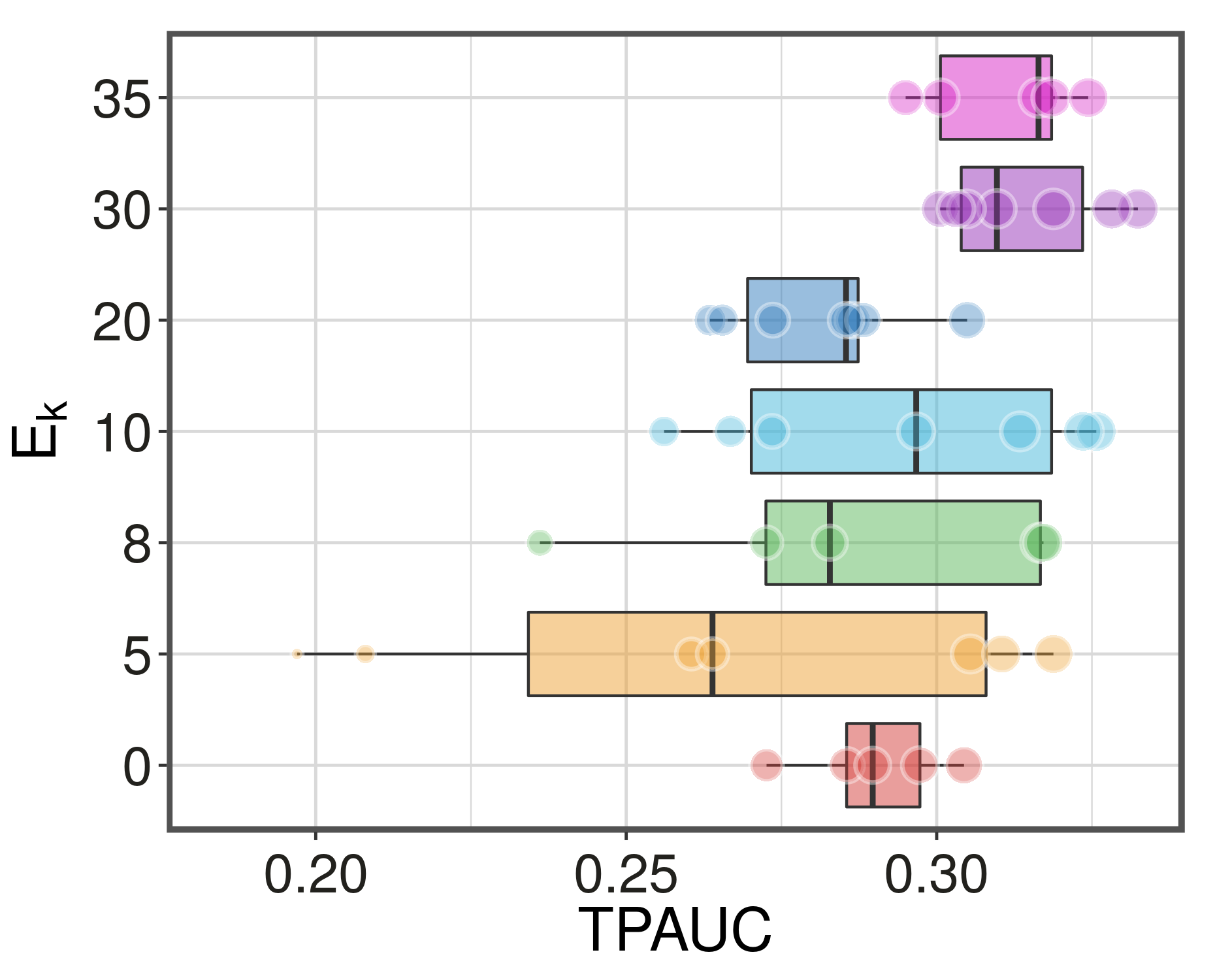} 
         }
         \subfigure[Subset3, $\alpha = 0.4, \beta = 0.4$]{
        
           \includegraphics[width=0.31\textwidth]{./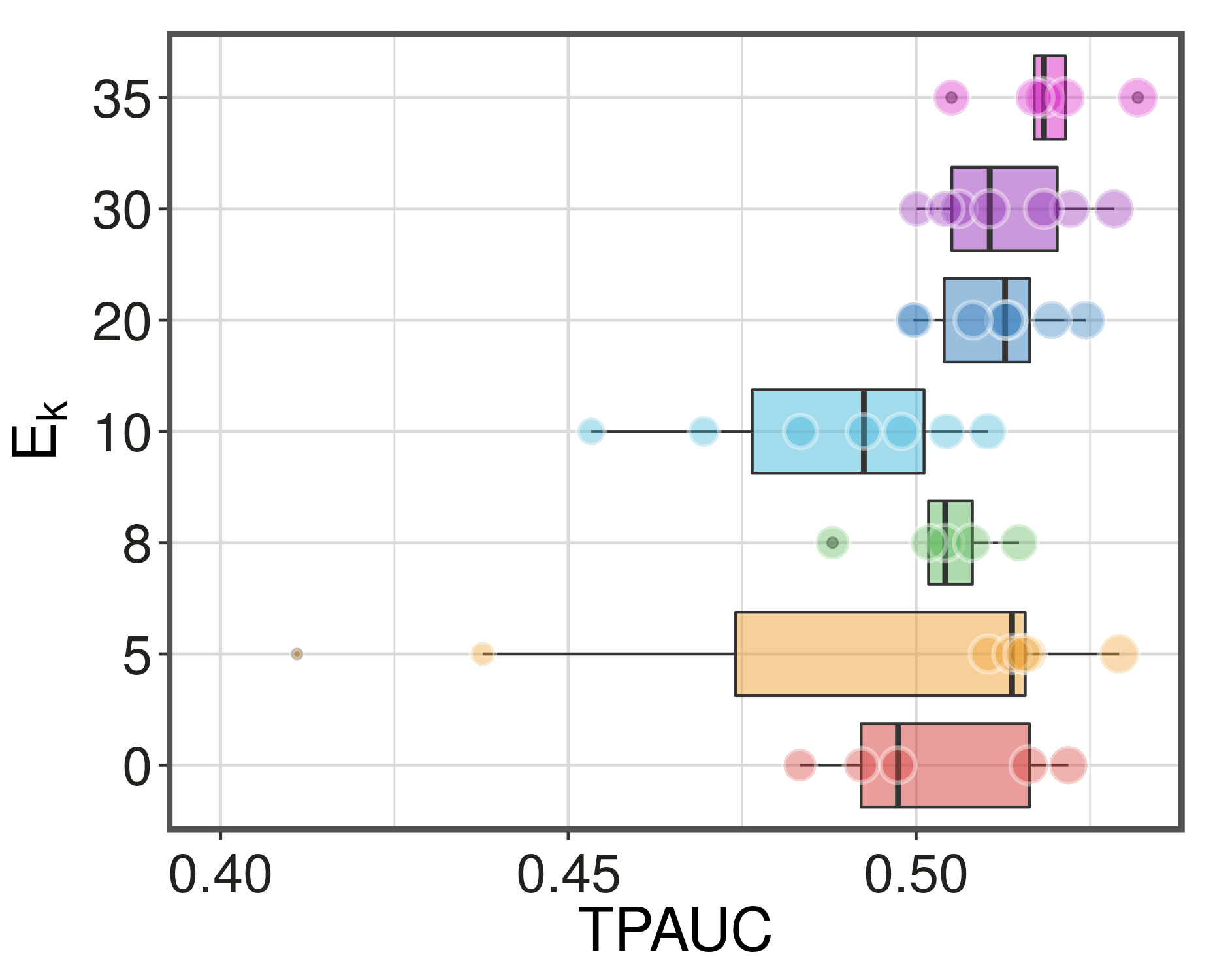} 
         }
         \subfigure[Subset3, $\alpha = 0.5, \beta = 0.5$]{
        
           \includegraphics[width=0.31\textwidth]{./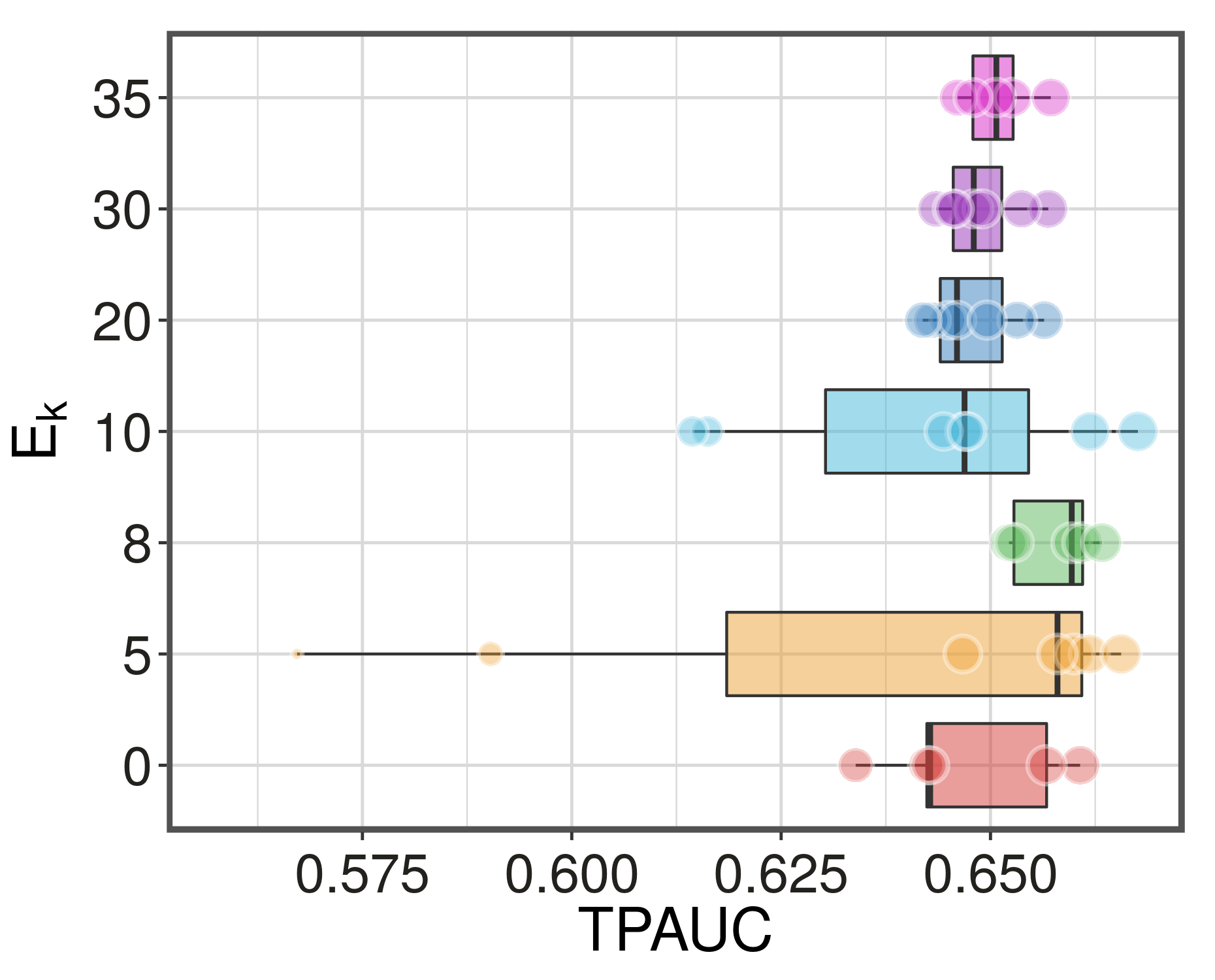} 
         }

           \caption{\label{fig:warm2}Sensitivity analysis on CIFAR-10-LT where TPAUC for \texttt{Poly} with respect to $E_k$. For each Box in the plots,  $E_k$  is fixed as the y-axis value, and the scattered points along the box show the variation of $(\gamma-1)^{-1}$.}
      
       \end{figure*}

\end{document}